\def\th@plain{%
  \thm@notefont{}
  \itshape 
}
\def\th@definition{%
  \thm@notefont{}
  \normalfont 
}
\definecolor{mygray}{gray}{0.6}
\numberwithin{equation}{section}
\newcommand{\zero}{\mathbf{0}}
\newcommand{\RR}{\mathds{R}}
\newcommand{\Sp}{\mathrm{Sp}}
\newcommand{\blue}[1]{{#1}}
\newcommand{\D}{\mathsf{D}}
\newcommand{\Af}{\mathsf{A}}
\newcommand{\Ds}{\mathsf{D}^+}
\newcommand{\Di}{\mathsf{D}_{+}}
\newcommand{\Hi}{\mathsf{H}_{+}}
\newcommand{\He}{\mathsf{H}}
\newcommand{\K}[1][]{\mathsf{K}_{\mathsf{#1}}}
\newcommand{\h}{h}
\newcommand{\inner}[2]{\left\langle#1,#2\right\rangle}
\newcommand{\textcite}{\citet}
\def \glp {LRP}
\def \a {\alpha}
\def \b {\beta}
\def \R {\mathbb{R}}
\def \s {\sigma}
\def \d {\delta}
\def \bmat {\begin{matrix}}
\def \emat {\end{matrix}}
\def \l {\lambda}
\def \tr {\nonumber \\}
\def \rr {\mathcal{R}}
\def \n {\nabla}
\def \e {\epsilon}
\def \ve {\varepsilon}
\def \nn {\textsf{null}}
\newtheorem{thm}{Theorem}[section]
\newtheorem{cor}[thm]{Corollary}
\newtheorem{lem}[thm]{Lemma}
\newtheorem{prop}[thm]{Proposition}
\newtheorem{eg}[thm]{Example}
\newtheorem{defn}[thm]{Definition}
\newtheorem{assmp}[thm]{Assumption}
\newtheorem{rem}[thm]{Remark}
\def \R {\mathds{R}}
\def \d {\delta}
\def \e {\epsilon}
\def \g {\gamma}
\def \bmat {\begin{matrix}}
\def \emat {\end{matrix}}
\def \l {\lambda}
\def \be {\begin{eqnarray}}
\def \en {\end{eqnarray}}
\def \tr {\nonumber \\}
\def \B {\bf B}
\def \A {\bf A}
\def \I {\bf I}
\def \rr {\mathcal{R}}
\def \n {\partial}
\def \e {\epsilon}
\def \sp {\textrm{Sp}}
\def \zero {\mathbf{0}}
\def \C {\mathcal{C}}
\def \cC {\bf C}
\def \X {\mathcal{X}}
\def \Y {\mathcal{Y}}
\def \haa {{\bf H}_{\a_1, \a_2}}
\newcommand{\p}[1]{{\bf P}_{#1}^\perp}
\newcommand{\of}{\bar{f}}
\newcommand{\oq}{\bar{q}}
\DeclareRobustCommand{\mf}{\text{\reflectbox{$f$}}}
\newcommand{\ubar}[1]{\underaccent{\bar}{#1}}
\newcommand{\uf}{\ubar{f}}
\newcommand{\Bc}{\mathds{B}}
\newcommand{\tcs}[1][1]{\textcircled{\small #1}\xspace}
\renewcommand*{\@opargbegintheorem}[3]{\trivlist
      \item[\hskip \labelsep{\bfseries #1\ #2}] \textbf{(#3)}\ \itshape}
\title{Optimality and Stability in Non-Convex Smooth Games}
\author{%
\name Guojun Zhang \email guojun.zhang@uwaterloo.ca
\AND
\name Pascal Poupart \email ppoupart@uwaterloo.ca
\AND
\name Yaoliang Yu \email yaoliang.yu@uwaterloo.ca\\
\addr School of Computer Science\\
University of Waterloo\\
Vector Institute
}
\begin{document}
\maketitle

\begin{abstract}%
\blue{
Convergence to a saddle point for convex-concave functions has been studied for decades, while recent years has seen a surge of interest in \emph{non-convex} (zero-sum) smooth games, motivated by their recent wide applications. It remains an intriguing research challenge how local optimal points are defined and which algorithm can converge to such points. An interesting concept is known as the local minimax point \citep{jin2019minmax}, which strongly correlates with the widely-known gradient descent ascent algorithm. 
This paper aims to provide a comprehensive analysis of local minimax points, such as their relation with other solution concepts and their optimality conditions. We find that local saddle points can be regarded as a special type of local minimax points, called \emph{uniformly local minimax points}, under mild continuity assumptions. In (non-convex) quadratic games, we show that local minimax points are (in some sense) equivalent to global minimax points. Finally, we study the stability of gradient algorithms near local minimax points. Although gradient algorithms can converge to local/global minimax points in the non-degenerate case, they would often fail in general cases. This implies the necessity of either novel algorithms or concepts beyond saddle points and minimax points in non-convex smooth games.
}

\end{abstract}

\begin{keywords}
   non-convex, minimax points, local optimality, stability, smooth games
\end{keywords}

\maketitle

\blue{
\tableofcontents
}

\section{Introduction}
The existence of a saddle point in convex-concave minimax optimization follows from the celebrated minimax theorem \citep[e.g.][]{neumann1928theorie, sion1958general} and numerical algorithms for finding it have a long history in optimization \citep[e.g.][]{DemyanovMalozemov72,nemirovsky1983problem, zhang2019lower, lin2020nearoptimal}. Recent success in generative adversarial networks (GANs) \citep{goodfellow2014generative, heusel2017gans}, adversarial training \citep{madry2017towards} and reinforcement learning \citep{sutton1998introduction} has lead to new challenges \citep{razaviyayn2020non} for \emph{non-convex non-concave} (NCNC) minimax optimization, a.k.a.~NCNC zero-sum games. \blue{In such a formulation, we are given a non-convex non-concave bi-variate function $f(\xv, \yv)$. One player chooses $\xv$ to minimize $f(\xv, \yv)$, and another player chooses $\yv$ to maximize $f(\xv, \yv)$ (see detailed settings in \Cref{sec:global}). 
Since non-convex minimax optimization include non-convex  minimization as a special case, one cannot hope to find a global optimal solution efficiently. Therefore, we need to look for local optimal solutions as surrogates.
The fundamental gap between the theory for \emph{convex-concave games} and applications using \emph{non-convex non-concave games} raises an important question:
\begin{quote}
\emph{What is a \emph{reasonable definition}, in terms of both computational and theoretical convenience, of a local optimal point in non-convex (two-player, zero-sum) games?}
\end{quote} 
Unlike conventional minimization problems where local optimal solutions are well-defined, for non-convex games a satisfying definition is still under debate. }\citet{daskalakis2018limit} used a local version of saddle points to define local optimality. They studied the local convergence behavior of gradient descent ascent (GDA) \citep{arrow1958studies} and optimistic gradient descent (OGD) \citep{popov1980modification, daskalakis2018training}. \blue{Following this work, an important step was made by \citet{jin2019minmax}, who proposed a new definition of local optimality called local minimax points, compared them with local saddle points, and showed that they are equivalent to the stable solutions of GDA (in some sense). As GDA is widely used in practice, such as for adversarial training \citep{madry2017towards} and for GANs, an enhanced understanding of local minimax points is needed from both theory and application perspectives.} 

\blue{
Our work is based on \citet{jin2019minmax} and we aim to discuss the consequences and implications of their local minimax points to a greater extent. We believe this somewhat pedagogical study can help readers better understand local optimality in non-convex zero-sum games. 
Specifically, we aim to address the following questions: 
\begin{itemize}[parsep=0pt, partopsep=0pt]
\item What is the relation between local saddle and local minimax points?  \citet{jin2019minmax} showed that every local saddle point is local minimax, but is there a deeper connection? In Prop.~\ref{prop:ulmm}, we show that local saddle points are a special category of local minimax points called \emph{uniformly} local minimax points, under mild continuity assumptions.
\item How can we interpret local minimax points? We give a simplified and unified approach that recovers and extends existing notions of ``local mini-maximality,'' from the perspective of
\emph{infinitesimal robustness} \citep{hampel1974influence}.
Local minimax points are understood as the min-player doing infinitesimal robust optimization and the max-player following the strategy of the min-player (\Cref{sec:existing}). 
\item One of the benefits of local minimax points is that they are stationary points. Based on the interpretation using infinitesimal robustness, we go one step further and propose a new type of local optimal solutions, called \emph{local robust points} (Def.~\ref{def:glp}), which are still stationary points, but strictly include local minimax points as a special case. This new solution concept opens up the possibility to explore solutions in games that are not sequential, in contrast to the sequential Stackelberg games studied in \citet{jin2019minmax}.  
\item How do we identify local optimal solutions based on derivatives of the function? We analyze natural properties of local minimax points, including first- and second-order optimality conditions. These conditions extend the optimality conditions in \citet{jin2019minmax} to cases where the domains are constrained and where the Hessian for the max-player is not invertible. 
\item What is the connection between local and global optimal solutions? We analyze convex-concave games (\Cref{thm:cc}) and non-convex quadratic games (see below), and point out their difference from general non-convex games. 
\item Is a gradient algorithm stable at a certain local optimal solution? Under suitable conditions, \citet{jin2019minmax} showed the equivalence between the stable solutions of GDA and local minimax points when the Hessian for the max-player is invertible. We extend this study by analyzing the stability of several other popular gradient algorithms for min-max games and study if they converge to local optimal solutions (see below), even when the Hessian for the max-player is not invertible. Such study provides us with new insights for designing algorithms for minimax points.  
\end{itemize}
}
\noindent\blue{As a case study, we thoroughly characterize unconstrained quadratic games, which are potentially non-convex \citep{daskalakis2018limit, jin2019minmax, IbrahimAGM19, wang2019solving}. On the one hand, quadratic games could help us understand \emph{local convergence} of various gradient algorithms even on NCNC games. On the other hand, w.r.t.~the existence and equivalence of global and local versions of minimax points and saddle points, properties for quadratic games are not usually true for general NCNC games. For quadratic games:
\begin{itemize}[topsep=5pt, itemsep=0pt]
\item whenever both global (local) minimax and maximin points exist, global (local) saddle points must exist (\Cref{thm:quadr_2}; \Cref{eg:nc}, \Cref{eg:local_minimax_maximin_no_saddle});
\item global minimax points exist iff local minimax points exist (\Cref{thm:qc}; \Cref{eg:local_non_global});
\item being stationary and global minimax is equivalent to being local minimax (\Cref{thm:qc}; \Cref{eg:stationary_and_global_no_local}).
\end{itemize}  
The exact statements formalized as theorems and the corresponding NCNC counterexamples are listed in the parentheses above. 
Hence, we should be careful when using unconstrained quadratic games as a typical representative in the NCNC setting, especially w.r.t.~the optimality properties.
}

Since our unified definitions of local optimal points are all stationary points, a natural followup question is whether there exist gradient algorithms that can converge to them. In \Cref{sec:local_stab} we discuss extra-gradient algorithms \citep{korpelevich1976extragradient, popov1980modification, hsieh2019convergence}. By analyzing the spectrum of the Jacobian, we characterize the stable sets of hyperparameters, which yields insights on how to find local optimal points: \begin{itemize}[topsep=5pt, itemsep=0pt]
\item EG/OGD always locally converge to any non-degenerate local saddle points, and having larger extra-gradient steps increases the local stability;
\item for convergence to local minimax points, it is necessary to use two different step sizes and one step size cannot be arbitrarily small;
\item for convergence to local robust points, it is more appropriate to use OGD than EG as there are cases where OGD converges, but EG does not. 
\end{itemize}
For one-dimensional quadratic games, we establish the equivalence between local robust points and the stable solution of OGD, extending \citet{jin2019minmax} for local minimax points. 

We delay most proofs to the appendices to keep the main text concise. \blue{
To help readers navigate the results, we add a title for each definition, theorem, proposition, corollary, remark and example. We also provide a table for easier navigation on the next page. 
}
\begin{table*}
\vspace{-3.5em}
\thisfloatpagestyle{empty}%
\renewcommand{\arraystretch}{1.3}
\centering
\begin{tabular}{|c|c|c|} 
\hline
&\textbf{Statement} & \textbf{Reference} \\ \hline
\hline
\multirow{4}{*}{Definitions} & 
{\cellcolor[gray]{.9}} global/local saddle point & Definitions \ref{def:sadl}, \ref{def:loc_sadl} \\ \cline{2-3}
& \cellcolor{lightgray} global/local envelope function  & Definitions \ref{def:envelope}, \ref{def:envelope_loc}  \\
\cline{2-3} &  {\cellcolor[gray]{.9}} global/local minimax (maximin) point  & Definitions \ref{def:globalmm}, \ref{def:jin} \\ 
\cline{2-3} & \cellcolor{lightgray} local robust point (LRP) & \Cref{def:glp}
\\ 
\hline \hline
 & {\cellcolor[gray]{.9}} global saddle = global minimax + global maximin & \Cref{thm:sss} \\
\cline{2-3} &  \cellcolor{lightgray} both global minimax and maximin points exist, & \multirow{2}{*}{\Cref{eg:nc}}  \\
\multirow{1}{*}{Global} & \cellcolor{lightgray} but there is no global saddle point &  \\
\cline{2-3}  \multirow{1}{*}{results} &  {\cellcolor[gray]{.9}} instability of GDA & \Cref{exm:bl} \\
\cline{2-3} & \cellcolor{lightgray} global minimax points exist; & \multirow{2}{*}{\Cref{exm:onedq}}  \\
 & \cellcolor{lightgray} no global maximin or global saddle points& \\
\hline \hline 
 & {\cellcolor[gray]{.9}} optimality condition when $\yyv f$ is invertible & \Cref{thm:invertible_local_mm} \\
\cline{2-3} &  \cellcolor{lightgray} equivalence with \citet{jin2019minmax}  & Props.~\ref{prop:eq2}, \ref{prop:eq1} \\
\cline{2-3} & {\cellcolor[gray]{.9}} local saddle $\approx$ uniformly local minimax & Prop.~\ref{prop:ulmm}, \Cref{eg:uniform_not_local_saddle} \\
\cline{2-3}  &  \cellcolor{lightgray} stationary and/or global minimax $\neq$ local minimax & Examples~\ref{eg:glbstatl}, \ref{eg:stationary_and_global_no_local} \\
\cline{2-3} \multirow{2}{*}{Local} & {\cellcolor[gray]{.9} first-order sufficient condition and examples} &  Thm~\ref{suff:1st_suff_localmm}, Example~\ref{eg:app_first_suff_local_minimax} \\
\cline{2-3} \multirow{2}{*}{minimax} &\cellcolor{lightgray} & Thms \ref{cor:suff_2nd_localmm}, \ref{thm:2nd_suff_seeger}, Cor~\ref{cor:second_suff_interior}, \\
 & \multirow{-2}{*}{ \cellcolor{lightgray} second-order sufficient condition and examples} & \ref{cor:suf_jin}, Examples~\ref{eg:stronger_suff_cond}, \ref{eg:kawa-suff} \\ 
\cline{2-3} & \cellcolor[gray]{.9} & Thm~\ref{thm:nec_localmm}, Cor.~\ref{def:nec_2} \\
&  \multirow{-2}{*}{\cellcolor[gray]{.9} necessary conditions and related examples } & Examples~\ref{rem:critical}, \ref{eg:counter_jin}, \ref{rem:higher_order} \\
\cline{2-3} &\cellcolor{lightgray} local minimax exists, no global minimax & \Cref{eg:local_non_global} \\
\cline{2-3} & \cellcolor[gray]{.9} local minimax $\&$ maximin exist, no local saddle & \Cref{eg:local_minimax_maximin_no_saddle} \\
\hline \hline
{Convex-} & \cellcolor{lightgray} local minimax $=$ stationary $\Longrightarrow$ global minimax & \Cref{thm:cc} \\
\cline{2-3} concave &  \cellcolor[gray]{.9} local minimax $=$ local saddle $=$ LRP & \Cref{thm:cc_2} \\
\hline \hline
& \cellcolor{lightgray} optimality conditions & Thm \ref{thm:local_global_q}, Remark \ref{eg:quadratic} \\
\cline{2-3}  & \cellcolor[gray]{0.9} quadratic games can be non-convex & \Cref{eg:no_local_saddle} \\
\cline{2-3} &  \cellcolor{lightgray} stationary + global minimax = local minimax & \Cref{thm:qc} 
\\
\cline{2-3} Quadratics &\cellcolor[gray]{0.9} bilinear games & \Cref{cor:bilinear} \\
\cline{2-3} &  \cellcolor{lightgray} minimax + maximin = saddle & \Cref{thm:quadr_2} \\
\cline{2-3} & \cellcolor[gray]{0.9} non-uniformly minimax in quadratic games & \Cref{rem:non_uniform_minimax}  \\
\hline \hline
&\cellcolor{lightgray} equivalence between past-extra gradient and OGD &\Cref{lem:peg} \\
\cline{2-3} & \cellcolor[gray]{0.9}  stability criteria of EG/OGD & \Cref{thm:eg_stable} \\
\cline{2-3} & \cellcolor{lightgray} more aggressive extra-gradient steps, more stable & \Cref{thm:egratio} \\
\cline{2-3} & \cellcolor[gray]{0.9} EG/OGD are more stable than GDA & \Cref{thm:gd_eg} \\ 
\cline{2-3} Stability & \cellcolor{lightgray} local stability at local saddle points & \Cref{lem:loc_sadl}, \Cref{cor:eg_sadl} \\
\cline{2-3} & \cellcolor[gray]{0.9} local stability at strict local minimax points & \Cref{lem:loc_min_max}, \Cref{thm:neg_y} \\
\cline{2-3} & \cellcolor{lightgray}  local stability of gradient algorithms &  \multirow{2}{*}{\Cref{thm:all}} \\
&  \cellcolor{lightgray} at general local minimax points &  \\
\hline
\end{tabular}\label{tab:results}
\end{table*}


\noindent \blue{\textbf{Notation:}  In this paper we will use several conventions to denote optimality. To distinguish the concepts clearly, we use $\zv_\star = (\xv_\star, \yv_\star)$ for global/local saddle points; $\zv^* = (\xv^*, \yv^*)$ for global/local minimax points; $\zv_* = (\xv_*, \yv_*)$ for global/local maximin points and $\zv^\star = (\xv^\star, \yv^\star)$ for local robust points (\Cref{app:local_robust_point}). In Section \ref{sec:local_stab} we also use $\zv^* = (\xv^*, \yv^*)$ for general stationary points. 
When two different notions of optimality appear (such as in the proof of Prop.~\ref{prop:ulmm}), we choose the notation based on which notion comes first. 
}

\section{Global optimal points}\label{sec:global}

We focus on a \emph{two-player zero-sum smooth game} with a payoff function $f: \X\times \Y \to \R$ that is sufficiently many times differentiable depending on the context. \blue{We consider $\Xc\subset \R^n$ and $\Yc \subset \R^m$ to be non-empty subsets of Euclidean spaces and will add additional assumptions (convexity, closedness) when necessary. }
The min-player selects a strategy $\xv \in \X$ while the max-player selects a strategy $\yv \in \Y$, after which the min-player receives utility $-f(\xv,\yv)$ and the max-player receives $f(\xv,\yv)$. 
In our setting the min-player aims to minimize $f(\cdot, \yv)$ given (an estimate of) the max-player's strategy $\yv$ and conversely the max-player tries to maximize $f(\xv,\cdot)$ given  (an estimate of) the min-player's strategy $\xv$. 
In general, $f$ is not convex in $\xv$ and not concave in $\yv$ (NCNC), which has become extremely popular in machine learning (ML) recently, due to the rise of deep models. For instance, in generative adversarial networks \citep{goodfellow2014generative}, $\xv$ models the parameter of a generator while $\yv$ models that of a discriminator. In adversarial training \citep{madry2017towards}, $\xv$ is the robust model that we aim to train while $\yv$ represents possible adversarial attacks. 
In those examples (and many others), the function $f$ of interest is NCNC. A major challenge is to define proper notions of optimality (stationarity) and to understand the limiting behaviour of popular algorithms that are currently used by practitioners. 

In the convex setting, the following solution concept is well-known:

\begin{restatable}[global saddle]{defn}{}\label{def:sadl}
We call $(\xv_\star, \yv_\star) \in \X \times \Y$ global saddle if for all  $\xv \in \X$ and $\yv \in \Y$:
\be\label{eq:saddle_def}
f(\xv_\star, \yv) \leq f(\xv_\star, \yv_\star)\leq f(\xv, \yv_\star).
\en
In other words, we have \emph{simultaneously}:
\begin{align}
\xv_\star \in \argmin_{\xv\in\X} ~ f(\xv, \yv_\star), ~~ \yv_\star \in \argmax_{\yv\in\Y} ~ f(\xv_\star, \yv).
\end{align}
\end{restatable}

\blue{Global saddle points correspond to Nash equilibria \citep{nash1950equilibrium}, where each player knows the opponent's strategy exactly and aims to maximize the gain, but has no incentive to deviate from his/her current strategy. }

We may also encounter a scenario where the players move in sequence, and we need the following definitions:

\begin{restatable}[global envelope function]{defn}{}\label{def:envelope}
The upper and lower envelope functions are defined respectively as:
\begin{align}
\of(\xv) &:= \sup_{\yv\in\Y} ~ f(\xv,\yv), ~~
\uf(\yv) := \inf_{\xv\in\X} ~ f(\xv,\yv).
\end{align}
\end{restatable}
\blue{For envelope functions, we allow $\of$ to take value $+\infty$ and $\uf$ to take value $-\infty$. In \Cref{def:envelope}, the min-player for $\xv$ moves first and knows nothing about the max-player for $\yv$. A natural strategy is to minimize the worst-case payoff, i.e., the upper envelope function $\of(\xv)$, which is typically non-convex and non-smooth (even when $f$ is itself smooth):}
\begin{align}
\min_{\xv \in \X} ~~ \of(\xv).
\end{align}
\blue{On the other hand, the max-player simply maximizes $f(\xv, \cdot)$ given any $\xv$. This leads immediately to the following solution concept:}
\begin{restatable}[global minimax and maximin]{defn}{}\label{def:globalmm}
$(\xv^*, \yv^*) \in \X\times \Y$ is global minimax if 
\begin{align}
\textrm{\tcs}~\xv^* \in \argmin_{\xv\in\X} \of(\xv), ~  \textrm{\tcs[2]}~\yv^*\blue{= \yv^*(\xv^*)} \in \argmax_{\yv \in \Y} ~ f(\xv^*, \yv).    
\end{align}
In other words, for all
$\xv \in \X$ and $\yv \in \Y$:
\be\label{eq:minimax}
f(\xv^*, \yv) \leq f(\xv^*, \yv^*) =  \of(\xv^*) \leq \of(\xv).
\en
Similarly, we call $(\xv_*, \yv_*) \in \X\times \Y$ global maximin if 
\begin{align}
\textrm{\tcs}~\yv_* \in \argmax_{\yv\in\Y} \uf(\yv), ~  \textrm{\tcs[2]}~\xv_* \blue{= \xv_*(\yv_*)} \in \argmin_{\xv \in \X} ~ f(\xv, \yv_*).
\end{align}
In other words, for all
$\yv \in \Y$ and $\xv \in \X$:
\be\label{eq:maximin}
\uf(\yv) \leq \uf(\yv_*) = f(\xv_*, \yv_*) \leq f(\xv, \yv_*).
\en
\end{restatable}

\blue{
The concept of global minimax points is used widely in machine learning. For example, in the formulation of GAN \citep{goodfellow2014generative}, we first find the optimal parameters of the discriminator, $\thetav_D$, based on the parameters of the generator $\thetav_G$, and then optimize over $\thetav_G$. In other words, the optimal solution $(\thetav_G^*, \thetav_D^*)$ is a global minimax point (see the definition of $V$ in \citet{goodfellow2014generative}):
\be
V(\thetav_G^*, \thetav_D) \leq V(\thetav_G^*, \thetav_D^*), \, \max_{\thetav_D} V(\thetav_G, \thetav_D) \geq \max_{\thetav_D} V(\thetav_G^*, \thetav_D), \, \forall \thetav_G, \thetav_D.
\en
In the distributional robustness formulation \citep{sinha2018certifying}, we find the global minimax point $(\thetav^*, P^*)$, where $\thetav^*$ is the best model parameter and $P^*$ is the worst adversarial distribution, such that:
\be
\mathbb{E}_P[\ell(\thetav^*; Z)] \leq  \mathbb{E}_{P^*}[\ell(\thetav^*; Z)], \, \sup_{P\in \Pc}\mathbb{E}_{P}[\ell(\thetav; Z)] \geq \sup_{P\in \Pc}\mathbb{E}_{P}[\ell(\thetav^*; Z)], \, \forall \thetav \in \Theta, \, P\in \Pc. 
\en
Since we use neural networks in these applications, the payoff function is non-convex non-concave, and thus a saddle point may not always exist. 
}

\begin{rem}[\blue{\textbf{difficulty of finding global minimax}}]\label{rem:difficulty}
Although the notion of global minimax is well-defined, it suffers from some major issues once we enter the NCNC world: \begin{itemize}
\item \blue{We are not aware of an efficient algorithm \citep{murty1987some} for finding} a global minimizer $\xv^*$ of the non-convex function $\of$. This can be mitigated by contending with a local minimizer or even stationary point. 
\item Given $\xv^*$, \blue{it is NP-hard to find} a global maximizer $\yv^*$ of the non-concave function $f(\xv^*, \yv)$. While it is tempting to relax again to a local solution, this will unfortunately affect our notion of optimality for $\xv^*$ in the first place. We will return to this issue in the next section.
\item The envelope function $\of$ is not smooth even when $f$ is. Although we can turn to non-smooth optimization techniques, it will be inevitably slow to optimize $\of$.
\end{itemize}
\end{rem}

If we define the ``mirror'' function $\mf(\yv, \xv) = f(\xv,\yv)$, then $(\xv_*,\yv_*)$ is global maximin for $f$ iff $(\yv_*, \xv_*)$ is global minimax for $-\mf$. For this reason, we will limit our discussion mainly to minimax. \Cref{def:globalmm} arises in the optimization literature as well since it can be treated as a global solution to the minimax optimization problem: $$\min_{\xv\in \X}\max_{\yv \in \Y} f(\xv, \yv).$$ 

We note that the ordering of $\xv$ and $\yv$, i.e. which player moves first, matters: for instance, to get a global minimax pair $(\xv^*,\yv^*)$, we must first find $\xv^*$ and then conditioned on $\xv^*$ we find the ``certificate'' $\yv^*$. In game-theoretic terms, this is also known as a Stackelberg game \citep{von2010market}, where $\xv$ is the leader while $\yv$ is the follower.

It is well-known that weak duality, namely the inequality 
\begin{align}
\label{eq:wd}
\max_{\yv \in \Y} \uf(\yv) \leq \min_{\xv\in\X} \of(\xv)    
\end{align}
always holds. Strong duality, namely when equality is attained in \eqref{eq:wd}, holds only under stringent conditions. The following theorem easily follows from the definitions:

\begin{restatable}[e.g.~{\citealt[][Theorem 1.4.1]{facchinei2007finite}}]{thm}{mMinter}
\label{thm:sss}
For \emph{any} function $f$, the pair $(\xv_\star, \yv_\star)\in \X\times \Y$ is global saddle iff it is both global minimax and global maximin iff strong duality holds and 
\begin{align}\label{eq:argmin}
\xv_\star \in \argmin_{\xv\in\X} \of(\xv), ~ \yv_\star \in\argmax_{\yv\in\Y} \uf(\yv).
\end{align}
\end{restatable}

Let us give some examples to digest the definitions. In general, it is possible to find a game where both global maximin and minimax points exist, but there is no saddle point:

\begin{restatable}[both global minimax and maximin points exist; no saddle point]{eg}{}
\label{eg:nc}
Consider the bivariate function
\be\label{eq:nc}
f(x, y) = x^4/4 - x^2/2 + xy
\en
defined on $\R\times \R$. Global minimax points are clearly $\{0\}\times \R$ with value $0$. On the other hand, global maximin points are $(\pm 1, 0)$ with value $-1/4$. Indeed, 
\begin{align}
\max_y \min_x ~ x^4/4 - x^2/2 + xy \leq \max_y \min_x ~ x^4/4 - x^2/2 \leq -\tfrac14,
\end{align}
with equality attained at $(\pm 1, 0)$. 
The failure of strong duality proves the non-existence of saddle points (\Cref{thm:sss}). 
\end{restatable}

Note that given a global saddle pair $(\xv_\star, \yv_\star)$, $\yv_\star\in \Yc_\star := \argmax_{\yv\in\Y} f(\xv_\star, \yv)$ but not every certificate $\bar \yv \in \Yc_\star$ forms a global saddle pair with $\xv_\star$. This is known as ``instability,'' which is the reason underlying the non-convergence of the gradient descent ascent (GDA) algorithm \citep{Golshtein72, nemirovsky1983problem}.

\begin{restatable}[\blue{\textbf{instability of GDA}}]{eg}{}
\label{exm:bl}
Consider the bilinear (hence convex-concave) function $$f(x, y) = xy$$ defined on $\R \times \R$. It is easy to verify that global minimax points are precisely the set $\{0\}\times \R$ while global maximin points are $\R\times \{0\}$. 
Taking the intersection we have the unique global saddle point $(0, 0)$. This bilinear function is unstable, since given $x^* = 0$, not every global minimax certificate (namely the entire $\R$) forms a global saddle point with $x^*$. The last iterates of GDA do not converge to the unique global saddle point for this function with any \blue{(constant or not)} step size, provided that it is not initialized at the saddle point \citep[p.~211]{nemirovsky1983problem}.
\end{restatable}

Another interesting example consists of quadratic games, which we completely classify in \Cref{sec:q2}. Below we give a one-dimensional example where there is no global maximin or saddle point, but global minimax points exist.
\begin{restatable}[\blue{\textbf{global minimax points exist; no global maximin or saddle points}}]{eg}{}\label{exm:onedq}
Let $f(x, y) = ax^2 + by^2 + cxy$ with $a < 0, \, b < 0$ and $c^2 \geq ab$. According to the characterization in \Cref{thm:local_global_q}, $f$ only admits global minimax points. Note that for quadratic games, the existence of both global minimax and maximin points implies the existence of a saddle point, in sharp contrast with \Cref{eg:nc}.
\end{restatable}

From the example above, we see that even for simple quadratic games, saddle points may not exist.
In fact, unconstrained quadratic games are often given as typical examples for NCNC minimax optimization \citep{daskalakis2018limit, jin2019minmax, IbrahimAGM19, wang2019solving}. Locally, they can also be regarded as second-order approximations of a smooth function, and thus seem to be good representatives of NCNC games. However, we will show in \Cref{sec:q2} that they are quite special in many aspects.


\section{Local optimal points}\label{sec:local_opt}

In this section, we study definitions of local optimal points based on envelope functions. Compared to global optimal points, for local versions, we assume that we only have access to local information of $f$, i.e., given a point $(\xv, \yv)$, we only know $f$ over a neighborhood $\Nc(\xv)\times \Nc(\yv)$. Therefore, each player can only evaluate its current strategy by comparing with other strategies in the current neighborhood, corresponding to the notion of a local minimum (maximum). This can be achieved with the following local envelope functions.
In the definition below, we denote
\be\label{eq:nbhd}
\Nc(\yv^*, \epsilon) := \{\yv\in \Y: \|\yv-\yv^*\|\leq \e\},
\en
as the intersection of $\Y$ with a ball of radius $\e$ surrounding $\yv^*$ in $\R^m$, and similarly for $\Nc(\xv^*, \ve)$. Of course, the exact form of the ball depends on the norm we choose.
\begin{restatable}[local envelope function]{defn}{}\label{def:envelope_loc}
Fix a reference point $\yv^*\in\Y$ and radius $\epsilon \geq 0$, 
we localize the envelope function:
\begin{align}
\of_\epsilon(\xv) = \of_{\epsilon, \yv^*}(\xv) &:= \max_{\yv\in \Nc(\yv^*, \epsilon)} ~ f(\xv,\yv). 
\end{align}
The definition for $\uf_\epsilon(\yv) = \uf_{\epsilon,\xv^*}(\yv)$ is similar if we fix some $\xv^*\in\X$.
\end{restatable}

In \textsection~\ref{sec:existing} we propose a unified framework for local optimality and then study the differential optimality conditions in \textsection~\ref{sec:loc_family}.

\subsection{Definitions of local optimality}\label{sec:existing}

\blue{In this subsection, we start from the simplest definition of local optimality -- local saddle points, and then relax the constraints on the players to obtain the more general local minimax points \citep{jin2019minmax}. It is also possible to extend local minimax points further to local robust points (LRPs), which we delay to \Cref{app:local_robust_point}. }

In the NCNC setting, it is natural to consider local versions of saddle points (see \Cref{def:sadl}) by localizing around neighborhoods $\Nc(\xv_\star, \epsilon)$ and $\Nc(\yv_\star, \epsilon)$. Below, when we mention the local envelope functions $\of_{\e}(\xv)$ and $\uf_{\ve}(\yv)$ (see \Cref{def:envelope_loc}) the centers and the neighborhoods are often omitted since they are clear from the context.

\begin{restatable}[local saddle]{defn}{}\label{def:loc_sadl}
We call the pair $(\xv_\star, \yv_\star) \in \X\times \Y$ local saddle if there exists $\epsilon > 0$, such that for all $\xv \in \Nc(\xv_\star, \e)$ and $\yv\in \Nc(\yv_\star, \e)$, 
$f(\xv_\star, \yv) \leq f(\xv_\star, \yv_\star) \leq f(\xv, \yv_\star).
$
In other words,
\begin{itemize}
\item  Fixing $\xv_\star$, then $\yv_\star$ is a local maximizer of $\uf_{0, \xv_\star}(\yv) = f(\xv_\star, \yv)$; 
\item Fixing $\yv_\star$, then $\xv_\star$ is a local minimizer of  $\of_{0, \yv_\star}(\xv) = f(\xv, \yv_\star)$.
\end{itemize}
\end{restatable}
In the above definition, each player contends with the local optimality of its strategy by comparing with other strategies in a neighborhood. 
For local saddle points, \blue{we can WLOG choose the Euclidean norm $\|\cdot\|_2$ in the neighborhood definition (see \eqref{eq:nbhd}).}

\blue{We can now generalize the definition above. One player may not be aware of the exact strategy of the opponent, and thus doing robust optimization, given a certain range of the opponent's strategy.}
\blue{If $\xv$ is doing (a sequence of) local robust optimization and $\yv$ is doing usual optimization given the strategy of $\xv$, we have the following definition:}

\begin{restatable}[local minimax]{defn}{}\label{def:jin}
We call $(\xv^*, \yv^*) \in \X\times \Y$ a local minimax point if 
\begin{itemize} 
    \item Fixing $\xv^*$, then $\yv^*$ is a local maximizer of $\uf_{0, \xv^*}(\yv) = f(\xv^*, \yv)$; 
    \item Fixing $\yv^*$, then $\xv^*$ is a local minimizer of  $\of_{\epsilon_n, \yv^*}(\xv)$ for all $\e_n$ in \blue{some sequence $0 < \e_n \to 0$}. 
\end{itemize}
Furthermore, if there is a neighborhood $\Nc$ of $\xv^*$ such that~for all $\e_n$ in the sequence, $\xv^*$ is a local minimizer of $\of_{\e_n}$ on $\Nc$, then we call $(\xv^*, \yv^*)$ \emph{uniformly local minimax}. 
\end{restatable}

In the definition above, we also proposed uniformly local minimax points. By uniformity we mean that the neighborhood $\Nc$ does not depend on the element $\e_n$ in the sequence. We will show a close relation between local saddle points and uniformly local minimax points in \Cref{prop:ulmm}.

\blue{
\Cref{def:jin} reveals the asymmetric position between the two players for $\xv$ and $\yv$: $\yv$ needs only be a local certificate to testify the local optimality of $\xv$, but $\xv$ minimizes the envelope function $\of_\e(\xv)$, the worst-case payoff, simultaneously for a sequence of $\e_n \to 0$. 
By switching the role of $\xv$ and $\yv$ we obtain a similar notion of local maximin. When both players satisfy this stringent condition, we obtain a new optimality notion that we term as local robust points (\Cref{app:local_robust_point}). }

In \Cref{prop:eq2} we will see that \Cref{def:jin} has a seemingly stronger but equivalent form. To digest the somewhat complicated definition, we mention the following interpretation \citep[e.g.][]{wang2019solving}:
\begin{restatable}[\blue{\textbf{sufficient and necessary condition of local minimax when  $\partial^2_{\yv\yv} f$ is invertible}}]{thm}{slmm}\label{thm:invertible_local_mm}
Let $\X = \R^n, \Y=\R^m$ and $f: \R^n \to \R^m$ be twice continuously differentiable.
Suppose $\partial^2_{\yv\yv} f(\xv^*, \yv^*)$ is invertible (i.e.~non-degenerate), then $(\xv^*, \yv^*)$ is local minimax iff 
\begin{itemize}
\item $\partial_{\yv} f(\xv^*,\yv^*) = \zero$,  $\partial^2_{\yv\yv} f(\xv^*, \yv^*) \prec \zero$, and
\item $\xv^*$ is a local minimizer of the total function $f(\xv, \yv(\xv))$ where $\yv$ is defined implicitly near $\xv^*$ through the non-linear equation 
\begin{align}
\label{eq:im-tol}
\partial_{\yv} f(\xv, \yv) = \zero.
\end{align}
\end{itemize}
\end{restatable}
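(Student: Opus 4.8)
The plan is to reduce each of the two listed conditions to the corresponding clause of \Cref{def:jin}, using the implicit function theorem, which is available precisely because $\partial^2_{\yv\yv} f(\xv^*,\yv^*)$ is invertible. First I would dispose of the $\yv$-side. Under $C^2$ smoothness, the first clause of \Cref{def:jin}---that $\yv^*$ is a local maximizer of $\uf_{0,\xv^*}(\yv) = f(\xv^*,\cdot)$---is equivalent to the first-order condition $\partial_{\yv} f(\xv^*,\yv^*)=\zero$ together with the second-order necessary condition $\partial^2_{\yv\yv} f(\xv^*,\yv^*) \preceq \zero$; invoking invertibility upgrades $\preceq$ to the strict $\prec\zero$ in both directions (negative semidefinite plus nonsingular forces negative definite, and negative definiteness gives a strict local maximum). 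So the first listed bullet matches the first clause exactly, and it remains to match the second clause with the claim that $\xv^*$ locally minimizes $\phi(\xv):= f(\xv,\yv(\xv))$.

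Since $\partial^2_{\yv\yv} f(\xv^*,\yv^*) \prec \zero$ is invertible, the implicit function theorem yields a $C^1$ map $\xv\mapsto\yv(\xv)$ on a neighborhood of $\xv^*$ with $\yv(\xv^*)=\yv^*$, uniquely solving $\partial_{\yv} f(\xv,\yv(\xv))=\zero$. By continuity of the Hessian I would shrink to a product neighborhood $U\times V$ with $V=\Nc(\yv^*,\e_0)$ on which $\partial^2_{\yv\yv} f \preceq -\mu\I \prec \zero$ for some $\mu>0$, so that $f(\xv,\cdot)$ is $\mu$-strongly concave on the convex set $V$ for every $\xv\in U$.

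The crux is the identity $\of_{\e,\yv^*}(\xv)=\phi(\xv)$ for $\xv$ near $\xv^*$. Fix any $\e\in(0,\e_0]$. Strong concavity makes the interior critical point $\yv(\xv)$ the unique maximizer of $f(\xv,\cdot)$ over the ball $\Nc(\yv^*,\e)$, provided it lies in the interior; since $\yv(\cdot)$ is continuous with $\yv(\xv^*)=\yv^*$, there is $\delta(\e)>0$ so that $\yv(\xv)$ stays in the interior of $\Nc(\yv^*,\e)$ for all $\xv\in\Nc(\xv^*,\delta(\e))$. This forces the maximum over $\Nc(\yv^*,\e)$ to be attained at $\yv(\xv)$, never on the boundary sphere, giving $\of_{\e,\yv^*}(\xv)=f(\xv,\yv(\xv))=\phi(\xv)$ on $\Nc(\xv^*,\delta(\e))$.

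The equivalence then follows. For sufficiency, if $\xv^*$ locally minimizes $\phi$, the identity shows $\xv^*$ locally minimizes each $\of_{\e_n,\yv^*}$ along any sequence $\e_n\to 0$ with $\e_n\le\e_0$ (e.g.\ $\e_n=\e_0/n$), fulfilling the second clause of \Cref{def:jin}. For necessity, local minimaxity supplies a sequence $\e_n\to 0$ with $\xv^*$ a local minimizer of $\of_{\e_n,\yv^*}$; for $n$ large enough that $\e_n\le\e_0$, the identity gives $\xv^*$ as a local minimizer of $\phi$. The main obstacle is the crux step: one must guarantee the constrained maximizer over $\Nc(\yv^*,\e)$ does not escape to the boundary as $\xv$ varies, and the admissible radius $\delta(\e)$ for $\xv$ genuinely depends on $\e$. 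This is exactly why \Cref{def:jin} only demands a sequence $\e_n\to 0$ rather than a single fixed radius, and why the strong-concavity and compactness control of the boundary values is indispensable.
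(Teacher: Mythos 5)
Your proof is correct and follows essentially the same route as the paper's: reduce the first bullet to $\yv^*$ being a (strict) local maximizer of $f(\xv^*,\cdot)$, apply the implicit function theorem to obtain $\yv(\xv)$, and establish the identity $\of_{\e}(\xv) = f(\xv,\yv(\xv))$ for $\xv$ near $\xv^*$, from which both directions of the equivalence follow immediately. The only difference is that you make explicit the strong-concavity and interiority argument guaranteeing that the maximum over $\Nc(\yv^*,\e)$ is attained at the interior critical point $\yv(\xv)$, a step the paper compresses into ``shrinking the neighbourhood around $\xv^*$ if necessary we may assume $\yv(\xv)\in\Nc(\yv^*,\e)$ so that $\of_\e(\xv)=f(\xv,\yv(\xv))$.''
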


We emphasize that, unlike the definition in \citet{jin2019minmax}, we do not allow $\e_n$ to take 0 in \Cref{def:jin} for two reasons: (a) This allows us to better separate local saddle from local minimax; (b) It is  unnecessary to have $\e_n = 0$, as we will see in \Cref{prop:eq1}.

We now show how to simplify \Cref{def:jin}, starting with the following key lemma:

\begin{restatable}[]{lem}{inclusion}
\label{thm:eqseq}
Suppose $\yv^*$ maximizes $f(\xv^*, \yv)$ over some neighborhood $\Nc(\yv^*, \e_0)$.
If $\xv^*$ is a local minimizer of $\of_{\e, \yv^*}$, for some \blue{$0\leq \e \leq \e_0$}, then it remains a local minimizer (even over the same local neighborhood) of $\of_{\Nc}(\xv) := \max_{\yv \in \Nc} f(\xv, \yv)$  
for any $\Nc(\yv^*, \e) \subseteq \Nc \subseteq \Nc(\yv^*, \e_0)$.
\end{restatable}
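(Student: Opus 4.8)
The plan is to establish a short sandwich of inequalities, exploiting two elementary facts: that the envelope functions in question all agree at the reference point $\xv^*$, and that the envelope is monotone with respect to enlarging the inner maximization domain.

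First I would record the values at $\xv^*$. Since $\yv^* \in \Nc(\yv^*, \e) \subseteq \Nc \subseteq \Nc(\yv^*, \e_0)$ and, by hypothesis, $\yv^*$ maximizes $f(\xv^*, \cdot)$ over the largest neighborhood $\Nc(\yv^*, \e_0)$, it maximizes $f(\xv^*, \cdot)$ over every intermediate set as well. Hence
\begin{align*}
\of_{\e, \yv^*}(\xv^*) = \of_{\Nc}(\xv^*) = f(\xv^*, \yv^*).
\end{align*}
This is the crucial point: passing from the small neighborhood $\Nc(\yv^*, \e)$ to the larger one $\Nc$ does not change the value at $\xv^*$, because the maximizer $\yv^*$ already lies inside both.

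Next I would use monotonicity of the maximum over an enlarged set: because $\Nc(\yv^*, \e) \subseteq \Nc$, for every $\xv$ we have $\of_{\e, \yv^*}(\xv) \le \of_{\Nc}(\xv)$. Combining this with the hypothesis that $\xv^*$ is a local minimizer of $\of_{\e, \yv^*}$ over some neighborhood $U$ of $\xv^*$ in $\X$, the chain
\begin{align*}
\of_{\Nc}(\xv) \ge \of_{\e, \yv^*}(\xv) \ge \of_{\e, \yv^*}(\xv^*) = f(\xv^*, \yv^*) = \of_{\Nc}(\xv^*)
\end{align*}
holds for all $\xv \in U$. This says precisely that $\xv^*$ is a local minimizer of $\of_{\Nc}$, and over the very same neighborhood $U$, which is exactly the claimed conclusion (including the parenthetical ``even over the same local neighborhood'').

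The argument involves no estimation or limiting calculation, so there is no genuinely hard step; the only point requiring care is bookkeeping the direction of the two inequalities and verifying that $\yv^*$ lies in every neighborhood between $\Nc(\yv^*, \e)$ and $\Nc(\yv^*, \e_0)$, which is what makes the equalities at $\xv^*$ valid. The edge case $\e = 0$ (where $\of_{0, \yv^*}(\xv) = f(\xv, \yv^*)$) is covered without modification, and since the conclusion relies only on the set inclusion $\Nc(\yv^*,\e)\subseteq \Nc \subseteq \Nc(\yv^*,\e_0)$, it applies uniformly to every admissible $\Nc$.
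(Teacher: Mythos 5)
Your proof is correct and follows exactly the same route as the paper's: equality of all the envelope values at $\xv^*$ (since $\yv^*$ maximizes over the largest neighborhood), monotonicity of the maximum under set inclusion, and the resulting chain of inequalities over the same neighborhood $U$ of $\xv^*$. No issues.
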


\blue{Note that in the lemma above we allow $\e = 0$.} \Cref{thm:eqseq} reveals a key property of the local minimax point in \Cref{def:jin}: the norm in the neighborhood definition (see \eqref{eq:nbhd}) is immaterial (since we can shrink the neighborhood using \Cref{thm:eqseq} without impairing local minimaximality). In other words, the definition of local minimax points is topological and it does not depend on the norm we actually choose.
\blue{Using \Cref{thm:eqseq} we can ``strengthen'' the notion of local minimax even more. In particular, if \Cref{def:jin} holds for one diminishing sequence such that $\e_0 \geq \e_n \downarrow 0$ then it automatically holds for \emph{all} sequences that satisfy this same condition. We can even 
extend the sequence to an interval of $\e$'s:}

\begin{restatable}[\blue{\textbf{equivalent definition of local minimax}}]{prop}{propcon}
\label{prop:eq2}
The pair $(\xv^*, \yv^*)\in 
\X\times \Y$ is a local minimax point iff 
\begin{itemize} 
    \item Fixing $\xv^*$, then $\yv^*$ is a local maximizer of $\uf_{0, \xv^*}(\yv) = f(\xv^*, \yv)$;
    \item Fixing $\yv^*$, then $\xv^*$ is a local minimizer of  $\of_{\epsilon, \yv^*}(\xv)$ for all $\e \in (0, \e_0]$ with some $\e_0 > 0$. 
\end{itemize}
\end{restatable}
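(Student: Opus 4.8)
The plan is to prove the two directions of the equivalence separately, with essentially all the content sitting in the forward direction (\Cref{def:jin} $\Rightarrow$ \Cref{prop:eq2}). The backward direction is immediate: if $\xv^*$ is a local minimizer of $\of_{\e, \yv^*}$ for \emph{every} $\e \in (0, \e_0]$, then it is in particular a local minimizer along any decreasing sequence $0 < \e_n \to 0$ lying inside $(0, \e_0]$ (take $\e_n = \e_0/n$, say), which is exactly the second bullet of \Cref{def:jin}. Since the first bullet is verbatim identical in the two statements, nothing further is needed here.

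For the forward direction I would first unpack the data supplied by \Cref{def:jin}: a radius $\e_0 > 0$ over which $\yv^*$ maximizes $f(\xv^*, \cdot)$ (first bullet), together with a sequence $0 < \e_n \to 0$ along which $\xv^*$ is a local minimizer of $\of_{\e_n, \yv^*}$ (second bullet). The task is to upgrade ``local minimizer along the discrete sequence $\{\e_n\}$'' to ``local minimizer for the entire interval $(0, \e_0]$.'' The engine for this upgrade is precisely \Cref{thm:eqseq}: once $\yv^*$ is a local maximizer of $f(\xv^*, \cdot)$ over $\Nc(\yv^*, \e_0)$, the property of $\xv^*$ being a local minimizer of the localized envelope is \emph{monotone} in the radius, so a minimizer at some radius $r$ remains a minimizer at every radius between $r$ and $\e_0$.

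The key step is then a sandwiching argument. I would fix an arbitrary target $\e \in (0, \e_0]$ and, using $\e_n \to 0$, select an index $n$ with $\e_n \le \e$, giving the nesting $\Nc(\yv^*, \e_n) \subseteq \Nc(\yv^*, \e) \subseteq \Nc(\yv^*, \e_0)$. Applying \Cref{thm:eqseq} with inner radius $r = \e_n$ (at which $\xv^*$ is already a local minimizer of $\of_{\e_n, \yv^*}$) and intermediate neighborhood $\Nc = \Nc(\yv^*, \e)$ then yields that $\xv^*$ is a local minimizer of $\of_{\Nc(\yv^*, \e)} = \of_{\e, \yv^*}$. As $\e \in (0, \e_0]$ was arbitrary, this establishes the second bullet of \Cref{prop:eq2} with the very same $\e_0$; note that the ``even over the same local neighborhood'' strengthening in \Cref{thm:eqseq} is more than we need for plain (non-uniform) local minimaximality.

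The only place requiring care — the main obstacle, such as it is — is making the two radii cooperate: the target $\e$ must not exceed the radius $\e_0$ on which $\yv^*$ is a local maximizer, which is why the proposition's $\e_0$ is chosen to be exactly that radius; and the sequence element $\e_n$ must fall at or below $\e$, which is guaranteed by $\e_n \to 0$. Everything else is a direct invocation of \Cref{thm:eqseq}, whose own proof — verifying that enlarging the $\yv$-neighborhood up to $\e_0$ introduces no new envelope value that could undercut $\of$ near $\xv^*$ — is where the genuine analytic content resides and which I take as given.
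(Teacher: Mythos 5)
Your proposal is correct and follows essentially the same route as the paper's proof: both directions are handled identically, with the forward direction reduced to choosing, for each target $\e \in (0,\e_0]$, a sequence element $\e_n \le \e$ and invoking \Cref{thm:eqseq} to pass from the inner radius $\e_n$ to the intermediate radius $\e$ inside $\Nc(\yv^*,\e_0)$. The only difference is that you spell out the trivial converse, which the paper dispenses with in one sentence.
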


From \Cref{def:jin}, every uniformly local minimax point is local minimax. In fact, much more can be said between uniformly local minimax and local saddle:
\begin{restatable}[\blue{\textbf{local saddle and uniformly local minimax}}]{prop}{uniform}
\label{prop:ulmm}
Every local saddle point is uniformly local minimax. If for any $\xv\in \X$, $f(\xv, \cdot)$ is upper semi-continuous, then every uniformly local minimax point is local saddle.
\end{restatable}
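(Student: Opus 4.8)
The plan is to treat the two implications separately, observing first that both notions impose the \emph{identical} max-player condition — namely that the $\yv$-coordinate locally maximizes $f$ in its second argument — so that all the content lies in comparing the two min-player conditions, and no work at all is needed on the max side.

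For the first implication, suppose $(\xv_\star,\yv_\star)$ is local saddle with radius $\e$. I would fix the single neighborhood $\Nc:=\Nc(\xv_\star,\e)$ and show it works simultaneously for \emph{every} radius $\e'\in(0,\e]$, which is exactly what uniformity demands. The key is the chain
$$
\of_{\e',\yv_\star}(\xv) \;\ge\; f(\xv,\yv_\star) \;\ge\; f(\xv_\star,\yv_\star) \;=\; \of_{\e',\yv_\star}(\xv_\star), \qquad \xv\in\Nc,
$$
in which the first inequality holds because $\yv_\star\in\Nc(\yv_\star,\e')$, the middle is the min-player saddle inequality, and the final equality uses that $\yv_\star$ maximizes $f(\xv_\star,\cdot)$ over the shrunken ball $\Nc(\yv_\star,\e')\subseteq\Nc(\yv_\star,\e)$. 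Thus $\xv_\star$ minimizes $\of_{\e',\yv_\star}$ over the fixed $\Nc$, and choosing any sequence $\e_n\downarrow 0$ with $\e_n\le\e$ delivers uniform local minimaximality. Notably, no continuity is used in this direction.

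For the converse, assume $(\xv^*,\yv^*)$ is uniformly local minimax with fixed neighborhood $\Nc$ and sequence $\e_n\downarrow 0$, and that $f(\xv,\cdot)$ is upper semi-continuous. Since the max-player condition already coincides, I only need to show $\xv^*$ is a local minimizer of $f(\cdot,\yv^*)$. Fixing any $\xv\in\Nc$, uniformity lets me hold $\xv$ fixed while sending $n\to\infty$ in
$$
f(\xv^*,\yv^*) \;=\; \of_{\e_n,\yv^*}(\xv^*) \;\le\; \of_{\e_n,\yv^*}(\xv),
$$
where the equality holds for all large $n$ because $\yv^*$ locally maximizes $f(\xv^*,\cdot)$. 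The map $\e\mapsto\of_{\e,\yv^*}(\xv)$ is non-increasing as $\e\downarrow 0$, so its limit exists; upper semi-continuity of $f(\xv,\cdot)$ at $\yv^*$ forces $\lim_{\e\to 0^+}\of_{\e,\yv^*}(\xv)=f(\xv,\yv^*)$, the reverse bound $\ge f(\xv,\yv^*)$ being automatic from $\yv^*\in\Nc(\yv^*,\e)$. Passing to the limit then yields $f(\xv^*,\yv^*)\le f(\xv,\yv^*)$ for all $\xv\in\Nc$, which is the missing local saddle inequality.

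The main obstacle is precisely this interchange of the limit $n\to\infty$ with the inner maximization defining $\of_{\e_n}$, and I expect two ingredients to have to cooperate. \emph{Uniformity} is what permits a single $\xv\in\Nc$ to be tested against every $\e_n$ — plain local minimaximality would allow the admissible neighborhood to shrink with $n$, blocking the passage to the limit. \emph{Upper semi-continuity} is what identifies the monotone limit of the envelopes with $f(\xv,\yv^*)$ rather than a strictly larger value. Discarding semi-continuity is exactly the gap that the accompanying counterexample (\Cref{eg:uniform_not_local_saddle}) exploits, so I would flag that the hypothesis is genuinely needed for the second implication.
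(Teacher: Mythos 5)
Your proof is correct and follows essentially the same route as the paper: the forward direction is the monotonicity chain $\of_{\e'}(\xv)\ge f(\xv,\yv_\star)\ge f(\xv_\star,\yv_\star)=\of_{\e'}(\xv_\star)$, which is exactly what the paper's Lemma~\ref{thm:eqseq} (applied with $\e=0$) encapsulates, and the converse is the same use of uniformity plus upper semi-continuity, merely phrased as a monotone limit of $\of_{\e,\yv^*}(\xv)$ rather than the paper's $\delta$-argument. No gaps.
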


Thus, for upper semi-continuous functions (in $\yv$), surprisingly, local saddle points coincide with uniformly local minimax points. We cannot drop the semi-continuity assumption:
\begin{eg}[\blue{\textbf{uniformly local minimax does not imply local saddle without semi-continuity}}]\label{eg:uniform_not_local_saddle}
Fix any $\yv^* \in \Y$ and consider the lower semi-continuous function
\begin{align}
f(x, y) = \begin{cases}
-x^2, & y = y^* \\
x^2, & y \ne y^*
\end{cases}
,
~~
\mbox{with}
~~
\of_{\e, y^*}(x) = 
\begin{cases}
-x^2, & \e = 0\\
x^2, & \e \ne 0
\end{cases}
.
\end{align}
$(0, y^*)$ is uniformly local minimax but not local saddle.
\end{eg}

\begin{figure}
\centering
\includegraphics[width=11cm]{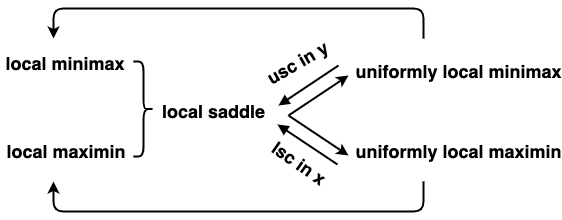}
\caption{The relationship among different notions of local optimality. usc: upper semi-continuity and lsc: lower semi-continuity. The arrow and the bracket signs mean ``to imply.'' For example, a uniformly local minimax point is \emph{bona fide} local minimax, and if a point is both local minimax and local maximin, it is local saddle. }
\label{fig:rel}
\end{figure}

\Cref{fig:rel} shows the relation between local saddle and (uniformly) local minimax (maximin) points. Finally, we prove that our \Cref{def:jin} coincides with the seemingly different one in Definition 14 of \citet[]{jin2019minmax}. Effectively, we manage to remove the continuity assumption in Lemma 16 of \citet{jin2019minmax} (cf.~\Cref{prop:eq2}). 
\begin{restatable}[\blue{\textbf{equivalence with \citet{jin2019minmax}}}]{prop}{equivalence}
\label{prop:eq1}
The pair $(\xv^*, \yv^*)$ is local minimax w.r.t.~function $f$ iff 
there exists $\d_0 > 0$ and a non-negative function $h$ satisfying $h(\d) \to 0 $ as $\d \to 0$, such that for any $\d \in (0, \d_0]$ and any $(\xv,\yv) \in \Nc(\xv^*, \d) \times \Nc(\yv^*, \d)$ we have 
\begin{align}
\label{eq:Jin}
f(\xv^*, \yv) \leq f(\xv^*, \yv^*) \leq \left[\max_{\yv' \in \Nc(\yv^*, h(\d))} ~ f(\xv, \yv')\right] =: \of_{h(\d)}(\xv).
\end{align}
\end{restatable}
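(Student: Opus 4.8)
The plan is to prove the equivalence by showing both directions, leveraging the characterization already established in Proposition~\ref{prop:eq2}, which says $(\xv^*, \yv^*)$ is local minimax iff $\yv^*$ locally maximizes $f(\xv^*, \cdot)$ and $\xv^*$ locally minimizes $\of_{\epsilon, \yv^*}$ for all $\epsilon \in (0, \epsilon_0]$. The key technical bridge between my \Cref{def:jin} and Jin et al.'s condition~\eqref{eq:Jin} is the coupling function $h(\delta)$: in the Jin formulation, the size of the $\yv$-neighborhood used to form the envelope is tied to the size $\delta$ of the $\xv$-neighborhood through $h$, whereas in \Cref{def:jin} these two radii are decoupled. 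So the crux is to manufacture an appropriate $h$ in one direction and to exploit its vanishing behavior in the other.

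\textbf{Forward direction (local minimax $\Rightarrow$ \eqref{eq:Jin}).} First I would use Proposition~\ref{prop:eq2} to fix some $\epsilon_0 > 0$ such that $\yv^*$ maximizes $f(\xv^*, \cdot)$ over $\Nc(\yv^*, \epsilon_0)$ and, simultaneously, $\xv^*$ is a local minimizer of $\of_{\epsilon, \yv^*}$ for every $\epsilon \in (0, \epsilon_0]$. For each such $\epsilon$, local minimality furnishes a radius $r(\epsilon) > 0$ so that $\of_{\epsilon}(\xv^*) \leq \of_{\epsilon}(\xv)$ for all $\xv \in \Nc(\xv^*, r(\epsilon))$. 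The task is to invert this data into a single function $h(\delta)$ with $h(\delta) \to 0$ that works for all small $\delta$ simultaneously. The natural construction is to set, for each $\delta$, the value $h(\delta)$ to be an $\epsilon$ small enough that the associated minimality radius $r(\epsilon)$ exceeds $\delta$; concretely one can take $h(\delta) = \sup\{\epsilon \in (0,\epsilon_0] : r(\epsilon') \geq \delta \text{ for a suitable choice}\}$, or more cleanly define $h$ as a monotone envelope of the $r(\cdot)$ data so that $\delta \leq r(h(\delta))$ and $h(\delta) \downarrow 0$. Given such an $h$, for $(\xv, \yv) \in \Nc(\xv^*, \delta) \times \Nc(\yv^*, \delta)$, the left inequality $f(\xv^*, \yv) \leq f(\xv^*, \yv^*)$ is immediate once $\delta \leq \epsilon_0$ (from $\yv^*$ being the local max), and the right inequality $f(\xv^*, \yv^*) = \of_{h(\delta)}(\xv^*) \leq \of_{h(\delta)}(\xv)$ follows from the minimality of $\xv^*$ for the envelope at radius $h(\delta)$, using $\delta \leq r(h(\delta))$.

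\textbf{Reverse direction (\eqref{eq:Jin} $\Rightarrow$ local minimax).} Here I would assume the Jin condition holds with some $\delta_0$ and $h$, and verify the two bullets of \Cref{def:jin}. The first bullet (that $\yv^*$ locally maximizes $f(\xv^*, \cdot)$) drops out by setting $\xv = \xv^*$ in \eqref{eq:Jin}, giving $f(\xv^*, \yv) \leq f(\xv^*, \yv^*)$ for all $\yv \in \Nc(\yv^*, \delta)$. For the second bullet I must produce a diminishing sequence $\epsilon_n \downarrow 0$ along which $\xv^*$ locally minimizes $\of_{\epsilon_n}$. Since $h(\delta) \to 0$ as $\delta \to 0$, I can select a sequence $\delta_n \downarrow 0$ and set $\epsilon_n := h(\delta_n)$, which vanishes. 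The right inequality of \eqref{eq:Jin} then reads $\of_{\epsilon_n}(\xv^*) = f(\xv^*,\yv^*) \leq \of_{\epsilon_n}(\xv)$ for all $\xv \in \Nc(\xv^*, \delta_n)$, exhibiting $\xv^*$ as a local minimizer of $\of_{\epsilon_n}$. One subtlety is that $h$ need not be injective or monotone, so $\epsilon_n$ might not be strictly decreasing; I would pass to a subsequence or use \Cref{thm:eqseq} to upgrade a single qualifying radius to a genuine diminishing sequence, invoking the fact that local minimaximality at one radius propagates downward.

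\textbf{The hard part} will be the bookkeeping in the forward direction: converting the family of radii $\{r(\epsilon)\}$ into a single admissible coupling $h$ with the two required properties ($h(\delta)\to 0$ and $\delta \leq r(h(\delta))$) without assuming any regularity (monotonicity, continuity) of $\epsilon \mapsto r(\epsilon)$. This is precisely where Jin et al.\ needed a continuity hypothesis, and the excerpt advertises that we \emph{remove} it; the payoff of \Cref{thm:eqseq} is that it lets us freely shrink neighborhoods, so the $r(\epsilon)$ data can be regularized by replacing each radius with the infimum of admissible radii over all $\epsilon' \leq \epsilon$, yielding a monotone surrogate. Making this regularization airtight—ensuring the surrogate still satisfies $h(\delta) \to 0$ and does not collapse to zero prematurely—is the step I would spend the most care on.
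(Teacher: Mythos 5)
Your overall strategy matches the paper's: prove both directions directly, read off the local-max condition by setting $\xv=\xv^*$, and use \Cref{thm:eqseq} to move between radii. The substantive differences are in the two places you yourself flag as delicate, and in both the paper's resolution is more elementary than what you sketch. In the forward direction you propose to regularize the continuum-indexed family $\{r(\e)\}_{\e\in(0,\e_0]}$ into a monotone coupling $h$ via a supremum/envelope; this can be made to work (the monotonicity you need, $r(\e_1)\le r(\e_2)$ for $\e_1\le\e_2$, is exactly what \Cref{thm:eqseq} supplies, and one checks $\inf\{\e: r(\e)\ge\d\}\to 0$), but you leave it unexecuted, and it is the heart of the proposition. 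The paper avoids the continuum entirely: it takes the countable sequence $\e_n\downarrow 0$ from \Cref{def:jin} with minimality radii $\d'_n$, forces a strictly positive decreasing sequence $\d_n=\min\{\d'_n,\d_{n-1},1/n\}\downarrow 0$, and defines $h$ piecewise constant, $h(\d)=\e_n$ on $(\d_{n+1},\d_n]$. Then $\d\le\d_n\le\d'_n$ gives the right inequality of \eqref{eq:Jin} immediately, and $h(\d)\to 0$ because $\d\le 1/n$ forces $n\to\infty$. No envelope or regularization argument is needed.

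In the reverse direction there is a genuine small gap: the proposition only requires $h\ge 0$, so your choice $\e_n:=h(\d_n)$ may vanish, whereas \Cref{def:jin} explicitly requires $\e_n>0$ (the paper stresses this exclusion). The paper handles this by observing that if $h(\d)=0$ for some admissible $\d$, then \eqref{eq:Jin} reads $f(\xv^*,\yv)\le f(\xv^*,\yv^*)\le f(\xv,\yv^*)$ on a product neighborhood, i.e.\ $(\xv^*,\yv^*)$ is local saddle, hence local minimax by \Cref{prop:ulmm}; otherwise it takes $\e_n=\sup_{m\ge n}h(\d_m)>0$ and upgrades minimality from radius $h(\d_n)$ to radius $\e_n$ via \Cref{thm:eqseq}. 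You should add this case split; the rest of your reverse direction (including the identity $\of_{h(\d)}(\xv^*)=f(\xv^*,\yv^*)$, which needs $h(\d)\le\d_0$ for small $\d$) is fine.
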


From this equivalence, we can also derive that every local saddle point is local minimax  \citep[Proposition~17]{jin2019minmax}. However, our \Cref{prop:ulmm} gives a more detailed depiction of local saddle points. 
For functions that are convex in $\xv$ and concave in $\yv$, we naturally expect that local optimality is somehow equivalent to global optimality:
\begin{restatable}[\blue{\textbf{local and global minimax points in the convex-concave case}}]{thm}{convexconcave}
\label{thm:cc}
Let the function $f(\xv, \yv)$ be convex in $\xv$ and concave in $\yv$. Then, an interior point $(\xv,\yv)$ is local minimax iff it is stationary, i.e., $\partial_{\xv} f(\xv, \yv) = \zero$ and $\partial_{\yv} f(\xv, \yv) = \zero$ iff it is saddle. In particular, local minimax implies global minimax. 
\end{restatable}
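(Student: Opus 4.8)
The plan is to establish the cyclic chain of implications \emph{saddle} $\Rightarrow$ \emph{local minimax} $\Rightarrow$ \emph{stationary} $\Rightarrow$ \emph{saddle}, which proves all three equivalences at once; the ``in particular'' clause then follows by combining the equivalence with \Cref{thm:sss}. Throughout I would use two elementary facts repeatedly: for the convex slice $f(\cdot,\yv)$ and the concave slice $f(\xv,\cdot)$, a \emph{local} minimizer (resp.\ maximizer) is automatically a \emph{global} one; and at an interior point a global extremum of a differentiable function forces the corresponding partial gradient to vanish.

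The two easy links are \emph{stationary} $\Rightarrow$ \emph{saddle} and \emph{saddle} $\Rightarrow$ \emph{local minimax}. For the former, $\partial_{\xv} f(\xv,\yv)=\zero$ together with convexity of $f(\cdot,\yv)$ makes $\xv$ a global minimizer of $f(\cdot,\yv)$, and dually $\partial_{\yv} f(\xv,\yv)=\zero$ with concavity makes $\yv$ a global maximizer of $f(\xv,\cdot)$; chaining the two inequalities yields the saddle inequality \eqref{eq:saddle_def}. For the latter, a saddle point is in particular a local saddle point, which is uniformly local minimax, hence local minimax, by \Cref{prop:ulmm}.

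The substantive step is \emph{local minimax} $\Rightarrow$ \emph{stationary}. The $\yv$-component is immediate: fixing $\xv$, the point $\yv$ is an interior local maximizer of the differentiable $f(\xv,\cdot)$, so $\partial_{\yv} f(\xv,\yv)=\zero$. For the $\xv$-component I would exploit that the localized envelope $\of_{\e_n,\yv}(\xv)=\max_{\yv'\in\Nc(\yv,\e_n)} f(\xv,\yv')$ is a pointwise maximum of the convex functions $f(\cdot,\yv')$, hence convex in $\xv$. Thus the hypothesis that $\xv$ is a \emph{local} minimizer of $\of_{\e_n,\yv}$ upgrades to $\xv$ being a \emph{global} minimizer of $\of_{\e_n,\yv}$ for every $n$. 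Since $\yv$ is a local maximizer of $f(\xv,\cdot)$, for all small $\e_n$ we have $\of_{\e_n,\yv}(\xv)=f(\xv,\yv)$, so global minimality gives $\of_{\e_n,\yv}(\xv')\geq f(\xv,\yv)$ for every $\xv'$. Letting $\e_n\downarrow 0$ and using continuity of $f$ (the maximum over the shrinking ball $\Nc(\yv,\e_n)$ converges to the value at its center), I get $\of_{\e_n,\yv}(\xv')\to f(\xv',\yv)$, whence $f(\xv',\yv)\geq f(\xv,\yv)$ for all $\xv'$. So $\xv$ is a global minimizer of $f(\cdot,\yv)$, and interiority gives $\partial_{\xv} f(\xv,\yv)=\zero$.

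The main obstacle is precisely this $\xv$-component, because the local minimax definition controls $\xv$ only through the envelope $\of_{\e_n,\yv}$, which merely \emph{upper bounds} $f(\cdot,\yv)$ rather than coinciding with it, and minimizing an upper bound says nothing directly about the function. The two ingredients that break the impasse are the convexity of each $\of_{\e_n,\yv}$ (which promotes the local minimizer to a global one, so the comparison holds for \emph{all} $\xv'$, not just nearby ones) and the monotone pointwise convergence $\of_{\e_n,\yv}\downarrow f(\cdot,\yv)$ as $\e_n\downarrow 0$. Having shown local minimax $\Rightarrow$ stationary $\Rightarrow$ saddle, the point is in particular saddle, so by \Cref{thm:sss} it is global minimax, which settles the final assertion.
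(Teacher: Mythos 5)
Your proof is correct, and it takes a noticeably different route from the paper's. The paper proves only two implications directly: \emph{stationary} $\Rightarrow$ \emph{local minimax}, by observing that $\of_\e$ is convex and invoking Danskin's theorem to conclude $\zero\in\partial\of_\e(\xv^*)$ from $\partial_{\xv} f(\xv^*,\yv^*)=\zero$; and \emph{local minimax} $\Rightarrow$ \emph{global minimax}, via the chain $\of(\xv)\geq\of_\e(\xv)\geq\of_\e(\xv^*)=f(\xv^*,\yv^*)=\of(\xv^*)$. The remaining links (local minimax $\Rightarrow$ stationary, and stationary $\Leftrightarrow$ saddle) are implicitly delegated to the general first-order necessary condition (\Cref{prop:stable}) and to standard convex analysis. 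You instead close a single cycle \emph{saddle} $\Rightarrow$ \emph{local minimax} $\Rightarrow$ \emph{stationary} $\Rightarrow$ \emph{saddle}, obtaining the first link for free from \Cref{prop:ulmm} rather than from Danskin's theorem, and supplying a self-contained proof of the one genuinely nontrivial link, local minimax $\Rightarrow$ stationary: convexity of $\of_{\e_n,\yv}$ (a pointwise supremum of convex functions) promotes the local minimizer to a global one, and letting $\e_n\to 0$ with $\of_{\e_n,\yv}(\xv')\to f(\xv',\yv)$ shows that $\xv$ globally minimizes $f(\cdot,\yv)$. This limiting argument is the main content your proof adds; the paper reaches the same conclusion through the heavier cone machinery behind \Cref{prop:stable}. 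You also derive the final clause from \Cref{thm:sss} rather than from the envelope chain; both are valid. One cosmetic point: the convergence $\of_{\e_n,\yv}\to f(\cdot,\yv)$ need not be monotone for an arbitrary sequence $\e_n\to 0$, but plain continuity of $f$ in $\yv$ already yields the pointwise limit, so nothing is lost.
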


However, non-stationary global minimax points cannot be local minimax, see \Cref{exm:bl} and \Cref{prop:stable} (below). Even with stationarity, the convex-concave assumption in \Cref{thm:cc} cannot be appreciably weakened, as illustrated in the following example: 
\begin{eg}[\blue{\textbf{stationary global minimax points are not local minimax in the non-convex case}}]\label{eg:glbstatl}
Let $f(x, y) = x^3 y$ be non-convex in $x$ but linear in $y$. The point $(x^*, y^*) = (0, 1)$ is clearly stationary  and global minimax. We verify that 
\begin{align}
\of_\e(x) = \begin{cases}
(1+\e)x^3, & x \geq 0\\
(1-\e)x^3, & x \leq 0
\end{cases}
,
\end{align}
hence $x^* = 0$ is not a local minimizer of $\of_\e$ (for any $\e < 1$) and $(0,1)$ is not local minimax. This counterexample is constructed by performing the $\Cc^1$ homeomorphic transformation $(x,y) \mapsto (x^3, y)$ of the bilinear game $b(x, y) = xy$. We can verify that (separate) homeomorphisms transform local/global minimax points accordingly. However, $\Cc^1$ homeomorphisms can turn non-stationary points into stationary (which is not possible in presence of convexity since stationarity equates minimality which is preserved under homeomorphisms).
\end{eg}
\noindent Nevertheless, for quadratic games, we can remove the convexity-concavity assumption, as will be shown in \Cref{thm:local_global_q} below.

\subsection{Optimality conditions}
\label{sec:loc_family}
Optimality conditions are an indispensable part of optimization \citep{bertsekas1997nonlinear} since they help us identify local optimal points and design new algorithms. In this section, we provide first- and second-order necessary and sufficient conditions for local minimax (maximin) points. Our results extend existing ones in \citet{jin2019minmax}.
We assume $\X$ and $\Y$ are closed\footnote{Of course they are contained in bigger open sets where derivatives of $f$ are well defined.} and thus $\Nc(\yv^*, \e)$ and $\Nc(\xv^*, \varepsilon)$ are compact. We build on some classical results in non-smooth analysis, for which we provide a self-contained review in \Cref{app:review}, including the definition of the directional derivative $\D \of_\e(\xv; \tv)$ of an envelope function $\of_\e$ at $\xv$ along direction $\tv$:
\begin{align}\label{eq:directional_derivative}
\D \of_\e(\xv; \tv) = \lim_{\alpha \to 0^+} \frac{\of_\e(\xv+\alpha \tv) - \of_\e(\xv)}{\alpha}.
\end{align}
\blue{Specifically, if $f$ and $\partial_\xv f$ are jointly continuous (continuous w.r.t.~$(\xv, \yv)$), then the directional derivative $\D \of_\e(\xv; \tv)$ always exist (\Cref{thm:danskin}). In the following subsections, $f\in \Cc^p$ means that $f$ is $p^{\rm th}$ continuously differentiable. } 

\subsubsection{First-order necessary conditions}\label{sec:1st-necessary}

\begin{restatable}[first-order necessary, local minimax]{thm}{firstlrp}\label{prop:stable}
Let $f\in \Cc^1$. At a local minimax point $(\xv^*, \yv^*)$, we have:
\be\label{eq:first_order_necessary_lrp}
\partial_{\xv} f(\xv^*, \yv^*)^\top \bar{\tv} \geq 0 \geq \partial_{\yv} f(\xv^*, \yv^*)^\top \ubar{\tv},
\en
for any directions $\bar{\tv} \in \K[d](\X, \xv^*)$, $\ubar{\tv} \in \K[d](\Y, \yv^*)$, where the cone
\begin{align}\K[d](\X, \xv) := \liminf_{\a\to 0^+} \frac{\X - \xv}{\a} := \{\tv: \forall \{\a_k\} \to 0^+~ \exists \{\a_{k_i}\} \to 0^+, \{\tv_{k_i}\} \to \tv, \tr
\blue{\mbox{ such that } \xv+\a_{k_i} \tv_{k_i} \in \X \}}\nonumber
\end{align}
and $\K[d](\Y, \yv)$ is defined similarly.
\end{restatable}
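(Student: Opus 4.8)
The plan is to prove the two inequalities separately, mirroring the asymmetric roles of the two players in \Cref{def:jin}. The inequality $\partial_{\yv} f(\xv^*, \yv^*)^\top \ubar{\tv} \leq 0$ is the easy half. Fixing $\xv^*$, the point $\yv^*$ is a local maximizer of $f(\xv^*, \cdot)$ over $\Y$. I would take any $\ubar{\tv} \in \K[d](\Y, \yv^*)$ and unpack the derivable tangent cone: given any null sequence $\a_k \to 0^+$, there is a subsequence $\a_{k_i} \to 0^+$ and $\tv_{k_i} \to \ubar{\tv}$ with $\yv^* + \a_{k_i}\tv_{k_i} \in \Y$. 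For $i$ large these feasible points lie in the neighborhood of local maximality, so $f(\xv^*, \yv^* + \a_{k_i}\tv_{k_i}) \leq f(\xv^*, \yv^*)$. A first-order Taylor expansion (using $f \in \Cc^1$), division by $\a_{k_i} > 0$, and the limit $i \to \infty$ (with $\tv_{k_i} \to \ubar{\tv}$) then yield $\partial_{\yv} f(\xv^*, \yv^*)^\top \ubar{\tv} \leq 0$.

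The inequality $\partial_{\xv} f(\xv^*, \yv^*)^\top \bar{\tv} \geq 0$ is the substantive half. By \Cref{def:jin}, $\xv^*$ is a local minimizer of $\of_{\e_n, \yv^*}$ over $\X$ for some sequence $\e_n \downarrow 0$. Fix $\bar{\tv} \in \K[d](\X, \xv^*)$. I would first show that the straight-ray directional derivative satisfies $\D \of_{\e_n}(\xv^*; \bar{\tv}) \geq 0$ for each $n$. Since $f$ and $\partial_{\xv} f$ are jointly continuous and $\Nc(\yv^*, \e_n)$ is compact, $\of_{\e_n}$ is locally Lipschitz near $\xv^*$, say with constant $L_n$; this lets me replace the curved approach $\xv^* + \a_{k_i}\tv_{k_i}$ coming from the tangent cone by the straight ray $\xv^* + \a_{k_i}\bar{\tv}$ at a cost $L_n \a_{k_i}\|\tv_{k_i} - \bar{\tv}\| = o(\a_{k_i})$. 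Combining with local minimality $\of_{\e_n}(\xv^* + \a_{k_i}\tv_{k_i}) \geq \of_{\e_n}(\xv^*)$, dividing by $\a_{k_i}$, and passing to the limit (the directional derivative exists by \Cref{thm:danskin}) gives $\D \of_{\e_n}(\xv^*; \bar{\tv}) \geq 0$.

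Next I would invoke Danskin's theorem (\Cref{thm:danskin}) to write
\begin{align}
\D \of_{\e_n}(\xv^*; \bar{\tv}) = \max_{\yv \in Y_n} \partial_{\xv} f(\xv^*, \yv)^\top \bar{\tv}, \quad Y_n := \argmax_{\yv \in \Nc(\yv^*, \e_n)} f(\xv^*, \yv).
\end{align}
For $n$ large, $\yv^*$ is the maximizer of $f(\xv^*, \cdot)$ over $\Nc(\yv^*, \e_n)$, so $Y_n \neq \emptyset$ and $Y_n \subseteq \Nc(\yv^*, \e_n)$; hence every $\yv \in Y_n$ satisfies $\|\yv - \yv^*\| \leq \e_n \to 0$. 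By continuity of $\partial_{\xv} f(\xv^*, \cdot)$, the quantity $\partial_{\xv} f(\xv^*, \yv)^\top \bar{\tv}$ converges uniformly over $\yv \in Y_n$ to $\partial_{\xv} f(\xv^*, \yv^*)^\top \bar{\tv}$, so $\D \of_{\e_n}(\xv^*; \bar{\tv}) \to \partial_{\xv} f(\xv^*, \yv^*)^\top \bar{\tv}$ as $n \to \infty$. Since each term in this sequence is nonnegative, the limit is nonnegative, giving $\partial_{\xv} f(\xv^*, \yv^*)^\top \bar{\tv} \geq 0$.

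I expect the main obstacle to be the $\xv$-part, and specifically making Danskin's formula cooperate with the limit $\e_n \to 0$: one must confirm that $\of_{\e_n}$ is regular enough (locally Lipschitz, with a directional derivative given by the maximum over the active set) to both handle the curved tangent-cone directions and to control the shrinking argmax set $Y_n$. Once the local Lipschitz estimate and the containment $Y_n \subseteq \Nc(\yv^*, \e_n)$ are established, the remaining Taylor-expansion and limit arguments are routine.
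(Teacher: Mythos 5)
Your proposal is correct and follows essentially the same route as the paper, which proves this via its more general LRP version by combining the first-order necessary condition for locally Lipschitz functions over the derivable cone (Theorem~\ref{thm:nec_1st}) with Danskin's theorem (Theorem~\ref{thm:danskin}) and the $\Cc^1$ assumption. You have simply written out inline the arguments that the paper delegates to those two cited results, including the curved-to-straight replacement via local Lipschitzness and the collapse of the active set $Y_n \subseteq \Nc(\yv^*,\e_n)$ as $\e_n \to 0$.
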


\begin{proof}
This result follows from its more general version for local robust points, \Cref{prop:stable_new}.
\end{proof}

\noindent In the theorem above, $\K[d](\X, \xv)$ is known as the derivable cone \cite[p.~198]{rockafellar2009variational}, which may strictly include the feasible tangent cone. \blue{When the set $\Xc$ is closed and convex, the two coincide \citep[p.~65]{hiriart2004fundamentals}: 
\be
\K[d](\X, \xv) = \overline{\rm cone}(\Xc - \xv) := {\rm cl}(\tv \in \R^n: \tv = \alpha(\yv - \xv), \, \yv \in \Xc, \, \alpha \geq 0),
\en
with ${\rm cl}$ denoting the closure of a set. We can derive a similar reduction when $\Yc$ is closed and convex. If both $\Xc$ and $\Yc$ are closed and convex, then \eqref{eq:first_order_necessary_lrp} reduces to:
\be\label{eq:stationary_constrained}
\partial_{\xv} f(\xv^*, \yv^*)^\top (\xv - \xv^*) \geq 0 \geq \partial_{\yv} f(\xv^*, \yv^*)^\top (\yv - \yv^*), \, \mbox{ for any }\xv\in \Xc, \, \yv \in \Yc.
\en
This can be regarded as a bi-variate version of first-order (necessary) optimality condition for a local minimum \citep[Prop.~2.1.2]{bertsekas1997nonlinear}. Solutions that satisfy such condition are often called stationary points. It extends the result in \citet{jin2019minmax} to the constrained case. Specifically, if $(\xv^*, \yv^*)$ is in the interior of $\Xc\times \Yc$, which always holds when $\X = \R^n$ and $\Y = \R^m$, then \Cref{prop:stable} simplifies to  
\be\label{eq:local_robust_point_stationary}
\partial_{\xv} f(\xv^*, \yv^*) = \zero , ~~\partial_{\yv} f(\xv^*, \yv^*)=\zero,
\en 
agreeing with \citet{jin2019minmax}. Moreover, Theorem \ref{prop:stable_new} in Appendix \ref{app:local_robust_point} shows that there is an even broader class of local optimal points named local robust points (LRPs) that has the same necessary conditions, \eqref{eq:first_order_necessary_lrp}, \eqref{eq:stationary_constrained} and \eqref{eq:local_robust_point_stationary}, as local saddle points \citep[e.g.][Definition 2]{baraz2020solving} and local minimax points. It also implies that in the convex-concave case, all local notions of optimality agree:
}

\begin{restatable}[\blue{\textbf{local optimal solutions in the convex-concave case}}]{cor}{cclocal}
\label{thm:cc_2}
Let $\X$ and $\Y$ be convex and the function $f(\xv, \yv)$ be convex in $\xv$ and concave in $\yv$. A point is local (global) saddle iff it is local minimax (maximin) iff it is an {\glp}.
\end{restatable}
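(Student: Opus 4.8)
The plan is to route every implication through the single first-order stationarity condition \eqref{eq:stationary_constrained}, which by \Cref{prop:stable} together with its local-robust-point generalization \Cref{prop:stable_new} is a necessary condition shared by local saddle points, local minimax points, and LRPs alike. Because $\X$ and $\Y$ are closed and convex, the derivable cones appearing in \eqref{eq:first_order_necessary_lrp} collapse to $\overline{\rm cone}(\X - \xv^*)$ and $\overline{\rm cone}(\Y - \yv^*)$, so each of the three notions forces exactly the variational inequality \eqref{eq:stationary_constrained} at $(\xv^*, \yv^*)$. The crux is then to show that, under convexity-concavity, this one condition already certifies a \emph{global} saddle point; the remaining equivalences close a short cycle.

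The main step I would carry out is the sufficiency direction: stationarity implies global saddle. Fixing $\yv^*$, the left half of \eqref{eq:stationary_constrained} is precisely the first-order optimality condition for minimizing the convex function $\xv \mapsto f(\xv, \yv^*)$ over the convex set $\X$; by convexity this condition is not merely necessary but sufficient, so $\xv^* \in \argmin_{\xv\in\X} f(\xv, \yv^*)$ globally. Symmetrically, the right half makes $\yv^*$ a global maximizer of the concave function $\yv \mapsto f(\xv^*, \yv)$ over $\Y$. These two facts are exactly the global saddle inequalities \eqref{eq:saddle_def}, so $(\xv^*, \yv^*)$ is global saddle.

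With this in hand I would close the cycle using results already available. A global saddle point is trivially local saddle; a local saddle point is (uniformly) local minimax by \Cref{prop:ulmm}; and a local minimax point is an LRP, since LRPs strictly generalize local minimax points. Finally an LRP satisfies \eqref{eq:stationary_constrained} by its necessary conditions, which by the previous paragraph returns us to a global saddle point. This chain yields local saddle $=$ global saddle $=$ local minimax $=$ LRP. The maximin statement follows from the mirror symmetry $\mf(\yv,\xv) = f(\xv,\yv)$: local maximin for $f$ is local minimax for $-\mf$, which is convex in its first argument and concave in its second, so the identical argument pins local and global maximin to the same global saddle point.

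The main obstacle is not the logic but verifying that all three notions genuinely reduce to \eqref{eq:stationary_constrained} in the constrained setting. For local minimax and LRPs this is \Cref{prop:stable} and \Cref{prop:stable_new}; for local saddle one observes directly that such a point is a local minimizer in $\xv$ and a local maximizer in $\yv$, so the same variational inequality holds. I must also be careful that the reduction of the derivable cone to $\overline{\rm cone}(\X - \xv^*)$ is legitimate -- this is exactly where closedness and convexity of $\X$ and $\Y$ enter -- and that the first-order condition for a convex program is truly sufficient, not just necessary, for global optimality. Past these checks the argument is routine convex analysis applied separately to each player.
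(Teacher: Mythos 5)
Your proposal is correct and follows essentially the same route as the paper's proof: both close the cycle local saddle $\Rightarrow$ local minimax $\Rightarrow$ LRP $\Rightarrow$ first-order condition \eqref{eq:stationary_constrained} $\Rightarrow$ (global, hence local) saddle, using Prop.~\ref{prop:ulmm}, the definition of LRPs, and the fact that in the convex-concave case the shared stationarity condition is sufficient as well as necessary for a saddle point. You merely spell out more explicitly the convex first-order sufficiency step and the cone reduction that the paper's one-line proof leaves implicit.
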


\noindent This corollary does not hold in the  non-convex setting, see Examples~\ref{eg:no_local_saddle} and \ref{eg:glp}.

\subsubsection{First-order sufficient conditions}

Let us define the \emph{active set} of the \emph{zeroth} order (by ``zeroth'' we mean that only the function values are involved):
\be
&&\label{eq:Y_0_active}\,{\Y}_0(\xv^*;\e) = \{\yv\in\Nc(\yv^*, \e): \of_\e(\xv^*) = f(\xv^*, \yv)\}.
\en
We derive the first-order sufficient conditions for local minimax points (which follow from the sufficient condition in \Cref{thm:sufficient} and Danskin's theorem in  \Cref{thm:danskin}):
\begin{thm}[first-order sufficient condition, local minimax]\label{suff:1st_suff_localmm}
Assume $\n_\xv f(\xv, \yv)$ is continuous. If $f(\xv^*, \cdot)$ is maximized at $\yv^*$ over a neighborhood around $\yv^*$, and there exists $\e_0 > 0$ such that for any $ \e \in(0, \e_0)$, 
\be\label{eq:first_suff_local_minmax}
\zero \neq \tv\in \K[c](\X, \xv^*) \, \Longrightarrow 
\D \of_\e(\xv^*; \tv) = 
\max_{\yv \in {\Y}_0(\xv^*;\e)} ~ \partial_{\xv} f(\blue{\xv^*}, \yv)^\top {\tv} > 0,
\en
where the contingent cone is defined as:
\begin{align}
\K[c](\X, \xv) := \limsup_{\a\to 0^+} \frac{\X - \xv}{\a} := \{\tv: \exists \{\a_k\} \to 0^+, ~\{\tv_{k}\} \to \tv, \mbox{ such that } \xv+\a_{k} \tv_{k} \in \X \},\nonumber
\end{align}
then $(\xv^*, \yv^*)$ is a local minimax point.
\end{thm}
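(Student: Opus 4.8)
The plan is to verify the two bullet points in the definition of a local minimax point (\Cref{def:jin}) separately. The first bullet, that $\yv^*$ is a local maximizer of $f(\xv^*, \cdot) = \uf_{0, \xv^*}(\yv)$, is immediate: it is exactly the standing hypothesis that $f(\xv^*, \cdot)$ is maximized at $\yv^*$ over a neighborhood of $\yv^*$. Hence the entire content of the proof lies in the second bullet, namely producing a sequence $\e_n \downarrow 0$ (in fact the argument will deliver every $\e$ in a whole interval) along which $\xv^*$ is a local minimizer of $\of_{\e_n, \yv^*}$.

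For the second bullet I would fix an arbitrary $\e \in (0, \e_0)$ and treat $\of_\e = \of_{\e, \yv^*}$ as a single, generally nonsmooth, function of $\xv$ on $\X$. The continuity of $\partial_{\xv} f$ together with the compactness of $\Nc(\yv^*, \e)$ lets me invoke Danskin's theorem (\Cref{thm:danskin}), which guarantees that the one-sided directional derivative $\D\of_\e(\xv^*; \tv)$ exists for every direction $\tv$ and equals the maximum over the active set, $\D\of_\e(\xv^*; \tv) = \max_{\yv \in \Y_0(\xv^*; \e)} \partial_{\xv} f(\xv^*, \yv)^\top \tv$. This is precisely the quantity appearing in hypothesis \eqref{eq:first_suff_local_minmax}, which asserts that it is strictly positive for every nonzero $\tv$ in the contingent cone $\K[c](\X, \xv^*)$. (I would also note that, after shrinking $\e_0$ so that $\Nc(\yv^*, \e_0)$ lies inside the neighborhood on which $\yv^*$ maximizes $f(\xv^*, \cdot)$, one has $\yv^* \in \Y_0(\xv^*; \e)$ and $\of_\e(\xv^*) = f(\xv^*, \yv^*)$, so the active set is nonempty and consistent with the first bullet.)

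With directional differentiability established and strict positivity of the directional derivative along all nonzero contingent directions, I would apply the general first-order sufficient condition for a (strict) local minimizer of a directionally differentiable function over a set, namely \Cref{thm:sufficient}, taking $g = \of_\e$ and feasible set $\X$. This yields that $\xv^*$ is a strict local minimizer of $\of_{\e, \yv^*}$ over $\X$. Since $\e \in (0, \e_0)$ was arbitrary, the conclusion holds simultaneously for every such $\e$; choosing any sequence $\e_n \in (0, \e_0)$ with $\e_n \to 0$ then verifies the second bullet of \Cref{def:jin}, and combining it with the first bullet shows that $(\xv^*, \yv^*)$ is local minimax.

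The step I expect to be the main obstacle is the passage from pointwise positivity of the directional derivative on the contingent cone to an honest, uniform strict local minimum: a positive directional derivative in each fixed direction does not by itself preclude a sequence of feasible points approaching $\xv^*$ along shifting directions with decreasing values. This gap is closed by a compactness argument on the unit sphere intersected with the contingent cone, which is exactly what \Cref{thm:sufficient} is designed to supply; so the real work is confirming that $\of_\e$ meets the regularity hypotheses of that theorem (continuity and directional differentiability), both of which follow from Danskin's theorem under the continuity of $\partial_{\xv} f$ and compactness of $\Nc(\yv^*, \e)$.
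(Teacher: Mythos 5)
Your proposal is correct and follows essentially the same route the paper indicates: it states just before the theorem that the result "follow[s] from the sufficient condition in \Cref{thm:sufficient} and Danskin's theorem in \Cref{thm:danskin}," which is precisely your combination of Danskin's formula for $\D\of_\e(\xv^*;\cdot)$ with the first-order sufficient condition for an isolated local minimum of the locally Lipschitz envelope function over $\X$. Your added remarks (shrinking $\e_0$ so that $\yv^*\in\Y_0(\xv^*;\e)$, and noting that the compactness argument in \Cref{thm:sufficient} is what upgrades pointwise positivity on the contingent cone to a genuine local minimum) are accurate and only make explicit what the paper leaves implicit.
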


\noindent\blue{In the case when $\Xc$ is a convex set. $\K[c](\Xc, \xv)$ reduces to the usual cone of feasible directions:
\be
\K[c](\X, \xv) = \overline{\rm cone}(\Xc - \xv) := {\rm cl}(\tv \in \R^n: \tv = \alpha(\yv - \xv), \, \yv \in \Xc, \, \alpha \geq 0).
\en
If furthermore ${\rm cone}(\Xc - \xv)$ is closed, \eqref{eq:first_suff_local_minmax} becomes:
\be
\max_{\yv \in {\Y}_0(\xv^*;\e)} ~ \partial_{\xv} f(\blue{\xv^*}, \yv)^\top (\xv - \xv^*) > 0, \, \forall \xv^* \neq \xv\in \Xc.
\en
Let us demonstrate the first order condition with the following example:
\begin{eg}[\textbf{application of the first-order sufficient condition of local minimax points}]\label{eg:app_first_suff_local_minimax}
Suppose $f(x, y) = x y$ is bilinear. At $(x^*, y^*) = (0, 0)$, we have:
\be
\of_\e(x^*) = f(x^*, y) = 0, \, \forall y \in \R.
\en
Therefore, according to \eqref{eq:Y_0_active}, $\Yc_0(\xv^*; \e) = \Nc(y^*, \e)$. Also, $\partial_x f(x^*, y) = y$ and 
\be
\D \of_\e(x^*; x - x^*) = \max_{\Nc(y^*, \e)} y (x - x^*) = \e |x| > 0, \forall x\neq x^*.
\en
According to \Cref{suff:1st_suff_localmm}, $(x^*, y^*)$ is a local minimax point. 
\end{eg}
}

\subsubsection{Second-order necessary conditions}\label{sec:second_nec}

We now turn to the second-order necessary condition of local minimax points. We sometimes use $\n_{\xv \xv}^2 f$ as a shorthand for the second-order derivative $\n_{\xv \xv}^2 f(\xv^*, \yv^*)$, and similarly for other second-order partial derivatives. 
For a local minimax point $(\xv^*, \yv^*)$, $\yv^*$ maximizes $f(\xv^*, \cdot)$ locally, and thus we have the property that $\of_\e(\xv^*) = f(\xv^*, \yv^*)$ for any small $\e$, from which we can make significant simplifications. The following technical lemma, when combined with the necessity condition in \Cref{thm:nec_1st}, allows us to classify the directions:
\begin{restatable}[\blue{\textbf{directional derivatives for different $\of_\e$}}]{lem}{nested}\label{lem:dec_f_p}
\blue{Suppose $f$ and $\n_\xv f$ are jointly continuous and thus the directional derivative \eqref{eq:directional_derivative} exists.}
If $\yv^*$ is a local maximizer of $f(\xv^*, \cdot)$ over a neighborhood $\Nc(\yv^*, \e_0)$, then for any $0 \leq \e_1 \leq \e_2 \leq \e_0$, ${\Y}_0(\xv^*;\e_1)\subseteq {\Y}_0(\xv^*;\e_2)$ and for each ${\tv}\in \K[d](\X, \xv^*)$, $\D\of_{\e_2}(\xv^*;\tv) \geq \D\of_{\e_1}(\xv^*;\tv)$.
\end{restatable}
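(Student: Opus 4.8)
The plan is to reduce both claims to one elementary observation: although the local envelope $\of_\e(\xv)$ is nondecreasing in $\e$ at every point (it is a maximum over the nested balls $\Nc(\yv^*, \e)$), all of these envelopes collapse to the \emph{same} value at the base point $\xv^*$. Indeed, since $\yv^*$ maximizes $f(\xv^*, \cdot)$ over $\Nc(\yv^*, \e_0)$ and $\Nc(\yv^*, \e) \subseteq \Nc(\yv^*, \e_0)$ for every $\e \leq \e_0$, we get $\of_\e(\xv^*) = \max_{\yv \in \Nc(\yv^*, \e)} f(\xv^*, \yv) = f(\xv^*, \yv^*)$, a quantity independent of $\e$. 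First I would record this constancy, since everything else flows from it.

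For the set inclusion, I would use the constancy to rewrite the active set \eqref{eq:Y_0_active} as $\Y_0(\xv^*; \e) = \{\yv \in \Nc(\yv^*, \e): f(\xv^*, \yv) = f(\xv^*, \yv^*)\}$, so that the defining level condition no longer references $\e$. Then, for $0 \leq \e_1 \leq \e_2 \leq \e_0$, any $\yv \in \Y_0(\xv^*; \e_1)$ lies in $\Nc(\yv^*, \e_1) \subseteq \Nc(\yv^*, \e_2)$ and satisfies the very same equality $f(\xv^*, \yv) = f(\xv^*, \yv^*)$, hence $\yv \in \Y_0(\xv^*; \e_2)$. This yields $\Y_0(\xv^*; \e_1) \subseteq \Y_0(\xv^*; \e_2)$ at once.

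For the directional-derivative inequality, I would invoke Danskin's theorem (\Cref{thm:danskin}), which under the joint-continuity hypothesis and the compactness of $\Nc(\yv^*, \e)$ gives the active-set formula $\D\of_\e(\xv^*; \tv) = \max_{\yv \in \Y_0(\xv^*; \e)} \partial_{\xv} f(\xv^*, \yv)^\top \tv$ for every $\tv \in \K[d](\X, \xv^*)$ (the same identity already used in \eqref{eq:first_suff_local_minmax}). Since the active set for $\e_2$ contains that for $\e_1$ by the previous step, maximizing the single linear functional $\yv \mapsto \partial_{\xv} f(\xv^*, \yv)^\top \tv$ over the larger (nonempty, compact) set can only increase the value, so $\D\of_{\e_2}(\xv^*; \tv) \geq \D\of_{\e_1}(\xv^*; \tv)$. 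An equivalent, formula-free route is to note that $\of_{\e_1} \leq \of_{\e_2}$ pointwise with equality at $\xv^*$, whence $\alpha^{-1}(\of_{\e_2}(\xv^* + \alpha\tv) - \of_{\e_2}(\xv^*)) \geq \alpha^{-1}(\of_{\e_1}(\xv^* + \alpha\tv) - \of_{\e_1}(\xv^*))$ for every admissible $\alpha > 0$, and passing to the limit (both limits exist by hypothesis) gives the same inequality.

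The lemma is largely routine once the constancy of $\of_\e(\xv^*)$ is spotted; the only point requiring care is the validity of the directional-derivative formula, i.e.\ that the limit \eqref{eq:directional_derivative} exists and equals the active-set maximum for directions in the derivable cone. That is precisely what joint continuity of $f$ and $\partial_{\xv} f$ together with compactness buy us through \Cref{thm:danskin}, so I would cite that result rather than re-derive it. The degenerate case $\e_1 = 0$ is harmless: $\Nc(\yv^*, 0) = \{\yv^*\}$, so $\Y_0(\xv^*; 0) = \{\yv^*\}$ and $\of_0(\xv^*) = f(\xv^*, \yv^*)$, consistent with the constancy above, and both the inclusion and the inequality persist at $\e_1 = 0$.
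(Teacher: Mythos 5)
Your proof is correct and follows essentially the same route as the paper's: establish that $\of_\e(\xv^*) = f(\xv^*,\yv^*)$ is constant in $\e \in [0,\e_0]$, deduce the active-set inclusion from the nested neighborhoods, and then apply Danskin's theorem (\Cref{thm:danskin}) so that the directional derivative is a maximum over a larger set. The additional formula-free argument via pointwise monotonicity of the envelopes is a nice sanity check but not needed.
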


Indeed, for a local minimax point $(\xv^*, \yv^*)$ and any direction ${\tv}\in \K[d](\X, \xv^*)$, we know from the necessity condition in \Cref{thm:nec_1st} that $\D \of_{\e}(\xv^*;{\tv}) \geq 0$ for all small $\e$, which, combined with \Cref{lem:dec_f_p} above, leaves us with two possibilities: 
\begin{enumerate}
\item $\D \of_{\e}(\xv^*;{\tv}) > 0$ for all $\e >0$ smaller than some $\e_0(\tv)$;
\item $\D \of_{\e}(\xv^*;{\tv}) = 0$ for all $\e >0$ smaller than some $\e_0(\tv)$.
\end{enumerate}  We 
call the direction $\tv$ a \emph{critical direction} in the second case above. 
With this distinction among directions, we derive the second-order necessary \blue{condition} for local minimax points:
\begin{restatable}
[second-order necessary condition, local minimax]{thm}{secondminmax}\label{thm:nec_localmm}
Suppose $f, \n_\xv f$ and $\n_{\xv\xv}^2 f$ are all (jointly) continuous. If $(\xv^*, \yv^*)$ is a local minimax point, then for each direction ${\tv} \in \K[d](\X, \xv^*)$, \blue{one of the following holds:}
\begin{enumerate}
\item $\D\of_\e(\xv^*; {\tv}) > 0$ for all $\e >0$ smaller than some $\e_0(\ tv)$;
\item $\D\of_\e(\xv^*; {\tv}) = 0$ for all $\e >0$ smaller than some $\e_0(\tv)$ (i.e. $\tv$ is critical), in which case we further have
\be\label{eq:localmm_min}
{\tv}^\top \n_{\xv \xv}^2 f(\xv^*, \yv^*){\tv} + 
\tfrac{1}{2} \limsup_{\zv\to \yv^*} \left[\max\{\n_\xv f(\xv^*, \zv)^\top \tv,\,0\}^2 (f(\xv^*, \yv^*) - f(\xv^*, \zv))^\dag\right]
\geq 0,\tr
\en
where $t^\dag = 1/t$ if $t\neq 0$ and $0$ otherwise.
\end{enumerate}
\end{restatable}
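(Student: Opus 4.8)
The dichotomy between the two listed cases costs nothing: the first-order necessary condition (\Cref{thm:nec_1st}) gives $\D\of_\e(\xv^*;\tv)\ge 0$ for all small $\e$, while \Cref{lem:dec_f_p} shows $\e\mapsto\D\of_\e(\xv^*;\tv)$ is non-decreasing. Hence this directional derivative is either strictly positive for all small $\e$ (case 1), or it vanishes at some $\e$ and, being monotone and non-negative, vanishes for all smaller $\e$ (case 2, i.e.~$\tv$ is critical). All the content lies in proving \eqref{eq:localmm_min} for a critical $\tv$. I would write $g(\yv):=f(\xv^*,\yv)$, $p(\yv):=\n_\xv f(\xv^*,\yv)^\top\tv$, $q(\yv):=\tv^\top\n_{\xv\xv}^2 f(\xv^*,\yv)\tv$, and $\Delta(\yv):=g(\yv^*)-g(\yv)\ge 0$, and abbreviate $L:=\limsup_{\zv\to\yv^*}\max\{p(\zv),0\}^2\,\Delta(\zv)^\dag\ge 0$, so that \eqref{eq:localmm_min} reads $q(\yv^*)+\tfrac12 L\ge 0$. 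If $L=+\infty$ this is trivial, so assume $L<\infty$.

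Since $(\xv^*,\yv^*)$ is local minimax, \Cref{prop:eq2} guarantees that $\xv^*$ locally minimizes $\of_\e=\max_{\yv\in\Nc(\yv^*,\e)}f(\cdot,\yv)$ for every sufficiently small $\e>0$; the plan is to contradict this local minimality under the assumption that \eqref{eq:localmm_min} fails. So suppose $q(\yv^*)+\tfrac12 L<0$ and pick $\delta>0$ with $\beta:=-\big(q(\yv^*)+\tfrac12(L+\delta)\big)>0$. By definition of $L$ there is $\rho>0$ with $\max\{p(\zv),0\}^2\le (L+\delta)\,\Delta(\zv)$ whenever $0<\|\zv-\yv^*\|\le\rho$ and $\Delta(\zv)>0$. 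I would then fix $\e\le\rho$ small enough that $\yv^*$ still maximizes $g$ over $\Nc(\yv^*,\e)$ (so $\of_\e(\xv^*)=g(\yv^*)$); any $\yv\in\Nc(\yv^*,\e)$ with $\Delta(\yv)=0$ then lies in the active set $\Y_0(\xv^*;\e)$, on which Danskin's formula (\Cref{thm:danskin}) and criticality of $\tv$ force $p(\yv)\le 0$. Consequently the pointwise bound $\max\{p(\yv),0\}^2\le (L+\delta)\,\Delta(\yv)$ holds for \emph{all} $\yv\in\Nc(\yv^*,\e)$.

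Next, for $\tv\in\K[d](\X,\xv^*)$ I would take $\a_k\downarrow 0$, $\tv_k\to\tv$ with $\xv^*+\a_k\tv_k\in\X$, and expand $f$ to second order in $\xv$, uniformly over the compact ball $\Nc(\yv^*,\e)$ by joint continuity of $\n_{\xv\xv}^2 f$ (with $\tv_k$ in place of $\tv$, the two agreeing in the limit):
\[
f(\xv^*+\a_k\tv_k,\yv)-g(\yv^*)=-\Delta(\yv)+\a_k\,p(\yv)+\tfrac{\a_k^2}{2}q(\yv)+o(\a_k^2),
\]
with remainder uniform in $\yv$. The heart of the argument is to bound the right side above by a strictly negative multiple of $\a_k^2$, uniformly in $\yv$. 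When $p(\yv)\le 0$ the linear term is non-positive, and since $q(\yv^*)<0$ continuity gives $q(\yv)<0$ uniformly on $\Nc(\yv^*,\e)$ for small $\e$, so the whole expression is $<0$. When $p(\yv)>0$ I would substitute $\Delta(\yv)\ge p(\yv)^2/(L+\delta)$ and maximize the resulting concave upper bound over the nominal value $p\ge 0$ (attained at $p=\a_k(L+\delta)/2$), obtaining the uniform bound $\tfrac{\a_k^2}{2}\big[q(\yv^*)+\tfrac12(L+\delta)+o_\e(1)\big]+o(\a_k^2)=\tfrac{\a_k^2}{2}\big[-\beta+o_\e(1)\big]+o(\a_k^2)$, which is $<0$ once $\e$ is small and then $k$ large. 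Taking the maximum over $\yv\in\Nc(\yv^*,\e)$ then yields $\of_\e(\xv^*+\a_k\tv_k)<g(\yv^*)=\of_\e(\xv^*)$ for all large $k$, contradicting local minimality and forcing \eqref{eq:localmm_min}.

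The main obstacle is the passage through the envelope $\of_\e$: local minimality controls only the \emph{maximum} over $\yv$, so the second-order inequality must hold uniformly over the entire ball $\Nc(\yv^*,\e)$ rather than near a single maximizer, and the non-smooth coupling term $\max\{p,0\}^2\Delta^\dag$ (a degenerate-Hessian analogue of the Schur complement $\n_{\xv\yv}^2 f\,(\n_{\yv\yv}^2 f)^{-1}\,\n_{\yv\xv}^2 f$ of \Cref{thm:invertible_local_mm}) must emerge with exactly the constant $\tfrac12$. I expect both difficulties to be resolved by first localizing the radius to $\e\le\rho$, so the pointwise quadratic bound on $\Delta$ holds throughout the maximization domain, and then completing the square in $p(\yv)$; the slack $o_\e(1)$ is absorbed into the strict margin $\beta$ before sending $\a_k\to 0$, and the derivable-cone approximation $\tv_k\to\tv$ is a routine limiting step.
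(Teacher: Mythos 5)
Your proof is correct in substance but takes a genuinely different route from the paper's. The paper disposes of the theorem in two lines: since $\of_\epsilon$ is locally Lipschitz, the abstract second-order necessary condition (\Cref{thm:necessary_2}) gives $\Hi\of_\epsilon(\xv^*;\tv,\zero)\geq 0$ along critical directions, and Kawasaki's closed-form expression for the \emph{upper} second-order directional derivative of a sup-type function (\Cref{thm:upper_semi_Kaw}), together with $\Hs\geq\Hi$ and a final passage $\epsilon\to 0$ to collapse the active set ${\Y}_1$ onto $\yv^*$, yields \eqref{eq:localmm_min}. You instead re-derive the content from scratch: assuming \eqref{eq:localmm_min} fails, you extract the uniform quadratic bound $\max\{p,0\}^2\leq(L+\delta)\Delta$ on a small ball (using Danskin's theorem plus criticality to dispose of the set where $\Delta=0$), Taylor-expand in $\xv$ uniformly over the compact $\yv$-ball, and complete the square in $p$ to force $\of_\epsilon(\xv^*+\alpha_k\tv_k)<\of_\epsilon(\xv^*)$, contradicting \Cref{prop:eq2}. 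This buys a self-contained argument that makes the constant $\tfrac12$ transparent and, unlike the paper's sketch, never has to justify that the maximum over ${\Y}_1$ in Kawasaki's formula localizes to the single point $\yv^*$ as $\epsilon\to 0$; the cost is that you essentially reprove the hard half of \Cref{thm:upper_semi_Kaw} inline. One caveat, shared with the paper: for non-interior $\xv^*$ the limiting step $\tv_k\to\tv$ is not ``routine,'' since the extra linear term $\alpha_k\,\n_\xv f(\xv^*,\yv)^\top(\tv_k-\tv)$ need not be $o(\alpha_k^2)$ when $\|\tv_k-\tv\|/\alpha_k\not\to 0$ --- this is precisely why \Cref{thm:necessary_2} quantifies over the second-order tangent cone $\K[d](\X,\xv^*;\tv)$, a qualification silently dropped in the theorem's statement. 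Your argument, like the paper's, is airtight exactly when one may take $\tv_k=\tv$, e.g.\ when $\X=\R^n$ or $\xv^*$ is interior.
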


The important point to take from \Cref{thm:nec_localmm} is that we should test the second-order condition \eqref{eq:localmm_min} only for critical directions, and the second-order derivatives of $f$ may not fully capture the second-order derivatives of the envelope function $\of_\e$, which can be clearly demonstrated from the following examples:
\begin{restatable}[\blue{\textbf{the importance of critical directions}}]{eg}{}\label{rem:critical}
Let $$f(x, y) = -x^2 + x y^3$$ be defined over $\X = \Y = \R$ and consider the local minimax point $(x^*,y^*) = (0,0)$. Indeed, for any $\e > 0$, $x^*$ is a local minimizer of $\of_{\e}(x) = |x|\e^3 - x^2$. However, $\n_{xx}^2 f = -2$ while $f(x^*,y^*) = f(x^*, z) = 0$ for any $z$. Thus, the second-order condition \eqref{eq:localmm_min} fails at the directions $t = \pm 1$. However, there is no contradiction since these directions are not critical: Indeed, using \Cref{thm:danskin} we can verify that $\D \of_\e(x^*; \pm 1) = \e^3 > 0$.
\end{restatable}

\begin{eg}[\blue{\textbf{the importance of critical directions under multiple dimensions}}]\label{eg:counter_jin}
Let $$f(\xv, \yv) = -x_2^2 + x_2 y_2^3 - (y_1+y_2)^2 + 2 x_1 (y_1 + y_2)$$ be defined over $\X = \Y = \R^2$ and consider the local minimax point $(\xv^*, \yv^*) = (\zero, \zero)$: Indeed, $f(\xv^*, \cdot)$ is clearly maximized locally at $\yv^* = \zero$ and upon choosing $y_1 = x_1 - \sgn(x_2)\e/2, y_2= \sgn(x_2)\e/2$ and considering $|x_1| < \e/2$ and $|x_2| < (\e/2)^3$, we have
\begin{align}
\|\yv - \xv\|_\infty  &\leq \e/2 + (\e/2)^3,\, \of_\e(\xv) \geq f(\xv, \yv) = x_1^2 + |x_2|(\e/2)^3 - x_2^2\geq 0 = \of_\e(\xv^*),
\end{align}
where we choose WLOG the $\ell_\infty$ norm in our neighborhood definition \eqref{eq:nbhd}. The second-order derivatives are:
\be
\yxv f = \begin{bmatrix}
2 & 0 \\
2 & 0
\end{bmatrix}, \, \yyv f = \begin{bmatrix}
-2 & -2 \\
-2 & -2
\end{bmatrix}, \, \xxv f = \begin{bmatrix}
0 & 0 \\
0 & -2
\end{bmatrix}.
\en
We have $\Y_0(\xv^*;\e) = \{\yv\in \Nc_\infty(\xv^*,\e): y_1 + y_2 = 0\}$ and for any direction $\tv$, 
\begin{align}
\D \of_\e(\xv^*; \tv) = \max_{\yv \in \Y_0(\xv^*;\e)} \tv^\top \partial_{\xv} f(\xv^*, \yv) = \e^3 |t_2|\geq 0.
\end{align} 
It follows that the critical directions satisfy $t_2 = 0$. Take a non-critical direction $\tv = (1, 3)$, we easily verify that $(\yxv f)\tv = (2,2)$ lies in the range space of $\yyv f$. However, 
\begin{align}
&\limsup_{\zv\to \yv^*} \left[\max\{\n_\xv f(\xv^*, \zv)^\top \tv,\,0\}^2 (f(\xv^*, \yv^*) - f(\xv^*, \zv))^\dag\right] \tr
&= 
\limsup_{\zv \to \zero,z_1+z_2\ne0} \frac{[2(z_1+z_2) + 3z_2^3]_+^2}{(z_1+z_2)^2} = 4,
\end{align}
so that the second-order condition in \eqref{eq:localmm_min}, which in this case coincides with $$\tv^\top(\n_{\xv\xv}^2 f - \n_{\xv\yv}^2 f (\n_{\yv\yv}^2 f)^{\dag} \n_{\yv\xv}^2 f)\tv, $$ does not hold ($-18+2 = -16 \not\geq 0$). Nevertheless, along a critical direction $\tv$ (where $t_2 = 0$):
\be
\tv^\top \n_{\xv \xv}^2 f(\xv^*, \yv^*)\tv = 0, \, f(\xv^*, \zv) = -(z_1 + z_2)^2, \, \n_\xv f(\xv^*, \zv)^\top \tv = 2 t_1 (z_1 + z_2),
\en
and thus the left-hand side of \eqref{eq:localmm_min} simplifies to $2t_1^2 \geq 0$. In other words, the second-order condition indeed holds for critical directions. 
\end{eg}

\begin{eg}[\blue{\textbf{high order derivatives might be involved in Theorem \ref{thm:nec_localmm}}}]\label{rem:higher_order}
The second term in \eqref{eq:localmm_min} may involve higher-order information of $f$\blue{, rather than the standard second-order optimality condition for e.g.~the minimizer of a smooth function}. \blue{The higher-order term comes from the difference of function values.} Let $f(x, y) = -x^2 - y^4 + 4 x y^2$ and consider the local minimax point $(x^*, y^*) = (0, 0)$. We have $\Y_0(x^*;\e) = \{y^*\}$ hence every direction is critical. In the direction $t = 1$, the l.h.s.~of \eqref{eq:localmm_min} becomes $$-2 + \max\{4z^2 t, 0\}^2/(2 z^4) = 6 > 0.$$ 
\end{eg}

\noindent Under the condition that $\n_{\yv\yv}^2 f$ is invertible, we recover the following result from \cite{jin2019minmax}:
\begin{restatable}[\textbf{second-order necessary condition, invertible}]{cor}{thmnec}\label{def:nec_2}
Let $f\in \Cc^2$. At a local minimax point $(\xv^*, \yv^*)$ in the interior of $\X\times \Y$, if $\n_{\yv\yv}^2 fß$ is invertible, then \be\label{eq:nece}\n_{\yv\yv}^2 f \prec {\bf 0}\mbox{ and } \n_{\xv\xv}^2 f - \n_{\xv\yv}^2 f (\n_{\yv\yv}^2 f)^{-1} \n_{\yv\xv}^2 f\succeq {\bf 0}.\en 
\end{restatable}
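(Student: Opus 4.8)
The plan is to obtain Corollary~\ref{def:nec_2} as a clean specialization of the general second-order necessary condition \Cref{thm:nec_localmm}, using the invertibility (hence negative definiteness) of $\n_{\yv\yv}^2 f$ to turn every direction into a critical one and to evaluate the $\limsup$ term in \eqref{eq:localmm_min} in closed form. First I would collect the first-order data. Since $(\xv^*,\yv^*)$ is an interior local minimax point with $f\in\Cc^2$, \Cref{prop:stable} (in the interior form \eqref{eq:local_robust_point_stationary}) gives $\partial_{\xv} f(\xv^*,\yv^*)=\zero$ and $\partial_{\yv} f(\xv^*,\yv^*)=\zero$; moreover $\yv^*$ locally maximizes the smooth map $f(\xv^*,\cdot)$, so the classical second-order necessary condition for an interior maximizer gives $\n_{\yv\yv}^2 f\preceq\zero$. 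Combined with the invertibility assumption this sharpens to $\n_{\yv\yv}^2 f\prec\zero$, which is exactly the first claimed conclusion.

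Next I would show that every direction is critical. Because $\n_{\yv\yv}^2 f\prec\zero$, the point $\yv^*$ is a strict, isolated local maximizer of $f(\xv^*,\cdot)$, so for all small $\e$ the zeroth-order active set collapses to $\Y_0(\xv^*;\e)=\{\yv^*\}$. By Danskin's theorem (\Cref{thm:danskin}) the directional derivative then reduces to $\D\of_\e(\xv^*;\tv)=\partial_{\xv} f(\xv^*,\yv^*)^\top\tv=0$ for every $\tv$, since $\partial_{\xv} f(\xv^*,\yv^*)=\zero$. Thus we are in case~2 of \Cref{thm:nec_localmm} for all $\tv\in\R^n$, i.e.\ \eqref{eq:localmm_min} must hold in every direction.

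The substantive step is to evaluate the $\limsup$ in \eqref{eq:localmm_min}. Writing $\zv=\yv^*+\mathbf{w}$ and Taylor-expanding at $\yv^*$ (legitimate as $f\in\Cc^2$), I would use $\partial_{\yv} f(\xv^*,\yv^*)=\zero$ to get $f(\xv^*,\yv^*)-f(\xv^*,\zv)=\tfrac12\,\mathbf{w}^\top A\,\mathbf{w}+o(\|\mathbf{w}\|^2)$ with $A:=-\n_{\yv\yv}^2 f\succ\zero$, and $\partial_{\xv} f(\xv^*,\yv^*)=\zero$ to get $\partial_{\xv} f(\xv^*,\zv)^\top\tv=\mathbf{w}^\top\mathbf{b}+o(\|\mathbf{w}\|)$ with $\mathbf{b}:=\n_{\yv\xv}^2 f\,\tv$. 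The ratio inside the $\limsup$ is asymptotically scale-invariant in $\mathbf{w}$, so after cancelling $\|\mathbf{w}\|^2$ the $\limsup$ equals the supremum over directions of $2(\mathbf{w}^\top\mathbf{b})_+^2/(\mathbf{w}^\top A\,\mathbf{w})$; reflecting $\mathbf{w}\mapsto-\mathbf{w}$ removes the positive part, and the generalized Rayleigh-quotient identity $\sup_{\mathbf{w}\ne\zero}(\mathbf{w}^\top\mathbf{b})^2/(\mathbf{w}^\top A\,\mathbf{w})=\mathbf{b}^\top A^{-1}\mathbf{b}$ yields $\tfrac12\limsup=\mathbf{b}^\top A^{-1}\mathbf{b}=-\tv^\top\n_{\xv\yv}^2 f(\n_{\yv\yv}^2 f)^{-1}\n_{\yv\xv}^2 f\,\tv$. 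Substituting into \eqref{eq:localmm_min} gives $\tv^\top\big(\n_{\xv\xv}^2 f-\n_{\xv\yv}^2 f(\n_{\yv\yv}^2 f)^{-1}\n_{\yv\xv}^2 f\big)\tv\ge0$ for all $\tv$, i.e.\ the Schur complement in \eqref{eq:nece} is positive semidefinite.

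The main obstacle I anticipate is the rigorous handling of the $\limsup$: I must verify that the $o(\cdot)$ remainders do not perturb the limit, which amounts to taking $\zv\to\yv^*$ along each fixed direction to obtain the pointwise limit and then checking that the supremum over directions is actually realized by the $\limsup$ (so no larger value intrudes), together with the degenerate cases $\mathbf{b}=\zero$ (limit $0$) and directions $\mathbf{w}$ nearly orthogonal to $\mathbf{b}$. The remaining ingredients---the negative-definiteness upgrade, the collapse of $\Y_0$, and the Rayleigh-quotient identity---are routine.
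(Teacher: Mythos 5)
Your proposal is correct and follows essentially the same route as the paper's proof: specialize \Cref{thm:nec_localmm}, upgrade $\n_{\yv\yv}^2 f\preceq\zero$ to $\prec\zero$ via invertibility, and evaluate the $\limsup$ term by second-order Taylor expansion, where your generalized Rayleigh-quotient identity is exactly what the paper obtains via the change of variables $(-\n_{\yv\yv}^2 f)^{-1/2}$ plus Cauchy--Schwarz. Your explicit verification that every direction is critical (via $\Y_0(\xv^*;\e)=\{\yv^*\}$ and Danskin's theorem) is a step the paper leaves implicit, and it is a welcome addition.
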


\begin{proof}
It is easy to prove $\n_{\yv\yv}^2 f \cle \zero$ and since $\n_{\yv\yv}^2 f$ is invertible, we have $\n_{\yv\yv}^2 f\cl \zero$. By expanding $f(\xv^*, \zv)$ to the second order, the second term in \eqref{eq:localmm_min} becomes:
\be
 \limsup_{\zv\to \yv^*} \frac{\max\{(\zv - \yv^*)^\top (\n_{\yv \xv}^2 f) \tv, 0\}^2}{(\zv - \yv^*)^\top (-\n_{\yv \yv}^2 f) (\zv - \yv^*)}.
\en
With a change of variables $\zv - \yv^* = (-\n_{\yv\yv}^{2} f)^{-1/2}(\wv - \yv^*)$ and using Cauchy--Schwarz inequality, we obtain
$ -\tv^\top \n_{\xv\yv}^2 f (\n_{\yv\yv}^2 f)^{-1} (\n_{\yv\xv}^2 f) \tv$. It follows that $\n_{\xv\xv}^2 f - \n_{\xv\yv}^2 f (\n_{\yv\yv}^2 f)^{-1} \n_{\yv\xv}^2 f \cge \zero$.
\end{proof}

Finally, we can compare our second-order necessary condition with Proposition 19 of \citet[][]{jin2019minmax}, which applies to quadratic functions (cf.~\Cref{eg:quadratic}). The difference is that Proposition 19 of \citet[]{jin2019minmax} did not take the critical directions and higher-order derivatives into consideration, as demonstrated by Examples \ref{rem:critical} and \ref{rem:higher_order}.

\subsubsection{Second-order sufficient conditions}\label{sec:ns}

We introduce two second-order sufficient conditions for local minimax points, with the help of results from non-smooth optimization literature \citep{Seeger88, kawasaki1992second}. \blue{Our results extend \citet{jin2019minmax} to the case when $\yyv f$ is not invertible, which may happen in real applications. }

In the following theorem, we define $x_+ = \max\{x, 0\}$ and the first order activation set:
\be
\,{\Y}_1(\xv^*;\e;\tv) = \{\yv\in\Y_0(\xv^*, \e): 
\D \of_\e(\xv^*;\tv) = \n_\xv f(\xv^*, \yv)^\top \tv\}.
\en

\begin{restatable}[\textbf{second-order sufficient condition, local minimax}]{thm}{secondsuff}\label{cor:suff_2nd_localmm}
Assume $\X = \R^n$ and $\Y$ is convex and $f$, $\n_\xv f$, $\n_{\xv\xv}^2 f$ are (jointly) continuous. At a stationary point $(\xv^*, \yv^*)$, if there exists $\e_0 > 0$ such that:
\begin{itemize}
    \item $f(\xv^*, \cdot)$ is maximized at $\yv^*$ on $\Nc(\yv^*, \e_0)$;
    \item along each critical direction $\tv\neq \zero$:
\be\label{eq:sec_suff_local_minmax}
{\tv}^\top \n_{\xv \xv}^2 f(\xv^*, \yv^*){\tv} + 
\frac{1}{2} \limsup_{\zv\to \yv^*} \left(((\n_\xv f(\xv^*, \zv)^\top \tv)_+)^2 (f(\xv^*, \yv^*) - f(\xv^*, \zv))^\dag\right)
> 0,
\en
\blue{and in any direction $\dv\in \R^m$, there exist $\a, \b \neq 0$ and $p, q > 0$ such that~for every $\yv\in \Y_1(\xv^*;\e_0;\tv)$, the following Taylor expansion holds:}
\begin{align}\label{eq:taylor_exp}
f(\xv^*, \yv + \d \dv) = f(\xv^*,  \yv) + \a \d^p + o(\d^p), \, \n_\xv f(\xv^*, \yv + \d \dv)^\top\tv = \b \d^q + o(\d^q),
\end{align}
\end{itemize}
then $(\xv^*, \yv^*)$ is a local minimax point. 
\end{restatable}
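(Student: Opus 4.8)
The plan is to verify the two requirements of \Cref{def:jin} separately. The max-player's condition, that $\yv^*$ be a local maximizer of $f(\xv^*,\cdot)$, is exactly the first hypothesis ($f(\xv^*,\cdot)$ maximized at $\yv^*$ on $\Nc(\yv^*,\e_0)$), so it needs no work. Everything therefore reduces to the min-player's condition: showing that $\xv^*$ is a local minimizer of the sup-function $\of_\e(\xv)=\max_{\yv\in\Nc(\yv^*,\e)} f(\xv,\yv)$ for each $\e\in(0,\e_0]$. By \Cref{prop:eq2} this is precisely equivalent to $(\xv^*,\yv^*)$ being local minimax. Since $\X=\R^n$, this is an unconstrained \emph{nonsmooth} minimization, and the natural device is a second-order sufficient condition for local minima of sup-functions, as developed by \citet{Seeger88, kawasaki1992second} (the abstract nonsmooth sufficient framework being \Cref{thm:sufficient}).

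First I would split the directions $\zero\neq\tv\in\R^n$ according to the first-order behaviour. Because $(\xv^*,\yv^*)$ is stationary and $\yv^*\in\Y_0(\xv^*;\e)$, Danskin's theorem (\Cref{thm:danskin}) gives $\D\of_\e(\xv^*;\tv)=\max_{\yv\in\Y_0(\xv^*;\e)}\n_\xv f(\xv^*,\yv)^\top\tv\geq\n_\xv f(\xv^*,\yv^*)^\top\tv=0$. For directions where this inequality is strict the first-order term already forces a strict increase of $\of_\e$ along $\tv$, and nothing further is needed. The work is entirely in the critical directions, where $\D\of_\e(\xv^*;\tv)=0$ and the sign of the excess $\of_\e(\xv^*+\alpha\tv)-\of_\e(\xv^*)$ is decided at second order.

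For a critical direction $\tv$, I would compute the parabolic second-order directional derivative of $\of_\e$ at $\xv^*$ along $\tv$ and show it coincides with the left-hand side of \eqref{eq:sec_suff_local_minmax}. The first summand $\tv^\top\n_{\xv\xv}^2 f(\xv^*,\yv^*)\tv$ is the ordinary curvature contributed by the reference maximizer, obtained from a second-order Taylor expansion of $f(\cdot,\yv^*)$ using the joint continuity of $f,\n_\xv f,\n_{\xv\xv}^2 f$. The second summand is the Kawasaki ``sigma'' term: it measures the extra growth produced by near-maximizers $\zv$ of $f(\xv^*,\cdot)$ that are not active at $\xv^*$ but overtake $\yv^*$ once $\xv$ is perturbed along $\tv$. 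Writing $f(\xv^*+\alpha\tv,\zv)\approx f(\xv^*,\zv)+\alpha\,\n_\xv f(\xv^*,\zv)^\top\tv$ and optimizing the trade-off between the first-order gain $\alpha\,\n_\xv f(\xv^*,\zv)^\top\tv$ and the deficit $f(\xv^*,\yv^*)-f(\xv^*,\zv)$ produces exactly the ratio $((\n_\xv f(\xv^*,\zv)^\top\tv)_+)^2(f(\xv^*,\yv^*)-f(\xv^*,\zv))^\dag$. The Taylor data \eqref{eq:taylor_exp}, giving the leading orders $(\alpha\delta^p,\beta\delta^q)$ of $f$ and of $\n_\xv f\cdot\tv$ along each direction $\dv$ off the first-order active set $\Y_1(\xv^*;\e_0;\tv)$, is precisely what guarantees this limsup is attained with a well-defined leading order and that the ensuing expansion is genuinely second-order. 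With the second-order directional derivative so identified and shown positive by \eqref{eq:sec_suff_local_minmax}, \Cref{thm:sufficient} yields that $\xv^*$ is a strict local minimizer of each $\of_\e$, $\e\in(0,\e_0]$, and \Cref{prop:eq2} finishes the argument.

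The main obstacle I anticipate is the sigma-term computation: rigorously establishing that the parabolic second-order expansion of $\of_\e$ along a critical direction is exactly $\tv^\top\n_{\xv\xv}^2 f(\xv^*,\yv^*)\tv$ plus the stated limsup, uniformly in $\e\in(0,\e_0]$. This requires controlling simultaneously the curvature at $\yv^*$ and the contribution of the moving near-active maximizers, which is where the Taylor-expansion hypothesis \eqref{eq:taylor_exp} does the heavy lifting. Matching the abstract Seeger--Kawasaki second-order derivative with this concrete expression, and checking that nondegeneracy of the orders ($\alpha,\beta\neq0$, $p,q>0$) rules out pathological cancellations between the curvature and envelope contributions, is the delicate part of the proof.
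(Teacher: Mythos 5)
Your proposal is correct and matches the paper's argument: the proof is exactly an application of Kawasaki's second-order sufficient condition for sup-type functions (\Cref{thm:suf_kawa}), with Danskin's theorem handling non-critical directions, stationarity letting one take the single active point $\yv^*$ in the multiplier condition, and \eqref{eq:taylor_exp} supplying \Cref{assmp:suff}; \Cref{prop:eq2} then upgrades "local minimizer of $\of_\e$ for all small $\e$" to local minimax. The only slip is that your cross-references to \Cref{thm:sufficient} (the first-order sufficient condition) should point to the second-order result \Cref{thm:suf_kawa}, which is clearly what you intend given your citations of \citet{Seeger88} and \citet{kawasaki1992second}.
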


\noindent \blue{Note that in the statement above, the variables $\a, \b$ and $p, q$ may depend on the direction $\dv$. If $f \in \Cc^\infty$ is smooth and both $f(\xv^*, \cdot)$ and $\n_\xv f(\xv^*, \cdot)^\top \tv$ have non-zero Taylor expansions, then \eqref{eq:taylor_exp} is always true for every $\yv\in \Y_1(\xv^*;\e_0;\tv)$. }
Here by ``critical direction'' we mean that $\D \of_{\e}(\xv^*;{\tv}) = 0$ for some $\e_0 > 0$ and any $\e \in [0, \e_0]$, as discussed in \Cref{sec:second_nec}. Another second-order sufficient condition for $f\in \Cc^2$ is:
\begin{restatable}[\textbf{second-order sufficient condition, local minimax}]{thm}{secondsuffs}
\label{thm:2nd_suff_seeger}
Assume $f\in \Cc^2$ and let $\X$ be convex. Suppose $\yv^*$ is a local maximizer of $f(\xv^*, \cdot)$ and that $(\xv^*, \yv^*)$ is an interior stationary point. \blue{If there is $\e_0 > 0$ and for any $\e \in (0, \e_0]$, 
there exist $R, r > 0$ such that~for any feasible direction $\|\tv\| = 1$ that satisfies $0\leq \D \of_\e(\xv^*;\tv)\leq r$, we have}
\be\label{eq:2nd_seeger}
\max_{\yv \in \Y_0(\xv^*;\e)} \max_{\substack{\vv\in\Vc(\xv^*,\yv;\tv)\\ \|\vv\|\leq R}}\max_{\substack{\wv\in \K[d](\Omega, \yv; \vv), \\
\|\wv\|\leq R}}~&& \inner{\begin{bmatrix} \n_{\xv\xv}^2 f(\xv^*,\yv) & \n_{\xv\yv}^2 f(\xv^*,\yv) \\ \n_{\yv\xv}^2 f(\xv^*,\yv) & \n_{\yv\yv}^2 f(\xv^*,\yv) \end{bmatrix}{\tv \choose \vv}}{{\tv \choose \vv}} + \tr 
 && + \inner{\n_{\yv} f(\xv^*,\yv)}{\wv}> \zero, 
\en
then this point is local minimax, where $\Vc(\xv, \yv; \tv) := \{\vv\in\K[d](\Omega, \yv): \D\of_\e(\xv;\tv) = {\n_\xv f(\xv,\yv)}^\top{\tv} + {\n_\yv f(\xv,\yv)}^\top {\vv}\}$, $\Omega := \Nc(\yv^*, \e)$ and 
\blue{
\begin{align}
\K[d](\Omega, \yv; \vv) := \liminf_{t\to 0^+} \frac{\Omega - \yv - t \vv}{t^2/2} &:= \{\gv: \forall \{t_k\} \downarrow 0~ \exists \{t_{k_i}\} \downarrow 0, \{\gv_{k_i}\} \to \gv, \tr
&\yv+t_{k_i} \vv + t_{k_i}^2 \gv_{k_i}/2 \in \Omega \}.
\end{align}
}
\end{restatable}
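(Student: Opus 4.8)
The plan is to reduce the statement to a single non-smooth minimization claim and then establish quadratic growth of the envelope function along near-critical directions. By \Cref{def:jin} and \Cref{prop:eq2}, and since $\yv^*$ is assumed to locally maximize $f(\xv^*,\cdot)$, it suffices to prove that $\xv^*$ is a local minimizer of the max-function $\of_\e(\xv) = \max_{\yv\in\Nc(\yv^*,\e)} f(\xv,\yv)$ for every small $\e>0$. Fix $\e\in(0,\e_0]$ and take the associated $R,r>0$. Since $f$ and $\n_\xv f$ are jointly continuous and $\Omega := \Nc(\yv^*,\e)$ is compact, Danskin's theorem (\Cref{thm:danskin}) gives $\D\of_\e(\xv^*;\tv) = \max_{\yv\in\Y_0(\xv^*;\e)} \n_\xv f(\xv^*,\yv)^\top\tv$; moreover $\of_\e(\xv^*) = f(\xv^*,\yv^*)$ and, because $(\xv^*,\yv^*)$ is an interior stationary point, $\D\of_\e(\xv^*;\tv)\ge 0$ for every direction $\tv$ (the first-order necessary condition, \Cref{thm:nec_1st}).

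First I would argue by contradiction. If $\xv^*$ were not a local minimizer of $\of_\e$, there would be $\xv_k = \xv^*+t_k\tv_k$ with $t_k\downarrow 0$, $\|\tv_k\|=1$, $\tv_k\to\tv$, and $\of_\e(\xv_k) < \of_\e(\xv^*)$. Choosing for each $k$ an active point $\yv_k\in\Y_0(\xv^*;\e)$ maximizing $\n_\xv f(\xv^*,\cdot)^\top\tv_k$, the bound $\of_\e(\xv_k)\ge f(\xv_k,\yv_k)$ together with a first-order expansion forces $\D\of_\e(\xv^*;\tv_k) = \n_\xv f(\xv^*,\yv_k)^\top\tv_k \le O(t_k)\to 0$. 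Hence the limiting direction $\tv$ is critical and, for large $k$, $0\le\D\of_\e(\xv^*;\tv_k)\le r$, placing $\tv_k$ exactly in the slab on which the hypothesis \eqref{eq:2nd_seeger} is assumed. This is precisely why the theorem imposes its condition on the whole near-critical slab $\{\|\tv\|=1: 0\le\D\of_\e(\xv^*;\tv)\le r\}$ rather than only on the exact critical cone.

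Next I would build a quadratic lower bound that contradicts $\of_\e(\xv_k)<\of_\e(\xv^*)$. For each $k$ I would select (near-)optimal witnesses $\yv_k\in\Y_0(\xv^*;\e)$, $\vv_k\in\Vc(\xv^*,\yv_k;\tv_k)$ and $\wv_k\in\K[d](\Omega,\yv_k;\vv_k)$ with norms at most $R$ that make the left-hand side of \eqref{eq:2nd_seeger} strictly positive, and construct a feasible arc $\yv_k(t)=\yv_k+t\vv_k+\tfrac{t^2}{2}\wv_k+o(t^2)$ staying in $\Omega$. Because $\vv_k\in\Vc$, the coefficient of $t_k$ in $f(\xv_k,\yv_k(t_k))$ equals $\D\of_\e(\xv^*;\tv_k)$, which is near $0$, so a second-order Taylor expansion of $f\in\Cc^2$ (using $f(\xv^*,\yv_k)=\of_\e(\xv^*)$ and the constraint-curvature term $\inner{\n_\yv f(\xv^*,\yv_k)}{\wv_k}$) produces exactly the bracketed quantity of \eqref{eq:2nd_seeger} as the coefficient of $\tfrac{t_k^2}{2}$. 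Passing to a subsequence so that the witnesses converge, positivity yields $\of_\e(\xv_k)\ge f(\xv_k,\yv_k(t_k)) \ge \of_\e(\xv^*) + c\,t_k^2 + o(t_k^2) > \of_\e(\xv^*)$ for large $k$, the desired contradiction.

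The hard part will be the rigorous handling of the second-order (parabolic) tangent set $\K[d](\Omega,\yv;\vv)$ and the uniformity needed to upgrade the pointwise inequality \eqref{eq:2nd_seeger} into genuine quadratic growth. Concretely one must (i) justify that the admissible curvatures $\wv$ of feasible arcs in $\Omega$ are exactly parametrized by $\K[d](\Omega,\yv;\vv)$, which is what legitimizes the term $\inner{\n_\yv f(\xv^*,\yv)}{\wv}$, (ii) control the $o(t^2)$ remainders uniformly along the sequence of nearly-critical directions, and (iii) manage the $k$-dependence of the witnesses, which is absorbed by the compactness afforded by the norm bounds $\|\vv\|,\|\wv\|\le R$. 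I expect the cleanest route is to invoke the second-order sufficient condition for parametric max-functions of \citet{Seeger88, kawasaki1992second} (reviewed in \Cref{app:review}) and simply verify that \eqref{eq:2nd_seeger} instantiates its hypotheses for $\of_\e$, thereby delegating the delicate tangent-set and remainder bookkeeping to those results.
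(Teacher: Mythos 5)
Your proposal is correct and, in its final form, coincides with the paper's proof: the paper likewise reduces the claim to showing that $\xv^*$ locally minimizes $\of_\e$ for each $\e\in(0,\e_0]$, uses Danskin's theorem (\Cref{thm:danskin}) together with stationarity and $\yv^*\in\Y_0(\xv^*;\e)$ to get $\D\of_\e(\xv^*;\tv)\geq 0$, and then invokes \Cref{thm:2nds} combined with \Cref{thm:env_2nds_q} (Seeger), with the bounds $\|\vv\|,\|\wv\|\leq R$ supplying the required uniformity. Your hand-rolled contradiction sketch is a faithful unpacking of that same machinery; the only blemish is the parenthetical appeal to \Cref{thm:nec_1st} for $\D\of_\e(\xv^*;\tv)\geq 0$ (that is a \emph{necessary} condition for the very minimality you are trying to establish), but the Danskin-plus-stationarity justification you give in the same sentence is the correct one.
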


\noindent \blue{
The definition of feasible directions for convex sets can be found in e.g.~\citet[][]{hiriart2013convex}. We used the convention that maximizing over an empty set yields $-\infty$. Specifically, if there exists $\yv\in \Y_0(\xv^*, \e)$ such that it is in the interior of $\Yc$, Theorem~\ref{thm:2nd_suff_seeger} can be simplified as:
\begin{cor}[\textbf{second-order sufficient condition, interior version}]\label{cor:second_suff_interior}
Assume $f\in \Cc^2$ and let $\X$ be convex. Suppose $\yv^*$ is a local maximizer of $f(\xv^*, \cdot)$ and that $(\xv^*, \yv^*)$ is an interior stationary point. If there is $\e_0 > 0$ such that $\Nc(\yv^*, \e_0) \subset \Yc \subset \R^m$, and for any $\e \in (0, \e_0)$, 
there exist $R, r > 0$ such that~for any feasible direction $\|\tv\| = 1$ that satisfies $0\leq \D \of_\e(\xv^*;\tv)\leq r$, we have: 
\begin{align}\label{eq:2nd_seeger_new}
\max_{\yv \in \Y_0(\xv^*;\e)} \max_{\substack{\vv\in\Vc(\xv^*,\yv;\tv)\\ \|\vv\|\leq R}} \max_{\|\wv\|\leq R} \inner{\begin{bmatrix} \n_{\xv\xv}^2 f(\xv^*,\yv) & \n_{\xv\yv}^2 f(\xv^*,\yv) \\ \n_{\yv\xv}^2 f(\xv^*,\yv) & \n_{\yv\yv}^2 f(\xv^*,\yv) \end{bmatrix}{\tv \choose \vv}}{{\tv \choose \vv}} + \inner{\n_{\yv} f(\xv^*,\yv)}{\wv}> \zero, 
\end{align}
then this point is local minimax, where $\Vc(\xv, \yv; \tv) := \{\vv\in \R^m: \D\of_\e(\xv;\tv) = {\n_\xv f(\xv,\yv)}^\top{\tv} + {\n_\yv f(\xv,\yv)}^\top {\vv}\}$.
\end{cor}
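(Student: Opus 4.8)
The plan is to derive \Cref{cor:second_suff_interior} from the general second-order sufficient condition, \Cref{thm:2nd_suff_seeger}, by showing that the extra hypothesis $\Nc(\yv^*, \e_0) \subset \Yc$ forces the two cone-valued sets appearing in \eqref{eq:2nd_seeger} to degenerate to the full ambient space, so that \eqref{eq:2nd_seeger} collapses to \eqref{eq:2nd_seeger_new}. First I would record the key structural consequence of the interior assumption: for every $\e \in (0,\e_0)$ and every active maximizer $\yv \in \Y_0(\xv^*;\e)$ one has $\n_\yv f(\xv^*,\yv) = \zero$ and $\n_{\yv\yv}^2 f(\xv^*,\yv) \preceq \zero$. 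Indeed, such a $\yv$ attains the value $\of_\e(\xv^*) = f(\xv^*,\yv^*)$, which is the maximal value of $f(\xv^*,\cdot)$ over the larger ball $\Nc(\yv^*,\e_0)$; since $\|\yv-\yv^*\| \le \e < \e_0$, the point $\yv$ lies in the \emph{interior} of $\Nc(\yv^*,\e_0) \subseteq \mathrm{int}\,\Yc$, hence is an unconstrained local maximizer of $f(\xv^*,\cdot)$ and the first- and second-order optimality conditions follow at once.

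Granting this lemma, the two simplifications fall out as follows. Because $\n_\yv f(\xv^*,\yv) = \zero$, the linear term $\inner{\n_\yv f(\xv^*,\yv)}{\wv}$ vanishes identically, so its maximization over $\wv$ contributes exactly $0$ regardless of the range of $\wv$; in particular the second-order derivable set $\K[d](\Omega,\yv;\vv)$ of \Cref{thm:2nd_suff_seeger} may be replaced by the plain norm ball $\{\|\wv\| \le R\}$ without altering the value. The same vanishing removes $\vv$ from the affine equation defining $\Vc(\xv^*,\yv;\tv)$, which now reads $\D\of_\e(\xv^*;\tv) = \n_\xv f(\xv^*,\yv)^\top\tv$; consequently $\Vc$ is either empty (when $\yv \notin \Y_1(\xv^*;\e;\tv)$) or the whole ambient cone, and the quantity being maximized reduces to the purely quadratic form $\tv^\top\n_{\xv\xv}^2 f\,\tv + 2\tv^\top\n_{\xv\yv}^2 f\,\vv + \vv^\top\n_{\yv\yv}^2 f\,\vv$, which is concave in $\vv$ by the lemma.

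It remains to justify enlarging the domain of $\vv$ from the derivable cone $\K[d](\Omega,\yv)$ to all of $\R^m$, and this is the step I expect to be the main obstacle. For maximizers lying in the open ball the cone already equals $\R^m$ and there is nothing to prove; the difficulty is with maximizers on the bounding sphere $\|\yv-\yv^*\| = \e$, where $\K[d](\Omega,\yv)$ is only the inward half-space $\{\vv : \vv^\top(\yv-\yv^*) \le 0\}$ and a concave quadratic can attain a strictly larger maximum over $\R^m$ than over that half-space, so one cannot merely quote \Cref{thm:2nd_suff_seeger}. My plan to clear this is to exploit \Cref{thm:eqseq}: since such a sphere-maximizer is in fact an \emph{unconstrained} critical point interior to $\Nc(\yv^*,\e_0)$, its tracking velocity is genuinely free, and passing to a marginally larger radius $\e' \in (\e,\e_0]$ (under which $\xv^*$ remains a local minimizer of $\of_{\e'}$ by \Cref{thm:eqseq}) renders every such maximizer interior so that $\vv$ ranges over all of $\R^m$. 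Verifying that this radius enlargement can be performed compatibly for all relevant critical directions $\tv$, and that the maxima at the two radii indeed agree, is the delicate bookkeeping; once it is in place, condition \eqref{eq:2nd_seeger_new} is exactly \eqref{eq:2nd_seeger} for the enlarged neighborhood, and \Cref{thm:2nd_suff_seeger} yields that $(\xv^*,\yv^*)$ is local minimax.
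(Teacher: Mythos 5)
Your overall route is the one the paper takes: specialize \Cref{thm:2nd_suff_seeger} by arguing that, once $\Nc(\yv^*,\e_0)\subset\Yc$, the cones $\K[d](\Omega,\yv)$ and $\K[d](\Omega,\yv;\vv)$ trivialize to $\R^m$, so that \eqref{eq:2nd_seeger} collapses to \eqref{eq:2nd_seeger_new}. The paper's entire proof is the one-line assertion that these cones equal $\R^m$ for $\yv\in\Nc(\yv^*,\e)$; your first two paragraphs recover this and add the genuinely useful refinement that every $\yv\in\Y_0(\xv^*;\e)$ with $\e<\e_0$ is an \emph{unconstrained} local maximizer of $f(\xv^*,\cdot)$ (it lies in the open ball of radius $\e_0$ and attains the maximal value there), whence $\n_\yv f(\xv^*,\yv)=\zero$ and $\n_{\yv\yv}^2 f(\xv^*,\yv)\preceq\zero$. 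The boundary-sphere concern in your third paragraph is legitimate: the cones in \Cref{thm:2nd_suff_seeger} are taken with respect to $\Omega=\Nc(\yv^*,\e)$, and at a point with $\|\yv-\yv^*\|=\e$ the derivable cone of the closed ball is the inward half-space, not $\R^m$; since a concave quadratic can have a strictly larger maximum over $\R^m$ than over a half-space, \eqref{eq:2nd_seeger_new} is a priori a weaker hypothesis than \eqref{eq:2nd_seeger}. The paper's one-liner asserts the cone identity for all $\yv\in\Nc(\yv^*,\e)$, which holds only for interior points of that ball, so you have located a real soft spot in the published argument rather than manufactured one.

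That said, your proposed repair does not close the gap as written. First, you invoke \Cref{thm:eqseq} to say that ``$\xv^*$ remains a local minimizer of $\of_{\e'}$,'' but that lemma transfers local minimality from a smaller radius to a larger one and therefore presupposes exactly what is to be proved. Second, enlarging the radius does not remove boundary maximizers: $\Y_0(\xv^*;\e')$ may again contain points on the sphere of radius $\e'$, so the problem recurs at the new radius. The workable version of your idea runs in the other direction: to certify $\xv^*$ as a local minimizer of $\of_{\e'}$ via \Cref{thm:2nd_suff_seeger} at radius $\e'$, lower-bound the maximum in \eqref{eq:2nd_seeger} (taken over $\Y_0(\xv^*;\e')$ with the correct cones) by the maximum over the subset $\Y_0(\xv^*;\e)$ for some $\e<\e'$, all of whose members are interior to $\Nc(\yv^*,\e')$ and hence carry full cones. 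Even then one must reconcile $\Vc$ defined through $\D\of_{\e}$ with $\Vc$ defined through $\D\of_{\e'}$ (a $\yv$ attaining the Danskin maximum at radius $\e$ need not attain it at radius $\e'$, in which case its $\Vc$ at radius $\e'$ is empty and contributes $-\infty$ to the maximum) and match the two criticality thresholds $r$. This is exactly the ``delicate bookkeeping'' you defer, and it is where the actual content of a rigorous proof of the corollary lies; neither your proposal nor the paper's one-line proof supplies it.
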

\begin{proof}
If $\yv \in \Nc(\yv^*, \e)$, then we have $\K[d](\Omega, \yv) = \K[d](\Omega, \yv; \vv) = \R^m$. 
\end{proof}
}

\noindent In the special case when $\n_{\yv\yv}^2 f(\xv^*, \yv^*) \cl \zero$, we have the following corollary. \blue{This special type of local minimax points that satisfy \eqref{eq:suff} are also known as \emph{strict local minimax points} \citep{jin2019minmax}.}

\begin{restatable}[\textbf{second-order sufficient condition, invertible, \cite{jin2019minmax}}]{cor}{suff}\label{cor:suf_jin}
Let $f$ be twice continuously differentiable. At an interior stationary point $(\xv^*, \yv^*)\in \X \times \Y$, if \be\label{eq:suff}\n_{\yv\yv}^2 f\prec {\bf 0}\mbox{ and } \n_{\xv\xv}^2 f - \n_{\xv\yv}^2 f (\n_{\yv\yv}^2 f)^{-1} \n_{\yv\xv}^2 f\succ {\bf 0},\en
then $(\xv^*, \yv^*)$ is a local minimax point. 
\end{restatable}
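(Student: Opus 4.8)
The plan is to deduce this corollary directly from Theorem~\ref{thm:invertible_local_mm}, which already gives a necessary-and-sufficient characterization of local minimax points whenever $\n_{\yv\yv}^2 f$ is non-degenerate. Since $(\xv^*,\yv^*)$ is an \emph{interior} stationary point and $f\in\Cc^2$, we may localize and treat the problem as unconstrained ($\X=\R^n$, $\Y=\R^m$), so the hypotheses of Theorem~\ref{thm:invertible_local_mm} are in force. The first two bullets of that theorem hold at once: $\n_{\yv} f(\xv^*,\yv^*)=\zero$ by stationarity, and $\n_{\yv\yv}^2 f \prec \zero$ is assumed (in particular invertible). It therefore remains only to verify the third bullet, namely that $\xv^*$ is a local minimizer of the reduced objective $g(\xv):=f(\xv,\yv(\xv))$, where $\yv(\xv)$ is the implicit solution of $\n_\yv f(\xv,\yv)=\zero$ near $\xv^*$.

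First I would invoke the implicit function theorem: because $\n_{\yv\yv}^2 f(\xv^*,\yv^*)$ is invertible, there is a $\Cc^1$ map $\xv\mapsto\yv(\xv)$ on a neighborhood of $\xv^*$ with $\yv(\xv^*)=\yv^*$ and $\n_\yv f(\xv,\yv(\xv))\equiv\zero$, whose Jacobian is obtained by differentiating the defining identity: $\tfrac{d\yv}{d\xv}=-(\n_{\yv\yv}^2 f)^{-1}\n_{\yv\xv}^2 f$. Differentiating $g$ through the constraint, the term multiplying $\tfrac{d\yv}{d\xv}$ carries the factor $\n_\yv f(\xv,\yv(\xv))=\zero$ and so collapses; hence $\nabla g(\xv)=\n_\xv f(\xv,\yv(\xv))$ and in particular $\nabla g(\xv^*)=\n_\xv f(\xv^*,\yv^*)=\zero$. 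Differentiating once more and substituting the Jacobian gives
$$\nabla^2 g(\xv^*)=\n_{\xv\xv}^2 f-\n_{\xv\yv}^2 f\,(\n_{\yv\yv}^2 f)^{-1}\,\n_{\yv\xv}^2 f,$$
which is exactly the Schur complement assumed to be $\succ\zero$. With $\nabla g(\xv^*)=\zero$ and $\nabla^2 g(\xv^*)\succ\zero$, the classical second-order sufficient condition for unconstrained minimization certifies that $\xv^*$ is a (strict) local minimizer of $g$; Theorem~\ref{thm:invertible_local_mm} then yields that $(\xv^*,\yv^*)$ is local minimax, completing the argument.

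The hard part will be only the chain-rule bookkeeping for $\nabla^2 g$ — in particular making sure the cross-term vanishes because $\n_\yv f$ is identically zero along the implicit curve — but this is routine. An alternative route would instead verify hypothesis~\eqref{eq:sec_suff_local_minmax} of Theorem~\ref{cor:suff_2nd_localmm} by the same computation used for the necessary condition in Corollary~\ref{def:nec_2}: since $\n_{\yv\yv}^2 f\prec\zero$ forces $\Y_0(\xv^*;\e)=\{\yv^*\}$ (so every direction is critical), a second-order Taylor expansion of $f(\xv^*,\zv)$ and $\n_\xv f(\xv^*,\zv)$ together with the change of variables $\zv-\yv^*=(-\n_{\yv\yv}^2 f)^{-1/2}(\wv-\yv^*)$ and Cauchy--Schwarz reduces the $\limsup$ term to $-\tv^\top\n_{\xv\yv}^2 f(\n_{\yv\yv}^2 f)^{-1}\n_{\yv\xv}^2 f\tv$, recovering the Schur-complement condition. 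I expect the implicit-function route to be cleaner, however, since it sidesteps the Taylor-regularity hypothesis~\eqref{eq:taylor_exp} required by Theorem~\ref{cor:suff_2nd_localmm}, which becomes delicate precisely in the critical directions where $\n_{\yv\xv}^2 f\,\tv=\zero$.
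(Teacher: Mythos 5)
Your proof is correct, but it takes a genuinely different route from the paper. The paper derives \Cref{cor:suf_jin} from the Seeger-type sufficient condition of \Cref{thm:2nd_suff_seeger}: since $\n_{\yv\yv}^2 f \prec \zero$ forces the active set $\Y_0(\xv^*;\e)=\{\yv^*\}$ to be a singleton, Danskin's theorem gives $\D\of_\e(\xv^*;\tv)=\n_\xv f(\xv^*,\yv^*)^\top\tv=0$ for every $\tv$ (all directions critical), and then maximizing the quadratic form in \eqref{eq:2nd_seeger} over $\vv$ with $R=\|(\n_{\yv\yv}^2 f)^{-1}\n_{\yv\xv}^2 f\|$ (the $\wv$-term vanishing by stationarity) produces exactly the Schur complement $\tv^\top(\n_{\xv\xv}^2 f - \n_{\xv\yv}^2 f(\n_{\yv\yv}^2 f)^{-1}\n_{\yv\xv}^2 f)\tv>0$. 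You instead route through the iff-characterization of \Cref{thm:invertible_local_mm}, checking its third bullet via the implicit function theorem and the classical unconstrained second-order sufficient condition for the reduced function $g(\xv)=f(\xv,\yv(\xv))$; your chain-rule bookkeeping is right (the cross-term in $\nabla g$ dies because $\n_\yv f(\xv,\yv(\xv))\equiv\zero$, and $g\in\Cc^2$ since $\nabla g=\n_\xv f(\cdot,\yv(\cdot))$ is a composition of $\Cc^1$ maps), and the localization to the interior justifies applying the unconstrained theorem. Your approach is cleaner and more self-contained modulo \Cref{thm:invertible_local_mm}, and it explains the Schur complement conceptually as the Hessian of the reduced objective; the paper's approach is less elementary but serves as a consistency check that the general non-smooth machinery of \Cref{thm:2nd_suff_seeger} correctly specializes to the non-degenerate case, which is the point the surrounding examples (\Cref{eg:stronger_suff_cond}, \Cref{eg:kawa-suff}) are making. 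You are also right to be wary of the alternative via \Cref{cor:suff_2nd_localmm}: the Taylor-regularity hypothesis \eqref{eq:taylor_exp} is an extra assumption not granted by $f\in\Cc^2$ alone, so that route would not prove the corollary as stated.
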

\begin{proof}
The active set $\Y_0(\xv^*;\e) = \{\yv^*\}$ is a singleton. From Danskin's theorem (\Cref{thm:danskin}) all directions are critical. The l.h.s.~of \eqref{eq:2nd_seeger} becomes $ \tv^\top (\n_{\xv\xv}^2 f - \n_{\xv\yv}^2 f (\n_{\yv\yv}^2 f)^{-1} \n_{\yv\xv}^2 f)\tv$ if we choose $R = \|(\n_{\yv\yv}^2 f)^{-1}\n_{\yv\xv}^2 f\|$.
\end{proof}

However, \Cref{cor:suf_jin} does not fully cover \Cref{thm:2nd_suff_seeger} when $\n_{\yv\yv}^2$ is not invertible:
\begin{eg}[\blue{\textbf{Theorem \ref{thm:2nd_suff_seeger} strictly includes Corollary \ref{cor:suf_jin}}}]\label{eg:stronger_suff_cond}
Take $$f(x, y) = x y^2 + x^2$$ and a stationary point $(x^*, y^*) = (0, 0)$. $\D\of_\e(x^*;t) = \e^2$ if $t = 1$ and $\D \of_\e(x^*;t) = 0$ if $t = -1$. Take $r = \e^2/2$. Along the critical direction $t = -1$, the l.h.s.~of \eqref{eq:2nd_seeger} becomes $2 > 0$, since $\n_{y} f(x^*, y) = 0$, and $\Vc(x^*,y;t) = \varnothing$ if $y \neq 0$ and $\R$ if $y = 0$.  
So, $(0, 0)$ is local minimax from \Cref{thm:2nd_suff_seeger}. Note that \Cref{cor:suff_2nd_localmm} does not apply since $f(x^*, y)$ does not have a non-zero Taylor expansion. 
\end{eg}

We also give an example when \Cref{thm:2nd_suff_seeger} is not applicable but \Cref{cor:suff_2nd_localmm} is:
\begin{eg}[\blue{\textbf{application of Theorem \ref{cor:suff_2nd_localmm} where Theorem \ref{thm:2nd_suff_seeger} cannot be applied}}]\label{eg:kawa-suff}
 Take $$f(x, y) = x y^3 - y^6$$ and a stationary point $(x^*, y^*) = (0, 0)$. Fixing $x^* = 0$, $f(x^*, \cdot)$ is maximized at $0$, and for any $t\neq 0$, $\D \of_\e(x^*; t) = \max_{y^6 = 0}y^3 t = 0$. Since $\n_x f(x^*, z) = z^3 t$ and $f(x^*, y^*) - f(x^*, z) = z^6$, the l.h.s.~of \eqref{eq:sec_suff_local_minmax} is $t^2/2 > 0$. Moreover, $\Y_1(x^*;\e_0; t) = \{y^*\}$ for any $\e_0 > 0$, and $$f(x^*, y^* + \d d) = -\d^6 d^6, \, \n_x f(x^*, y^* + \d d)^\top t = \d^3 d^3 t.$$
 So, $(0, 0)$ is a local minimax point. Note that \Cref{thm:2nd_suff_seeger} does not apply since $\Y_0(x^*;\e) = \{0\}$ and all second-order derivatives are zero.
\end{eg}

\section{Quadratic games: A case study}
\label{sec:q2}

In this section we study quadratic games with the following form:
\be\label{eq:quadr}
q(\xv, \yv) = \frac{1}{2} \begin{bmatrix}
\xv\\ \yv \\ 1
\end{bmatrix}^\top
\begin{bmatrix}
\A &\cC & \av \\
\cC^\top & \B & \bv \\
\av^\top & \bv^\top & c\\
\end{bmatrix}
\begin{bmatrix}
\xv \\ \yv \\ 1
\end{bmatrix},
\en
where $\xv \in \X = \R^n$ and $\yv \in \Y = \R^m$.
In particular, a game is \emph{bilinear} if $\A, \B$ vanish and \emph{homogeneous} if $\av, \bv$ vanish. \blue{Since quadratic games are continuous, local saddle points are the same as uniformly local minimax points (see \Cref{prop:ulmm}).}

Our first result completely characterizes stationary, global minimax and local minimax points for homogeneous quadratic games:
\begin{restatable}[\blue{\textbf{sufficient and necessary conditions for optimality in quadratic games}}]{thm}{minimaxq}
\label{thm:local_global_q}
For (homogeneous) unconstrained quadratic games, a pair $(\xv, \yv)$ is
\begin{itemize}
\item stationary iff 
\begin{align}
\label{eq:stat}
\begin{bmatrix}
\Av & \Cv \\ \Cv^\top & \Bv
\end{bmatrix} 
\begin{bmatrix}
\xv \\ \yv
\end{bmatrix}
= \zero;
\end{align}
\item global minimax iff $\Bv \preceq \zero$, $\p{\Lv}(\Av - \Cv \Bv^\dag \Cv^\top) \p{\Lv} \cge \zero$ where $\Lv = \Cv \p{\Bv}$, and 
\begin{align}
\label{eq:gLmM}
\begin{bmatrix}
\p{\Lv} &  \\  & \Iv
\end{bmatrix} 
\begin{bmatrix}
\Av & \Cv \\ \Cv^\top & \Bv
\end{bmatrix} 
\begin{bmatrix}
\xv \\ \yv
\end{bmatrix}
 = \zero;
\end{align}
(Recall that $\p{\Lv} = \Iv - \Lv\Lv^\dag$ is the orthogonal projection onto the null space of $\Lv^\top$.)
\item local minimax iff $\Bv \preceq \zero$, $\p{\Lv}(\Av - \Cv \Bv^\dag \Cv^\top) \p{\Lv} \cge \zero$, and stationary (i.e. \eqref{eq:stat} holds).
In particular, local minimax points are always global minimax.
\end{itemize}
\end{restatable}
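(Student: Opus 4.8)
The plan is to characterize all three notions directly through the envelope functions, exploiting the fact that for a homogeneous ($\av=\bv=\zero$) quadratic game $q(\xv,\yv) = \tfrac12\xv^\top\Av\xv + \xv^\top\Cv\yv + \tfrac12\yv^\top\Bv\yv$ the inner maximization over $\yv$ is itself a concave (once $\Bv\preceq\zero$) quadratic program solvable in closed form. \textbf{Stationarity} is immediate: $\partial_\xv q = \Av\xv + \Cv\yv$ and $\partial_\yv q = \Cv^\top\xv + \Bv\yv$, and setting both to $\zero$ gives \eqref{eq:stat}. Throughout I record the structural facts that $\p{\Bv}=\Iv-\Bv\Bv^\dag$ is the orthogonal projector onto $\mathrm{null}(\Bv)$, that $\Lv^\top=\p{\Bv}\Cv^\top$, and that $\p{\Lv}$ projects onto $\mathrm{null}(\Lv^\top)$; in particular $\Cv^\top\xv\in\mathrm{range}(\Bv)\Leftrightarrow\Lv^\top\xv=\zero\Leftrightarrow\p{\Lv}\xv=\xv$.

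For \textbf{global minimax}, I would first compute the upper envelope $\of(\xv)=\sup_\yv q(\xv,\yv)$. If $\Bv\not\preceq\zero$ then $\of\equiv+\infty$ and no global minimax point exists, so necessarily $\Bv\preceq\zero$; when $\Bv\preceq\zero$ the supremum is finite exactly on the subspace $\{\Lv^\top\xv=\zero\}$ (otherwise the linear term along $\mathrm{null}(\Bv)$ is unbounded), where the maximizer is $\yv=-\Bv^\dag\Cv^\top\xv$ up to $\mathrm{null}(\Bv)$ and $\of(\xv)=\tfrac12\xv^\top(\Av-\Cv\Bv^\dag\Cv^\top)\xv$. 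Thus $\xv^*$ globally minimizes $\of$ over its effective domain $\mathrm{range}(\p{\Lv})$ iff the restricted Hessian is positive semidefinite, $\p{\Lv}(\Av-\Cv\Bv^\dag\Cv^\top)\p{\Lv}\succeq\zero$, and the restricted gradient $\p{\Lv}(\Av-\Cv\Bv^\dag\Cv^\top)\xv$ vanishes. Substituting the certificate $\yv^*=-\Bv^\dag\Cv^\top\xv^*+\mathrm{null}(\Bv)$ and using $\p{\Lv}\Lv=\zero$ (so $\p{\Lv}\Cv\,\mathrm{null}(\Bv)=\p{\Lv}\Lv\,\mathrm{null}(\Bv)=\zero$) shows this restricted-gradient condition together with $\Cv^\top\xv^*+\Bv\yv^*=\zero$ is exactly \eqref{eq:gLmM}, yielding the global characterization.

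For \textbf{local minimax} I analyze the local envelope. Since $\yv^*$ must locally maximize $q(\xv^*,\cdot)$ we again get $\Bv\preceq\zero$ and $\Cv^\top\xv^*+\Bv\yv^*=\zero$, whence $\of_\e(\xv^*)=q(\xv^*,\yv^*)$. Writing $\xv=\xv^*+\tv$, $\yv=\yv^*+\vv$ and using this $\yv$-stationarity, the expansion is exact:
\[
\of_\e(\xv^*+\tv) - \of_\e(\xv^*) = \tfrac12\tv^\top\Av\tv + (\Av\xv^*+\Cv\yv^*)^\top\tv + \max_{\|\vv\|\le\e}\Big[(\Cv^\top\tv)^\top\vv + \tfrac12\vv^\top\Bv\vv\Big].
\]
The inner maximum splits according to whether $\Lv^\top\tv=\p{\Bv}\Cv^\top\tv$ vanishes. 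Along $\Lv^\top\tv=\zero$ the null-space direction contributes nothing and, for $\tv$ small, the range part attains its unconstrained optimum $-\tfrac12\tv^\top\Cv\Bv^\dag\Cv^\top\tv$, recovering the smooth quadratic $\tfrac12\tv^\top(\Av-\Cv\Bv^\dag\Cv^\top)\tv$; these directions reproduce $\p{\Lv}(\Av-\Cv\Bv^\dag\Cv^\top)\p{\Lv}\succeq\zero$ and force $\p{\Lv}(\Av\xv^*+\Cv\yv^*)=\zero$. Along $\Lv^\top\tv\ne\zero$ the maximum picks up the first-order term $\e\|\Lv^\top\tv\|$, so minimality at $\xv^*$ (testing $\pm\tv$) yields $|(\Av\xv^*+\Cv\yv^*)^\top\tv|\le\e\|\Lv^\top\tv\|$; letting $\e_n\downarrow0$ kills the remaining gradient component along $\mathrm{range}(\Lv)$. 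Together with the $\p{\Lv}$-component this gives the \emph{full} stationarity $\Av\xv^*+\Cv\yv^*=\zero$, i.e. \eqref{eq:stat}, and the converse direction uses the same expansion.

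The main obstacle is the sufficiency half of the local-minimax analysis: along directions $\tv$ for which $\Lv^\top\tv$ is small but nonzero, the favourable first-order term $\e\|\Lv^\top\tv\|$ degenerates at the same rate that the positive-semidefinite form $\tfrac12\tv^\top\p{\Lv}(\Av-\Cv\Bv^\dag\Cv^\top)\p{\Lv}\tv$ may vanish, so one must show these two nonnegative contributions jointly dominate the indefinite cross terms. I would handle this by fixing $\e$, bounding the inner maximum below by $c\,\e\|\Lv^\top\tv\| - \tfrac12\tv^\top\Cv\Bv^\dag\Cv^\top\tv$ for small $\tv$, decomposing $\tv$ into its $\p{\Lv}$- and $\mathrm{range}(\Lv)$-parts, and choosing the neighbourhood radius $\delta(\e)$ small enough (using that $\Lv^\top$ is bounded below on $\mathrm{range}(\Lv)$) that the positive linear-in-$\tv$ term absorbs the cross and higher-order terms; since this holds for a sequence $\e_n\downarrow0$, $(\xv^*,\yv^*)$ is local minimax. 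Finally, because \eqref{eq:stat} implies \eqref{eq:gLmM} while the side conditions $\Bv\preceq\zero$ and $\p{\Lv}(\Av-\Cv\Bv^\dag\Cv^\top)\p{\Lv}\succeq\zero$ are identical, every local minimax point is global minimax.
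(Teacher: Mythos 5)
Your proposal is correct and follows essentially the same route as the paper's proof: both compute the envelope $\bar q$ in closed form via the projector structure $\Lv=\Cv\p{\Bv}$, characterize global minimaxity through the restricted Hessian $\p{\Lv}(\Av-\Cv\Bv^\dag\Cv^\top)\p{\Lv}$, and establish local-minimax sufficiency by decomposing $\xv-\xv^*$ into its $\p{\Lv}$ and $\mathrm{range}(\Lv)$ components and shrinking the neighbourhood radius proportionally to $\e$ so that the first-order term $\e\|\Lv^\top(\xv-\xv^*)\|$, bounded below via the smallest positive singular value of $\Lv$, absorbs the indefinite cross terms. The only cosmetic difference is that you derive the necessity of full stationarity directly from the first-order expansion of $\of_\e$ by testing $\pm\tv$ and letting $\e_n\downarrow 0$, whereas the paper invokes its general first-order necessary condition and the observation that a local minimizer of the restricted $\bar q$ is automatically global.
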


Comparing \Cref{thm:local_global_q} with \Cref{thm:cc}, we find that in both cases, local minimax points are global minimax, which is not true in general (\Cref{eg:local_non_global}). This shows that there exists some ``hidden convexity'' in quadratic games when local/global minimax points exist: fixing any $\xv$, $q(\xv, \cdot)$ is concave in $\yv$; $\oq(\xv)$ is convex in $\xv$ (see \eqref{eq:barq}).

\begin{rem}[\blue{\textbf{application of \Cref{thm:nec_localmm} in quadratic games}}]\label{eg:quadratic}
We could also use \Cref{thm:nec_localmm} to obtain the necessary condition of local minimax points for quadratic games. First write $$f(\xv^*, \yv^*) - f(\xv^*, \yv) = -\yv^\top\Bv\yv/2\mbox{ and }-\n_\xv f(\xv^*, \yv)^\top \tv = -\yv^\top \Cv^\top \tv$$ and $\D \of_\e(\xv^*; \tv) \geq \d\|\p{\Bv}\Cv^\top \tv\|$ for some $\d > 0$. The critical directions are $\tv \in \Nc(\p{\Bv}\Cv^\top)$. If $\Bv \Cv^\top = \zero$, then $\n_\xv f(\xv^*, \yv)^\top \tv = 0$ for any $\yv$ and thus the second term in \eqref{eq:localmm_min} is zero. So, we have $\p{\Lv}\Av \p{\Lv} \cge 0$ with $\Lv = \Cv \p{\Bv}$. Otherwise, take critical directions $\tv$ such that $\tv \in \Nc(\p{\Bv}\Cv^\top)$. The second term in \eqref{eq:localmm_min} becomes $ -\tv^\top \Cv \Bv^\dag \Cv^\top \tv$ (using Cauchy--Schwarz). Combining with the case $\Bv \Cv^\top = \zero$, we have $\p{\Lv}(\Av - \Cv \Bv^\dag \Cv^\top)\p{\Lv} \cge \zero$. 
\end{rem}

We remark that the last claim of \Cref{thm:local_global_q} does not follow from \Cref{thm:cc}:
\begin{eg}[\blue{\textbf{quadratic games can be non-convex}}]\label{eg:no_local_saddle}
Let $A = -1, C = 1, B = 0, a = b= 0$. Then, from \Cref{thm:local_global_q}  $(x,y) = (0,0)$ is local and global minimax. However, $q(x,y) = -\tfrac12 x^2 + xy$ is clearly non-convex in $x$ (although $\bar{q}$ is convex). Also, $(0, 0)$ is not local saddle since $q(x, 0)\geq q(0, 0)$ does not hold.
\end{eg}

\begin{restatable}[\blue{\textbf{equivalence between global and local minimax in quadratic games}}]{thm}{localglobalq}
\label{thm:qc}
An unconstrained quadratic game admits a global minimax point iff it admits a local minimax point iff 
\begin{align}
\label{eq:qc}
\Bv \preceq \zero, \quad 
\p{\Lv} (\Av - \Cv \Bv^\dag \Cv^\top) \p{\Lv} \succeq \zero, \mbox{ and }
\begin{bmatrix}
\av \\
\bv
\end{bmatrix} \in \rr\left(\begin{bmatrix}
\Av & \Cv \\
\Cv^\top & \Bv
\end{bmatrix}\right).
\end{align}
For such quadratic games, local minimax points are exactly the same as stationary global minimax points.
\end{restatable}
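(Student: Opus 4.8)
The plan is to reduce the general (inhomogeneous) quadratic game to the homogeneous setting already settled in \Cref{thm:local_global_q} by translating the origin to a stationary point, and then to read off both the equivalence and the characterization from that theorem. The bridge is the observation that the third condition in \eqref{eq:qc}, namely $\begin{bmatrix}\av\\\bv\end{bmatrix}\in\rr\!\left(\begin{bmatrix}\Av&\Cv\\\Cv^\top&\Bv\end{bmatrix}\right)$, is exactly the solvability of the stationary equation $\begin{bmatrix}\Av&\Cv\\\Cv^\top&\Bv\end{bmatrix}\begin{bmatrix}\xv\\\yv\end{bmatrix}=-\begin{bmatrix}\av\\\bv\end{bmatrix}$. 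Moreover, since $\X=\R^n$ and $\Y=\R^m$, every local minimax point is interior, so \Cref{prop:stable} in the form \eqref{eq:local_robust_point_stationary} forces it to be stationary. The key algebraic fact is that the two matrix conditions $\Bv\preceq\zero$ and $\p{\Lv}(\Av-\Cv\Bv^\dag\Cv^\top)\p{\Lv}\succeq\zero$ depend only on $\begin{bmatrix}\Av&\Cv\\\Cv^\top&\Bv\end{bmatrix}$ and are therefore invariant under translation of the origin to any stationary point.

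I would organize the equivalence as a cycle of implications. For \eqref{eq:qc} $\Rightarrow$ (local minimax exists): the range condition yields a stationary $(\xv_0,\yv_0)$, and completing the square (using $\nabla q(\xv_0,\yv_0)=\zero$) gives $q(\xv_0+\tilde\xv,\yv_0+\tilde\yv)=q(\xv_0,\yv_0)+\tfrac12\begin{bmatrix}\tilde\xv\\\tilde\yv\end{bmatrix}^\top\begin{bmatrix}\Av&\Cv\\\Cv^\top&\Bv\end{bmatrix}\begin{bmatrix}\tilde\xv\\\tilde\yv\end{bmatrix}$, a homogeneous game up to an irrelevant constant; the remaining two conditions of \eqref{eq:qc} are precisely the hypotheses of the local-minimax clause of \Cref{thm:local_global_q}, so the shifted origin is local minimax and translating back produces one. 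For (local minimax exists) $\Rightarrow$ (global minimax exists): a local minimax point is stationary, so the same translation reduces to the homogeneous game, where \Cref{thm:local_global_q} guarantees local minimax implies global minimax; translating back preserves global minimaximality.

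The hardest implication is (global minimax exists) $\Rightarrow$ \eqref{eq:qc}, since a global minimax point need not be stationary and so cannot itself serve as translation center; here I would argue directly on the envelope. Finiteness of the inner supremum forces $\Bv\preceq\zero$, after which $\oq(\xv)=\tfrac12\xv^\top(\Av-\Cv\Bv^\dag\Cv^\top)\xv+(\av-\Cv\Bv^\dag\bv)^\top\xv+\mathrm{const}$ on the affine set $S=\{\xv:\Cv^\top\xv+\bv\in\rr(\Bv)\}$ and $+\infty$ off $S$, with feasible-direction space exactly $\rr(\p{\Lv})$ (because $\Cv^\top\dv\in\rr(\Bv)\iff\p{\Bv}\Cv^\top\dv=\Lv^\top\dv=\zero$, using $\Lv=\Cv\p{\Bv}$). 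That $\xv^*$ minimizes $\oq$ then yields two facts: the restricted Hessian $\p{\Lv}(\Av-\Cv\Bv^\dag\Cv^\top)\p{\Lv}$ is positive semidefinite (the Schur condition), and the restricted gradient vanishes, i.e.\ $\p{\Lv}\gv=\zero$ for $\gv=(\Av-\Cv\Bv^\dag\Cv^\top)\xv^*+\av-\Cv\Bv^\dag\bv$. This last identity is exactly what builds a stationary point: with $\hat\xv=\xv^*$ and $\hat\yv=-\Bv^\dag(\Cv^\top\xv^*+\bv)+\wv$, $\wv\in\nn(\Bv)$, the $\yv$-block of stationarity holds automatically and the $\xv$-block reduces to solving $\Lv\wv=-\gv$, which is solvable precisely because $\p{\Lv}\gv=\zero$ means $\gv\in\rr(\Lv)$. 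Thus a stationary point exists, giving the range condition, so \eqref{eq:qc} holds. I expect this envelope analysis --- identifying the feasible directions as $\rr(\p{\Lv})$ and extracting the stationary point from first-order optimality on $S$ --- to be the main obstacle, with the pseudoinverse bookkeeping around $\p{\Bv}$, $\p{\Lv}$ and $\Lv=\Cv\p{\Bv}$ being where range and sign errors are easiest to make.

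Finally, for the characterization ``local minimax $=$ stationary global minimax'': every local minimax point is stationary by the paragraph above, and at any stationary point the translation turns both ``local minimax'' and ``global minimax'' into the single pair $\Bv\preceq\zero$ and $\p{\Lv}(\Av-\Cv\Bv^\dag\Cv^\top)\p{\Lv}\succeq\zero$ via \Cref{thm:local_global_q}, since at the shifted origin both the stationarity \eqref{eq:stat} and the global-minimax equation \eqref{eq:gLmM} hold trivially. Hence among stationary points local and global minimaximality coincide, and because local minimax points are automatically stationary, they are exactly the stationary global minimax points.
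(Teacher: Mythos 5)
Your proposal is correct and follows essentially the same route as the paper's proof: the easy implications via translation to a stationary point and reduction to the homogeneous case settled in Theorem~\ref{thm:local_global_q}, and the hard implication (global minimax $\Rightarrow$ range condition) via the restricted envelope $\oq$ on the affine set where $\Cv^\top\xv+\bv\in\rr(\Bv)$, whose first-order optimality condition $\p{\Lv}\gv=\zero$ is exactly what makes the stationary system solvable. The only cosmetic differences are the arrangement of the cycle of implications and that you extract the PSD Schur condition directly from the envelope rather than by translating back, which changes nothing of substance.
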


\noindent \blue{In this theorem we used $\Rc(\cdot)$ to denote the range of a matrix.} It is clear that stationary points, global minimax points, and local minimax points are characterized in the same way as in \Cref{thm:local_global_q}: we need only replace $\zero$ on the right-hands of \eqref{eq:stat} and \eqref{eq:gLmM} with the vector $[\av; \bv]$. These points always form an affine subspace for quadratic games.

\Cref{thm:qc} allows us to completely classify (unconstrained) quadratic games:
\begin{itemize}
\item there are no stationary points (hence no local or global minimax points);
\item there exist stationary points but no global or local minimax point;
\item there exist local minimax points which coincide with global minimax points;
\item there exist local minimax points which are strictly contained in global minimax points.%
\end{itemize}
Clearly, for homogeneous (unconstrained) quadratic games, stationary points always exist hence only the last three cases can happen. For (non-trivial) bilinear games, only the last case can happen:
\begin{cor}[\blue{\textbf{blinear games}}]\label{cor:bilinear}
For (homogeneous) unconstrained bilinear games ($\Av = \zero, \Bv = \zero, \Cv \neq \zero, \av = \zero, \bv = \zero$),  global minimax points are $\nn(\Cv^\top) \times \RR^n$ while local minimax points (i.e. stationary points) are 
$\nn(\Cv^\top) \times \nn(\Cv)$.
\end{cor}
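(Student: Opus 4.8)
The plan is to obtain this corollary as an immediate specialization of \Cref{thm:local_global_q} to the homogeneous bilinear data $\Av = \Bv = \zero$, $\av = \bv = \zero$, $\Cv \neq \zero$. First I would record the elementary simplifications forced by $\Bv = \zero$. Since $\Bv^\dag = \zero$, the orthogonal projector becomes $\p{\Bv} = \Iv - \Bv\Bv^\dag = \Iv$, so that $\Lv = \Cv\p{\Bv} = \Cv$; moreover the Schur-type term collapses, $\Av - \Cv\Bv^\dag\Cv^\top = \zero$. With these in hand, both definiteness requirements in \Cref{thm:local_global_q} hold trivially: $\Bv \preceq \zero$ (it is the zero matrix) and $\p{\Lv}(\Av - \Cv\Bv^\dag\Cv^\top)\p{\Lv} = \zero \cge \zero$. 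Thus in the bilinear case the only surviving content of the characterizations is the linear stationarity/minimax equation.

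Next I would treat the three point classes in turn. For stationary points, substituting $\Av = \Bv = \zero$ into \eqref{eq:stat} yields the decoupled system $\Cv\yv = \zero$ and $\Cv^\top\xv = \zero$, i.e. $(\xv,\yv) \in \nn(\Cv^\top)\times\nn(\Cv)$. For global minimax points the only nontrivial requirement is \eqref{eq:gLmM}; multiplying out $\begin{bmatrix}\zero & \Cv \\ \Cv^\top & \zero\end{bmatrix}\begin{bmatrix}\xv\\\yv\end{bmatrix} = \begin{bmatrix}\Cv\yv\\\Cv^\top\xv\end{bmatrix}$ and then applying the block projector $\mathrm{diag}(\p{\Lv},\Iv)$ gives the two conditions $\p{\Lv}\Cv\yv = \zero$ and $\Cv^\top\xv = \zero$.

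The key step, which I expect to carry the argument, is showing that the first of these is vacuous. Since $\Lv = \Cv$, the projector $\p{\Lv} = \p{\Cv}$ is onto $\nn(\Cv^\top)$, and the pseudoinverse identity $\Cv\Cv^\dag\Cv = \Cv$ gives $\p{\Cv}\Cv = (\Iv - \Cv\Cv^\dag)\Cv = \zero$. Hence $\p{\Lv}\Cv\yv = \zero$ for every $\yv$, so \eqref{eq:gLmM} reduces to the single constraint $\Cv^\top\xv = \zero$, leaving the $\yv$-component entirely free; the global minimax set is therefore $\nn(\Cv^\top)\times\R^m$. Finally, for local minimax points \Cref{thm:local_global_q} asks for the (now trivial) definiteness conditions together with stationarity, so the local minimax set equals the stationary set $\nn(\Cv^\top)\times\nn(\Cv)$ computed above. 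Because $\Cv \neq \zero$ forces $\nn(\Cv)\neq\R^m$, this confirms that the local minimax points are strictly contained in the global minimax points, placing nontrivial bilinear games in the last of the four classification cases following \Cref{thm:qc}. Everything beyond the projector identity $\p{\Lv}\Cv = \zero$ is routine substitution.
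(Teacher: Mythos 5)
Your proof is correct and follows the same route the paper intends: the corollary is stated without a separate proof precisely because it is the direct specialization of \Cref{thm:local_global_q} to $\Av=\Bv=\zero$, and your computation (in particular the key observation that $\p{\Lv}\Cv = (\Iv-\Cv\Cv^\dag)\Cv = \zero$ makes the first block of \eqref{eq:gLmM} vacuous) is exactly what that specialization requires. Note that your conclusion $\nn(\Cv^\top)\times\R^m$ for the global minimax set is the dimensionally correct version of the statement's $\nn(\Cv^\top)\times\R^n$, since $\yv$ ranges over $\Yc=\R^m$.
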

\noindent It is thus clear that even in bilinear games, there exist global minimax points that are not local minimax. From \Cref{thm:qc}, we can derive that:

\begin{cor}[\blue{\textbf{saddle points in quadratic games}}]
\label{thm:quadr_2}
For (unconstrained) quadratic games, the following statements are equivalent:
\begin{enumerate}
    \item Local saddle points exist.
    \item Local maximin and minimax points exist.
    \item Global saddle points exist.
    \item Global maximin and minimax points exist.
    \item $\Av\succeq \zero \succeq \Bv,$ and
\be\label{eq:linear}
\begin{bmatrix}
\av \\
\bv
\end{bmatrix} \in \rr\left(\begin{bmatrix}
\Av & \Cv \\
\Cv^\top & \Bv
\end{bmatrix}
\right).
\en
\item stationary points exist and they are all local (global) saddle.
\end{enumerate}
\end{cor}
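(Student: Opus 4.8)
The plan is to prove that all six statements are equivalent by building a small web of implications anchored at the characterizations already in hand, exploiting the single governing fact that for a quadratic game each player's subproblem is itself quadratic, so local optima are global and convexity/concavity is read directly off $\Av$ and $\Bv$. I would first dispatch the global layer. By \Cref{thm:sss}, a point is global saddle iff it is simultaneously global minimax and global maximin, giving (3) $\Leftrightarrow$ (4) at once. For (2) $\Leftrightarrow$ (4) I would invoke \Cref{thm:qc}, which states that a quadratic game admits a local minimax point iff it admits a global minimax point; applying the same theorem to the mirrored game (swap the two players and negate, as in the global maximin/minimax correspondence) yields that a local maximin point exists iff a global maximin point exists. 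Conjoining the two equivalences produces (2) $\Leftrightarrow$ (4).

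Next I would collapse the local saddle condition onto the global one to obtain (1) $\Leftrightarrow$ (3). The direction (3) $\Rightarrow$ (1) is immediate, since a global saddle point satisfies the saddle inequalities on all of $\R^n \times \R^m$ and hence on any neighborhood. For (1) $\Rightarrow$ (3) I would use that a local minimizer of a quadratic is automatically a global minimizer with positive semidefinite Hessian: if $(\xv_\star, \yv_\star)$ is local saddle, then $\xv_\star$ locally, hence globally, minimizes the quadratic $q(\cdot, \yv_\star)$, forcing $\Av \succeq \zero$ and global minimality, and symmetrically $\yv_\star$ globally maximizes $q(\xv_\star, \cdot)$ with $\Bv \preceq \zero$; together these upgrade the local saddle point to a global one.

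The remaining work concerns the algebraic condition (5) and its stationary refinement (6). I would establish (3) $\Leftrightarrow$ (5) directly from the quadratic structure: a global saddle point exists iff $q(\cdot,\yv_\star)$ attains a minimum and $q(\xv_\star,\cdot)$ attains a maximum, which for quadratics is exactly $\Av \succeq \zero$, $\Bv \preceq \zero$, together with solvability of the stationarity system $\begin{bmatrix}\Av & \Cv\\ \Cv^\top & \Bv\end{bmatrix}\begin{bmatrix}\xv\\\yv\end{bmatrix} = -\begin{bmatrix}\av\\\bv\end{bmatrix}$, i.e.\ the range membership in \eqref{eq:linear}. Finally, for (5) $\Leftrightarrow$ (6): under (5) the game is convex--concave and the range condition produces a stationary point, while any stationary point of a convex--concave quadratic is a global (hence local) saddle point by first-order optimality of each player's quadratic subproblem; conversely (6) furnishes a global saddle point and returns us to (3) $\Leftrightarrow$ (5).

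The step I expect to require the most care is reconciling the sparse condition (5) with the richer characterizations in \Cref{thm:qc} and \Cref{thm:local_global_q}, which carry the projected Schur-complement conditions $\p{\Lv}(\Av - \Cv\Bv^\dag\Cv^\top)\p{\Lv} \succeq \zero$ and its maximin mirror, both absent from (5). The key observation making the chain consistent is that once $\Av \succeq \zero \succeq \Bv$ holds, these projected conditions come for free: $\Bv \preceq \zero$ gives $\Bv^\dag \preceq \zero$, so $\Av - \Cv\Bv^\dag\Cv^\top \succeq \zero$, and congruence by the symmetric projector $\p{\Lv}$ preserves positive semidefiniteness, with the symmetric argument on the maximin side. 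Verifying this is what guarantees that the route through \eqref{eq:linear} agrees with the minimax and maximin existence statements supplied by \Cref{thm:qc}.
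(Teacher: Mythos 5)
Your proof is correct and follows essentially the route the paper intends: the corollary is stated as a direct consequence of \Cref{thm:qc} (applied to the game and its mirror) together with \Cref{thm:sss}, and your elaboration — including the observation that $\Av \succeq \zero \succeq \Bv$ renders the projected Schur-complement conditions of \Cref{thm:qc} automatic, and that local optima of the players' quadratic subproblems are global — supplies exactly the details the paper leaves implicit.
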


\noindent \blue{Note that we used $\Rc(\cdot)$ to denote the range of a matrix.} We remark that \Cref{thm:quadr_2} does not follow from typical minimax theorems (such as Sion's) since our domain is unbounded and we do not assume convexity-concavity from the outset. Thus, \Cref{thm:quadr_2} reveals strong duality under weaker assumptions than the usual convexity-concavity. This is in stark contrast with generic NCNC games (see \Cref{eg:nc}). 

\blue{
\begin{rem}[\textbf{non-uniformly local minimax in quadratic games}]\label{rem:non_uniform_minimax}
Since quadratic functions are continuous (and thus upper semi-continuous), from \Cref{prop:ulmm} we know that local saddle points are equivalent to uniformly minimax points. By comparing \Cref{thm:quadr_2} and \Cref{thm:qc}, whenever $\Av\succeq \zero \succeq \Bv$ and \eqref{eq:linear} holds, local saddle points and thus uniformly local minimax points exist. However, if \eqref{eq:qc} holds but $\Av \succeq \zero$ does not hold, local saddle points/uniformly local minimax points do not exist from \Cref{thm:quadr_2}, but local minimax points still exist from \Cref{thm:qc} which are hence non-uniform. We can see it more clearly from \Cref{eg:no_local_saddle}. One can compute $\oq_\e(x) = \e|x| - \tfrac{1}{2} x^2$, and obtain that $\oq_\e(x) \geq \oq_\e(0) = 0$ iff $|x| \leq 2\e$. According to \Cref{def:jin} the point $(0, 0)$ is non-uniformly local minimax. 
\end{rem}
}

\Cref{thm:quadr_2} reveals some fundamental and surprising properties of quadratic games. On the one hand, quadratic games consist of an important theoretical tool for understanding general smooth NCNC games (through local Taylor expansion) \citep[e.g.][]{ daskalakis2018limit,jin2019minmax, IbrahimAGM19, wang2019solving}; see also \Cref{sec:local_stab} below. On the other hand, they are really special and many of their unique properties do not carry over to general smooth NCNC games, as we demonstrate in the following examples:

\begin{eg}[\textbf{stationary/global minimax points exist, no local minimax points}]\label{eg:stationary_and_global_no_local}
For general NCNC games, the existence of a global minimax point may not imply the existence of local minimax points. Indeed, consider 
\be
f(x, y) = -y^4/4 + y^2/2 - xy, ~~ x \in \R, ~~ y \in \R.
\en
We claim $(\pm 1, 0)$ are the only global minimax points. Indeed, 
\be
\of(x) = \max_y -y^4/4 + y^2/2 - xy = \max_{y \geq 0} -y^4/4 + y^2/2 + |x| y \geq \max_{y \geq 0} -y^4/4 + y^2/2 = 1/4.\nonumber
\en
Clearly, the inequality is attained only at $x_* = 0$ and $y_* = \pm 1$. 
Its only stationary point is $(x, y) = (0, 0)$. However, $\partial_{yy}^2 f(0,0) = 1$ hence $y=0$ cannot be a local maximizer of $f(0, \cdot)$.

Note that in this example the global minimax points are not stationary. For an example where a stationary and global minimax point exists with no local minimax point, please refer to \Cref{eg:glbstatl}.
\end{eg}

\begin{eg}[\textbf{local minimax exists, no global minimax}]\label{eg:local_non_global}
This is possible even for separable functions, such as 
$f(x, y) = x^3 - x - y^2$ defined on $\R \times \R$. Clearly, it has a local minimax point at $(1/\sqrt{3}, 0)$ but no global minimax points exist. 
\end{eg}

\begin{eg}[\textbf{local minimax and local maximin points exist; no local saddle}]\label{eg:local_minimax_maximin_no_saddle}
We can also construct an example when both local minimax and local maximin points exist but there is no local saddle point. Take $f_1(x, y) = g(x, y)h(x, y)$, where $$g(x, y) = xy - x^2\mbox{, and }h(x, y) =
\exp\left(-\frac{1}{1-x^2}\right)\one_{|x|< 1}\exp\left(-\frac{1}{1-y^2}\right)\one_{|y|< 1}$$ is a bump function that smoothly interpolates between the unit box and the outside. By numerically computing the stationary points and checking the second-order conditions, we found there is no such a point where $\n_{xx}^2 f_1 \geq 0$ and $\n_{yy}^2 f_1 \leq 0$ in the open box $\Bc_1 = \{(x, y): |x| < 1, \, |y| < 1\}$. In other words, local saddle points do not exist. There is a local minimax point $(0, 0)$ since $$\of_\e(x)\geq (\e|x| - x^2)\exp(-1/(1 -x^2))\exp(-1/(1 -\e^2))\geq 0$$ when $|x|\leq \e$ and $\e^2 < 1$. Similarly we can construct $f_2(x, y) = -g(y-10, x-10)h(x-10, y-10)$ where there is a local maximin point but no local saddle point in the open box $\Bc_2 = \{(x, y): |x - 10| < 1, \, |y - 10| < 1\}$. Therefore, $f(x, y) = f_1(x, y) + f_2(x, y)$ has both local minimax and local maximin points, but there is no local saddle point on $\Bc_1\cup \Bc_2$.
\end{eg}

\begin{figure}
    \centering
    \includegraphics[width=12cm]{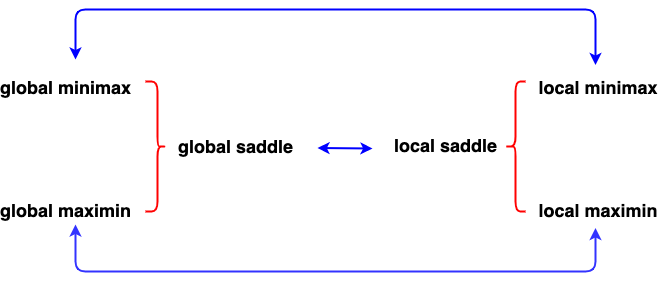}
    \caption{The relation among definitions in quadratic games. $A\longleftrightarrow B$ means $A$ exists iff $B$ exists. The brackets also show the existence relation. For example, global saddle points exist iff both global minimax and maximin points exist.}
    \label{fig:quad}
\end{figure}

\noindent Some special properties for quadratic games in this subsection are illustrated in \Cref{fig:quad}.


\section{Stability of gradient algorithms near local optimal points}\label{sec:local_stab}

In this section, we assume that $\X = \R^n$, $\Y = \R^m$ and that $f$ is twice continuously differentiable ($f\in \Cc^2$). From \eqref{eq:local_robust_point_stationary} we know that local minimax points are stationary points, and thus fixed points of gradient algorithms. We focus on \emph{local linear convergence} around stationary points using spectral analysis. Spectral analysis of a matrix $A$ mainly involves two types of quantities: the spectrum of $A$, $\sp(A) := \{\l : \l \mbox{ is an eigenvalue of }A\},$ as well as the spectral radius, $\rho(A) := \max_{\l \in \sp(A)} |\l|$. An iterative algorithm is \emph{exponentially stable} if the Jacobian matrix of its update function has a spectral radius of less than one, which guarantees local linear convergence \citep{polyak87}. A more rigorous definition uses the Hartman--Grobman theorem \citep[e.g.][]{katok1995introduction}. Below when we refer to convergence, we always mean local linear convergence.

To obtain convergence near local minimax points, we consider two-time-scale (2TS)\footnote{This terminology comes from analogy with the continuous training dynamics. In our paper we simply mean choosing two different step sizes. } gradient algorithms, as applied to GANs by \cite{heusel2017gans}. Also, \citet{jin2019minmax} proved the ``equivalence'' between the stable points of 2TS-GDA and strict local minimax points. The intuition is that 2TS algorithms help the convergence by taking a much larger step w.r.t. the variable $\yv$. We denote $\zv_t = (\xv_t, \yv_t)$ and define the vector field for the gradient update $$\vv({\zv}) = (-\a_1\n_{\xv}f(\zv), \a_2\n_{\yv}f(\zv)).$$ Local stability results can be obtained by analyzing the Jacobian of $\vv(\zv)$ at a stationary point $(\xv^*, \yv^*)$:
\be\label{eq:gen_H}
{\bf H}_{\a_1, \a_2} = {\bf H}_{\a_1, \a_2}(f) := \begin{bmatrix}
-\a_1 \n_{\xv \xv}^2 f & -\a_1 \n_{\xv\yv}^2 f \\
\a_2 \n_{\yv\xv}^2 f & \a_2 \n_{\yv \yv}^2 f
\end{bmatrix}.
\en
Define $\a_2 = \g \a_1$, and ${\bf H}_{\a_1, \a_2} = \a_1 {\bf H}_{1, \g}.$ Note that $\Hv_{\a_1, \a_2}(f)$ may not be symmetric, hence its spectrum lies on the complex plane. We also define $\Hv := \Hv_{\a, \a}/\a$ which is independent of $\a$. To characterize the stable set of an algorithm, we ask the following question:

\begin{quote}
Given hyper-parameters $\{\mu_i\}_{i=0}^k$ (e.g.~step size, momentum coefficient) of an algorithm $\mathsf{A}$, what exactly is the geometric characterization on the spectrum of ${\bf H}_{\a_1, \a_2}$ such that $\mathsf{A}$ is exponentially stable at $\zv^*$? 
\end{quote}
Similar questions have been asked in \citet{niethammer1983analysis} for problems of linear equations, where the Jacobian is a constant matrix. Such geometric characterizations allow us to analyze the convergence near local saddle and local minimax points.

Even with two-time-scale modification, GDA (even with momentum) does not converge near local saddle points for bilinear games \citep{zhang2019convergence}. Therefore, we will focus on extra gradient methods in this work. For completeness, thorough treatment of GDA, heavy ball (HB) and Nesterov's momentum (NAG) is included in \Cref{app:momentum_alg}. Note that second- and zeroth-order algorithms \citep{zhang2020newton, liu2019min} have also been considered very recently for minimax problems but they are beyond the scope of our work.

\blue{Note that in this section we are mostly considering one type of algorithmic modification in sequential games using two-time-scale (except in \Cref{thm:all}). For non-convex sequential smooth games, it is possible to use alternating updates in algorithms as studied in e.g.~\cite{zhang2019convergence} for bilinear games. We leave such systematic study to future work. }

\subsection{Stable sets of Extra-gradient (EG) and Optimistic gradient descent (OGD)}
We consider the generalized extra-gradient method {EG}$(\alpha_1, \a_2, \b)$ \citep{korpelevich1976extragradient} (the original version has $\b = 1$):
\be\label{eq:eg_alg}
\zv_{t+1} = \zv_t + \vv(\zv_{t+1/2})/\b, \, \zv_{t+1/2} = \zv_t + \vv(\zv_t).
\en

\noindent and the generalized optimistic gradient descent \citep{peng2019training} (denoted as {OGD}$(k, \a_1, \a_2)$): 
\be\label{eq:ogd}
\zv_{t+1} = \zv_t + k \vv(\zv_t) - \vv(\zv_{t-1}).\en
\blue{In \eqref{eq:eg_alg}, we call the first equation to be the extra-gradient step and the second equation to be the gradient step.}
EG was recently studied in e.g.~\citet[]{mertikopoulos2019optimistic} for special NCNC games, and in \cite{Azizian2019ATA, azizian2020accelerating} for convex-concave settings using spectral analysis. OGD was originally proposed in \cite{popov1980modification} as the past extra-gradient method, and was recently studied in the GAN literature \citep[e.g.][]{daskalakis2018training}. \citet{hsieh2019convergence, mokhtari2019proximal} showed a close connection between EG and OGD:
\begin{restatable}[\blue{\textbf{equivalence between past extra-gradient and OGD}}]{lem}{peg}\label{lem:peg}
The past extra-gradient method 
\be\label{eq:peg}
\zv_{t+1} = \zv_t + \vv(\zv_{t+1/2})/\b, \, \zv_{t+1/2} = \zv_t + \vv(\zv_{t-1/2})
\en
can be rewritten as $\zv'_{t+1} = \zv'_t + k \vv(\zv'_t) - \vv(\zv'_{t-1})$ with $k = 1 + 1/\beta$ and $\zv'_t = \zv_{t-1/2}$.
\end{restatable}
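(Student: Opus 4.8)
The plan is to show that the past extra-gradient recursion in \eqref{eq:peg} is literally the same sequence as the generalized OGD recursion in \eqref{eq:ogd}, after a simple index shift and a substitution of constants. The claim is purely an algebraic identity between update rules, so no spectral or analytic machinery is needed; I would proceed by direct substitution. First I would introduce the shifted iterate $\zv'_t := \zv_{t-1/2}$ as suggested by the statement, so that $\zv'_{t+1} = \zv_{t+1/2}$ and $\zv'_{t-1} = \zv_{t-3/2}$. The goal is then to rewrite the two-line scheme \eqref{eq:peg} purely in terms of the $\zv'$ variables and to recognize the result as \eqref{eq:ogd}.

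The key step is to eliminate the auxiliary (half-integer) iterates. I would take the extra-gradient step $\zv_{t+1/2} = \zv_t + \vv(\zv_{t-1/2})$ and the update step $\zv_{t+1} = \zv_t + \vv(\zv_{t+1/2})/\b$, and use them to express $\zv_{t+1/2}$ in terms of $\zv_{t-1/2}$ alone. Concretely, from the update step written at time $t$ we have $\zv_t = \zv_{t-1} + \vv(\zv_{t-1/2})/\b$; substituting this into the extra-gradient step gives
\begin{align}
\zv_{t+1/2} = \zv_{t-1} + \tfrac{1}{\b}\vv(\zv_{t-1/2}) + \vv(\zv_{t-1/2}).
\end{align}
Since $\zv_{t-1} = \zv_{t-3/2} + \vv(\zv_{t-3/2})/\b$ by the same update relation shifted back, I can rewrite the right-hand side entirely in terms of the half-integer iterates $\zv_{t-1/2}$ and $\zv_{t-3/2}$, collecting the two $\vv(\zv_{t-1/2})$ terms into a single coefficient $(1 + 1/\b)$. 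Translating into the $\zv'$ notation, this becomes exactly $\zv'_{t+1} = \zv'_t + (1+1/\b)\vv(\zv'_t) - \vv(\zv'_{t-1})$, which is \eqref{eq:ogd} with $k = 1 + 1/\b$.

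I expect the main obstacle to be purely bookkeeping: keeping the integer and half-integer time indices aligned correctly and making sure that the substitution of $\zv_t$ (and $\zv_{t-1}$) in terms of the half-integer iterates is done at the right time step, so that the coefficient of the ``past'' gradient term comes out to exactly $-1$ and the ``current'' term to exactly $k = 1 + 1/\b$. There is no genuine analytic difficulty here; the only care needed is to verify that the gradient step contributes $\tfrac{1}{\b}\vv(\zv_{t-1/2})$ while the extra-gradient evaluation contributes the full $\vv(\zv_{t-1/2})$, and that these combine additively. A secondary (cosmetic) point is to confirm that the relabeling $\zv'_t = \zv_{t-1/2}$ is consistent across all three appearing times $t+1, t, t-1$, which it is by construction. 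Once the identity is verified, the lemma follows immediately and gives the desired equivalence between the past extra-gradient method and generalized OGD.
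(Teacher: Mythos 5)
Your overall strategy is the same as the paper's --- eliminate the integer-indexed iterates by direct substitution and read off the OGD recursion on the half-integer iterates --- but the final substitution in your argument is wrong as stated. After correctly arriving at $\zv_{t+1/2} = \zv_{t-1} + (1+\tfrac{1}{\b})\vv(\zv_{t-1/2})$, you invoke ``$\zv_{t-1} = \zv_{t-3/2} + \vv(\zv_{t-3/2})/\b$ by the same update relation shifted back.'' Neither equation of \eqref{eq:peg} gives this: the update relation shifted back reads $\zv_{t-1} = \zv_{t-2} + \vv(\zv_{t-3/2})/\b$, whose base point is the integer iterate $\zv_{t-2}$, not $\zv_{t-3/2}$. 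Moreover, if you did substitute the relation you wrote, you would obtain $\zv'_{t+1} = \zv'_{t-1} + \tfrac{1}{\b}\vv(\zv'_{t-1}) + (1+\tfrac{1}{\b})\vv(\zv'_t)$, which has the wrong base point ($\zv'_{t-1}$ instead of $\zv'_t$) and the wrong coefficient ($+\tfrac{1}{\b}$ instead of $-1$) on the past-gradient term, so it is not \eqref{eq:ogd}; the conclusion you then assert does not follow from the displayed computation.

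The step you actually need is the \emph{extra-gradient} (second) equation of \eqref{eq:peg} shifted back by one: $\zv_{t-1/2} = \zv_{t-1} + \vv(\zv_{t-3/2})$, hence $\zv_{t-1} = \zv_{t-1/2} - \vv(\zv_{t-3/2})$. Substituting this gives $\zv_{t+1/2} = \zv_{t-1/2} + (1+\tfrac{1}{\b})\vv(\zv_{t-1/2}) - \vv(\zv_{t-3/2})$, i.e.\ $\zv'_{t+1} = \zv'_t + k\vv(\zv'_t) - \vv(\zv'_{t-1})$ with $k = 1 + 1/\b$, exactly as claimed. This is precisely how the paper's proof closes (it performs the same manipulation one index higher, adding and subtracting $\vv(\zv_{t-1/2})$ and then recognizing $\zv_t + \vv(\zv_{t-1/2}) = \zv_{t+1/2}$). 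With that one substitution corrected, your argument is complete and coincides with the paper's.
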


Due to this correspondence, we will only consider OGD with $k > 1$. We now characterize the stable sets of EG and OGD, or the \emph{necessary and sufficient conditions} for local convergence (see the proof in \Cref{app:proof_stability}):

\begin{restatable}[\blue{\textbf{stability of EG/OGD}}]{thm}{thmeg}\label{thm:eg_stable}
At $(\xv^*, \yv^*)$, EG$(\a_1, \a_2, \b)$ is exponentially stable iff for any $\l \in \sp(\Hv_{\a_1, \a_2})$, 
$|1 + \l/\b + \l^2/\b| < 1.$ ${\rm OGD}(k, \a_1, \a_2)$ is exponentially stable iff for any $\l \in  \sp(\Hv_{\a_1, \a_2})$, $|\l| < 1$ and 
$|\l|^2 (k - 3 + (k + 1)|\l|^2) < 2\Re(\l)(k|\l|^2 - 1).$
\end{restatable}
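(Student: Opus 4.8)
The plan is to equate exponential stability with the spectral radius of the Jacobian of each algorithm's one-step update map being strictly less than one at the fixed point $\zv^*=(\xv^*,\yv^*)$, as justified by the Hartman--Grobman theorem invoked earlier. Since $\zv^*$ is stationary we have $\vv(\zv^*)=\zero$, so $\zv^*$ is genuinely a fixed point of both \eqref{eq:eg_alg} and \eqref{eq:ogd}, and the Jacobian of the field $\vv$ at $\zv^*$ is exactly the matrix $\Hv_{\a_1,\a_2}$ of \eqref{eq:gen_H}. The whole argument then reduces to: (i) write each update Jacobian as a matrix expression built from $\Hv_{\a_1,\a_2}$; (ii) express its eigenvalues through an eigenvalue $\l\in\sp(\Hv_{\a_1,\a_2})$; and (iii) rewrite ``all such eigenvalues lie in the open unit disk'' as the stated scalar inequalities.

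For EG I would linearize the two-stage map \eqref{eq:eg_alg} by the chain rule. The half-step $\zv_t\mapsto\zv_t+\vv(\zv_t)$ has Jacobian $\I+\Hv_{\a_1,\a_2}$, and because $\vv(\zv^*)=\zero$ the outer stage is also differentiated at $\zv^*$, so the one-step Jacobian is $\I+\tfrac{1}{\b}\Hv_{\a_1,\a_2}(\I+\Hv_{\a_1,\a_2})=\I+\Hv_{\a_1,\a_2}/\b+\Hv_{\a_1,\a_2}^2/\b$. This is a polynomial in $\Hv_{\a_1,\a_2}$, so each $\l\in\sp(\Hv_{\a_1,\a_2})$ produces the eigenvalue $1+\l/\b+\l^2/\b$, and exponential stability is precisely $|1+\l/\b+\l^2/\b|<1$ for every such $\l$.

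OGD is a two-step recurrence, so I would first lift it to a first-order system on the augmented state $(\zv_t,\zv_{t-1})$. Linearizing \eqref{eq:ogd} at $\zv^*$ gives the block companion Jacobian $\bigl[\begin{smallmatrix}\I+k\Hv_{\a_1,\a_2} & -\Hv_{\a_1,\a_2}\\ \I & \zero\end{smallmatrix}\bigr]$. Since the bottom two blocks commute, the block-determinant identity factors the characteristic polynomial as $\det\bigl((\mu^2-\mu)\I-(k\mu-1)\Hv_{\a_1,\a_2}\bigr)$, and this polynomial identity shows its roots are exactly the roots of the scalar quadratic $\mu^2-(1+k\l)\mu+\l=0$ as $\l$ ranges over $\sp(\Hv_{\a_1,\a_2})$ (the would-be degenerate value $\mu=1/k$ is never a root for $k>1$). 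Stability then means both roots of this quadratic lie in the open unit disk for every $\l$, which I would decide with the Schur--Cohn criterion for a quadratic $\mu^2+a_1\mu+a_0$ with complex coefficients.

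The genuine work—and the only real obstacle—is this last algebraic reduction. Here $a_1=-(1+k\l)$ and $a_0=\l$, and Schur--Cohn gives the two conditions $|a_0|<1$ and $|a_1-a_0\overline{a_1}|<1-|a_0|^2$. The first is immediately $|\l|<1$. For the second I would compute $a_1-a_0\overline{a_1}=k|\l|^2-1+(1-k)\l$, square both sides (valid since the right-hand side is positive once $|\l|<1$), and expand in $r=|\l|^2$ and $u=\Re(\l)$ using $\Im(\l)^2=r-u^2$; the $u^2$ and $\Im(\l)^2$ contributions cancel, leaving a common factor $(k-1)>0$ that divides out to yield exactly $|\l|^2\bigl(k-3+(k+1)|\l|^2\bigr)<2\Re(\l)(k|\l|^2-1)$. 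Carrying the complex-coefficient bookkeeping correctly through this cancellation is the delicate step; the linear-algebraic parts are routine.
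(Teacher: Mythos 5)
Your proposal is correct and follows essentially the same route as the paper: linearize EG's composed update to get the Jacobian $\I + \Hv_{\a_1,\a_2}/\b + \Hv_{\a_1,\a_2}^2/\b$ and apply the spectral mapping, then reduce OGD via the augmented companion matrix to the scalar quadratic $\mu^2 - (1+k\l)\mu + \l = 0$ and apply the Schur--Cohn test for complex quadratics (your form $|a_1 - a_0\overline{a_1}| < 1 - |a_0|^2$ is algebraically identical to the paper's Lemma on Schur's criterion, since $|a-\bar a b|^2 = |a|^2(1+|b|^2) - 2\Re(a^2\bar b)$). The final algebra, including the cancellation of the common factor $(k-1)>0$, checks out and recovers exactly the stated inequalities.
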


\blue{In this theorem, $\Re(\cdot)$ represents the real part of a complex number.} From this theorem, we can plot the stable region of EG and OGD with the original parameters \blue{($\beta = 1$ and $k = 2$)}, and find that EG and OGD are indeed similar, as shown on the right of \Cref{fig:eg_ogd_saddle}. For EG, we note that \cite{ azizian2020accelerating} used the spectral shapes of the support of $\Sp(\Hv_{\a_1, \a_2})$ to give upper and lower bounds of the convergence rates of EG, but our results are orthogonal to it since we do not assume a geometric shape of the support of $\sp(\Hv_{\a_1, \a_2})$. 

\begin{figure}
    \centering
    \includegraphics[width=0.9\textwidth]{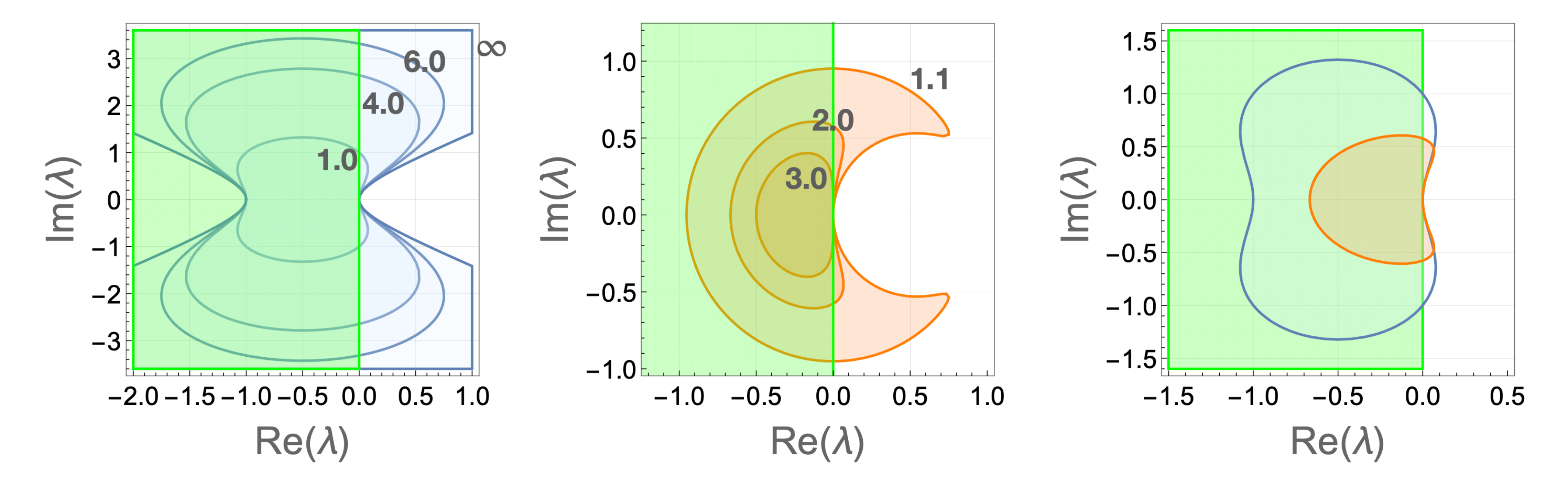}
    \caption{\blue{The blue/orange regions are where EG/OGD are exponentially stable. The green region represents where the eigenvalues of $\sp({\bf H}_{\a_1, \a_2})$ at local saddle points may occur. ({\bf left}) EG$(\a_1, \a_2, \b)$ with $\b \in \{1.0, 4.0, 6.0, \infty\}$; ({\bf middle}) OGD$(k, \a_1, \a_2)$ with $k \in \{1 + {1}/{10}, 1 + {1}/{1}, 1 + {1}/{0.5}\}$. ({\bf right}) Comparison between EG$(\a_1, \a_2, 1)$ ({\bf blue}) and OGD$(2, \a_1, \a_2)$ ({\bf orange}). Best viewed in color. }}
    \label{fig:eg_ogd_saddle}
\end{figure}

\blue{
When $\beta\to \infty$, we have $k\to 1_+$, and the step size of extra-gradient step is much larger than the step size of the gradient step. A similar conclusion can found in Theorem 4.1 of \citet[][]{zhang2019convergence},\footnote{Note that the exact definitions of $\beta$ are different. Suppose the gradient step sizes are $\alpha_1 = \alpha_2 = \alpha$, and the extra-gradient step sizes are $\gamma_1 = \gamma_2 = \gamma$. Our definition gives $\beta = \alpha/\gamma$ while \citet{zhang2019convergence} gives $\beta = \alpha \gamma$. } which states that for bilinear games, taking very small gradient steps and very large extra-gradient steps gives the best convergence rate among all hyper-parameter choices of gradient and extra-gradient steps.
}

\blue{
Moreover, we show that larger $\b$ increases the local stability as well (see also Prop.~1', \citet[][]{hsieh2020explore} for a similar conclusion in saddle point problems, where $\beta$ corresponds to $\gamma_t/\eta_t$). The proof of the following theorem can be found in \Cref{app:proof_stability}:
}
\vspace{-0.2em}
\begin{restatable}[\blue{\textbf{more aggressive extra-gradient steps, more stable}}]{thm}{egratio}\label{thm:egratio}
For $\b_1 > \b_2 > 1$, whenever ${\rm EG}(\a_1, \a_2, \b_2)$ is exponentially stable at $(\xv^*, \yv^*)$, ${\rm EG}(\a_1, \a_2, \b_1)$ is exponentially stable at $(\xv^*, \yv^*)$ as well. For $k_1 > k_2 > 1$, whenever OGD$(k_1, \a_1, \a_2)$ is exponentially stable at $(\xv^*, \yv^*)$, OGD$(k_2, \a_1, \a_2)$ is exponentially stable at $(\xv^*, \yv^*)$ as well.
\end{restatable}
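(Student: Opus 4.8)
The plan is to exploit the fact that the matrix $\Hv_{\alpha_1,\alpha_2}$ in \eqref{eq:gen_H} depends only on $\alpha_1,\alpha_2$ and the Hessian blocks of $f$ at $(\xv^*,\yv^*)$, and in particular is \emph{independent} of $\beta$ (resp.\ $k$). Hence the spectrum $\sp(\Hv_{\alpha_1,\alpha_2})$ is one fixed finite set as we vary $\beta$ (resp.\ $k$), and by \Cref{thm:eg_stable} exponential stability amounts to a condition that must hold simultaneously at every $\lambda\in\sp(\Hv_{\alpha_1,\alpha_2})$. It therefore suffices to establish, for each \emph{fixed} eigenvalue $\lambda$, a pointwise monotonicity statement: the stability inequality is preserved as $\beta$ increases (for EG) and as $k$ decreases (for OGD).

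For the EG part, I would start from the criterion $|1+\lambda/\beta+\lambda^2/\beta|<1$ and abbreviate $u:=\lambda+\lambda^2$, so that the condition reads $|1+u/\beta|<1$. Squaring gives $1+2\Re(u)/\beta+|u|^2/\beta^2<1$, and multiplying through by $\beta>0$ yields the equivalent form
\be\label{eq:eg-mono}
2\Re(u)+\frac{|u|^2}{\beta}<0 .
\en
The left-hand side is a strictly decreasing function of $\beta$ (its first term is $\beta$-free and its second term is positive and proportional to $1/\beta$). Consequently, if \eqref{eq:eg-mono} holds at $\beta_2$ it holds a fortiori at any $\beta_1>\beta_2$; applying this at every $\lambda\in\sp(\Hv_{\alpha_1,\alpha_2})$ gives the EG claim.

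For the OGD part, the auxiliary requirement $|\lambda|<1$ does not involve $k$, so it transfers automatically and I only need to handle the second inequality. Writing $r:=|\lambda|^2\in[0,1)$ and $s:=\Re(\lambda)$, I would expand the criterion of \Cref{thm:eg_stable} and collect the terms containing $k$, obtaining the equivalent linear-in-$k$ form
\be\label{eq:ogd-mono}
k\,r\,(1+r-2s)<3r-r^2-2s .
\en
The crux is the sign of the coefficient $r(1+r-2s)$: from $|\Re(\lambda)|\le|\lambda|$ we get $s\le\sqrt{r}$, hence $1+r-2s\ge(1-\sqrt{r})^2\ge0$, so the coefficient is nonnegative and the left-hand side of \eqref{eq:ogd-mono} is nondecreasing in $k$. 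Therefore, if \eqref{eq:ogd-mono} holds at $k_1$ it also holds at any $k_2<k_1$, which, together with the preserved bound $|\lambda|<1$, gives the OGD claim.

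The computations are elementary, so I do not anticipate a genuine obstacle; the only steps requiring care are the two algebraic reductions to \eqref{eq:eg-mono} and \eqref{eq:ogd-mono} and, for OGD, the sign determination of the coefficient of $k$ via $|\Re(\lambda)|\le|\lambda|$. It is worth flagging that the monotonicity runs in opposite directions for the two algorithms — stability is \emph{gained} as $\beta$ grows but as $k$ shrinks — which is exactly what the statement asserts, consistent with the correspondingly smaller $k=1+1/\beta$ from \Cref{lem:peg}.
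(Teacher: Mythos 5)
Your proof is correct, and the two halves relate to the paper's own proof differently. For the OGD half you do exactly what the paper does: isolate $k$ to get $k|\l|^2(1+|\l|^2-2\Re(\l)) < 3|\l|^2-|\l|^4-2\Re(\l)$, observe that the coefficient $|\l|^2(1+|\l|^2-2\Re(\l)) \geq |\l|^2(1-|\l|)^2 \geq 0$ via $\Re(\l)\leq|\l|$, and conclude that the left-hand side is nondecreasing in $k$ while $|\l|<1$ is $k$-free; this is the paper's argument verbatim. For the EG half, however, the paper does not give an analytic argument at all: it hands the quantified implication to Mathematica's \texttt{Reduce} (written out in real and imaginary parts $\l = x+iy$) and reports that the answer is \texttt{True}. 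Your reduction — set $u=\l+\l^2$, square $|1+u/\b|<1$ to get $2\Re(u)/\b+|u|^2/\b^2<0$, multiply by $\b>0$ to reach $2\Re(u)+|u|^2/\b<0$, and note the left-hand side is nonincreasing in $\b$ — is a genuinely different and preferable route: it is hand-checkable, makes the monotonicity mechanism transparent (only the nonnegative term $|u|^2/\b$ depends on $\b$), and does not rely on a computer-algebra black box. The only cosmetic imprecision is calling the left-hand side \emph{strictly} decreasing; it is constant when $u=0$, but in that case the stability condition reads $0<0$ and the hypothesis is vacuous, so nothing breaks.
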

\vspace{-0.2em}
In the limit when $\b\to \infty$, the stable region is $\Re(\l + \l^2) < 0$ whose boundary is a hyperbola.  Similarly, when $k \to 1_+$, OGD has the largest convergence region: $\{\l\in \mathbb{C}: |\l| < 1, \, |\l -1/2| > 1/2\}$. \Cref{fig:eg_ogd_saddle} visualizes the stable sets of EG/OGD. Their convergence regions strictly include that of GDA, and thus these algorithms are more stable:
\vspace{-0.2em}
 \begin{restatable}[\blue{\textbf{EG/OGD are more stable than GDA}}]{cor}{corgdeg}\label{thm:gd_eg}
When the step sizes $\a_1, \a_2$ are small enough, whenever GDA converges, EG and OGD converge as well.
\end{restatable}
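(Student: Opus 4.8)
The plan is to reduce the statement to a comparison of the three stability regions in the complex $\lambda$-plane near the origin, and then establish the inclusion by a direct second-order estimate. First I would record the stability criterion for GDA: with update $\zv_{t+1} = \zv_t + \vv(\zv_t)$ the Jacobian at a stationary point is $\Iv + \Hv_{\alpha_1,\alpha_2}$, so (exactly as in the analysis leading to \Cref{thm:eg_stable}, see also \Cref{app:momentum_alg}) GDA is exponentially stable iff $|1 + \lambda| < 1$ for every $\lambda \in \sp(\Hv_{\alpha_1,\alpha_2})$, i.e.\ iff $2\Re(\lambda) + |\lambda|^2 < 0$. The key structural fact is $\Hv_{\alpha_1,\alpha_2} = \alpha_1 \Hv_{1,\gamma}$ with $\gamma = \alpha_2/\alpha_1$, so $\sp(\Hv_{\alpha_1,\alpha_2}) = \alpha_1\,\sp(\Hv_{1,\gamma})$ and every eigenvalue $\lambda$ has modulus $O(\alpha_1)$. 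Hence ``step sizes small enough'' is precisely the statement that all eigenvalues $\lambda$ have arbitrarily small modulus, and the corollary reduces to the claim: \emph{if $2\Re(\lambda) + |\lambda|^2 < 0$ and $|\lambda|$ is sufficiently small, then $\lambda$ satisfies the EG and OGD conditions of \Cref{thm:eg_stable}.} Note $\lambda=0$ is excluded since GDA stability forces $\Re(\lambda) < -|\lambda|^2/2 < 0$.

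For EG I would estimate $|1 + (\lambda+\lambda^2)/\beta|^2 - 1$ directly. Writing it as $\tfrac{2}{\beta}\Re(\lambda+\lambda^2) + \tfrac{1}{\beta^2}|\lambda|^2|1+\lambda|^2$ and using $|1+\lambda|^2 < 1$ together with $\Re(\lambda^2) = 2\Re(\lambda)^2 - |\lambda|^2$, I obtain the upper bound
\[
\frac{2\Re(\lambda)}{\beta}\bigl(1 + 2\Re(\lambda)\bigr) + \frac{|\lambda|^2}{\beta}\Bigl(\frac{1}{\beta} - 2\Bigr).
\]
For $\beta \ge 1$ and $|\Re(\lambda)| < \tfrac12$ (guaranteed once $|\lambda|$ is small) the factor $1 + 2\Re(\lambda)$ is positive while $\Re(\lambda) < 0$, so the first term is strictly negative; and $\tfrac{1}{\beta} - 2 \le -1$ makes the second term non-positive. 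Thus the bound is negative and EG is stable.

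For OGD the first requirement $|\lambda| < 1$ is immediate for small $|\lambda|$, so it remains to verify the second condition of \Cref{thm:eg_stable}. Writing $r = |\lambda|$ and grouping its slack as $\mathrm{LHS}-\mathrm{RHS} = r^2(k-3) + r^4(k+1) + 2\Re(\lambda)(1 - k r^2)$, I would use $1 - kr^2 > 0$ (small $r$) together with $\Re(\lambda) < -r^2/2$ to replace $2\Re(\lambda)(1-kr^2)$ by the upper bound $-r^2(1-kr^2)$; this yields $\mathrm{LHS}-\mathrm{RHS} < r^2\bigl[(k-4) + r^2(2k+1)\bigr]$, which is negative (hence stable) for any fixed $k < 4$ as soon as $r^2 < (4-k)/(2k+1)$. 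This covers the standard choices $\beta = 1$, $k = 2$ depicted in \Cref{fig:eg_ogd_saddle}.

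The main obstacle, and the reason the hypothesis ``step sizes small enough'' is indispensable, is that all three regions share the same first-order behaviour near the origin, namely the half-plane $\{\Re(\lambda) < 0\}$; they differ only at second order, in the curvature of the boundary curves (GDA bends into $\Re(\lambda) \approx -\tfrac12\Im(\lambda)^2$, whereas EG and OGD bend the other way). Consequently the leading linear term $2\Re(\lambda)/\beta$ cancels in any naive comparison, and one must retain and correctly sign the quadratic terms, which is exactly what the two factorizations above accomplish. That smallness cannot be waived is clear from a real eigenvalue $\lambda \in (-2,-1)$, where GDA is stable but EG is not; the thresholds $|\Re(\lambda)| < \tfrac12$ and $r^2 < (4-k)/(2k+1)$ quantify precisely how small ``small enough'' must be.
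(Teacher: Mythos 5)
Your proof is correct and follows the same overall strategy as the paper — reduce to the spectral criteria of Theorem~\ref{thm:eg_stable}, note that GDA stability means $|1+\lambda|<1$, and show this forces the EG/OGD criteria once $|\lambda|$ is small — but the verification step is genuinely different. The paper's own proof (Corollary~\ref{thm:gd_eg_2}) establishes the two implications ($|1+\lambda|<1,\ |\lambda|<1 \Rightarrow |1+\lambda+\lambda^2|<1$ for EG, and $|1+\lambda|<1,\ |\lambda|<1/\sqrt{3} \Rightarrow$ the OGD$(2,\cdot,\cdot)$ condition) by handing the polynomial inequalities to Mathematica's \texttt{Reduce}, whereas you prove them by hand via the two factorizations $\tfrac{2\Re(\lambda)}{\beta}(1+2\Re(\lambda)) + \tfrac{|\lambda|^2}{\beta}(\tfrac{1}{\beta}-2)$ and $r^2[(k-4)+r^2(2k+1)]$; both check out (I verified the identity $\Re(\lambda^2)=2\Re(\lambda)^2-|\lambda|^2$ and the substitution $2\Re(\lambda)<-r^2$ are used correctly). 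What your route buys is human-verifiable estimates, explicit quantitative thresholds, and extra generality — your EG bound covers all $\beta\ge 1$ rather than only $\beta=1$, and your OGD bound covers all $1<k<4$ with radius $r^2<(4-k)/(2k+1)$, which for $k=2$ gives $|\lambda|<\sqrt{2/5}$, slightly larger than the paper's $1/\sqrt{3}$. The one place you are more conservative is EG: your estimate needs $|\Re(\lambda)|<\tfrac12$, so it certifies a smaller disk than the paper's $|\lambda|<1$ (e.g.\ $\lambda=-0.9$ satisfies the paper's hypothesis and is EG-stable, but your upper bound is positive there); this is harmless for the corollary as stated, since eigenvalues scale with the step sizes. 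Your closing counterexample $\lambda\in(-2,-1)$ correctly shows the smallness hypothesis cannot be dropped.
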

\vspace{-0.2em}
\noindent The formal version of \Cref{thm:gd_eg} can be found in \Cref{thm:gd_eg_2}. 

\subsection{Local convergence to local optimal points}\label{sec:local_conv_glp}

After characterizing the stable sets of EG and OGD, we move on to see the spectral behavior of local optimal points. For local saddle points, the spectrum of $\Hv_{\a_1, \a_2}$ is on the left closed half plane. However, the spectrum of local minimax points (and thus LRPs, see \Cref{app:local_robust_point}) can be quite arbitrary. With these results we can study how gradient algorithms (GDA with momentum, EG/OGD) converge to local optimal points.

\subsubsection{Local saddle points}

Even though the matrix $\Hv_{\a_1, \a_2}(f)$ is not symmetric, it is still negative semi-definite near local saddle points.\footnote{\blue{A real $n\times n$ matrix $\Av$ is negative semi-definite if for any $\xv\in \R^n$, $\xv^\top \Av \xv \leq 0$, i.e.~$\Av + \Av^\top$ is symmetric and negative semi-definite.}} Therefore, we can prove that its spectrum lies on the left (closed) complex plane:

\begin{restatable}[\textbf{local saddle}]{lem}{lemlocsadl}\label{lem:loc_sadl}
Suppose $\a_1, \a_2  > 0$ are fixed. For $f\in \Cc^2$, at a local saddle point, \blue{for all $\l \in \sp(\Hv_{\a_1, \a_2}(f))$, we have $\Re(\l) \leq 0$. For all $z\in \mathds{C}$ with $\Re(z)\leq 0$, there exists a quadratic function $q$ and a local saddle point $(\xv^*, \yv^*)$ such that~$z\in \sp(\Hv_{\a_1, \a_2}(q))$.} For bilinear functions, at a local saddle point we have $\Re(\l) = 0$ for all $\l \in \sp(\Hv_{\a_1, \a_2})$. 
\end{restatable}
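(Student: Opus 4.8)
The plan is to reduce all three assertions to a single structural observation: at a local saddle point, $\Hv_{\a_1,\a_2}(f)$ is similar, via a diagonal rescaling, to a real matrix whose symmetric part is negative semi-definite. First I would record the only facts about $f$ that are needed. Because $(\xv_\star,\yv_\star)$ is local saddle, $\xv_\star$ locally minimizes $f(\cdot,\yv_\star)$ and $\yv_\star$ locally maximizes $f(\xv_\star,\cdot)$; since $f\in\Cc^2$ and the point is interior, the one-variable second-order necessary conditions give $\n_{\xv\xv}^2 f(\xv_\star,\yv_\star)\succeq\zero$ and $\n_{\yv\yv}^2 f(\xv_\star,\yv_\star)\preceq\zero$. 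Abbreviating $A:=\n_{\xv\xv}^2 f$, $B:=\n_{\yv\yv}^2 f$, $C:=\n_{\xv\yv}^2 f$ at $(\xv_\star,\yv_\star)$, the matrix $\Hv_{\a_1,\a_2}$ in \eqref{eq:gen_H} has diagonal blocks $-\a_1 A\preceq\zero$ and $\a_2 B\preceq\zero$, but its off-diagonal blocks $-\a_1 C$ and $\a_2 C^\top$ are \emph{not} a negative-transpose pair when $\a_1\neq\a_2$. This asymmetry is the one genuine obstacle: a direct Rayleigh-quotient bound on $\Hv_{\a_1,\a_2}$ fails because, for $w=(u,v)$, the cross term $(\a_2-\a_1)\,u^\top C v$ in $w^\top\Hv_{\a_1,\a_2}w$ does not cancel.

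To remove it I would conjugate by $P=\mathrm{diag}(\sqrt{\a_1}\,I_n,\sqrt{\a_2}\,I_m)$. A direct block computation shows the diagonal blocks of $P^{-1}\Hv_{\a_1,\a_2}P$ are unchanged, while the off-diagonal blocks become $-\sqrt{\a_1\a_2}\,C$ and $\sqrt{\a_1\a_2}\,C^\top$, which now form a negative-transpose pair. Hence the symmetric part of $\tilde\Hv:=P^{-1}\Hv_{\a_1,\a_2}P$ is exactly $\mathrm{diag}(-\a_1 A,\ \a_2 B)\preceq\zero$. For any real matrix $M$ with $M+M^\top\preceq\zero$ and any complex eigenpair $Mw=\l w$, $w\neq\zero$, one has $\Re(\l)\|w\|^2=\Re(w^* M w)=\tfrac12\,w^*(M+M^\top)w\le 0$, so $\Re(\l)\le 0$; applying this to $\tilde\Hv$ and using that similarity preserves the spectrum proves the first claim, $\Re(\l)\le0$ for all $\l\in\sp(\Hv_{\a_1,\a_2})$.

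The bilinear case is the specialization $A=B=\zero$: then $\tilde\Hv$ has vanishing diagonal blocks together with the antisymmetric off-diagonal pair, i.e.\ $\tilde\Hv$ is skew-symmetric, so its spectrum is purely imaginary and $\Re(\l)=0$ for all $\l$, giving the last claim. For the realizability statement I would exhibit scalar ($n=m=1$) homogeneous quadratics $q(x,y)=\tfrac12 A x^2+Cxy+\tfrac12 B y^2$ with $A\ge 0\ge B$, for which the origin is a global, hence local, saddle since $q(0,y)=\tfrac12 B y^2\le 0\le\tfrac12 A x^2=q(x,0)$. Here $\Hv_{\a_1,\a_2}(q)$ is a $2\times2$ matrix with trace $\a_2 B-\a_1 A$ and determinant $\a_1\a_2(C^2-AB)$. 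Given a target $z=a+bi$ with $a\le 0$, choosing $A=-a/\a_1\ge0$, $B=a/\a_2\le0$ and $C=|b|/\sqrt{\a_1\a_2}$ yields trace $2a$ and determinant $a^2+b^2$, whose eigenvalues are $a\pm i|b|=\{z,\bar z\}$; the degenerate real case $b=0$ (taking $C=0$) covers real $z\le0$. Thus every $z$ with $\Re(z)\le0$ is realized.

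I expect the only delicate points to be the bookkeeping in the diagonal conjugation (getting the factor $\sqrt{\a_1\a_2}$ and the sign pattern right so the off-diagonal blocks truly become an antisymmetric pair), and the care in the construction to keep $A\succeq\zero$ and $B\preceq\zero$ — which is precisely where the hypothesis $\Re(z)\le0$ is used. Everything else is standard linear algebra: the Rayleigh-quotient bound, skew-symmetry forcing imaginary spectrum, and the trace/determinant parametrization of a $2\times2$ spectrum.
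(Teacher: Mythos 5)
Your proof is correct and follows essentially the same route as the paper's: your diagonal conjugation by $\mathrm{diag}(\sqrt{\a_1}\Iv,\sqrt{\a_2}\Iv)$ is exactly the paper's similarity transformation $\Uv=\mathrm{diag}(\Iv,\sqrt{\g}\Iv)$ applied to $\Hv_{1,\g}=\Hv_{\a_1,\a_2}/\a_1$, and your $2\times 2$ scalar quadratic for realizability and the skew-symmetry argument in the bilinear case match the paper's constructions. The only (cosmetic) difference is that you close the first claim with the elementary Rayleigh-quotient bound $\Re(\l)\|w\|^2=\tfrac12\,w^*(\tilde{\Hv}+\tilde{\Hv}^\top)w\le 0$, where the paper invokes the Ky Fan majorization inequality; your version is, if anything, more self-contained.
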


This result is a slight extension of Lemma 2.4 in \citet{daskalakis2018limit}. \blue{Combined with \Cref{lem:loc_sadl}, we can show that EG converges around any local saddle point where the Jacobian $\Hv(f)$ is non-singular, and a similar result holds for OGD if $k$ is in a certain range:}
\begin{restatable}[\blue{\textbf{stability of EG/OGD at local saddle points}}]{thm}{coregsadl}\label{cor:eg_sadl}
\blue{EG$(\a, \a, 1)$ is exponentially stable at any local saddle point if at such a point, $0 < |\l| < 1/\alpha$ for every $\l \in \sp({\bf H})$.
${\rm OGD}(k, \a, \a)$ is exponentially stable at any local saddle point
if $1 < k \leq 2$ and $0 < |\l| < 1/(k\alpha)$ for every $\l \in \sp({\bf H})$. If $k\geq 3$, OGD$(k, \a_1, \a_2)$ is not exponentially stable for bilinear games. }
\end{restatable}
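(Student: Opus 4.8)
The plan is to reduce each of the three claims to the spectral stability criteria of \Cref{thm:eg_stable}, translating the hypotheses on $\sp(\Hv)$ into the conditions required on $\sp(\Hv_{\a_1,\a_2})$. Since $\Hv_{\a,\a}=\a\Hv$, every eigenvalue of $\Hv_{\a,\a}$ has the form $\mu=\a\l$ with $\l\in\sp(\Hv)$; by \Cref{lem:loc_sadl} we have $\Re(\l)\leq 0$ at a local saddle point, so $\Re(\mu)=\a\Re(\l)\leq 0$ as well. Thus throughout I work with eigenvalues $\mu$ lying in the closed left half plane, and the scaling $\mu=\a\l$ converts the hypotheses $|\l|<1/\a$ and $|\l|<1/(k\a)$ into $|\mu|<1$ and $|\mu|<1/k$ respectively.

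For the EG claim, \Cref{thm:eg_stable} with $\b=1$ requires $|1+\mu+\mu^2|<1$ for every $\mu\in\sp(\Hv_{\a,\a})$, and the hypothesis $0<|\l|<1/\a$ gives $0<|\mu|<1$. So it suffices to show $|1+\mu+\mu^2|<1$ whenever $\Re(\mu)\leq 0$ and $0<|\mu|<1$. Writing $a=\Re(\mu)$ and $r=|\mu|^2$, a direct expansion yields $|1+\mu+\mu^2|^2 = 1 + 2a(1+2a+r) + r(r-1)$. Since $a^2\leq r$ forces $|a|\leq\sqrt r$, one has $1+2a+r\geq(1-\sqrt r)^2\geq 0$, so the middle term is $\leq 0$ (as $a\leq 0$), while $r(r-1)<0$ because $0<r<1$; hence $|1+\mu+\mu^2|^2<1$. (Equivalently, one may invoke the maximum modulus principle on the closed left half-disk, noting that the boundary maximum $1$ is attained only at $\mu\in\{0,\pm i,-1\}$, all excluded by $0<|\mu|<1$.)

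For the two OGD claims I again use \Cref{thm:eg_stable}, which asks that $|\mu|<1$ and $|\mu|^2(k-3+(k+1)|\mu|^2)<2\Re(\mu)(k|\mu|^2-1)$. In the stable case $\mu=\a\l$ with $0<|\l|<1/(k\a)$, so $0<r:=|\mu|^2<1/k^2$ and $|\mu|<1$; moreover $kr-1<1/k-1<0$, making the right-hand side $2\Re(\mu)(kr-1)\geq 0$ since $\Re(\mu)\leq 0$. It therefore suffices that the left-hand side be negative, i.e. $r<(3-k)/(k+1)$, which follows from $r<1/k^2$ once $1/k^2\leq(3-k)/(k+1)$; the latter is $k^3-3k^2+k+1\leq 0$, and the factorization $(k-1)(k^2-2k-1)$ shows this is strictly negative on $(1,2]$ (first factor positive, $k^2-2k-1=(k-1)^2-2<0$). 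For the instability claim I invoke the bilinear case of \Cref{lem:loc_sadl}: every eigenvalue $\mu$ of $\Hv_{\a_1,\a_2}$ is purely imaginary, $\mu=i\omega$ with $\omega\neq 0$ for a non-trivial game. Then $\Re(\mu)=0$ annihilates the right-hand side, while the left-hand side equals $\omega^2(k-3+(k+1)\omega^2)>0$ whenever $k\geq 3$, so the criterion fails and OGD is not exponentially stable.

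The main obstacle is the careful tracking of strict versus non-strict inequalities in the OGD analysis: the zero eigenvalue must be excluded (ensured by $0<|\l|$) and the bound $|\l|<1/(k\a)$ must be used strictly, since the cubic $\psi(k)=k^3-3k^2+k+1$ only satisfies $\psi\leq 0$ on $(1,2]$ with equality at the excluded endpoint $k=1$; the factorization of $\psi$ is what pins down the admissible range (in fact $1<k<1+\sqrt2$). Everything else is routine once the spectral criteria of \Cref{thm:eg_stable} are combined with the left-half-plane (respectively imaginary-axis) location of $\sp(\Hv_{\a_1,\a_2})$ supplied by \Cref{lem:loc_sadl}.
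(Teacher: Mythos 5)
Your proof is correct and follows the same skeleton as the paper's: combine the location of $\sp(\Hv_{\a_1,\a_2})$ given by \Cref{lem:loc_sadl} with the exact stability criteria of \Cref{thm:eg_stable}, after rescaling $\mu=\a\l$. Where you genuinely depart is in how the resulting semialgebraic implications are discharged: the paper hands both the EG and the OGD implication to Mathematica's \texttt{Reduce} and reports \texttt{True}, whereas you prove them by hand. Your identity $|1+\mu+\mu^2|^2=1+2\Re(\mu)\bigl(1+2\Re(\mu)+|\mu|^2\bigr)+|\mu|^2\bigl(|\mu|^2-1\bigr)$, combined with $\Re(\mu)^2\le|\mu|^2$, settles the EG claim cleanly, and your reduction of the OGD claim to the sign of $k^3-3k^2+k+1=(k-1)(k^2-2k-1)$ not only covers the stated range $1<k\le 2$ but shows the same argument works for all $1<k<1+\sqrt{2}$, a sharper constant than the paper records. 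For the bilinear instability you argue directly at general $k\ge 3$ instead of invoking the monotonicity of \Cref{thm:egratio} at $k=3$ as the paper does; both routes work. Two cosmetic remarks: a non-trivial bilinear game may also have zero eigenvalues (from zero singular values of $\Cv$), but these violate the strict OGD criterion as well since it degenerates to $0<0$, so your conclusion stands; and your maximum-modulus aside for EG needs the separate boundary check on the imaginary segment (points $\mu=iv$ with $0<|v|<1$ lie on the boundary of the half-disk, not its interior), though your primary algebraic argument already covers them. Overall your version buys a self-contained, human-verifiable proof with a slightly stronger conclusion, at the cost of the brevity the paper gets from computer algebra.
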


\noindent \blue{Given a fixed non-singular Jacobian matrix, we can always choose $\alpha$ to be small enough, such that $0 < |\lambda| < 1/\alpha$ (or $0 < |\lambda| < 1/(k\alpha)$) for any $\lambda \in \Sp(\Hv)$. Therefore, EG and OGD always locally converge to any local saddle point as long as $\Hv(f)$ is non-singular.}

\subsubsection{Local minimax points}

Now we study how gradient algorithms converge to local minimax points. \blue{We do not have the results in \Cref{cor:eg_sadl}, since different from local saddle points, the spectrum of the Jacobian $\Hv_{\a_1, \a_2}(f)$ is quite arbitrary:}
\begin{restatable}[\textbf{\blue{spectrum of local minimax can be arbitrary}}]{lem}{lemloc}\label{lem:loc_min_max}
Given $\a_1, \a_2 > 0$, for any $z\in \mathds{C}$, there exists a quadratic function $q$ and a local minimax point $(\xv^*, \yv^*)$ where $z\in \sp(\Hv_{\a_1, \a_2}(q))$.
\end{restatable}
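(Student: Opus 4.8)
The plan is to prove the lemma by an explicit construction in the lowest dimension, $\Xc = \Yc = \R$ (so $n = m = 1$), with the local minimax point placed at the origin. In this scalar setting a homogeneous quadratic game is $q(x,y) = \tfrac12 A x^2 + \tfrac12 B y^2 + Cxy$, its Hessian blocks are the scalars $\Av = A$, $\Bv = B$, $\Cv = C$, and the Jacobian from \eqref{eq:gen_H} is the real $2\times 2$ matrix
\[
\Hv_{\a_1,\a_2}(q) = \begin{bmatrix} -\a_1 A & -\a_1 C \\ \a_2 C & \a_2 B \end{bmatrix}.
\]
Because this matrix is real, its two eigenvalues are determined by its trace and determinant alone, so I would realize a prescribed $z = a + b\mathrm{i}$ (with $a,b\in\R$) by forcing $\mathrm{tr}\,\Hv_{\a_1,\a_2}(q) = 2a$ and $\det\Hv_{\a_1,\a_2}(q) = a^2+b^2$. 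The characteristic polynomial is then $\l^2 - 2a\l + (a^2+b^2)$, whose roots are exactly $a\pm b\mathrm{i}$; hence $z\in\sp(\Hv_{\a_1,\a_2}(q))$ regardless of the sign of $b$.

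Concretely, I would fix $B = -1$ and choose
\[
A = -\frac{2a+\a_2}{\a_1}, \qquad C^2 = \frac{(a+\a_2)^2 + b^2}{\a_1\a_2},
\]
noting that the right-hand side for $C^2$ is a nonnegative real number for every $(a,b)$, so a real $C$ exists. One then checks directly that $\mathrm{tr}\,\Hv_{\a_1,\a_2} = -\a_1 A + \a_2 B = (2a+\a_2) - \a_2 = 2a$ and, using $AB = -A$, that $\det\Hv_{\a_1,\a_2} = \a_1\a_2(C^2 + A) = (a+\a_2)^2 + b^2 - \a_2(2a+\a_2) = a^2+b^2$, exactly as required.

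It remains to verify that $(0,0)$ is genuinely local minimax for this $q$, for which I would apply the characterization in \Cref{thm:local_global_q}. Stationarity \eqref{eq:stat} is immediate at the origin. Since $B = -1 < 0$, the orthogonal projector onto the null space of $\Bv$ degenerates to $\p{\Bv} = 0$, whence $\Lv = \Cv\p{\Bv} = 0$ and $\p{\Lv} = 1$; the Schur-complement curvature condition $\p{\Lv}(\Av - \Cv\Bv^\dag\Cv^\top)\p{\Lv}\succeq 0$ then collapses to $A - C B^\dag C = A + C^2 \geq 0$. The crucial observation is that $A + C^2 = (a^2+b^2)/(\a_1\a_2) = \det\Hv_{\a_1,\a_2}/(\a_1\a_2) \geq 0$, so the determinant I matched for the eigenvalue is precisely the quantity controlling the local-minimax curvature condition. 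Thus the eigenvalue-matching requirement and the local-minimax requirement are automatically compatible rather than in conflict. I expect the only genuine care to lie in these degenerate pseudo-inverse and projector evaluations and in confirming $C^2 \geq 0$ via its manifestly nonnegative (perfect-square-plus-square) form; once that compatibility is recorded, the lemma follows, and it also shows (in contrast to \Cref{lem:loc_sadl}) that $\Re(z) = a$ may be taken strictly positive.
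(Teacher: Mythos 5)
Your proof is correct and follows essentially the same route as the paper's: a one-dimensional homogeneous quadratic game with the origin as the candidate point, matching the characteristic polynomial of the $2\times2$ Jacobian to $(\l-z)(\l-\bar z)$ via trace and determinant. Your version is in fact more complete, since the paper only asserts that the resulting system ``always has real solutions'' and leaves implicit both the existence of a real $C$ and the verification (via \Cref{thm:local_global_q}) that the origin is genuinely local minimax, which you carry out explicitly by observing that the determinant $|z|^2\geq 0$ is exactly the Schur-complement curvature condition.
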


\blue{This result shows that local minimax points are a more general class than the class of local stable stationary points (LSSPs) as studied recently in \citet{Berard2020A}, in terms of zero-sum games, since LSSPs are defined such that $\Re(\lambda) < 0$ for any $\lambda \in \Sp(\Hv_{\a, \a})$ and $\a > 0$ (note the slight change of signs due to the difference of notations).}
Under certain assumptions, 2TS gradient algorithms can converge to local minimax points. The following result slightly extends \citet{jin2019minmax} where only GDA is analyzed:
\begin{restatable}[\blue{\textbf{stability of EG/OGD at strict local minimax points}}]{thm}{thmnegy}\label{thm:neg_y}
Assume at a stationary point $(\xv^*, \yv^*)$, \be
\label{eq:strict_local}\n_{\yv \yv}^2 f \prec {\bf 0}\mbox{ and }\n_{\xv \xv}^2 f - \n_{\xv \yv}^2 f (\n_{\yv \yv}^2 f)^{-1}\n_{\yv \xv}^2 f \succ {\bf 0}.\en Then \blue{there exist} $\gamma_0 > 0$ and $\a_0 > 0$ such that for any $\gamma > \gamma_0 , 0 < \a_2 < \a_0$ and $\a_1 = \a_2/\g$, EG and OGD (with $k> 1$) are exponentially stable.
\end{restatable}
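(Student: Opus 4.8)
The plan is to reduce the claim to a purely spectral statement via \Cref{thm:eg_stable} and then control $\sp(\Hv_{\a_1,\a_2})$ by a two-time-scale (singular perturbation) argument. Writing $\a_2 = \g\a_1$ so that $\Hv_{\a_1,\a_2} = \a_1\Hv_{1,\g}$, the goal is to show that for $\g$ large and $\a_2$ small every eigenvalue $\l$ of $\Hv_{\a_1,\a_2}$ has small modulus and lies in a fixed cone $\{\l:\Re(\l)\le -c|\l|\}$ with $c>0$. Granting this, the criteria of \Cref{thm:eg_stable} follow: for EG (any fixed $\b>0$, in particular $\b=1$) we expand $|1+\l/\b+\l^2/\b|^2 = 1 + 2\Re(\l)/\b + O(|\l|^2)$, and the cone bound $2\Re(\l)/\b \le -2c|\l|/\b$ dominates the $O(|\l|^2)$ remainder once $|\l|$ is small, giving $<1$; for OGD the right-hand side $2\Re(\l)(k|\l|^2-1)\approx -2\Re(\l)\ge 2c|\l|$ is of order $|\l|$ while the left-hand side $|\l|^2(k-3+(k+1)|\l|^2)$ is $O(|\l|^2)$, so the inequality holds for small $|\l|$ and any $k>1$, and $|\l|<1$ is immediate.

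The core is the spectral analysis of $\Hv_{1,\g} = \bigl[\begin{smallmatrix} -\n_{\xv\xv}^2 f & -\n_{\xv\yv}^2 f \\ \g\n_{\yv\xv}^2 f & \g\n_{\yv\yv}^2 f\end{smallmatrix}\bigr]$ as $\g\to\infty$. Applying the Schur determinant formula to $\det(\Hv_{1,\g}-\l\Iv)$, the $n+m$ eigenvalues split into two families. Eliminating the bottom-right block (which is $\approx \g\n_{\yv\yv}^2 f$ and invertible since $\n_{\yv\yv}^2 f\prec\zero$) shows the $n$ \emph{slow} eigenvalues stay $O(1)$ and converge to the eigenvalues of $-(\n_{\xv\xv}^2 f-\n_{\xv\yv}^2 f(\n_{\yv\yv}^2 f)^{-1}\n_{\yv\xv}^2 f)$, while the $m$ \emph{fast} eigenvalues scale like $\g\cdot\mathrm{eig}(\n_{\yv\yv}^2 f)$. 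The two strict hypotheses \eqref{eq:strict_local} are exactly what is needed: $\n_{\yv\yv}^2 f\prec\zero$ is symmetric negative definite, so the fast limits are real and negative, and the Schur complement $\n_{\xv\xv}^2 f-\n_{\xv\yv}^2 f(\n_{\yv\yv}^2 f)^{-1}\n_{\yv\xv}^2 f\succ\zero$ is symmetric positive definite (its cross term is symmetric), so the slow limits are real and negative too. Hence for $\g>\g_0$ all eigenvalues of $\Hv_{1,\g}$ have strictly negative real part, and since the limiting values are real-negative they sit in a cone $\Re\le -c|\cdot|$ bounded away from the imaginary axis.

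Finally, rescaling by the positive scalar $\a_1=\a_2/\g$ preserves the ratio $\Re(\l)/|\l|$, hence the cone, and sends the slow eigenvalues to magnitude $O(\a_2/\g)$ and the fast ones to $\approx\a_2\cdot\mathrm{eig}(\n_{\yv\yv}^2 f)=O(\a_2)$. Thus for fixed $\g>\g_0$ and $\a_2<\a_0$ small, every $\l\in\sp(\Hv_{\a_1,\a_2})$ satisfies both $|\l|<\d$ (arbitrarily small) and $\Re(\l)\le -c|\l|$, which is precisely the input to the criteria verification of the first paragraph; this produces the required $\g_0$ and $\a_0$ for both EG and OGD.

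The main obstacle is making the eigenvalue splitting rigorous while extracting a \emph{uniform} cone: I must guarantee not merely $\Re(\l)<0$ but $\Re(\l)\le -c|\l|$ with a single $c>0$ that does not degrade as $\g\to\infty$ or $\a_2\to0$, since EG in particular needs $\Re(\l)$ to dominate $|\l|^2$. The fast family is the delicate part because those eigenvalues escape to infinity in $\g$; I would handle them through the substitution $\l=\g\tilde\l$ and a continuity/perturbation argument showing $\tilde\l\to\mathrm{eig}(\n_{\yv\yv}^2 f)$, while the slow family is governed by the convergence of the Schur complement $S(\l,\g)\to -(\n_{\xv\xv}^2 f-\n_{\xv\yv}^2 f(\n_{\yv\yv}^2 f)^{-1}\n_{\yv\xv}^2 f)-\l\Iv$. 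The symmetry (hence real spectra) of both limiting matrices is what ultimately converts ``Hurwitz'' into the stronger cone condition needed downstream.
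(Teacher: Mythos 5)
Your proposal is correct and follows the same underlying strategy as the paper: a two-time-scale splitting of $\sp(\Hv_{1,\g})$ into $m$ fast eigenvalues converging to $\sp(\n_{\yv\yv}^2 f)$ and $n$ slow eigenvalues of order $1/\g$ converging (after rescaling) to $-\sp(\n_{\xv\xv}^2 f - \n_{\xv\yv}^2 f(\n_{\yv\yv}^2 f)^{-1}\n_{\yv\xv}^2 f)$, followed by shrinking $\a_2$ so that the spectral criteria of \Cref{thm:eg_stable} apply. The paper simply cites Lemma 36 of \citet{jin2019minmax} for this splitting, whereas you sketch a self-contained Schur-complement/singular-perturbation derivation; both are fine, though you should be aware the cited lemma already gives the quantitative form $|\l_i + \mu_i/\g| < \d/\g$ and $|\l_{i+n} - \nu_i| < \d$ that makes your uniform cone constant immediate.

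One place where your route is actually \emph{more} faithful to the stated theorem: the paper finishes by invoking \Cref{cor:eg_sadl}, which only certifies OGD for $1 < k \leq 2$ (it relies on $\Re(\l)\leq 0$ alone, which cannot handle $k\geq 3$ because of the bilinear counterexample). Your uniform cone bound $\Re(\l) \leq -c|\l|$ is exactly the extra leverage needed to verify the OGD criterion $|\l|^2(k-3+(k+1)|\l|^2) < 2\Re(\l)(k|\l|^2 - 1)$ for \emph{every} fixed $k>1$ once $|\l|$ is small (the right-hand side is of order $c|\l|$ while the left-hand side is $O(|\l|^2)$), and likewise dominates the $O(|\l|^2)$ remainder in the EG criterion. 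So your argument covers the full range $k>1$ claimed in the statement, at the cost of having to track the cone constant uniformly in $\g$ and $\a_2$ — which, as you note, follows because both limiting spectra are real and negative and the perturbations are $o(1/\g)$ and $o(1)$ respectively.
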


\blue{In fact, the theorem above can be extended to momentum methods as well (see \Cref{app:momentum_alg}).} As we have seen in \Cref{cor:suf_jin}, \eqref{eq:strict_local} is sufficient for being local minimax (see also \cite{fiez2019convergence, wang2019solving, zhang2020newton} for applications in GANs).
However, without assumption \eqref{eq:strict_local} (see also \citet[Theorem 28]{jin2019minmax} for GDA), convergence is more difficult:

\begin{restatable}[\blue{\textbf{stability of gradient algorithms at general local minimax points}}]{prop}{thmall}\label{thm:all}
There exists a quadratic function (e.g., $q(x, y) = -x^2 + xy$) and a global (thus local, from \Cref{thm:qc}) minimax point $\zv^* = (\xv^*, \yv^*)$ where
\begin{itemize}[topsep=2pt, itemsep=0pt]
    \item GDA (with momentum \blue{or alternating updates}) does not converge to $\zv^*$, for any hyper-parameter choice.
    \item \blue{If $\a_1 = \a_2$, or $\a_2 \to 0$, EG/OGD do not converge to $\zv^*$. Otherwise there exist hyper-parameter choices such that EG/OGD converge to $\zv^*$.}
    \blue{\item Alternating OGD does not converge to $\zv^*$ given $\alpha_2 \to 0$. }
\end{itemize} 
\end{restatable}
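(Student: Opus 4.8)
The plan is to reduce every bullet to the spectrum of the Jacobian $\Hv_{\a_1,\a_2}(q)$ and then apply the characterizations already in hand. For $q(x,y)=-x^2+xy$ we have $\n_{xx}^2 q=-2$, $\n_{xy}^2 q=1$, $\n_{yy}^2 q=0$, so by \eqref{eq:gen_H}
\[
\Hv_{\a_1,\a_2}=\begin{bmatrix}2\a_1 & -\a_1\\ \a_2 & 0\end{bmatrix},\qquad \sp(\Hv_{\a_1,\a_2})=\{\a_1\pm\sqrt{\a_1^2-\a_1\a_2}\}.
\]
The single fact driving the negative statements is $\mathrm{tr}\,\Hv_{\a_1,\a_2}=2\a_1>0$ together with $\det\Hv_{\a_1,\a_2}=\a_1\a_2>0$: for $\a_2\le\a_1$ both eigenvalues are real and strictly positive, and for $\a_2>\a_1$ they form a complex-conjugate pair with common real part $\a_1>0$; in every case $\Re(\l)\ge\a_1>0$. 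This positivity is forced by the degeneracy $\n_{yy}^2 q=0$, which is exactly why \Cref{thm:neg_y} does not apply.

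For the GDA row I would treat three sub-cases. Simultaneous GDA linearizes to $\Iv+\Hv_{\a_1,\a_2}$, and $|1+\l|<1$ forces $2\Re(\l)+|\l|^2<0$, hence $\Re(\l)<0$, contradicting $\Re(\l)=\a_1>0$. For GDA with momentum (heavy ball and NAG) I would invoke the momentum stability characterization of \Cref{app:momentum_alg}; a short Schur--Cohn computation on the companion factor attached to each $\l$ shows its stable region lies in the open left half-plane $\{\Re(\l)<0\}$, again excluding our eigenvalues for every step-size and momentum choice. Alternating GDA is the cleanest: writing out the alternating update for this $q$ gives the Jacobian $\left[\begin{smallmatrix}1+2\a_1 & -\a_1\\ \a_2(1+2\a_1) & 1-\a_1\a_2\end{smallmatrix}\right]$ with determinant exactly $1+2\a_1>1$, so its spectral radius is at least $\sqrt{1+2\a_1}>1$; the same determinant obstruction carries over once momentum is added.

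For the EG/OGD row I would feed the eigenvalues into \Cref{thm:eg_stable}. When $\a_1=\a_2=\a$ the eigenvalue is the (double) real positive number $\a$, and $|1+\a/\b+\a^2/\b|>1$ for every $\b>0$ while the OGD inequality of \Cref{thm:eg_stable} fails for all $\a\in(0,1)$, so neither converges; likewise if $\a_2\to0$ the eigenvalues become real and nonnegative with $\l_+\to2\a_1>0$, and a positive real eigenvalue is outside both stable regions. For the positive ``otherwise'' clause I would exhibit a genuine two-time-scale regime: fix $\a_2>1$, send $\a_1\to0^+$, and take $\b$ large (for OGD, $k\to1^+$). Then $\l_\pm=\a_1\pm i\sqrt{\a_1\a_2-\a_1^2}$ is complex with small positive real part, and expanding the EG criterion gives $|1+\l/\b+\l^2/\b|^2=1+\a_1\bigl(2(1-\a_2)/\b+\a_2/\b^2\bigr)+o(\a_1)$, which is $<1$ once $\a_2>2\b/(2\b-1)$. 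The conceptually important point to flag is that this succeeds \emph{despite} $\Re(\l)>0$: EG/OGD stabilize near-imaginary eigenvalues satisfying $\Re(\l+\l^2)<0$, and $\a_2>1+\a_1$ is precisely what moves $\l_\pm$ into that region.

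The main obstacle is the last bullet, alternating OGD with $\a_2\to0$, because OGD's memory and the alternating coupling produce a $4$-dimensional companion linearization rather than a $2\times2$ Jacobian. My plan is to write this companion matrix explicitly and analyze its $\a_2\to0$ limit, where it becomes block-triangular: the $x$-block reduces to a scalar OGD recurrence with parameter $2\a_1>0$, which is unstable by the OGD half of \Cref{thm:eg_stable} (a positive real eigenvalue), while the $y$-block degenerates to an eigenvalue on the unit circle. Hence $\rho>1$ at $\a_2=0$, and a continuity/perturbation argument in $\a_2$ extends instability to all small $\a_2>0$. The other place requiring genuine care is the positive EG/OGD subcase above: the $\l^2/\b$ term must be retained (dropping it wrongly suggests instability), and the OGD inequality of \Cref{thm:eg_stable} must be verified separately from the EG one.
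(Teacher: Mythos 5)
Your proposal is correct and, for most of the statement, travels the same road as the paper: the same example $q(x,y)=-x^2+xy$, the same eigenvalues $\a_1\pm\sqrt{\a_1^2-\a_1\a_2}$ of $\Hv_{\a_1,\a_2}$, and the same reduction to the stability criteria of \Cref{thm:hb}, \Cref{cor:nag}, \Cref{thm:eg_stable} and \Cref{thm:egratio}. (One cosmetic slip: the smaller real root is strictly positive but smaller than $\a_1$, so your uniform bound should read $\Re(\l)>0$ rather than $\Re(\l)\ge\a_1$; nothing downstream uses the stronger form.) Your alternating-GDA argument via $\det=1+2\a_1>1$ is a cleaner reading of the paper's Schur condition, whose constant term is exactly that determinant, and your first-order expansion of the EG criterion at finite $\b$ is consistent with the paper's threshold $\a_2>1+2\a_1$ obtained in the $\b\to\infty$ limit via \Cref{thm:egratio}. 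The genuine divergence is the last bullet: the paper writes down the full quartic characteristic polynomial of alternating OGD, applies Schur's conditions to the resulting cubic, and reduces the system symbolically to extract the sharp necessary condition $\a_2>1$; you instead send $\a_2\to0$, observe that the companion matrix becomes block-triangular with the $x$-block the scalar OGD recurrence for the positive real eigenvalue $2\a_1$ (whose characteristic polynomial $\mu^2-(1+2k\a_1)\mu+2\a_1$ is negative at $\mu=1$ for $k>1$, hence has a root $\mu>1$), and invoke eigenvalue continuity. This is computer-free and more transparent, and it suffices because the proposition only asserts instability as $\a_2\to0$, though it does not recover the paper's quantitative bound $\a_2>1$. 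Two loose ends to tie up: the ``otherwise'' clause for OGD is only asserted (``likewise''), so you still need to check that $1<\a_2<1/\a_1$ places $\l=\a_1\pm i\sqrt{\a_1\a_2-\a_1^2}$ inside $\{|\l|<1,\ |\l-1/2|>1/2\}$ (it does, since $|\l|^2=\a_1\a_2$ and $|\l-1/2|^2=1/4+\a_1(\a_2-1)$) and then pass from the $k\to1^+$ limit region to some finite $k>1$; and the aside that the determinant obstruction ``carries over once momentum is added'' to alternating updates is unsubstantiated, though the proposition does not actually require that combined case.
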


\blue{
The exact forms of alternating updates can be found in \cite{zhang2019convergence} which we have also included in the proof of \Cref{thm:all}. It basically says that we update $\xv$ and $\yv$ one after the other rather than simultaneously.
\Cref{thm:all} extends \citet{jin2019minmax} by studying the degenerate case of $\yyv f$ and gradient algorithms other than GDA. The implication is two-fold:
\begin{itemize}[topsep=5pt, parsep=0pt]
\item On the algorithmic aspect, we may not always rely on the usual ODE analysis \citep{mescheder2017numerics, MertikopoulosPP18, fiez2019convergence} when trying to find global/local minimax points, as such analysis relies on approximating gradient algorithms with their continuous versions, by taking the step sizes to be arbitrarily small. For EG/OGD, the step size of the follower ($\a_2$) has to be large while the step size of the leader can be arbitrarily small, reflecting the asymmetric position of players in Stackelberg games \citep{jin2019minmax}. 
\item We may also need new solution concepts in addition to global/local minimax points in machine learning applications \citep[e.g.][]{farnia2020gans, schaefer2020implicit}, even though many machine learning applications, including GANs \citep{goodfellow2014generative} and adversarial training \citep{madry2017towards} are essentially based on the notion of global minimax points. This is because when applying standard gradient-based algorithms to do a local search in machine learning applications, we cannot always expect the final solutions found by the algorithms to cover all global/local minimax points. 
\end{itemize}
}

 
\section{Conclusion}

\blue{
The aim of this work is to provide a comprehensive study of the recently proposed local minimax points \citep{jin2019minmax}. We discussed the relations between local saddle and local minimax points, between local and global minimax points, and interpreted local minimax points based on infinitesimal robustness. This new interpretation allows us to further generalize local minimax points such that they are still stationary (\Cref{prop:stable_new}). We presented the first- and second-order optimality conditions of these local optimal solutions, which extend \citet{jin2019minmax} to the constrained and degenerate cases. Specifically, in (potentially non-convex) quadratic games, local minimax points are (in some sense) equivalent to global minimax points. We also studied the stability of popular gradient algorithms near local optimal solutions, which provides insights for the design of algorithms to find minimax points. \\
The implication of this work is two-fold: \textbf{(a)} we may need new algorithms for smooth games, since we have shown in \Cref{thm:all} that our common intuition might fail w.r.t.~the convergence to a local and global minimax point;  \textbf{(b)} we need to think about new solution concepts other than global/local minimax points. As many theoretical works aim to go beyond the definition of Nash equilibria (a.k.a.~saddle points) such as \cite{jin2019minmax, farnia2020gans, Berard2020A}, to name a few, we may need to take one step further, beyond the definition of Stackelberg equilibria (a.k.a~minimax points), as also pointed out in \citet{schaefer2020implicit}. Our new definition of local robust points sheds some light on going beyond Stackelberg games (\Cref{app:local_robust_point}). 
}

\acks{We thank NSERC, the Canada CIFAR AI Chairs Program, Borealis AI and the Waterloo-Huawei Joint Innovation Lab for financial support. GZ is also supported by David R. Cheriton scholarship and Vector research grant. We thank Chi Jin and Oliver Schulte for useful discussion.}

\appendix



\section{Non-smooth analysis: A short detour}\label{app:review}

We give a short detour on some classical optimality conditions in non-smooth optimization. These results will be used in \Cref{sec:local_opt} to yield necessary and sufficient conditions for local optimality in zero-sum two-player games, since the optimality conditions for local optimal points can be reduced to those for the envelope functions, which are in general non-smooth. A more thorough version of this appendix can be found in \cite{zhang2020optimality}.


Let $\h$ be a function
defined on some set $\Xc\subseteq \RR^m$. Its upper and lower (Dini) directional derivatives are defined as:
\begin{align}
\Ds\h(\xv; \dv) &:= \limsup_{t \to 0^+} \frac{\h(\xv+ t\dv) - \h(\xv)}{t}
,\,
\Di\h(\xv; \dv) := \liminf_{t \to 0^+} \frac{\h(\xv+ t\dv) - \h(\xv)}{t}
.
\end{align}
When the two limits coincide, we use the notation $\D\h(\xv; \dv)$ and call the function $\h$ directionally differentiable (at $\xv$ along direction $\dv$). We can similarly define the upper and lower second-order directional derivatives\footnote{A popular directional derivative in non-smooth analysis, due to \citet{Clarke90}, is to replace $\h(\xv+ t\dv)$ with $\h(\yv+t\dv)$ for some sequence $\yv \to \xv$. The second-order counterpart appeared in \citet{CominettiCorrea90}. For our purpose here, the classical Dini definitions suffice.} according to  \citet{ben1982necessary}:
\begin{align}
\Hs\h(\xv; \dv, \gv) &= \limsup_{t \to 0^+} \frac{\h(\xv+ t \dv + t^2 \gv/2) - \h(\xv)- t \cdot \D\h(\xv; \dv)}{t^2/2}, \\
\Hi\h(\xv; \dv, \gv) &= \liminf_{t \to 0^+} \frac{\h(\xv+ t \dv + t^2 \gv/2) - \h(\xv)- t \cdot \D\h(\xv; \dv)}{t^2/2}.
\end{align}
Similarly, when the two limits coincide we use the simplified notation $\He\h(\xv; \dv, \gv)$ and call $\h$ twice directionally differentiable (at $\xv$ along parabolic $(\dv, \gv)$). Note that, when $\dv = \zero$, we recover the directional derivative: 
\begin{align}
\Hs\h(\xv; \zero, \gv) = \Hi\h(\xv; \zero, \gv) = \D\h(\xv; \gv),
\end{align}
while if $\gv = \zero$, 
\begin{align}
\Hs\h(\xv; \dv) := \Hs\h(\xv; \dv, \zero), ~ \Hi\h(\xv; \dv) := \Hi\h(\xv; \dv, \zero), ~\He\h(\xv; \dv) := \He\h(\xv; \dv, \zero)
\end{align}
reduces to the second-order directional derivatives of \citet{Demyanov73}. 
The advantage of the definition of \citet{ben1982necessary} is evidenced in the following chain rule:
\begin{thm}[\citealt{ben1982necessary}]
\label{thm:chain}
Let $\h: \R^m \to \R$ be locally Lipschitz and $k: \R^n \to \R^m$ be (twice) directionally differentiable. Then, 
\begin{align}
\Ds (\h\circ k)(\xv; \dv) &= \Ds\h\big( k(\xv); \D k(\xv; \dv) \big),  \\
\Hs (\h \circ k)(\xv; \dv, \gv) &= \Hs\h\big(k(\xv); \D k(\xv; \dv), \He k(\xv; \dv, \gv) \big).
\end{align}
(The same result holds for the lower derivatives, and hence the derivatives when they exist.)
\end{thm}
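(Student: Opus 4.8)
The plan is to reduce both identities to a single \emph{perturbation principle}: since $\h$ is locally Lipschitz near $k(\xv)$, replacing the argument of $\h$ by a perturbation that differs by $o(t)$ (for the first-order quotient) or by $o(t^2)$ (for the second-order quotient) leaves the relevant upper directional derivative unchanged. The directional differentiability of $k$ produces exactly such a perturbation, and the chain rule then follows by matching leading terms. Concretely, write $\vv_1 := \D k(\xv;\dv)$ and $\vv_2 := \He k(\xv;\dv,\gv)$; directional differentiability of $k$ gives
\begin{align*}
k(\xv + t\dv) &= k(\xv) + t\,\vv_1 + r(t), \qquad r(t) = o(t), \\
k(\xv + t\dv + t^2\gv/2) &= k(\xv) + t\,\vv_1 + \tfrac{t^2}{2}\vv_2 + s(t), \qquad s(t) = o(t^2),
\end{align*}
as $t \to 0^+$. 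Fix a Lipschitz constant $L$ for $\h$ on a ball around $k(\xv)$; since every argument above tends to $k(\xv)$, they all lie in this ball for small $t$, so the Lipschitz estimate applies.

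For the first-order identity I would form the difference quotient defining $\Ds(\h\circ k)(\xv;\dv)$, substitute the expansion of $k(\xv+t\dv)$, and split
\begin{align*}
\frac{\h(k(\xv)+t\vv_1+r(t)) - \h(k(\xv))}{t}
&= \frac{\h(k(\xv)+t\vv_1) - \h(k(\xv))}{t} \\
&\quad + \frac{\h(k(\xv)+t\vv_1+r(t)) - \h(k(\xv)+t\vv_1)}{t}.
\end{align*}
The last term is bounded in absolute value by $L\,\|r(t)\|/t \to 0$. Since $\limsup_{t\to 0^+}(a_t + c_t) = \limsup_{t\to 0^+} a_t$ whenever $c_t \to 0$, the left-hand side has the same $\limsup$ as the first term on the right, which is exactly $\Ds\h(k(\xv);\vv_1)$. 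This is the first claim; replacing $\limsup$ by $\liminf$ throughout yields the $\Di$ version, and hence the two-sided identity $\D(\h\circ k)(\xv;\dv) = \D\h(k(\xv);\vv_1)$ whenever either side exists.

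For the second-order identity I would repeat the argument with the parabolic expansion of $k(\xv+t\dv+t^2\gv/2)$. The linear correction $t\cdot\D(\h\circ k)(\xv;\dv)$ in the numerator of the left-hand quotient equals $t\cdot\D\h(k(\xv);\vv_1)$ by the first-order result just proved; this is precisely the term subtracted in the definition of $\Hs\h(k(\xv);\vv_1,\vv_2)$, so the two second-order quotients differ only by the $s(t)$-perturbation. That extra piece is bounded by $L\,\|s(t)\|/(t^2/2) \to 0$, and taking $\limsup$ (respectively $\liminf$) identifies $\Hs(\h\circ k)(\xv;\dv,\gv)$ with $\Hs\h(k(\xv);\vv_1,\vv_2)$ (respectively the $\Hi$ version).

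I expect no deep obstacle: because $\h$ is only Lipschitz we never differentiate it, and the entire content is that the Lipschitz bound absorbs the higher-order remainders coming from $k$. The one point that needs care is the bookkeeping of the linear correction term in the second-order quotient --- one must invoke the first-order chain rule for the \emph{exact} derivative to see that the terms subtracted on the two sides agree, which is what makes the two second-order upper derivatives well-defined and genuinely comparable in the first place.
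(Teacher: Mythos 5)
Your argument is correct. Note that the paper does not prove this statement at all --- it is imported verbatim from \citet{ben1982necessary} as part of the background review in the appendix --- so there is no in-paper proof to compare against. Your perturbation argument (expand $k$ to first or second order, and let the local Lipschitz constant of $\h$ absorb the $o(t)$, resp.\ $o(t^2)$, remainder before taking $\limsup$) is the standard route and is essentially the proof in the cited reference; you also correctly flag the one genuinely delicate point, namely that the exact first-order derivative $\D(\h\circ k)(\xv;\dv)$ must exist and coincide with $\D\h(k(\xv);\D k(\xv;\dv))$ --- a consequence of the two-sided first-order identity --- for the linear correction terms in the two second-order quotients to match.
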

\noindent In contrast, the definition of \citet{Demyanov73} fails to satisfy the chain rule above. Indeed, if $\h$ is differentiable, then 
\begin{align}
\D\h(\xv; \dv) = \inner{\nabla\h(\xv)}{\dv}
\end{align}
while if $\h$ is twice differentiable, then
\begin{align}
\He\h(\xv; \dv, \gv) = \D\h(\xv; \gv) + \He\h(\xv; \dv) = \inner{\nabla\h(\xv)}{\gv} + \inner{\dv}{\nabla^2 \h(\xv)\dv},
\end{align}
where $\nabla \h$ and $\nabla^2 \h$ are the gradient and Hessian of $\h$, respectively. (A slightly more general setting is discussed in \citealt[Proposition 1.1]{Seeger88}.)
The following properties of the directional derivatives are clear:
\begin{thm}
\label{thm:prop_dd}
For any $\lambda \geq 0$ we have 
\begin{align}
\D\h(\xv; \lambda\dv) &= \lambda \cdot\D\h(\xv; \dv), \\  
\He\h(\xv; \lambda\dv, \lambda^2\gv) &= \lambda^2 \cdot\He\h(\xv; \dv, \gv)
\end{align}
If $\h$ is locally Lipschitz around $\xv$, then $\D\h(\xv; \cdot)$ and $\He\h(\xv; \dv, \cdot)$ are Lipschitz continuous. (Similar results hold for the upper and lower derivatives.)  
\end{thm}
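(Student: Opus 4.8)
The plan is to establish the two homogeneity identities by a change of variables inside the defining limits, and then deduce the Lipschitz claims from the local Lipschitz property of $\h$ together with an elementary comparison estimate for $\limsup$ and $\liminf$. First I would treat positive homogeneity of the first-order derivative. The case $\lambda = 0$ is trivial since both sides vanish. For $\lambda > 0$, substituting $s = \lambda t$ in the difference quotient $\frac{\h(\xv + t\lambda\dv) - \h(\xv)}{t}$ rewrites it as $\lambda \cdot \frac{\h(\xv + s\dv) - \h(\xv)}{s}$; because $s \to 0^+$ exactly when $t \to 0^+$, taking $\limsup$ (resp.\ $\liminf$) of both sides gives $\Ds\h(\xv; \lambda\dv) = \lambda\,\Ds\h(\xv;\dv)$ and $\Di\h(\xv; \lambda\dv) = \lambda\,\Di\h(\xv;\dv)$, hence the identity for $\D\h$ wherever it exists.

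For the second-order identity I would first insert the already-proven relation $\D\h(\xv;\lambda\dv) = \lambda\,\D\h(\xv;\dv)$ into the numerator of the definition of $\He\h(\xv;\lambda\dv,\lambda^2\gv)$, and then apply the same substitution $s = \lambda t$. Under this change the parabolic perturbation becomes $t\lambda\dv + t^2\lambda^2\gv/2 = s\dv + s^2\gv/2$, the linear correction becomes $t\lambda\,\D\h(\xv;\dv) = s\,\D\h(\xv;\dv)$, and the normalizing factor $t^2/2$ becomes $s^2/(2\lambda^2)$. The resulting factor $\lambda^2$ pulled out of the denominator is precisely what produces $\He\h(\xv;\lambda\dv,\lambda^2\gv) = \lambda^2\,\He\h(\xv;\dv,\gv)$, and the identical computation applies verbatim to $\Hs$ and $\Hi$.

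For the Lipschitz statements, let $L$ be a Lipschitz constant of $\h$ on a neighborhood of $\xv$. For directions $\dv_1,\dv_2$ and all sufficiently small $t>0$ the two difference quotients differ by at most $\frac{|\h(\xv+t\dv_1) - \h(\xv+t\dv_2)|}{t} \le \frac{L\,t\|\dv_1-\dv_2\|}{t} = L\|\dv_1-\dv_2\|$. I would then use the elementary fact that if two functions of $t$ differ pointwise by at most a constant $M$ for all small $t$, their $\limsup$'s (and $\liminf$'s) differ by at most $M$, yielding $|\D\h(\xv;\dv_1) - \D\h(\xv;\dv_2)| \le L\|\dv_1 - \dv_2\|$. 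For $\He\h(\xv;\dv,\cdot)$ with $\dv$ fixed, the only $\gv$-dependent term is $\h(\xv + t\dv + t^2\gv/2)$ (the linear correction $t\,\D\h(\xv;\dv)$ is independent of $\gv$ and cancels), so the two quotients differ by at most $\frac{L\,\|t^2\gv_1/2 - t^2\gv_2/2\|}{t^2/2} = L\|\gv_1-\gv_2\|$, and the same $\limsup/\liminf$ comparison gives Lipschitz continuity in $\gv$.

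None of these steps is genuinely difficult; the only points requiring care are carrying out the change of variables consistently in the second-order definition so that both the linear correction term and the $t^2/2$ normalization transform correctly, and justifying the passage from the pointwise bound on difference quotients to a bound on the $\limsup/\liminf$. It is exactly here that local Lipschitzness is used, since it also guarantees $|\D\h(\xv;\dv)| \le L\|\dv\|$, so that all the derivatives involved are finite and the comparison estimates are meaningful.
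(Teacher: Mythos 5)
Your proof is correct. The paper offers no proof of this theorem (it is introduced with ``the following properties \ldots are clear''), and your argument --- the substitution $s=\lambda t$ in the difference quotients for the homogeneity identities, followed by the pointwise $L\|\dv_1-\dv_2\|$ (resp.\ $L\|\gv_1-\gv_2\|$) bound on the quotients and the $\limsup$/$\liminf$ comparison for the Lipschitz claims --- is exactly the standard verification intended, including the one point that genuinely needs care, namely that local Lipschitzness keeps all the derivatives finite so the comparison of upper and lower limits is legitimate.
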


\subsection{Necessary conditions}\label{app:1st-necessary}
Consider the non-smooth optimization problem
\begin{align}
\min_{\xv \in \X \subseteq \R^m} ~ \h(\xv).
\end{align}
We define three tangent cones of the (closed) constraint set $\X$:
\begin{align}
\K[f](\X, \xv) &:= \{\dv: \forall \{t_k\} \to 0^+~ \exists \{t_{k_i}\} \to 0^+, \xv+t_{k_i} \dv \in \X \} \subseteq \mathrm{cone}(\X - \xv)\\
\K[d](\X, \xv) &:= \liminf_{t\to 0^+} \frac{\X - \xv}{t} := \{\dv: \forall \{t_k\} \to 0^+~ \exists \{t_{k_i}\} \to 0^+, \{\dv_{k_i}\} \to \dv, \xv+t_{k_i} \dv_{k_i} \in \X \} 
\\
\K[c](\X, \xv) &:= \limsup_{t\to 0^+} \frac{\X - \xv}{t} := \{\dv: \exists \{t_k\} \to 0^+, ~\{\dv_{k}\} \to \dv, \xv+t_{k} \dv_{k} \in \X \}.
\end{align}
Obviously, the (feasible) cone $\K[f]$ is contained in the (derivable) cone $\K[d]$, which is itself contained in the (contingent) cone $\K[c]$. 
$\K[d]$ and $\K[c]$ are always closed while $\K[f]$ may not be so (even when $\X$ is closed). 
On the other hand, if $\X$ is convex (and $\xv \in \X$), then all three tangent cones are convex, $\K[f] = \mathrm{cone}(\X - \xv)$ and $\K[d]=\K[c] = \overline{\K[f]}$. 
Note that for all tangent cones, we have
\begin{align}
\forall \xv \not\in \bar\X, ~ \K(\X, \xv) = \emptyset, \mbox{ and } \forall \xv \in \X^\circ, \K(\X, \xv) = \R^m,
\end{align}
where $\bar\X$ and $\X^\circ$ denote the closure and interior of $\X$, respectively. The following necessary condition is well-known:
\begin{thm}[first-order necessary condition, e.g.~\citet{Demyanov66}]
\label{thm:nec_1st}
Let $\xv^*$ be a local minimizer of $\h$ over $\X$. Then, 
\begin{align}
\forall \dv \in \K[f](\X, \xv^*), &~~ \Di\h(\xv^*; \dv) \geq 0.
\end{align}
The converse is also true if $\h$ and $\X$ are both convex around $\xv^*$. If $\h$ is locally Lipschitz, then 
\begin{align}
\forall \dv \in \K[d](\X, \xv^*), &~~ \Di\h(\xv^*; \dv) \geq 0.
\end{align}
\end{thm}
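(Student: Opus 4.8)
The plan is to establish the three assertions in turn: the general necessary condition over the feasible cone $\K[f]$, the convex converse, and the local-Lipschitz strengthening to the derivable cone $\K[d]$. For the first claim I would fix $\dv \in \K[f](\X, \xv^*)$ and first unpack the definition: the clause ``for all $\{t_k\}\to 0^+$ there is a subsequence with $\xv^*+t_{k_i}\dv\in\X$'' forbids the set $\{t>0: \xv^*+t\dv\notin\X\}$ from accumulating at $0$, which is equivalent to $\xv^*+t\dv\in\X$ for all sufficiently small $t>0$. Since $\xv^*+t\dv\to\xv^*$, for small $t$ this point lies both in $\X$ and in the neighborhood on which $\xv^*$ minimizes $\h$, whence $\h(\xv^*+t\dv)-\h(\xv^*)\geq 0$. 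Every difference quotient defining $\Di\h(\xv^*;\dv)$ is then nonnegative, and passing to the $\liminf$ gives $\Di\h(\xv^*;\dv)\geq 0$. No regularity on $\h$ is required here.

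For the convex converse I would run the monotonicity-of-slopes argument. Given any feasible $\xv$ in the convex neighborhood, set $\dv=\xv-\xv^*$; convexity of $\X$ keeps the whole segment $\{\xv^*+t\dv : t\in[0,1]\}$ feasible, so $\dv\in\K[f]$. Convexity of $\h$ makes $g(t):=\h(\xv^*+t\dv)$ convex, so its slope $\tfrac{g(t)-g(0)}{t}$ is nondecreasing in $t$ and its right derivative equals $\Di\h(\xv^*;\dv)$. Chaining $\h(\xv)-\h(\xv^*)=g(1)-g(0)\geq \Di\h(\xv^*;\dv)\geq 0$ yields $\h(\xv)\geq\h(\xv^*)$, i.e.\ local minimality.

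The substantive step, and where I expect the main obstacle, is the Lipschitz extension to $\K[d]$. Here the straight ray $\xv^*+t\dv$ need not stay feasible, so I cannot invoke minimality directly; the derivable cone only supplies, for a given null sequence $t_k\to 0^+$, a subsequence $t_{k_i}$ and perturbed directions $\dv_{k_i}\to\dv$ with $\xv^*+t_{k_i}\dv_{k_i}\in\X$. The trick is to apply this to a sequence $t_k$ that realizes the $\liminf$ defining $\Di\h(\xv^*;\dv)$. Along the resulting subsequence, feasibility and local minimality give $\h(\xv^*+t_{k_i}\dv_{k_i})\geq \h(\xv^*)$, while local Lipschitzness with constant $L$ controls the discrepancy between the true ray and the feasible one, $|\h(\xv^*+t_{k_i}\dv)-\h(\xv^*+t_{k_i}\dv_{k_i})|\leq L\,t_{k_i}\|\dv-\dv_{k_i}\|$. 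Dividing by $t_{k_i}$ gives $\tfrac{\h(\xv^*+t_{k_i}\dv)-\h(\xv^*)}{t_{k_i}}\geq -L\|\dv-\dv_{k_i}\|\to 0$; since the left-hand side converges along this subsequence to $\Di\h(\xv^*;\dv)$, we conclude $\Di\h(\xv^*;\dv)\geq 0$. The delicate points are the interplay between the ``for all sequences'' quantifier in the definition of $\K[d]$ and the freedom to pick the $\liminf$-realizing sequence, and the bookkeeping that $\h$ is defined on an open neighborhood of $\xv^*$ (so $\h(\xv^*+t\dv)$ is meaningful even off $\X$), which is precisely where local Lipschitzness enters.
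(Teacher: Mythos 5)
Your proposal is correct and follows essentially the same route as the paper's proof: the feasible-cone case via nonnegativity of difference quotients, the convex converse via monotonicity of slopes (the paper phrases this as a contradiction, but it is the same inequality $\frac{g(t)-g(0)}{t}\leq g(1)-g(0)$), and the Lipschitz case by applying the derivable-cone definition to a $\liminf$-realizing sequence and bounding $|\h(\xv^*+t_{k_i}\dv)-\h(\xv^*+t_{k_i}\dv_{k_i})|/t_{k_i}\leq L\|\dv-\dv_{k_i}\|\to 0$. Your explicit unpacking of $\K[f]$ as ``$\xv^*+t\dv\in\X$ for all small $t>0$'' is a correct and slightly more detailed rendering of the step the paper dismisses as ``similar.''
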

\begin{proof}
We first prove the converse part. Suppose to the contrary there exists $\xv$ around $\xv^*$ so that $\h(\xv) < \h(\xv^*)$. Then, $\dv = \xv - \xv^* \in \K[f](\X, \xv^*)$ and we have
\begin{align}
\Di\h(\xv^*; \dv) = \liminf_{t \to 0^+} \frac{\h((1-t)\xv^*+t\xv) - \h(\xv^*)}{t} \leq \h(\xv) - \h(\xv^*) < 0,
\end{align}
which is a contradiction.

To see the claim when $\h$ is locally Lipschitz, note that $\dv \in \K[d](\X, \xv^*)$ implies for any $\{t_k\} \to 0$ there exist $\{t_{k_i}\} \to 0^+$ and $\{\dv_{k_i}\} \to \dv$ such that $\xv^*+t_{k_i} \dv_{k_i} \in \X$. For sufficiently large $k_i$ we have $\h(\xv^*+t_{k_i} \dv_{k_i}) \geq \h(\xv^*)$ since $\xv^*$ by assumption is a local minimizer. Thus, 
\begin{align}
\liminf_{t \to 0^+} \frac{\h(\xv^*+t \dv) - \h(\xv^*)}{ t} &:=  \lim_{t_k \to 0^+} \frac{\h(\xv^*+t_k \dv) - \h(\xv^*)}{t_k} \\
\nonumber&\geq \limsup_{t_{k_i} \to 0^+} \frac{\h(\xv^*+ t_{k_i} \dv_{k_i}) - \h(\xv^*)}{ t_{k_i} } -\tr &- \limsup_{t_{k_i} \to 0^+} \frac{\h(\xv^*+ t_{k_i} \dv) - \h(\xv^*+ t_{k_i} \dv_{k_i})}{t_{k_i}} \\
& \geq 0 - 0 = 0.
\end{align}

The proof for a general function $\h$ is similar. 
\end{proof}

To derive second-order conditions, we define similarly the second-order tangent cones:
\begin{align}
\K[f](\X, \xv; \dv) &:= \{\gv: \forall \{t_k\} \downarrow 0~ \exists \{t_{k_i}\} \downarrow 0, \xv+t_{k_i} \dv +  t_{k_i}^2 \gv/2 \in \X \}, \\
\K[d](\X, \xv; \dv) &:= \liminf_{t\to 0^+} \frac{\X - \xv - t \dv}{t^2/2} \tr &:= \{\gv: \forall \{t_k\} \downarrow 0~ \exists \{t_{k_i}\} \downarrow 0, \{\gv_{k_i}\} \to \gv, \xv+t_{k_i} \dv + t_{k_i}^2 \gv_{k_i}/2 \in \X \}. 
\end{align}
The proof of the following result is completely similar to that of \Cref{thm:nec_1st}:
\begin{thm}[second-order necessary condition, e.g. \citealt{BenTalZowe85}]
\label{thm:necessary_2}
Let $\h$ be directionally differentiable and $\xv^*$ be a local minimizer of $\h$ over $\X$. Then, 
\begin{align}
\forall \dv\in \K[f](\X, \xv^*), \forall \gv \in \K[f](\X, \xv^*; \dv), ~~ \D\h(\xv^*; \dv) = 0 \implies
\Hi\h(\xv^*; \dv, \gv)\geq 0.
\end{align}
If $\h$ is locally Lipschitz, then 
\begin{align}
\forall \dv\in \K[d](\X, \xv^*), \forall \gv \in \K[d](\X, \xv^*; \dv), ~~ \D\h(\xv^*; \dv) = 0 \implies
\Hi\h(\xv^*; \dv, \gv)\geq 0.
\end{align}
\end{thm}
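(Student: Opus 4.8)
The plan is to follow the template of the proof of \Cref{thm:nec_1st}, now at the parabolic (second-order) level. Fix a direction $\dv$ in the relevant tangent cone with $\D\h(\xv^*;\dv)=0$, together with a second-order direction $\gv$. Because $\D\h(\xv^*;\dv)=0$, the defining quotient of $\Hi\h(\xv^*;\dv,\gv)$ collapses to $\liminf_{t\to 0^+}\,[\h(\xv^*+t\dv+t^2\gv/2)-\h(\xv^*)]/(t^2/2)$, so the linear correction term drops out and it suffices to show this limit inferior is nonnegative. First I would pick a sequence $\{t_k\}\to 0^+$ realizing the liminf, i.e.\ along which the quotient converges to $\Hi\h(\xv^*;\dv,\gv)$.

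For the first statement (feasible cone), I would invoke $\gv\in\K[f](\X,\xv^*;\dv)$: applied to the realizing sequence $\{t_k\}$, it supplies a subsequence $\{t_{k_i}\}$ with $\xv^*+t_{k_i}\dv+t_{k_i}^2\gv/2\in\X$. Since $\xv^*$ is a local minimizer of $\h$ over $\X$, for all large $i$ the numerator $\h(\xv^*+t_{k_i}\dv+t_{k_i}^2\gv/2)-\h(\xv^*)$ is nonnegative, hence the quotient is nonnegative along $\{t_{k_i}\}$. As the full sequence $\{t_k\}$ already converges to the liminf, so does every subsequence, and a limit of nonnegative numbers is nonnegative; therefore $\Hi\h(\xv^*;\dv,\gv)\ge 0$.

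For the locally Lipschitz statement (derivable cone), the only change is that $\gv\in\K[d](\X,\xv^*;\dv)$ guarantees feasibility of the perturbed parabola $\xv^*+t_{k_i}\dv+t_{k_i}^2\gv_{k_i}/2$ for some auxiliary sequence $\gv_{k_i}\to\gv$, rather than for $\gv$ itself. I would then bridge the gap with the local Lipschitz constant $L$ of $\h$: the two arguments differ by $t_{k_i}^2(\gv_{k_i}-\gv)/2$, so $|\h(\xv^*+t_{k_i}\dv+t_{k_i}^2\gv/2)-\h(\xv^*+t_{k_i}\dv+t_{k_i}^2\gv_{k_i}/2)|\le L\,t_{k_i}^2\|\gv_{k_i}-\gv\|/2$. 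Dividing by $t_{k_i}^2/2$ and using local minimality at the genuinely feasible point shows the quotient evaluated at $\gv$ is at least $-L\|\gv_{k_i}-\gv\|\to 0$, so once more the liminf is nonnegative.

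The main obstacle is the care needed in the liminf/subsequence bookkeeping: the second-order tangent cones only certify feasibility along a subsequence of any prescribed null sequence, so the argument must begin from a liminf-realizing sequence and extract the feasible subsequence afterward — this is what guarantees that the nonnegativity obtained on the subsequence actually pins down the value of $\Hi\h(\xv^*;\dv,\gv)$ rather than merely bounding it below along some other sequence. In the Lipschitz case there is the additional, routine point that the mismatch $\gv_{k_i}-\gv$ enters only at order $t_{k_i}^2$, so after dividing by $t_{k_i}^2/2$ it leaves a vanishing $O(\|\gv_{k_i}-\gv\|)$ error instead of an $O(1)$ one — precisely the role played by the first-order difference $\dv_{k_i}-\dv$ in the proof of \Cref{thm:nec_1st}.
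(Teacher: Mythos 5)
Your proposal is correct and follows exactly the route the paper intends: the paper gives no separate argument for this theorem, stating only that the proof is ``completely similar'' to that of \Cref{thm:nec_1st}, and your write-up is precisely that adaptation lifted to the parabolic level (liminf-realizing sequence, feasible subsequence from the second-order tangent cone, local minimality on that subsequence, and in the Lipschitz case an $O(t^2\|\gv_{k_i}-\gv\|)$ bridging term that vanishes after division by $t^2/2$). The subsequence bookkeeping you flag is handled correctly, so nothing is missing.
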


\subsection{Sufficient conditions}
We give sufficient conditions for a non-smooth function to attain an isolated minimum. 

\begin{thm}[first-order, e.g. \citealt{Demyanov70, BenTalZowe85}]
\label{thm:sufficient}
Let $\h$ be locally Lipschitz. If 
\begin{align}\label{eq:first}
\forall \zero \ne \dv \in \K[c](\X, \xv^*), ~ \Di\h(\xv^*; \dv) > 0,
\end{align}
then $\xv^*$ is an isolated local minimum of $\h$ over $\X$.
\end{thm}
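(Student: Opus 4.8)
The plan is to argue by contradiction, negating the conclusion to produce a ``bad'' sequence and then extracting from it a nonzero direction in the contingent cone along which the lower Dini derivative is non-positive, contradicting hypothesis \eqref{eq:first}. First I would suppose $\xv^*$ is \emph{not} an isolated local minimum of $\h$ over $\X$. Unwinding this definition, every punctured neighborhood of $\xv^*$ contains a point of $\X$ at which $\h$ does not strictly exceed $\h(\xv^*)$; hence there is a sequence $\{\xv_k\}\subseteq \X$ with $\xv_k \ne \xv^*$, $\xv_k \to \xv^*$, and $\h(\xv_k)\leq \h(\xv^*)$ for all $k$.

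Next I would normalize this sequence to read off a limiting direction. Set $t_k := \|\xv_k - \xv^*\| \to 0^+$ and $\dv_k := (\xv_k - \xv^*)/t_k$, so that $\|\dv_k\|=1$ and $\xv^* + t_k \dv_k = \xv_k \in \X$. Since the unit sphere is compact, after passing to a subsequence (not relabeled) we may assume $\dv_k \to \dv$ with $\|\dv\| = 1$, in particular $\dv \ne \zero$. Because $t_k \to 0^+$, $\dv_k \to \dv$, and $\xv^* + t_k \dv_k \in \X$, the defining property of the contingent cone yields $\dv \in \K[c](\X, \xv^*)$.

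Then I would exploit local Lipschitzness to transfer the difference quotient from the varying direction $\dv_k$ to the fixed direction $\dv$. Let $L$ be a Lipschitz constant for $\h$ on a neighborhood of $\xv^*$. For all large $k$ both $\xv^* + t_k \dv$ and $\xv^* + t_k \dv_k$ lie in this neighborhood, so using $\h(\xv_k) \leq \h(\xv^*)$,
\begin{align}
\frac{\h(\xv^* + t_k \dv) - \h(\xv^*)}{t_k}
&= \frac{\h(\xv^* + t_k \dv) - \h(\xv^* + t_k \dv_k)}{t_k} + \frac{\h(\xv_k) - \h(\xv^*)}{t_k} \nonumber \\
&\leq L\,\|\dv - \dv_k\| + 0 \;\longrightarrow\; 0.
\end{align}
Since $\Di\h(\xv^*;\dv) = \liminf_{t\to 0^+}\tfrac{\h(\xv^*+t\dv)-\h(\xv^*)}{t}$ is bounded above by the limit inferior of the same quotient along the particular null sequence $\{t_k\}$, we conclude $\Di\h(\xv^*; \dv)\leq 0$ for a nonzero $\dv \in \K[c](\X, \xv^*)$, contradicting \eqref{eq:first}. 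Therefore $\xv^*$ is an isolated local minimum.

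I expect the main obstacle to be the decoupling of the approximating direction $\dv_k$ (which is dictated by the actual sequence $\xv_k$) from the fixed direction $\dv$ at which the Dini derivative is evaluated: the difference quotient coming from the sequence sits along $\dv_k$, whereas the hypothesis controls derivatives along $\dv$. Local Lipschitz continuity is precisely the tool that bridges this gap, and it is also what permits working with the (large) contingent cone rather than the more restrictive feasible cone. A secondary subtlety is the correct reading of ``isolated local minimum'' (strict inequality on a punctured neighborhood) together with the routine fact that $\liminf_{t\to 0^+}$ is dominated by $\liminf_k$ along any subsequence $t_k \to 0^+$.
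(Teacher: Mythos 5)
Your proof is correct and is essentially the paper's own argument: contradiction via a sequence $\xv_k\to\xv^*$ with $\h(\xv_k)\leq\h(\xv^*)$, normalization and subsequential extraction of a unit direction $\dv\in\K[c](\X,\xv^*)$, and local Lipschitz continuity to transfer the difference quotient from $\dv_k$ to $\dv$. The paper phrases the final estimate as a $\liminf$/$\limsup$ splitting while you bound the quotient directly by $L\|\dv-\dv_k\|$, but this is only a cosmetic difference.
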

\begin{proof}
Suppose to the contrary there exists a sequence $\xv_k\in \X$ converging to $\xv^*$ so that $\h(\xv_k) \leq \h(\xv^*)$. Let $t_k := \|\xv_k - \xv^*\|$ and $ \dv_k := (\xv_k - \xv^*) / \|\xv_k - \xv^*\|$. By passing to a subsequence we may assume $\dv_k \to \dv \ne \zero$, where clearly $\dv \in \K[c](\X, \xv^*)$ since $\xv^*+t_k \dv_k = \xv_k \in \X$. But then
\begin{align}
\Di\h(\xv^*; \dv) &\leq \liminf_{t_k \to 0^+} \frac{\h(\xv^*+t_k \dv) - \h(\xv^*)}{t_k} \\
&\leq  \liminf_{t_k \to 0^+} \frac{\h(\xv^*+t_k \dv_k) - \h(\xv^*)}{t_k} + \limsup_{t_k \to 0^+} \frac{\h(\xv^*+t_k \dv) - \h(\xv^*+t_k \dv_k)}{t_k} \\
&\leq 0 + 0 = 0,
\end{align}
arriving at a contradiction.
\end{proof}
\noindent Note that when $\X$ is convex, we may replace $\K[c]=\overline{\K[f]}$ with $\K[f]$ (recall the Lipschitz continuity in \Cref{thm:prop_dd}).

\begin{thm}[second-order, e.g. \citealt{Demyanov70}]
\label{thm:2nds}
Let $\h$ be locally Lipschitz and directional differentiable, and $\X$ be convex. If 
\begin{enumerate}
\item 
$\forall \dv \in \K[f](\X, \xv^*), ~ \D\h(\xv^*; \dv) \geq 0$,
\item 
$\exists \gamma > 0$ such that for all $\dv \in \K[f](\X, \xv^*), \|\dv \|=1,  \D\h(\xv^*; \dv) \in [0, \gamma]$ we have for all small $t$ and \emph{uniformly} on bounded sets in $\dv$:
\begin{align}
\label{eq:Ah}
\frac{\h(\xv^*+t\dv) - \h(\xv^*) - t \D\h(\xv^*;\dv)}{t^2/2} \geq \Af_\h(\xv^*; \dv) > 0,
\end{align}
\end{enumerate}
then $\xv^*$ is an isolated local minimum of $\h$ over $\X$.
\end{thm}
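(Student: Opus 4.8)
I would argue by contradiction, extending the first-order argument of \Cref{thm:sufficient} so that a second-order quantity is extracted along the offending sequence. Suppose $\xv^*$ is not an isolated local minimum. Then there is a sequence $\xv_k \in \X$ with $\xv_k \to \xv^*$, $\xv_k \neq \xv^*$, and $\h(\xv_k) \leq \h(\xv^*)$. Set $t_k := \|\xv_k - \xv^*\| \downarrow 0$ and $\dv_k := (\xv_k - \xv^*)/t_k$, so that $\|\dv_k\| = 1$ and, by convexity of $\X$, $\dv_k \in \mathrm{cone}(\X - \xv^*) = \K[f](\X, \xv^*)$. Passing to a subsequence I would assume $\dv_k \to \dv$ with $\|\dv\| = 1$; since $\K[c]=\overline{\K[f]}$ for convex $\X$, the limit satisfies $\dv \in \K[c](\X, \xv^*)$. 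For the first-order bookkeeping, Condition~1 gives $\D\h(\xv^*; \dv_k) \geq 0$ for every $k$ (each $\dv_k$ is feasible), and since $\h$ is locally Lipschitz, \Cref{thm:prop_dd} makes $\D\h(\xv^*; \cdot)$ Lipschitz continuous, whence $\D\h(\xv^*; \dv_k) \to \D\h(\xv^*; \dv) \geq 0$.

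Next I would split into two cases according to the value of $\D\h(\xv^*; \dv)$. If $\D\h(\xv^*; \dv) > 0$, a purely first-order estimate already closes the argument. Writing
\[
\h(\xv_k) - \h(\xv^*) = \big[\h(\xv^* + t_k\dv_k) - \h(\xv^* + t_k\dv)\big] + \big[\h(\xv^* + t_k\dv) - \h(\xv^*)\big],
\]
the first bracket is bounded in absolute value by $L\,t_k\|\dv_k - \dv\| = o(t_k)$ via the local Lipschitz constant $L$, while the second divided by $t_k$ converges to $\D\h(\xv^*; \dv) > 0$ by directional differentiability. Hence $\h(\xv_k) - \h(\xv^*) > 0$ for all large $k$, contradicting $\h(\xv_k) \leq \h(\xv^*)$.

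The remaining case $\D\h(\xv^*; \dv) = 0$ is the crux, and is where Condition~2 is invoked. Since $\D\h(\xv^*; \dv_k) \to 0$, for all large $k$ we have $\|\dv_k\| = 1$ and $\D\h(\xv^*; \dv_k) \in [0, \gamma]$, so $\dv_k$ lies in the set on which the second-order lower bound \eqref{eq:Ah} is assumed. Here I would use the uniformity explicitly: because \eqref{eq:Ah} holds for all small $t$ \emph{uniformly} over the (bounded) unit sphere, there is a single threshold $t_0 > 0$ independent of $\dv_k$, so the inequality may be evaluated at the moving pair $(\dv_k, t_k)$ as soon as $t_k < t_0$. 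Using $\D\h(\xv^*;\dv_k) \geq 0$ this yields
\[
\h(\xv_k) - \h(\xv^*) \;\geq\; t_k\, \D\h(\xv^*; \dv_k) + \tfrac{t_k^2}{2}\,\Af_\h(\xv^*; \dv_k) \;\geq\; \tfrac{t_k^2}{2}\,\Af_\h(\xv^*; \dv_k) \;>\; 0,
\]
again contradicting $\h(\xv_k) \leq \h(\xv^*)$. Since both cases are impossible, $\xv^*$ is an isolated local minimum.

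The main obstacle I anticipate is precisely the bookkeeping in the second case: because both the direction $\dv_k$ and the step $t_k$ vary with $k$, a merely pointwise version of \eqref{eq:Ah} (with a $\dv$-dependent threshold on $t$) would not license evaluating the estimate along the diagonal sequence $(\dv_k,t_k)$. The uniformity hypothesis is exactly what resolves this, so I would make sure the write-up flags it as indispensable; it plays the same structural role as passing from $\K[f]$ to its closure $\K[c]$ in the first-order step, only now one must also guarantee that $\Af_\h(\xv^*;\dv_k) > 0$ survives (pointwise positivity already suffices to reach the contradiction, so no uniform lower bound on $\Af_\h$ is needed).
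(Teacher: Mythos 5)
Your proof is correct, and it reaches the same two-case dichotomy as the paper's argument — directions where the directional derivative stays strictly positive are dispatched at first order, and directions where it falls into $[0,\gamma]$ are dispatched by the uniform second-order bound \eqref{eq:Ah} — but you close the first-order case by a different mechanism. The paper argues directly: for each unit direction $\dv$ with $\D\h(\xv^*;\dv)\geq\gamma$ it gets a threshold $t_{\dv}$ from the first-order expansion, and then needs an explicit compactness/covering argument over the sphere to replace the $\dv$-dependent threshold by a uniform one, so that \emph{every} nearby $\xv$ is covered by one of the two cases. You instead argue by contradiction along a putative minimizing sequence $\xv_k$, pass to a convergent subsequence of unit directions $\dv_k\to\dv$, and use the local Lipschitz constant to transfer the first-order estimate from the single limit direction $\dv$ to the moving directions $\dv_k$ — exactly the device the paper itself uses in \Cref{thm:sufficient}. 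This buys you a cleaner first-order case (no covering argument), at the mild cost that the case split is governed by the limit value $\D\h(\xv^*;\dv)$ rather than by the threshold $\gamma$; your observation that $\D\h(\xv^*;\dv_k)\to 0<\gamma$ places the tail of the sequence inside the hypothesis set of Condition~2 is the right way to reconcile the two. Your handling of the second-order case coincides with the paper's, and your closing remark is on point: the uniformity of the threshold in $t$ over bounded $\dv$ is what licenses evaluating \eqref{eq:Ah} along the diagonal $(\dv_k,t_k)$, while only pointwise positivity of $\Af_\h(\xv^*;\dv_k)$ is needed to finish.
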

\begin{proof}
Let $\xv \in \X$ and $\xv \neq \xv^*$, then $\dv := (\xv - \xv^*)/\|\xv - \xv^*\| \in \K[f](\X, \xv^*)$ (since $\X$ is convex). Suppose $\D\h(\xv^*, \dv) \geq \gamma > 0$, then  
\begin{align}
\h(\xv^* + t \dv) = \h(\xv^*) + t \D\h(\xv^*; \dv) + o(t) \geq \h(\xv^*) + \gamma t + o(t) > \h(\xv^*) + \gamma t / 2,
\end{align}
for sufficiently small $t \leq t_{\dv}$. 
Since the function $\dv \mapsto \h(\xv^*+t \dv)$ is locally Lipschitz, we may choose a non-empty open subset from each set $\{\vv: \forall t \in(0, t_{\dv}], ~\h(\xv^* + t \vv)  > \h(\xv^*)\}$. Hence, using a standard compactness argument, we know for all small positive $t$, 
\begin{align}
\dv \in \K[f](\X, \xv^*), \|\dv\| = 1, \D\h(\xv^*, \dv) \geq \gamma \implies \h(\xv^* + t \dv) > \h(\xv^*).
\end{align}
Suppose instead $\D\h(\xv^*, \dv) \in [0, \gamma]$, then for all small positive $t$ and uniformly in $\dv$ we have
\begin{align}
\h(\xv^* + t \dv) & \geq \h(\xv^*) + t \D\h(\xv^*; \dv) + \tfrac12 t^2 \Af_\h(\xv^*;\dv)\\
& \geq \h(\xv^*) + \tfrac12 t^2 \Af_\h(\xv^*;\dv) \\
& > \h(\xv^*).
\end{align}
Finally, combining the above two cases completes the proof.
\end{proof}
\noindent We make a few remarks regarding \Cref{thm:2nds}:
\begin{itemize}
\item In general we cannot let $\gamma = 0$ (for an explicit counterexample, see \citealt{Demyanov70}). This is one of the subtleties to work with directional derivatives: even when $\D\h(\xv^*; \dv)$ vanishes for some direction $\dv$ we may still have $\D\h(\xv^*; \dv)$ approaching 0 for other directions, but with $\gamma = 0$ we will not know how $\Af_h(\xv^*;\dv)$ behaves (e.g. negative) along the latter directions.
\item It is clear that $\Hi\h \geq \Af_\h$. In some cases it is easier to verify the uniformity (along directions) in \eqref{eq:Ah} if we relax the lower 2nd-order directional derivative $\Hi\h$ to some convenient function $\Af_h$. See \Cref{thm:env_2nds_q} for an example.
\item If $\X = \R^m$ and $\h$ is Fr\'{e}chet differentiable with locally Lipschitz gradient $\nabla\h$ around $\xv^*$, then we can verify the uniformity in \eqref{eq:Ah} as follows. Note first that we have $\nabla \h(\xv^*) = \zero$ from the necessary condition. Second, for all small $t$ we have
\begin{align}
\frac{\h(\xv^*+t\bar\dv) - \h(\xv^*)}{t^2/2} &= \frac{\h(\xv^*+t\dv + t (\bar\dv - \dv)) - \h(\xv^*)}{t^2/2} \\
&= \frac{\h(\xv^*+t\dv) - \h(\xv^*) + t \inner{\nabla\h(\xv^*+\theta t\dv) - \nabla\h(\xv^*)}{\bar\dv - \dv} }{t^2/2} \\
&\geq \frac{\h(\xv^*+t\dv) - \h(\xv^*)}{t^2/2} - 2L\|\dv\|\|\bar\dv-\dv\|,
\end{align}
where $\theta\in[0,1]$ and $L$ is the local Lipschitz constant of $\nabla\h$. Thus, if $\frac{\h(\xv^*+t\dv) - \h(\xv^*)}{t^2/2} > 0$ then for all nearby $\bar\dv$ we also have $\frac{\h(\xv^*+t\bar\dv) - \h(\xv^*)}{t^2/2} > 0$. In this case we may let $\Af_\h = \Hi\h$ and recover \citep[Theorem 3.2]{BenTalZowe85}.
\end{itemize}

Another result that directly uses the second-order derivative is:
\begin{thm}[{second-order sufficient condition, e.g. \citealt{DemyanovMalozemov72}}]\label{thm:2nd_suff_gen}
Suppose $h$ is uniformly first-order and second-order directional differentiable (at $\xv^*$) and $\X$ is convex. If there exist $r, q > 0$ such that for all normalized feasible direction $\tv$, $\D h(\xv^*; \tv) \geq 0$, and
\begin{align}\label{eq:second}
0 \leq \D h(\xv^*; \tv)  < r \Longrightarrow \He {h}(\xv^*; \tv) \geq q > 0,
\end{align}
then $\xv^*$ is an isolated local minimum.
\end{thm}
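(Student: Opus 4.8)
The plan is to follow the template of the proof of \Cref{thm:2nds}, exploiting the convexity of $\X$ to reduce everything to a one-dimensional analysis along each normalized feasible direction, and then to fuse a first-order estimate (valid where $\D h(\xv^*;\tv)$ is bounded away from zero) with a second-order estimate (valid where it is small) into a single threshold $t_0$ that does not depend on the direction. Concretely, since $\X$ is convex, any $\xv \in \X$ with $\xv \neq \xv^*$ can be written as $\xv = \xv^* + t\tv$ with $t = \|\xv - \xv^*\| > 0$ and $\tv = (\xv - \xv^*)/\|\xv - \xv^*\| \in \K[f](\X, \xv^*)$ a unit feasible direction. It therefore suffices to produce a single $t_0 > 0$, independent of $\tv$, such that $h(\xv^* + t\tv) > h(\xv^*)$ for every normalized feasible $\tv$ and every $0 < t \le t_0$; this is exactly the statement that $\xv^*$ is an isolated local minimum.

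First I would invoke the \emph{uniform} first- and second-order directional differentiability to extract error functions $\e_1(t), \e_2(t) \downarrow 0$ as $t \to 0^+$, valid simultaneously for all unit directions $\tv$, with
\[
h(\xv^* + t\tv) \ge h(\xv^*) + t\,\D h(\xv^*;\tv) - t\,\e_1(t),
\]
\[
h(\xv^* + t\tv) \ge h(\xv^*) + t\,\D h(\xv^*;\tv) + \tfrac12 t^2 \He h(\xv^*;\tv) - \tfrac12 t^2 \e_2(t).
\]
The crucial point, and the reason the hypothesis insists on \emph{uniform} differentiability, is that $\e_1, \e_2$ can be taken independent of $\tv$; this is what lets the argument bypass the compactness step that \Cref{thm:2nds} needed for its first-order case, where only the second-order remainder was assumed uniform.

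Then I would split on the size of $\D h(\xv^*;\tv)$ at the threshold $r$, which partitions the feasible directions cleanly. If $\D h(\xv^*;\tv) \ge r$, the first estimate gives $h(\xv^* + t\tv) \ge h(\xv^*) + t\bigl(r - \e_1(t)\bigr)$, which is $> h(\xv^*)$ as soon as $\e_1(t) < r$, say for $t < t_1$. If instead $0 \le \D h(\xv^*;\tv) < r$, hypothesis \eqref{eq:second} yields $\He h(\xv^*;\tv) \ge q$, and since $\D h(\xv^*;\tv) \ge 0$ the second estimate gives $h(\xv^* + t\tv) \ge h(\xv^*) + \tfrac12 t^2\bigl(q - \e_2(t)\bigr)$, which is $> h(\xv^*)$ as soon as $\e_2(t) < q$, say for $t < t_2$. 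Setting $t_0 = \min\{t_1, t_2\}$ then handles every normalized feasible direction uniformly, and the reduction in the first paragraph converts this into $h(\xv) > h(\xv^*)$ for all $\xv \in \X$ with $0 < \|\xv - \xv^*\| \le t_0$.

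The main obstacle is precisely the uniformity of the remainders across directions: without it one could arrange a sequence of directions $\tv_k$ and scales $t_k \to 0^+$ along which positivity degrades, defeating isolation of the minimum (the subtlety already flagged in the remarks following \Cref{thm:2nds}, where the need for $\gamma > 0$ rather than $\gamma = 0$ is explained). A secondary but genuine detail is the clean treatment of the boundary case $\D h(\xv^*;\tv) = r$: assigning it to the first-order branch, as above, guarantees each feasible direction falls into exactly one case and that \eqref{eq:second} is never invoked outside its stated range $0 \le \D h(\xv^*;\tv) < r$.
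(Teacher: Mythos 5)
Your proof is correct and follows essentially the same route as the paper's: split the normalized feasible directions at the threshold $r$, dispose of the directions with $\D h(\xv^*;\tv)\geq r$ by a first-order estimate, and use the uniform second-order remainder bound together with $\He h(\xv^*;\tv)\geq q$ on the remaining (near-critical) directions, with convexity of $\X$ supplying the feasibility of $\tv=(\xv-\xv^*)/\|\xv-\xv^*\|$. The only cosmetic difference is that the paper delegates the first case to the argument of \Cref{thm:sufficient}, whereas you handle it directly via the uniform first-order expansion, which is an equally valid (and arguably more self-contained) use of the stated hypotheses.
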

\begin{proof}
If $\D h(\xv^*; \tv) \geq r$, it reduces to the proof of Thm.~\ref{thm:sufficient}. Otherwise, \eqref{eq:second} holds, and
\be
h(\xv^* + \a \tv) = h(\xv^*) + \a \D h(\xv^*; \tv) + \frac{\a^2}{2}\He {h}(\xv^*; \tv)+ o(\a^2; \tv).
\en
Since $h$ is uniformly second-order directional differentiable in any direction $\tv$, there exist $0 < \a_1 < \a_0$ such that for any $0 < \a < \a_1$ and for any $\|\tv\| = 1$, $o(\a^2; \tv)\geq -q\a^2/4$. Therefore, for any $\xv \in \Nc(\xv^*, \a_1)$ not equal to $\xv^*$, we can take $\tv = (\xv - \xv^*)/\|\xv - \xv^*\|$ (which is feasible from convexity of $\X$)
, $\a = \|\xv - \xv^*\|$ and obtain:
\be
h(\xv) = h(\xv^* + \a \tv) \geq h(\xv^*) + \a^2 q/4 > h(\xv^*).
\en
\end{proof}

In the theorem above, we are considering ``approximately'' critical directions, rather than only the second-order derivatives along the critical directions. The following example demonstrates this point, as inspired by \citet[Example 2.1]{BenTalZowe85}:
\begin{eg}
We cannot take $r = 0$ in \eqref{eq:2nd_seeger}. Consider $f((x_1, x_2), y) = (2x_1 + x_1^2 + x_2^2)y + x_1^3$ and $(\xv^*, y^*) = (\zero, 0)$. $\of_\e(x_1, x_2) = \e|2x_1 + x_1^2 + x_2^2| + x_1^3$ and it is uniformly twice directional differentiable. We can evaluate $\D\of_\e((0, 0); (t_1, t_2)) = 2\e|t_1|$ and $$\He \of_\e((0, 0); (t_1, t_2)) = \begin{cases}
2\e (t_1^2 + t_2^2) & t_1 > 0, \\
2\e t_2^2 & t_1 = 0, \\
-2\e (t_1^2 + t_2^2) & t_1 < 0.
\end{cases}$$ The critical directions are $(0, t_2)$ along which $\He \of_\e(\zero, \tv) = 2\e t_2^2 > 0$. However, $$\of_\e((0, 0), (x_1, \sqrt{-2x_1 - x_1^2})) = x_1^3 < 0$$ if $-2\leq x_1 \leq 0$. 
\end{eg}

\subsection{Envelope function}
Our main interest in this work is the envelope function:
\begin{align}
\of(\xv) := \max_{\yv \in \Y} ~ f(\xv, \yv)
\end{align}
where $\Y$ is some compact topological Hausdorff space\footnote{Results in this section can be extended to the more general case where the constraint set $\Y$ depends on $\xv$ (in some semicontinuous manner); see \citet{Seeger88} for an excellent treatment. For our purpose here it suffices to consider a constant $\Y$.}. It is easy to verify:
\begin{itemize}
\item If $f: \X \times \Y \to \R$ is (jointly) continuous, then so is $\of$ (in $\xv$).
\item If also $\partial_\xv f: \X \times \Y \to \R$ is (jointly) continuous, then $\of$ is locally Lipschitz.
\end{itemize}
The envelope function turns out to be directionally differentiable:
\begin{thm}[e.g. \citealt{Danskin66,Demyanov66}]
\label{thm:danskin}
Let $f$ and $\partial_\xv f$ be (jointly) continuous. Then, the envelope function $\of$ is directionally differentiable:
\begin{align}\label{eq:danskin}
\D\of(\xv; \dv) = \max_{\yv \in \Y_0(\xv)} ~ \inner{\partial_{\xv} f(\xv, \yv)}{\dv}, \mbox{ where }~ \Y_0(\xv) := \{\yv\in \Y: \of(\xv) = f(\xv; \yv)\}.
\end{align}
Clearly, $\D\of(\xv; \cdot)$ is Lipschitz continuous.
\end{thm}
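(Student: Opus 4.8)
The plan is to prove the two inequalities $\Ds\of(\xv;\dv) \leq V$ and $V \leq \Di\of(\xv;\dv)$, where $V := \max_{\yv\in\Y_0(\xv)} \inner{\partial_{\xv} f(\xv,\yv)}{\dv}$; since $\Di\of \leq \Ds\of$ always holds, these two inequalities squeeze everything together, forcing $\D\of(\xv;\dv)$ to exist and equal $V$. I first note that $V$ is well defined: $\Y_0(\xv)$ is closed, being the preimage of $\{\of(\xv)\}$ under the continuous map $\yv\mapsto f(\xv,\yv)$ (using the continuity of $\of$ recorded just above the theorem), hence compact as a closed subset of the compact $\Y$; and $\yv\mapsto\inner{\partial_{\xv} f(\xv,\yv)}{\dv}$ is continuous, so the maximum is attained.

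For the lower bound I would fix any $\yv_0\in\Y_0(\xv)$ and combine $\of(\xv+t\dv)\geq f(\xv+t\dv,\yv_0)$ with $\of(\xv)=f(\xv,\yv_0)$ to get, for $t>0$,
\[
\frac{\of(\xv+t\dv)-\of(\xv)}{t} \;\geq\; \frac{f(\xv+t\dv,\yv_0)-f(\xv,\yv_0)}{t}.
\]
Since $\partial_{\xv} f$ is continuous, $s\mapsto f(\xv+s\dv,\yv_0)$ is differentiable with derivative $\inner{\partial_{\xv} f(\xv+s\dv,\yv_0)}{\dv}$, so taking $\liminf_{t\to0^+}$ yields $\Di\of(\xv;\dv)\geq\inner{\partial_{\xv} f(\xv,\yv_0)}{\dv}$. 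Maximizing over $\yv_0\in\Y_0(\xv)$ gives $\Di\of(\xv;\dv)\geq V$.

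The main work is the upper bound. I would select $t_k\downarrow 0$ realizing the limsup defining $\Ds\of(\xv;\dv)$, and for each $k$ pick a maximizer $\yv_k$ with $\of(\xv+t_k\dv)=f(\xv+t_k\dv,\yv_k)$ (existing by compactness of $\Y$ and continuity of $f$). By compactness I pass to a subsequence with $\yv_k\to\bar\yv$. The crucial step is to certify $\bar\yv\in\Y_0(\xv)$: joint continuity gives $f(\xv+t_k\dv,\yv_k)\to f(\xv,\bar\yv)$, while continuity of $\of$ gives $\of(\xv+t_k\dv)\to\of(\xv)$, whence $f(\xv,\bar\yv)=\of(\xv)$. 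Then, using $\of(\xv)\geq f(\xv,\yv_k)$ and the mean value theorem applied to $s\mapsto f(\xv+s\dv,\yv_k)$,
\[
\frac{\of(\xv+t_k\dv)-\of(\xv)}{t_k} \;\leq\; \frac{f(\xv+t_k\dv,\yv_k)-f(\xv,\yv_k)}{t_k} \;=\; \inner{\partial_{\xv} f(\xv+\theta_k t_k\dv,\yv_k)}{\dv}
\]
for some $\theta_k\in[0,1]$. Letting $k\to\infty$ and invoking joint continuity of $\partial_{\xv} f$ (with $\xv+\theta_k t_k\dv\to\xv$ and $\yv_k\to\bar\yv$), the right-hand side tends to $\inner{\partial_{\xv} f(\xv,\bar\yv)}{\dv}\leq V$ since $\bar\yv\in\Y_0(\xv)$. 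Hence $\Ds\of(\xv;\dv)\leq V$, completing the squeeze.

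The hard part is exactly this subsequential compactness argument: one must extract maximizers $\yv_k$ along the difference quotients and verify that their limit lands back in the active set $\Y_0(\xv)$; everything else is the mean value theorem together with the continuity facts already established for $f$, $\partial_{\xv} f$, and $\of$. Finally, for the Lipschitz claim, $\D\of(\xv;\cdot)=\max_{\yv\in\Y_0(\xv)}\inner{\partial_{\xv} f(\xv,\yv)}{\cdot}$ is a pointwise maximum of linear maps whose slopes $\partial_{\xv} f(\xv,\yv)$ range over the compact (hence bounded) image of $\Y_0(\xv)$, so it is Lipschitz in $\dv$ with constant $\max_{\yv\in\Y_0(\xv)}\|\partial_{\xv} f(\xv,\yv)\|<\infty$.
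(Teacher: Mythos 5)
Your proof is correct. The paper itself gives no proof of this theorem---it is stated as a classical result cited to Danskin and Dem'yanov---so there is no in-paper argument to compare against; what you have written is the standard ``sandwich'' proof, and all the steps check out: the lower bound by fixing a maximizer $\yv_0\in\Y_0(\xv)$ and differentiating $s\mapsto f(\xv+s\dv,\yv_0)$, the upper bound by extracting maximizers $\yv_k$ of $f(\xv+t_k\dv,\cdot)$ along a sequence realizing the limsup and showing their limit point lands in $\Y_0(\xv)$, and the mean value theorem plus joint continuity of $\partial_\xv f$ to pass to the limit. The only caveat worth flagging is that your compactness extraction $\yv_k\to\bar\yv$ uses \emph{sequential} compactness of $\Y$; the appendix states the result for a general compact Hausdorff space, where one would have to replace sequences by nets (or subnets), but in every application in the paper $\Y$ is a compact subset of $\R^m$, so this is cosmetic rather than a gap. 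Your observation that the Lipschitz continuity of $\D\of(\xv;\cdot)$ follows from its being a pointwise maximum of linear functionals with slopes in the compact set $\partial_\xv f(\xv,\Y_0(\xv))$ matches what the paper records in its Theorem~\ref{thm:prop_dd}.
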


The following theorem explains the necessity of the function $\Af_\h$ in \Cref{thm:2nds}:
\begin{thm}[\citealt{Seeger88, Demyanov70}]
\label{thm:env_2nds}
Let $f$ and $\partial_\xv f$ be continuous. Then,
\begin{align}
\D\of(\xv;\dv) &= \max_{\yv \in \Y_0(\xv)} \inner{\partial_\xv f(\xv, \yv)}{\dv},  ~~ \Y_0(\xv) := \{\yv\in \Y: \of(\xv) = f(\xv, \yv)\} \\
\Hi\of(\xv; \dv, \gv) &\geq \max_{\yv \in \Y_1(\xv; \dv)} \Hi f(\xv, \yv; \dv, \gv), ~~ \Y_1(\xv; \dv) := \{\yv \in \Y_0(\xv): \D\of(\xv;\dv) = \inner{\partial_\xv f(\xv, \yv)}{\dv}\}.
\end{align}
If $\partial_{\xv\xv}^2 f$ is also (jointly) continuous, then 
\begin{align}
\Af_{\of}(\xv; \dv) := \max_{\yv \in \Y_1(\xv;\dv)} ~ \inner{\partial_{\xv\xv}^2 f(\xv,\yv)\dv}{\dv}
\end{align}
satisfies the uniformity condition in \Cref{thm:2nds}.
\end{thm}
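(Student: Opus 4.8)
The statement packages three claims, and the plan is to dispatch them in increasing order of difficulty. The first-order formula $\D\of(\xv;\dv)=\max_{\yv\in\Y_0(\xv)}\inner{\partial_{\xv} f(\xv,\yv)}{\dv}$ is precisely Danskin's theorem (\Cref{thm:danskin}), so I would simply invoke it. It also records the structural facts I reuse throughout: $\Y_0(\xv)$ is a nonempty compact set (a closed subset of the compact $\Y$, by continuity of $f$), the maximum defining $\D\of(\xv;\dv)$ is attained, so $\Y_1(\xv;\dv)$ is a nonempty compact subset of $\Y_0(\xv)$, and consequently the maxima appearing in the lower bound for $\Hi\of$ and in $\Af_{\of}$ are attained.

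For the second claim I would fix an arbitrary $\bar{\yv}\in\Y_1(\xv;\dv)$ and exploit the envelope inequality $\of(\xv')\geq f(\xv',\bar{\yv})$, valid for every $\xv'$, together with the two equalities that membership guarantees: $\of(\xv)=f(\xv,\bar{\yv})$ (from $\bar{\yv}\in\Y_0(\xv)$) and $\D\of(\xv;\dv)=\inner{\partial_{\xv} f(\xv,\bar{\yv})}{\dv}=\D f(\xv,\bar{\yv};\dv)$ (from $\bar{\yv}\in\Y_1(\xv;\dv)$). Substituting these into the numerator defining $\Hi\of(\xv;\dv,\gv)$ bounds it below, term by term, by the corresponding numerator for the section $f(\cdot,\bar{\yv})$; since $\liminf_{t\to0^+}$ preserves $\geq$, this gives $\Hi\of(\xv;\dv,\gv)\geq\Hi f(\xv,\bar{\yv};\dv,\gv)$. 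As $\bar{\yv}$ ranges over $\Y_1(\xv;\dv)$, taking the maximum delivers the claimed inequality.

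For the third claim I would choose the maximizer $\bar{\yv}\in\Y_1(\xv;\dv)$ realizing $\Af_{\of}(\xv;\dv)$ and again start from $\of(\xv+t\dv)\geq f(\xv+t\dv,\bar{\yv})$. Since $\partial_{\xv\xv}^2 f$ is jointly continuous, the section $f(\cdot,\bar{\yv})$ is $\Cc^2$, so Taylor's formula with integral remainder gives
\[
f(\xv+t\dv,\bar{\yv})-f(\xv,\bar{\yv})-t\inner{\partial_{\xv} f(\xv,\bar{\yv})}{\dv}=\int_0^t (t-s)\,\inner{\partial_{\xv\xv}^2 f(\xv+s\dv,\bar{\yv})\,\dv}{\dv}\,ds.
\]
Subtracting $\tfrac{t^2}{2}\inner{\partial_{\xv\xv}^2 f(\xv,\bar{\yv})\dv}{\dv}$ (and using $\int_0^t(t-s)\,ds=t^2/2$), dividing by $t^2/2$, and inserting $\of(\xv)=f(\xv,\bar{\yv})$ and $\D\of(\xv;\dv)=\inner{\partial_{\xv} f(\xv,\bar{\yv})}{\dv}$ then yields
\[
\frac{\of(\xv+t\dv)-\of(\xv)-t\,\D\of(\xv;\dv)}{t^2/2}\;\geq\;\Af_{\of}(\xv;\dv)-\|\dv\|^2\,\omega\big(t\|\dv\|\big),
\]
where $\omega$ is the modulus of uniform continuity of $\partial_{\xv\xv}^2 f$ on the fixed compact set $\overline{B}(\xv,\rho)\times\Y$ (valid once $t\|\dv\|\leq\rho$).

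The only genuinely delicate point, and the one I expect to be the main obstacle, is the uniformity: the maximizer $\bar{\yv}$ may jump around arbitrarily as $\dv$ ranges over the admissible directions ($\|\dv\|=1$, $\D\of(\xv;\dv)\in[0,\gamma]$), so a priori the error need not shrink uniformly. The resolution is that the remainder bound uses nothing about $\bar{\yv}$ beyond $(\xv+s\dv,\bar{\yv})\in\overline{B}(\xv,\rho)\times\Y$: joint continuity of $\partial_{\xv\xv}^2 f$ makes it uniformly continuous on this single compact set, so one modulus $\omega$ — independent of $\bar{\yv}$ and of the direction — controls every section simultaneously, and $\omega(t\|\dv\|)\to0$ uniformly over bounded $\dv$ as $t\to0^+$. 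Thus the difference quotient exceeds $\Af_{\of}(\xv;\dv)$ up to an error that vanishes uniformly in $\dv$, which is precisely the uniform lower bound that \Cref{thm:2nds} invokes through \eqref{eq:Ah} (and which, combined with the positivity $\Af_{\of}>0$ assumed there, drives its compactness argument). I would close by checking the two routine loose ends flagged above: attainment of all the maxima via compactness of $\Y_0(\xv)$ and $\Y_1(\xv;\dv)$, and the legitimacy of passing $\liminf$ through the term-by-term inequality in the second claim.
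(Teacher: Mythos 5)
Your proposal is correct and follows essentially the same route as the paper: the paper also dismisses the first two claims as immediate (proving "only the last claim"), and for the uniformity it likewise lower-bounds the difference quotient by the Taylor expansion of $f(\cdot,\yv)$ at points of $\Y_1(\xv;\dv)$ and invokes uniform continuity of $\partial_{\xv\xv}^2 f$ on a compact set to get convergence to $\Af_{\of}(\xv;\dv)$ uniformly over bounded $\dv$. The only cosmetic difference is that you use the integral form of the remainder where the paper uses the mean-value form $\inner{\partial_{\xv\xv}^2 f(\xv+t\theta(\yv,\dv)\dv,\yv)\dv}{\dv}$.
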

\begin{proof}
We need only prove the last claim. Indeed 
\begin{align}
\frac{\of(\xv+t\dv) - \of(\xv) - t\D\of(\xv;\dv)}{t^2/2} &\geq \max_{\yv \in \Y_1(\xv;\dv)}\frac{f(\xv+t\dv, \yv) - f(\xv, \yv) - t \inner{\partial_{\xv}f(\xv, \yv)}{\dv}}{t^2/2} \tr
&=\max_{\yv \in \Y_1(\xv;\dv)}  \inner{\partial_{\xv\xv}^2f(\xv + t\theta(\yv,\dv) \cdot \dv, \yv) \dv}{\dv}.
\end{align}
Since $\partial_{\xv\xv}^2 f$ is continuous (hence uniformly continuous over compact sets), the right-hand side converges to $\Af_{\of}(\xv;\dv)$ uniformly on bounded sets in $\dv$ as $t$ goes to 0.
\end{proof}
\noindent 
When $\Y$ has limit points, proving $\Af_{\of}(\xv; \dv) = \He\of(\xv; \dv)$ may be difficult (even with additional regularity conditions). Nevertheless, we can still apply the sufficient condition in \Cref{thm:2nds}.

\citet{Seeger88} pointed out the following equivalence:
\begin{align}
\label{eq:1st_equiv}
\D\of(\xv; \dv) = \max_{\yv\in\Y_0(\xv)} \D f(\xv, \yv; \dv) = \max_{\yv\in \Y_0(\xv)} \sup_{\vv\in \K[d](\Y, \yv)} \D f(\xv, \yv; (\dv, \vv)),
\end{align}
where the first two directional derivatives are taken wrt $\xv$ only while the last directional derivative is joint wrt $(\xv, \yv)$. Indeed, when $f$ is (jointly) continuously differentiable,  $\D f(\xv, \yv; (\dv, \vv)) = \inner{\partial_\xv f(\xv,\yv)}{\dv} + \inner{\partial_\yv f(\xv,\yv)}{\vv}$. However, since $\yv \in \Y_0(\xv)$, we know from the necessary condition in \Cref{thm:nec_1st} that $\inner{\partial_\yv f(\xv,\yv)}{\vv} \leq 0$ for all $\vv \in \K[d](\Y, \yv)$. Surprisingly, the second-order counterparts are no longer equivalent:
\begin{thm}[\citealt{Seeger88}]
\label{thm:env_2nds_q}
Let $f: \X \times \Y \to \R$ be continuously differentiable. Then,
\begin{align}
\Hi\of(\xv; \dv, \gv) &\geq \max_{\yv \in \Y_0(\xv)} \sup_{\vv\in \Vc(\xv, \yv; \dv)} \sup_{\wv\in \K[d](\Y, \yv; \vv)} \Hi f(\xv, \yv; (\dv,\vv), (\gv,\wv)), 
\end{align}
where $\Y_0(\xv) =\{ \yv\in \Y: \of(\xv) = f(\xv, \yv)\}$ and $\Vc(\xv, \yv; \dv) := \{\vv\in\K[d](\Y, \yv): \D\of(\xv;\dv) = \D f(\xv, \yv; (\dv, \vv))\}$. 

If \blue{the second-order derivative of $f$} is also (jointly) continuous, then 
\begin{align}
\Af_{\of}(\xv; \dv) &:= \max_{\yv \in \Y_0(\xv)} \sup_{\vv\in\Vc(\xv,\yv;\dv)} \sup_{\wv\in \K[d](\Y, \yv; \vv)}~ \inner{\begin{bmatrix} \partial_{\xv\xv}^2 f(\xv,\yv) & \partial_{\xv\yv}^2 f(\xv,\yv) \\ \partial_{\yv\xv}^2 f(\xv,\yv) & \partial_{\yv\yv}^2 f(\xv,\yv) \end{bmatrix}{\dv \choose \vv}}{{\dv \choose \vv}} +\tr
&+ \inner{\partial_{\yv} f(\xv,\yv)}{\wv}
\end{align}
satisfies the uniformity condition in \Cref{thm:2nds}, provided that the directions $\dv, \vv$ and $\wv$ are bounded.
\end{thm}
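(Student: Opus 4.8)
The plan is to prove the displayed inequality by the standard device for lower-bounding an envelope's second-order directional derivative: for each admissible triple $(\yv,\vv,\wv)$ appearing in the right-hand side, I would exhibit a \emph{feasible parabolic competitor} $\yv(t)\in\Y$ inside the maximization defining $\of$, and then push this competitor through the first-order identities supplied by Danskin's theorem (\Cref{thm:danskin}) and the chain rule (\Cref{thm:chain}). Fix $\yv\in\Y_0(\xv)$, $\vv\in\Vc(\xv,\yv;\dv)$ and $\wv\in\K[d](\Y,\yv;\vv)$; by the definition of the second-order derivable cone, for any sequence $t_k\downarrow0$ there are a subsequence $t_{k_i}\downarrow0$ and vectors $\wv_i\to\wv$ with $\yv_i:=\yv+t_{k_i}\vv+t_{k_i}^2\wv_i/2\in\Y$. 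These are the competitors.

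First I would record the two equalities that let me replace $\of$-data by $f$-data at the base point: since $\yv\in\Y_0(\xv)$ we have $\of(\xv)=f(\xv,\yv)$, and since $\vv\in\Vc(\xv,\yv;\dv)$ we have $\D\of(\xv;\dv)=\D f(\xv,\yv;(\dv,\vv))=\inner{\partial_\xv f(\xv,\yv)}{\dv}+\inner{\partial_\yv f(\xv,\yv)}{\vv}$. Writing $\xv(t)=\xv+t\dv+t^2\gv/2$ and using $\yv_i\in\Y$ gives $\of(\xv(t_{k_i}))\ge f(\xv(t_{k_i}),\yv_i)$, so subtracting $\of(\xv)$ and $t_{k_i}\D\of(\xv;\dv)$ and dividing by $t_{k_i}^2/2$ bounds the $\of$-quotient below by the corresponding $f$-quotient evaluated along $\yv_i$. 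The only nuisance is that $\yv_i$ carries the approximate second-order vector $\wv_i$ rather than $\wv$; since $f\in\Cc^1$, a single mean-value step in the $\yv$ slot shows the gap between the quotient at $\wv_i$ and at $\wv$ equals $\inner{\partial_\yv f(\tilde\zv_i)}{\wv_i-\wv}$ for an intermediate point $\tilde\zv_i\to(\xv,\yv)$, hence tends to $0$. Choosing $t_k$ to realize $\Hi\of(\xv;\dv,\gv)$ and then chaining the $\liminf$'s along the extracted subsequence (whose $\of$-quotients still converge to that value) yields $\Hi\of(\xv;\dv,\gv)\ge\Hi f(\xv,\yv;(\dv,\vv),(\gv,\wv))$; taking the supremum over $\yv,\vv,\wv$ gives the first assertion.

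For the second assertion I would specialize to $\gv=\zero$ and exploit joint continuity of the second derivatives: by \Cref{thm:chain} together with the twice-differentiable formula recorded just after it, $\Hi f(\xv,\yv;(\dv,\vv),(\zero,\wv))$ equals exactly the quadratic form plus $\inner{\partial_\yv f(\xv,\yv)}{\wv}$ that defines $\Af_{\of}(\xv;\dv)$, so $\Af_{\of}$ is precisely the lower bound produced above. The remaining work is to upgrade this pointwise bound to the \emph{uniformity} demanded by \Cref{thm:2nds}: I would rerun the competitor argument with an explicit second-order Taylor expansion of $f$ with remainder and invoke uniform continuity of $\partial^2 f$ over a compact neighborhood to make the remainder $o(t^2)$ uniformly in $\dv$ on bounded sets, exactly the mechanism used in the proof of \Cref{thm:env_2nds} but now for the joint $(\xv,\yv)$-expansion. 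I expect this uniform step to be the main obstacle: because the inner suprema over $\Vc(\xv,\yv;\dv)$ and $\K[d](\Y,\yv;\vv)$ need not be attained, the paths $\yv(t)$ must be chosen so that feasibility and the Taylor remainder are controlled simultaneously while $\yv$ itself drifts, and the boundedness hypothesis on $\dv,\vv,\wv$ is precisely what confines everything to a compact set on which uniform continuity of $\partial^2 f$ can be applied.
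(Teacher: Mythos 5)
Your proposal is correct and follows essentially the same route as the paper's proof: fix an admissible triple $(\yv,\vv,\wv)$, build the feasible parabolic competitor $\yv+t_{k_i}\vv+t_{k_i}^2\wv_{k_i}/2\in\Y$ from the second-order derivable cone, lower-bound the $\of$-quotient by the $f$-quotient via $\of(\xv)=f(\xv,\yv)$ and $\D\of(\xv;\dv)=\D f(\xv,\yv;(\dv,\vv))$, and absorb the $\wv_{k_i}$-versus-$\wv$ discrepancy into an $o(1)$ term using continuity of $\partial_\yv f$. Your treatment of the uniformity claim (uniform continuity of $\partial^2 f$ on a compact set determined by the boundedness of $\dv,\vv,\wv$) matches the mechanism the paper invokes, and in fact spells out the subtlety more carefully than the paper's one-line remark.
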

\begin{proof}
We assume $\K[d](\Y, \yv; \vv)$ is not empty for otherwise the theorem is vacuous. For any $\wv \in \K[d](\Y, \yv; \vv)$ we know for any sequence $t_k \downarrow 0$ there exist a subsequence $t_{k_i} \downarrow 0$ and $\wv_{k_i} \to \wv$ such that $\yv + t_{k_i} \vv + t_{k_i}^2 \wv_k\in \Y$. Thus, fix any $\yv \in \Y_0(\xv)$, $\vv\in\Vc(\xv,\yv;\dv)$ and $\wv \in \K[d](\Y, \yv; \vv)$, we know (after passing to a subsequence if necessary)
\begin{align}
&\frac{\of(\xv+t_k\dv +t_k^2\gv/2) - \of(\xv) - t_k\D\of(\xv;\dv)}{t_k^2/2} \\
&\qquad \geq \frac{f(\xv+t_k\dv +t_k^2\gv/2, \yv+t_k\vv+t_k^2\wv_k/2) - f(\xv, \yv) - t_k\D f(\xv, \yv; (\dv,\vv))}{t_k^2/2} \\
&\qquad \geq \frac{f(\xv+t_k\dv +t_k^2\gv/2, \yv+t_k\vv+t_k^2\wv/2) - f(\xv, \yv) - t_k\D f(\xv, \yv; (\dv,\vv))}{t_k^2/2} +\\
& \qquad\qquad + \frac{f(\xv+t_k\dv +t_k^2\gv/2, \yv+t_k\vv+t_k^2\wv_k/2) - f(\xv+t_k\dv +t_k^2\gv/2, \yv+t_k\vv+t_k^2\wv/2)}{t_k^2/2}\\
&\qquad = \Hi f(\xv, \yv; (\dv, \vv), (\gv, \wv)) + o(t_k),
\end{align}
where the small order term $o(t_k)$ is independent of $\dv$, $\vv$ and $\wv$ if they are bounded.
\end{proof}
\noindent By setting $\yv \in \Y_1(\xv; \dv), \vv= \wv = \zero$, we see that the lower bounds in \Cref{thm:env_2nds_q} are always shaper than the ones in \Cref{thm:env_2nds}. However, note that \Cref{thm:env_2nds} only requires $\Y$ to be any compact topological space while \Cref{thm:env_2nds_q} only applies when $\Y$ is a compact set of some finite dimensional vector space.

\begin{eg}[\citealt{Seeger88}]
Let $\Y = \R^m$ and $f(\xv, \yv) = {\xv \choose \yv}^\top \left\{\frac12\begin{bmatrix} \Av & \Bv \\ \Bv^\top & \Cv\end{bmatrix} {\xv \choose \yv} + {\pv \choose \qv} \right\}$. Assume $\Cv \prec \zero$. Then, $\Y_0(\xv)$ is a singleton, $\Y_1 = \R^m$, and WLOG $\wv = \zero$. Therefore, 
\begin{align}
\Af_{\of}(\xv; \dv) = \dv^\top (\Av - \Bv \Cv^{-1} \Bv^\top) \dv,
\end{align}
whence $(\xv,\yv) = \begin{bmatrix} \Av & \Bv \\ \Bv^\top & \Cv\end{bmatrix}^{-1} {\pv \choose \qv}$ is a (unique) global saddle point if $\Cv \prec \zero$ and $\Av - \Bv\Cv^{-1} \Bv^\top \succ \zero$.

However, if we apply \Cref{thm:env_2nds} we can only conclude that 
\begin{align}
\Af_{\of}(\xv; \dv) = \dv^\top A \dv,
\end{align}
which is clearly a looser lower bound (recall that $C \prec \zero$).
\end{eg}

In principle, one should use the lower second-order directional derivative)~$\He_+(\xv^*; \dv, \gv)\geq 0$ for a stronger necessary condition.  
\blue{However, to our knowledge, we do not have an appropriate formula for it. We therefore look into \emph{upper} second-order derivatives instead for which \cite{kawasaki1988upper} showed a result.} From this result, we are able to introduce the second-order necessary conditions for $\xv^*$ being a local minimizer of $\of(\xv)$:
\begin{thm}[\citealt{kawasaki1988upper}]\label{thm:upper_semi_Kaw}
Let $f$ be twice (jointly) continuously differentiable. Then,
\begin{align}
\label{eq:Kaw}
\Hs\of(\xv; \dv, \gv) = \max_{\yv \in \Y_1(\xv, \dv)} ~ \inner{\partial_\xv f(\xv, \yv)}{\gv} + \inner{\partial_{\xv\xv}^2 f(\xv,\yv) \dv}{\dv} + \limsup_{\zv \to \yv} \tfrac12 v_-^2(\zv; \dv) u^\dag(\zv),
\end{align}
where $(t)_- = \min\{t, 0\}$, $t^\dag = \begin{cases} 1/t, & t \ne 0\\ 0, & t = 0\end{cases}$, and 
\begin{align}
u(\yv) &:= \of(\xv) - f(\xv, \yv) \geq 0, \,
v(\yv; \dv) := \D\of(\xv; \dv) - \D f(\xv, \yv; \dv).
\end{align}
\end{thm}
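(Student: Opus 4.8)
The plan is to compute $\Hs\of(\xv;\dv,\gv)$ directly from its definition by interchanging the maximization defining $\of$ with the second-order difference quotient, and then to match the resulting joint optimization over $\yv$ and the step $t$ against the right-hand side of \eqref{eq:Kaw}. The starting point is a single master expansion: since $f\in\Cc^2$, Taylor-expanding $f(\cdot,\yv)$ in its first argument (uniformly over the compact set $\Y$) and using $\D\of(\xv;\dv)=\max_{\yv\in\Y_0(\xv)}\inner{\partial_\xv f(\xv,\yv)}{\dv}$ from \Cref{thm:danskin}, one obtains for every $\yv\in\Y$ and small $t>0$
\begin{align}
\label{eq:master_plan}
\frac{f(\xv+t\dv+\tfrac{t^2}{2}\gv,\yv)-\of(\xv)-t\,\D\of(\xv;\dv)}{t^2/2}
= -\frac{2u(\yv)}{t^2}-\frac{2v(\yv;\dv)}{t}
+ \inner{\partial_\xv f(\xv,\yv)}{\gv}+\inner{\partial_{\xv\xv}^2 f(\xv,\yv)\dv}{\dv}+o(1),
\end{align}
with $u(\yv)=\of(\xv)-f(\xv,\yv)\ge 0$, $v(\yv;\dv)=\D\of(\xv;\dv)-\inner{\partial_\xv f(\xv,\yv)}{\dv}$, and the $o(1)$ uniform in $\yv$. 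Taking the maximum over $\yv$ turns the left side into the difference quotient for $\of$, so everything reduces to analysing $\max_\yv\{-\tfrac{2u}{t^2}-\tfrac{2v}{t}+\cdots\}$ as $t\to0^+$. The elementary fact driving the whole argument is that, for $u\ge0$ and any real $v$,
\begin{align}
\label{eq:square_plan}
\sup_{t>0}\Big(-\frac{2u(\yv)}{t^2}-\frac{2v(\yv;\dv)}{t}\Big)=\tfrac12 v_-^2(\yv;\dv)\,u^\dag(\yv),
\end{align}
with the supremum attained at $t=-2u/v$ when $v<0,u>0$ and the left side being $\le 0$ otherwise, consistently with the convention $u^\dag=0$ when $u=0$.

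For the upper bound $\Hs\of(\xv;\dv,\gv)\le$ RHS, I would take a sequence $t_k\downarrow 0$ realizing the $\limsup$, let $\yv_k\in\Y$ attain $\of(\xv+t_k\dv+\tfrac{t_k^2}{2}\gv)$ (possible by compactness and continuity), and pass to a subsequence $\yv_k\to\bar\yv$. A first-order argument shows $\bar\yv\in\Y_1(\xv;\dv)$: if $\bar\yv\notin\Y_0(\xv)$ then $u(\yv_k)\ge c>0$ drives the quotient to $-\infty$, while if $\bar\yv\in\Y_0(\xv)\setminus\Y_1(\xv;\dv)$ then $v(\yv_k;\dv)\ge c>0$ drives $-2v/t_k\to-\infty$; both contradict that $\yv_k$ beats any fixed test point of $\Y_1$. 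Given $\bar\yv\in\Y_1$, the identity \eqref{eq:master_plan} at $\yv_k$ together with \eqref{eq:square_plan} bounds the quotient by $\inner{\partial_\xv f(\xv,\yv_k)}{\gv}+\inner{\partial_{\xv\xv}^2 f(\xv,\yv_k)\dv}{\dv}+\tfrac12 v_-^2(\yv_k;\dv)u^\dag(\yv_k)+o(1)$; letting $k\to\infty$, continuity of the Hessian terms at $\bar\yv$ and $\limsup_{\zv\to\bar\yv}\tfrac12 v_-^2(\zv;\dv)u^\dag(\zv)$ dominating the residual recover exactly the $\bar\yv$-summand of \eqref{eq:Kaw}, hence the bound $\le\max_{\yv\in\Y_1}$.

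For the lower bound I would fix any $\yv\in\Y_1(\xv;\dv)$ and a sequence $\zv_j\to\yv$ approaching the inner $\limsup$ in \eqref{eq:Kaw}. Since $\of(\xv+t\dv+\tfrac{t^2}{2}\gv)\ge f(\xv+t\dv+\tfrac{t^2}{2}\gv,\zv_j)$ for every $t$, the right side of \eqref{eq:master_plan} with $\yv=\zv_j$ is a genuine lower bound for the quotient; choosing the optimal step $t_j=-2u(\zv_j)/v(\zv_j;\dv)$ from \eqref{eq:square_plan} (which tends to $0$ precisely in the regime where $\tfrac12 v_-^2 u^\dag$ stays bounded away from $0$, since then $|v|\gtrsim\sqrt{u}$) makes the residual equal $\tfrac12 v_-^2(\zv_j;\dv)u^\dag(\zv_j)$. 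Passing $j\to\infty$ and invoking continuity of the Hessian terms at $\yv$ shows $\Hs\of(\xv;\dv,\gv)$ is at least the $\yv$-summand, and maximizing over $\yv\in\Y_1$ yields the reverse inequality, hence equality.

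The main obstacle I expect is the careful handling of the coupled limits in $t$ and $\zv$ together with the degenerate regimes of the convention $u^\dag$: one must verify that the optimal step $t_j\to0$ exactly when the limsup term is active, guarantee uniformity of the $o(1)$ remainder in \eqref{eq:master_plan} over the moving points $\yv_k,\zv_j$, and rule out the spurious regime $v<0,u=0$ (which cannot occur on $\Y_0$). Establishing that $\bar\yv$ lands in $\Y_1$ rather than merely $\Y_0$, and that the inner $\limsup$ over $\zv$ is genuinely approached in the lower-bound construction, are the delicate points; the algebra in \eqref{eq:master_plan}--\eqref{eq:square_plan} is otherwise routine.
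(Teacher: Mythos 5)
Your proposal is correct and is essentially the paper's own proof: the same $u$--$v$ decomposition of the second-order difference quotient, the same choice of maximizers of the perturbed envelope together with a limit-point argument forcing them into $\Y_1(\xv;\dv)$ for the upper bound, and the same optimal step $t=-2u(\zv)/v(\zv;\dv)$ (your closed-form identity for $\sup_{t>0}(-2u/t^2-2v/t)$) driving both inequalities. The one caveat you flag --- that this identity fails when $u=0$ and $v<0$ --- is resolved exactly as in the paper, by noting via Danskin's theorem that $u(\zv)=0$ forces $v(\zv;\dv)\ge 0$.
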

\begin{proof}
We give a direct (and arguably simpler) proof of this result. Denote 
\begin{align}
\Delta(t) := \frac{\of(\xv+t \dv+ t^2\gv/2) - \of(\xv) - t \D\of(\xv; \dv)}{t^2/2}.
\end{align}
Using the definitions of $u$ and $v$ we have 
\begin{align}
\Delta(t) = \frac{\of(\xv+t\dv+t^2\gv) - f(\xv, \zv) - t \D f(\xv, \zv; \dv) - u(\zv) - t v(\zv; \dv)}{t^2/2},
\end{align}
which holds for any $\zv \in \Y$. Let us first choose $\zv = \zv_t \in \Y_0(\xv+t\dv+t^2\gv)$:
\begin{align}
\Delta(t) &= \frac{f(\xv+t\dv+t^2\gv, \zv_t) - f(\xv, \zv_t) - t \D f(\xv, \zv_t; \dv) }{t^2/2} - \frac{u(\zv_t) + t v(\zv_t; \dv)}{t^2/2}. 
\end{align}
Let $\yv \in \Y_0(\xv)$ be a limit point of $\zv_t$. Suppose $\yv \in \Y_0(\xv) \setminus \Y_1(\xv;\dv)$. Then, for small $t$ we have (in the corresponding subsequence) $v(\zv_t; \dv) \approx v(\yv; \dv) > 0$ hence $\liminf_t \Delta(t) = \Hi \of(\xv; \dv, \gv) = -\infty$, contradicting \Cref{thm:env_2nds}.
Thus, $\yv \in \Y_1(\xv; \dv)$. Optimizing $t$ for the second term we obtain
\begin{align}
\Delta(t) \leq \frac{f(\xv+t\dv+t^2\gv, \zv_t) - f(\xv, \zv_t) - t \D f(\xv, \zv_t; \dv) }{t^2/2} + \tfrac12 v_-^2(\zv_t; \dv) u^\dag(\zv_t), 
\end{align}
where we used the fact that if $u(\zv_t) = 0$ then $v(\zv_t; \dv) \geq 0$ (see \Cref{thm:danskin}). 
Taking limits on both sides proves the $\leq$ part in \eqref{eq:Kaw}.

For the converse, let $\yv \in \Y_1(\xv;\dv)$ and $\zv_k \to \yv$ attain the maximum and limsup in \eqref{eq:Kaw}, respectively. We need only consider $\lim\limits_{\zv_k \to \yv} \tfrac12 v_-^2(\zv_k; \dv) u^\dag(\zv_k) > 0$, for otherwise the $\geq$ part in \eqref{eq:Kaw} would already follow from \Cref{thm:env_2nds}. We obviously have $u(\zv_k) > 0$ and $v(\zv_k ;\dv) < 0$ for sufficiently large $t$. Since $u(\zv_k) \to u(\yv) = 0$ we also have $v(\zv_k; \dv)\to v(\yv; \dv) = 0$. We claim that (after passing to a subsequence if necessary) $\lim_k u(\zv_k) / v(\zv_k; \dv) = 0$, for otherwise $\lim v^2(\zv_k; \dv) / u(\zv_k) = 0$, contradicting to its strict positivity. Now, setting $t_k = -2u(\zv_k) / v(\zv_k; \dv)$ we have (for large $k$):
\begin{align}
\Delta(t_k) &\geq \frac{f(\xv+t_k\dv+t_k^2\gv, \zv_k) - f(\xv, \zv_k) - t_k \D f(\xv, \zv_k; \dv) - u(\zv_k) - t_k v(\zv_k; \dv)}{t_k^2/2}\\
&= \frac{f(\xv+t_k\dv+t_k^2\gv, \zv_k) - f(\xv, \zv_k) - t_k \D f(\xv, \zv_k; \dv) }{t_k^2/2} + \tfrac12 v_-^2(\zv_k; \dv) u^\dag(\zv_k).
\end{align}
Taking limits on both sides we obtain the $\geq$ part in \eqref{eq:Kaw}.
\end{proof}
\noindent For later convenience, we remind that 
\begin{align}
\Y_0(\xv) = \{\yv: u(\yv) = 0\}, \, \Y_1(\xv; \dv) = \{\yv: u(\yv) = v(\yv; \dv) = 0\}.
\end{align}
and denote $\bar{E}(\yv;\tv) = \limsup_{\zv \to \yv} \tfrac12 v_-^2(\zv; \dv) u^\dag(\zv)$. 

With Carath\'{e}dory's theorem for convex hulls, one can obtain from \eqref{eq:Kaw} the following necessary condition for envelope functions:

\begin{thm}[\textbf{\cite{kawasaki1990second}}]\label{thm:kawasaki}
Assume $f\in \Cc^2$ and $\X = \R^n$. If $\xv^*$ is a local minimum of $\of(\xv)$, then for each  $\dv\in \R^n$ satisfying $\D \of(\xv^*; \dv) = 0$, there exist at most $n + 1$ points $\yv_1, \dots, \yv_{n+1}\in \Y_1(\xv^*;\dv)$ and $\l_1, \dots, \l_n \geq 0$ not all zero, such that:
\be\label{eq:sec_order}
\sum_{i=1}^{a}\l_i \n_\xv f(\xv^*, \yv_i) = {\bf 0}, \, \sum_{i=1}^{a}\l_i \left(\dv^\top \n_{\xv \xv}^2 f(\xv^*, \yv_i)\dv + \bar{E}(\yv_i; \dv) \right) \geq 0.
\en
\end{thm}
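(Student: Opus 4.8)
The plan is to combine the elementary second-order necessary condition for a nonsmooth local minimizer with the explicit formula for the upper second-order directional derivative $\Hs\of$ from \Cref{thm:upper_semi_Kaw}, and then convert the resulting ``for all $\gv$'' inequality into Lagrange-type multipliers via a separation/Carath\'eodory argument. First I would record the basic necessary condition: since $\X=\R^n$, every $\gv\in\R^n$ parametrizes an admissible parabolic arc $\xv^*+t\dv+t^2\gv/2$, and because $\xv^*$ is a local minimizer of $\of$ with $\D\of(\xv^*;\dv)=0$, the numerator in the definition of $\Hi\of(\xv^*;\dv,\gv)$ is nonnegative for small $t>0$; hence $\Hs\of(\xv^*;\dv,\gv)\ge\Hi\of(\xv^*;\dv,\gv)\ge 0$ for every $\gv$.

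Next I would substitute the formula of \Cref{thm:upper_semi_Kaw}. Writing $a(\yv):=\n_\xv f(\xv^*,\yv)$ and $c(\yv):=\dv^\top\n_{\xv\xv}^2 f(\xv^*,\yv)\dv+\bar{E}(\yv;\dv)$, the inequality $\Hs\of(\xv^*;\dv,\gv)\ge 0$ becomes
\begin{align}
\max_{\yv\in\Y_1(\xv^*;\dv)}\big[\langle a(\yv),\gv\rangle+c(\yv)\big]\ge 0,\qquad\forall\,\gv\in\R^n.
\end{align}
Here $\Y_1(\xv^*;\dv)$ is a closed subset of the compact set $\Y$ (being cut out by the continuous equalities defining $\Y_0$ together with the Danskin identity), hence compact and nonempty, and $a(\cdot)$ is continuous on it. The whole claim is now a statement purely about the set $\hat S:=\{(a(\yv),c(\yv)):\yv\in\Y_1(\xv^*;\dv)\}\subseteq\R^{n+1}$.

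I would then argue by separation, treating first the case where $\bar{E}(\cdot;\dv)$ is finite and bounded above on $\Y_1$. Suppose the desired multipliers did not exist; equivalently, $\mathrm{conv}(\hat S)$ does not meet $\{\zero\}\times[0,\infty)$, i.e.\ no probability combination $\mu$ of points of $\Y_1$ satisfies both $\sum_i\lambda_i a(\yv_i)=\zero$ and $\sum_i\lambda_i c(\yv_i)\ge 0$. Separating the compact convex set $\mathrm{conv}(\hat S)$ from the closed half-line yields $(\gv,\beta)$ with $\langle a(\yv),\gv\rangle+\beta c(\yv)<\beta s$ for all $\yv$ and all $s\ge0$; letting $s\to\infty$ forces $\beta\ge0$, and $s=0$ gives $\langle a(\yv),\gv\rangle+\beta c(\yv)<0$ for all $\yv$. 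If $\beta>0$, rescaling $\gv$ by $1/\beta$ contradicts the displayed inequality directly; if $\beta=0$, then $\max_\yv\langle a(\yv),\gv\rangle=-\delta<0$ by compactness, so $\max_\yv[\langle a(\yv),t\gv\rangle+c(\yv)]\le -t\delta+\max_\yv c(\yv)\to-\infty$ as $t\to\infty$, again a contradiction. Hence some probability measure $\mu$ on $\Y_1$ has $\int a\,d\mu=\zero$ and $\int c\,d\mu\ge0$. To pass to finitely many atoms and obtain the bound ``at most $n+1$ points,'' I would maximize $\mu\mapsto\int c\,d\mu$ over the weak$^\ast$-compact convex set $\{\mu:\int a\,d\mu=\zero,\ \mu(\Y_1)=1\}$, whose extreme points are supported on at most $n+1$ atoms (there being $n+1$ scalar constraints beyond nonnegativity). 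The maximizing extreme point $\mu^\star$ is supported on $\yv_1,\dots,\yv_{n+1}$ with weights $\lambda_i\ge0$ summing to $1$ (so not all zero), satisfies $\sum_i\lambda_i a(\yv_i)=\zero$, and $\sum_i\lambda_i c(\yv_i)=\int c\,d\mu^\star\ge0$, which is exactly \eqref{eq:sec_order}.

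The main obstacle is the term $\bar{E}(\yv;\dv)=\limsup_{\zv\to\yv}\tfrac12 v_-^2(\zv;\dv)u^\dag(\zv)$, which need not be continuous or even finite, so that $\hat S$ may fail to be compact and the separation and attainment steps above are not immediate. The fact that rescues the argument is that $\yv\mapsto\bar{E}(\yv;\dv)$, being an upper-limsup regularization, is automatically upper semicontinuous; thus $c$ is usc, $\int c\,d\mu$ is weak$^\ast$-upper semicontinuous, and its maximum over the compact constraint set is attained at an extreme point. The genuinely delicate case is when $c\equiv+\infty$ on part of $\Y_1$, where the inequality $\Hs\of\ge0$ is vacuous: there one must work with the hypograph $\{(a(\yv),s):s\le c(\yv)\}$ (or with the truncation $c\wedge N$ followed by $N\to\infty$) to keep the convex-geometric argument meaningful while still extracting a finitely supported multiplier vector. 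Handling this degeneracy carefully, rather than the core separation argument, is where the real work lies.
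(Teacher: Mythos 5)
Your proposal and the paper's proof are not really comparable in kind: the paper does not prove \Cref{thm:kawasaki} at all. It borrows the statement wholesale from \citet{kawasaki1990second} and the entire content of its ``proof'' is a small lemma verifying that the simplified expression $\bar{E}(\yv;\tv)=\limsup_{\zv\to\yv}\tfrac12 v_-^2(\zv;\tv)u^\dag(\zv)$ agrees with Kawasaki's original upper semicontinuous function $\bar{E}'$ defined via the set $Y_0(\tv)$. You, by contrast, attempt a self-contained derivation from ingredients the paper does establish: the elementary second-order necessary condition $\Hi\of(\xv^*;\dv,\gv)\geq 0$ for all $\gv$ (valid since $\X=\R^n$ and $\D\of(\xv^*;\dv)=0$), the formula for $\Hs\of$ in \Cref{thm:upper_semi_Kaw}, and a separation plus Carath\'eodory/extreme-point argument to convert the resulting ``for all $\gv$'' inequality into at most $n+1$ multipliers. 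This is essentially the classical route (and close to Kawasaki's own), so what your approach buys is a proof the reader can actually check inside the paper; what it costs is that you inherit all the delicate degenerate cases that the citation sweeps under the rug.

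Two of those cases are genuine gaps as written. First, you invoke ``separating the compact convex set $\mathrm{conv}(\hat S)$,'' but $\hat S=\{(a(\yv),c(\yv)):\yv\in\Y_1(\xv^*;\dv)\}$ is generally not compact: $c$ is only upper semicontinuous (and possibly extended-real-valued) because of the $\limsup$ in $\bar{E}$, so $\hat S$ need not be closed and its convex hull need not be closed either; strict separation from the closed ray $\{\zero\}\times[0,\infty)$ then requires working with the closed convex hull of the hypograph $\{(a(\yv),s):s\leq c(\yv)\}$ and controlling its recession cone, or bypassing separation entirely via the minimax identity $\inf_\gv\max_{\yv}[\langle a(\yv),\gv\rangle+c(\yv)]=\max_{\mu:\int a\,d\mu=\zero}\int c\,d\mu$ over the weak$^\ast$-compact set of probability measures (where usc of $c$ suffices for attainment). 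Second, and more seriously, when $\bar{E}(\cdot;\dv)\equiv+\infty$ on $\Y_1$ the inequality $\Hs\of\geq 0$ is vacuous, so your argument yields nothing; yet the conclusion still demands multipliers with $\sum_i\lambda_i\n_\xv f(\xv^*,\yv_i)=\zero$ supported on $\Y_1(\xv^*;\dv)$ (not merely on $\Y_0$, which is all the first-order condition gives), and your $\beta=0$ branch needs $\max_{\yv}c(\yv)<\infty$ to reach a contradiction. You flag this honestly, but truncating $c\wedge N$ and letting $N\to\infty$ does not obviously recover the gradient condition, so the degenerate case remains open in your write-up. If you want a self-contained proof in the paper, this is the piece that must be supplied (or the theorem should remain, as the authors left it, an explicit import from \citet{kawasaki1990second}).
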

\begin{proof}
We borrow the result from \cite{kawasaki1990second}. In order to write down the second-order derivative formula in \cite{kawasaki1988upper}, we define $$Y_{0}(\tv) := \{\yv \in \Y : \mbox{there exists a sequence } \{\zv_k\} \to \yv, \, u(\zv_k) > 0\mbox{ and } {v}(\zv_k; \tv)/u(\zv_k)\to -\infty\},$$ and the following upper semi-continuous function \citep{kawasaki1988upper}:
\be\label{eq:extra}
\bar{E}'(\yv; \tv) =\begin{cases}
\sup_{\{\zv_k\}\to \yv} \limsup_k {{v}(\zv_k; \tv)^2} /(2u(\zv_k)) & \yv \in Y_{0}(\tv)\mbox{ and }\{\zv_k\}\mbox{ is in }Y_{0}(\tv), \\
0 & u(\yv) \!= {v}(\yv; \tv) \!= 0\mbox{ \& } \yv \notin Y_{0}(\tv) \\
-\infty & \textrm{otherwise}.
\end{cases}
\en
As shown in \citet{kawasaki1990second}, $u(\yv) = {v}(\yv; \tv) = 0$ whenever $\yv \in Y_0(\tv)$. We simplify the definition above:
\begin{lem}
Denoting $x_- := \min\{x, 0\}$, $x^\dag = 1/x\mbox{ if }x\neq 0\mbox{ and } x^\dag = 0\mbox{ otherwise}$, then for any $u(\yv) = {v}(\yv; \tv) = 0$,
\be\label{eq:extra_sim}
\bar{E}(\yv; \tv) = \limsup_{\zv_k\to \yv} {{v}_-(\zv_k; \tv)^2}u^\dag(\zv_k)/2.\en
\end{lem}
\begin{proof}
It suffices to consider those sequences $\{\zv_k\}\subset \Y$ such that $u(\zv_k)\geq 0$. We want to prove that $\bar{E}(\yv; \tv) = \bar{E}'(\yv; \tv)$. We first prove $\bar{E}(\yv; \tv) \geq \bar{E}'(\yv; \tv)$. If $\yv \in Y_0(\tv)$, then for any $\d > 0$, there exists a sequence $\{\zv_k\}$ such that
$$\limsup_k {{v}(\zv_k; \tv)^2} /(2u(\zv_k)) \geq \bar{E}'(\yv; \tv) - \d,$$
$u(\zv_k) > 0\mbox{ and } {v}(\zv_k; \tv)/u(\zv_k)\to -\infty$.
For large enough $m$, ${v}(\zv_k;\tv) < 0$, and thus we take the same sequence in \eqref{eq:extra_sim} to obtain $\bar{E}(\yv; \tv) \geq \bar{E}'(\yv; \tv) -\d$. Since the above holds for any $\d > 0$, we have $\bar{E}(\yv; \tv) \geq \bar{E}'(\yv; \tv)$. If $\yv \notin Y_0(\tv)$, then $\bar{E}(\yv; \tv) \geq 0 = \bar{E}'(\yv; \tv)$. 

Now let us prove that $\bar{E}(\yv; \tv) \leq \bar{E}'(\yv; \tv)$. Assume for any $\d > 0$, $\{\zv_k\}$ is the sequence such that $$\limsup_k {{v}_-(\zv_k; \tv)^2}u^\dag(\zv_k)/2 \geq \bar{E}(\yv; \tv) - \d.$$ If $u(\zv_k) > 0$ or ${v}(\zv_k;\tv) < 0$ for finite number of $m$, then $\bar{E}(\yv; \tv) = 0\leq \bar{E}'(\yv; \tv)$. Assume WLOG now that for any $m$,  $u(\zv_k) > 0$ and ${v}(\zv_k;\tv) < 0$, if ${v}(\zv_k;\tv) /u(\zv_k)$ is bounded, then since ${v}(\yv;\tv) = 0$, $\bar{E}(\yv; \tv) = 0 \leq \bar{E}'(\yv; \tv)$. So we can assume further that ${v}(\zv_k;\tv) /u(\zv_k) \to -\infty$. Using the same sequence in \eqref{eq:extra}, we know $\bar{E}'(\yv; \tv)\geq \bar{E}(\yv; \tv) - \d$ for any $\d > 0$, and thus $\bar{E}'(\yv; \tv)\geq \bar{E}(\yv; \tv)$.
\end{proof}

\end{proof}
Moreover, the following assumption guarantees the existence of $\He \of(\xv; \dv,\gv)$ from which we can get second-order sufficient conditions:

\begin{assmp}[\textbf{\citet{kawasaki1992second}}]\label{assmp:suff}
For each $\yv\in \Y_1(\xv^*;\tv)$ with $\tv \neq \zero$ and $\D \of(\xv^*; \tv) = 0$, and for each non-zero $\dv \in \R^m$, there exist $\a, \b \neq 0$ and $p, q > 0$ such that the following approximation holds:
\be
{u}(\yv+\d \dv) = \a \d^p + o(\d^p), \, {v}(\yv + \d \dv;\tv) = \b \d^q + o(\d^q),
\en
whenever $\yv+\d \dv \in \Nc(\yv^*, \e)$ and $\d > 0$. Note that
$$
u(\yv) := \of(\xv^*) - f(\xv^*, \yv), \,
v(\yv; \dv) := \D\of(\xv^*; \dv) - \D f(\xv^*, \yv; \dv).
$$
\end{assmp}

\begin{thm}[\textbf{second-order sufficient condition, \textcite{kawasaki1992second}}]
\label{thm:suf_kawa}
Assume \Cref{assmp:suff} holds at $\xv^*$. Let $\X = \R^n$ and $\Y$ be convex. $\xv^*$ is an isolated local minimum of $\of(\xv)$ if for any $\dv\in \R^n$, $\D \of(\xv^*; \dv) > 0$, or $\D \of(\xv^*; \dv) = 0$, $\dv \neq \zero$ and there exist $a\geq 1$ points $\yv_1, \dots, \yv_a\in \Y_1(\xv^*;\dv)$ and $\l_1, \dots, \l_a > 0$ such that:
\be\label{eq:sec_order_suff}
\sum_{i=1}^{a}\l_i \n_\xv f(\xv^*, \yv_i) = {\bf 0}, \, \sum_{i=1}^{a}\l_i \left(\dv^\top \n_{\xv \xv}^2 f(\xv^*, \yv_i)\dv + \bar{E}(\yv_i; \dv) \right) > 0.
\en
\end{thm}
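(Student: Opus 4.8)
The plan is to argue by contradiction, reducing the statement to a single limiting \emph{critical} direction and then exploiting the multiplier structure in \eqref{eq:sec_order_suff}. Since $f$ and $\n_\xv f$ are continuous and $\Y$ is compact, $\of$ is locally Lipschitz (with constant $L$ near $\xv^*$) and directionally differentiable with $\D\of(\xv^*;\dv)=\max_{\yv\in\Y_0(\xv^*)}\langle\n_\xv f(\xv^*,\yv),\dv\rangle$ (\Cref{thm:danskin}). If $\xv^*$ were not an isolated local minimum, there would be a sequence $\xv_k\to\xv^*$, $\xv_k\neq\xv^*$, with $\of(\xv_k)\leq\of(\xv^*)$; writing $t_k=\|\xv_k-\xv^*\|\downarrow 0$ and $\dv_k=(\xv_k-\xv^*)/t_k$, I pass to a subsequence so that $\dv_k\to\dv$ with $\|\dv\|=1$. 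First I would show $\dv$ is critical: the Lipschitz bound $\of(\xv^*+t_k\dv)\leq\of(\xv^*+t_k\dv_k)+Lt_k\|\dv-\dv_k\|\leq\of(\xv^*)+o(t_k)$ forces $\D\of(\xv^*;\dv)\leq 0$, exactly as in the proof of \Cref{thm:sufficient}, while the hypothesis gives $\D\of(\xv^*;\dv)\geq 0$; hence $\D\of(\xv^*;\dv)=0$.

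Because $\dv$ is critical and nonzero, the hypothesis supplies points $\yv_1,\dots,\yv_a\in\Y_1(\xv^*;\dv)$ and weights $\l_i>0$ (normalized so $\sum_i\l_i=1$) satisfying $\sum_i\l_i\n_\xv f(\xv^*,\yv_i)=\zero$ and $S:=\sum_i\l_i\big(\dv^\top\n_{\xv\xv}^2 f(\xv^*,\yv_i)\dv+\bar{E}(\yv_i;\dv)\big)>0$. The core step is to produce, for each $i$, a near-maximizer $\zv_{i,k}\to\yv_i$ of $f(\xv_k,\cdot)$ realizing the second-order lower bound
\begin{align}\label{eq:plan-perpoint}
\of(\xv_k)\ \geq\ f(\xv_k,\zv_{i,k})\ \geq\ \of(\xv^*)+t_k\langle\n_\xv f(\xv^*,\yv_i),\dv_k\rangle+\tfrac12 t_k^2\big(\dv^\top\n_{\xv\xv}^2 f(\xv^*,\yv_i)\dv+\bar{E}(\yv_i;\dv)\big)+o(t_k^2),
\end{align}
with the $o(t_k^2)$ uniform over the finitely many $i$. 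The term $\bar{E}(\yv_i;\dv)$ is precisely the second-order gain obtained by letting the inner maximizer drift off $\yv_i$; it is extracted by choosing $\zv_{i,k}=\yv_i+\wv_{i,k}$ along the polynomial path whose existence is guaranteed by \Cref{assmp:suff}, balancing $u(\zv)=\of(\xv^*)-f(\xv^*,\zv)$ against $v(\zv;\dv)=-\langle\n_\xv f(\xv^*,\zv),\dv\rangle$ in the same way that the choice $t_k=-2u(\zv_k)/v(\zv_k;\dv)$ produces the $\bar{E}$ contribution in the derivation of \Cref{thm:upper_semi_Kaw}. In the nondegenerate case $\n_{\yv\yv}^2 f\prec\zero$ this reduces to maximizing a concave quadratic in $\wv$, giving $\bar{E}(\yv_i;\dv)=-\dv^\top\n_{\xv\yv}^2 f(\n_{\yv\yv}^2 f)^{-1}\n_{\yv\xv}^2 f\,\dv$, the Schur complement already seen in \Cref{def:nec_2}.

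To finish, I would take the $\l_i$-weighted average of the $a$ inequalities \eqref{eq:plan-perpoint}, all of which lower-bound the same number $\of(\xv_k)$. The first-order contribution collapses by the multiplier constraint:
\begin{align}
\textstyle\sum_i\l_i\,\langle\n_\xv f(\xv^*,\yv_i),\dv_k\rangle=\big\langle\sum_i\l_i\n_\xv f(\xv^*,\yv_i),\dv_k\big\rangle=\langle\zero,\dv_k\rangle=0,
\end{align}
so that $\of(\xv_k)\geq\of(\xv^*)+\tfrac12 t_k^2 S+o(t_k^2)>\of(\xv^*)$ for all large $k$, contradicting $\of(\xv_k)\leq\of(\xv^*)$. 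This pinpoints why the constraint $\sum_i\l_i\n_\xv f(\xv^*,\yv_i)=\zero$ is indispensable: the individual first-order terms $\langle\n_\xv f(\xv^*,\yv_i),\dv_k\rangle$ vanish only along $\dv$ itself and are merely $O(\|\dv_k-\dv\|)$ along $\dv_k$, which need not be $o(t_k)$, whereas the convex combination forces their \emph{exact} cancellation for every $k$.

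The step I expect to be the main obstacle is establishing \eqref{eq:plan-perpoint} in the degenerate case, where $\n_{\yv\yv}^2 f$ is singular and $\bar{E}(\yv_i;\dv)$ is a genuine $\limsup$ rather than a Schur complement. There one must select the perturbation $\wv_{i,k}$ so that the prescribed displacement $t_k$ and the induced drift of the inner maximizer are matched at the correct polynomial orders $p,q$ from \Cref{assmp:suff}, keep the remainder uniformly $o(t_k^2)$ across the finitely many $i$, and control the discrepancy between $\bar{E}(\yv_i;\dv)$ and the gain actually realized along $\dv_k$ (a real subtlety, since $\bar{E}(\yv_i;\cdot)$ is only a $\limsup$ and need not be continuous in the direction). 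This matching is exactly the technical content that \Cref{assmp:suff} is designed to license, and it is the reason the theorem is stated under that assumption rather than for arbitrary smooth $f$; the same mechanism underlies the corresponding step of \Cref{thm:2nds}.
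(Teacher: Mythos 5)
The paper itself does not prove \Cref{thm:suf_kawa}: the result is imported from \citet{kawasaki1992second}, and the surrounding machinery (the upper second-order derivative formula of \Cref{thm:upper_semi_Kaw}, and \Cref{assmp:suff}, which the paper introduces precisely because it ``guarantees the existence of $\He \of$'') is set up so that the citation applies. Your outline is therefore not competing with a written proof but with Kawasaki's, and as a reconstruction of its shape it is reasonable: the contradiction setup, the reduction to a critical limiting direction $\dv$, and especially the observation that the multiplier constraint forces the \emph{exact} cancellation $\sum_i\l_i\langle\n_\xv f(\xv^*,\yv_i),\dv_k\rangle=0$ along the perturbed directions $\dv_k$ (not merely along $\dv$) are all correct and are genuinely the load-bearing ideas.

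Nevertheless there is a real gap: the per-point lower bound you display is asserted, not established, and essentially all of the difficulty lives there. Two concrete problems. First, expanding $f(\xv^*+t_k\dv_k,\zv_{i,k})$ produces the cross term $t_k\langle\n_\xv f(\xv^*,\zv_{i,k})-\n_\xv f(\xv^*,\yv_i),\,\dv_k-\dv\rangle$, which is only $t_k\cdot o(1)$: the displacement $\|\zv_{i,k}-\yv_i\|$ forced by the matching $t_k\asymp -2u(\zv)/v(\zv;\dv)$ may vanish much more slowly than $t_k$, and $\|\dv_k-\dv\|$ carries no rate at all, so this term need not be $o(t_k^2)$ and can swamp the target term $\tfrac12 t_k^2 S$. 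This is not a removable technicality: the example given right after \Cref{thm:2nd_suff_gen} shows that a second-order condition imposed \emph{only} on exactly critical directions is false in general, and what rescues \Cref{thm:suf_kawa} is the interaction between the constraint $\sum_i\l_i\n_\xv f(\xv^*,\yv_i)=\zero$ and the nearly-critical directions $\dv_k$ (in that example the two halves of \eqref{eq:sec_order_suff} are incompatible, so the hypothesis fails) --- an interaction your plan never makes explicit. Second, realizing $\bar{E}(\yv_i;\dv)$ requires showing that the $\limsup$ is attained along sequences compatible with the polynomial rays of \Cref{assmp:suff} and lying in the convex set $\Y$ at the prescribed scales $t_k$; proving this, i.e.\ that $\He\of(\xv^*;\dv)$ exists and agrees with the upper-derivative formula, is the actual content of \citet{kawasaki1992second}. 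You flag both issues yourself, which is to your credit, but identifying the obstacle is not the same as clearing it, so the argument as written does not yet constitute a proof.
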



\section{Proofs in \Cref{sec:local_opt}}

\slmm*

\begin{proof}
Given that $\partial^2_{\yv\yv} f(\xv^*, \yv^*)$ is invertible, the first condition is clearly equivalent to $\yv^*$ being a local maximizer of $f(\xv^*, \cdot)$. Consider the non-linear equation \eqref{eq:im-tol}, whose solution is determined by the implicit function theorem as a continuously differentiable function $\yv(\xv)$ defined near $\xv^*$. Fix any $\e$. Since $\yv(\xv^*) = \yv^*$, shrinking the neighbourhood around $\xv^*$ if necessary we may assume $\yv(\xv) \in \Nc(\yv^*, \e)$ so that $\of_\e(\xv) = f(\xv, \yv(\xv))$. Thus, if $(\xv^*, \yv^*)$ is local minimax, then for $\xv$ near $\xv^*$:
\begin{align}
f(\xv^*, \yv(\xv^*)) = f(\xv^*, \yv^*) = \of_\e(\xv^*) \leq \of_\e(\xv) = f(\xv, \yv(\xv)),
\end{align}
so, $\xv^*$ is a local minimizer of the total function. Reversing the argument proves the converse.
\end{proof}

\inclusion* 
\begin{proof}
We first note that since $\yv^*$ maximizes $f(\xv^*, \yv)$ over $\Nc(\yv^*, \e_0)$, we clearly have for all $\yv^* \in \Nc \subseteq \Nc(\yv^*, \e_0)$:
\begin{align}
\of_{\Nc}(\xv^*) = f(\xv^*, \yv^*).
\end{align}
Moreover, for any $\Nc\supseteq \Nc(\yv^*, \e)$ and any $\xv \in \X$:
\be\label{eq:envelope}
\of_{\Nc}(\xv) \geq \of_{\e, \yv^*}(\xv) =: \of_{\e}(\xv). 
\en
Since $\xv^*$ is a local minimizer of $\of_{\e}$, say over the neighborhood $\Mc$, we have for all $\xv \in \Mc$ and $\Nc(\yv^*, \e) \subseteq \Nc \subseteq \Nc(\yv^*, \e_0)$:
\begin{align}
\of_{\Nc}(\xv) \geq \of_{\e}(\xv) \geq \of_{\e}(\xv^*) = f(\xv^*, \yv^*) = \of_{\Nc}(\xv^*),
\end{align}
i.e., $\xv^*$ is a local minimizer of $\of_{\Nc}(\xv)$ over the same local neighborhood $\Mc$.
\end{proof}

\propcon*

\begin{proof}
\blue{
We need only prove if $(\xv^*, \yv^*)$ is local minimax according to \Cref{def:jin}, then there exists some $\e_0 > 0$ such that $\xv^*$ is a local minimizer of $\of_{\e}(\xv)$ for all $\e \in (0, \e_0]$. 
Indeed, from \Cref{def:jin} we know  $f(\xv^*, \yv)$ is maximized at $\yv^*$ over some neighborhood $\Nc(\yv^*, \e_0)$ for some $\e_0 > 0$. For any $0 < \e \leq \e_0$, one can find $0 < \e_n < \e$ since the promised sequence $\e_n \to 0$. By definition $\xv^*$ is a local minimizer for $\of_{\e_n}$, hence by \Cref{thm:eqseq} it remains a local minimizer for $\of_{\e}$.
}
\end{proof}

\uniform*

\begin{proof}
Let $(\xv_\star, \yv_\star)$ be local saddle, i.e., $\yv_\star$ maximizes $f(\xv_\star, \cdot)$ over the neighborhood $\Nc(\yv_\star, \e)$ and $\xv_\star$ minimizes \blue{$\of_{0, \yv_\star} = f(\cdot, \yv_\star)$} over the neighborhood $\Nc(\xv_\star, \e)$. We fix the neighborhood  $\Nc(\xv_\star) = \Nc(\xv_\star, \e)$ and choose \blue{any sequence} $\{\e_n\}\subset (0, \e]$. Applying \Cref{thm:eqseq} we know $\xv_\star$ remains a minimum for all $\of_{\e_n}$ over the (fixed) neighborhood $\Nc(\xv_\star)$. Thus, $(\xv_\star, \yv_\star)$ is uniformly local minimax. 

Conversely, let $f$ be upper semi-continuous (in $\yv$ for any $\xv$) and $(\xv^*, \yv^*)$ uniformly local minimax over the fixed neighborhood $\Nc(\xv^*)$. By definition $\yv^*$ maximizes $f(\xv^*, \cdot)$ over some neighborhood $\Nc(\yv^*, \e_0)$, and $\xv^*$ minimizes all $\of_{\e_n}$ over the fixed neighborhood $\Nc(\xv^*)$, where the positive sequence $\e_n \to 0$. Fix any $\xv \in \Nc(\xv^*)$. Since $f(\xv, \cdot)$ is upper semi-continuous at $\yv^*$, we have for any $\d > 0$, there exists $\e_n \in(0, \e_0]$ such that:
\be
f(\xv^*, \yv^*) = \of_{\e_n}(\xv^*) \leq \of_{\e_n}(\xv) \leq f(\xv, \yv^*) + \d.
\en
Letting $\d \to 0$ we know $f(\xv, \yv^*)\geq f(\xv^*, \yv^*)$ for any  $\xv \in \Nc(\xv^*)$. 
\end{proof}

\equivalence*
\begin{proof}
($\Longleftarrow$) Suppose $(\xv^*, \yv^*)$ satisfies \eqref{eq:Jin}. Then clearly,  $\yv^*$ maximizes $f(\xv^*, \cdot)$ over the neighborhood $\Nc(\xv^*, \d_0)$. Take an arbitrary positive sequence $\{\d_n\}$ with $\d_n \to 0$ and let $\e_n = \sup_{m\geq n}h(\d_n)$. Since $h(\d) \to 0$ as $\d \to 0$, we may assume WLOG that $\e_n$ is well-defined and bounded from above. If $h(\d_n) = 0$ for some $n$ then $(\xv^*, \yv^*)$ is local saddle and hence local minimax thanks to \Cref{prop:ulmm}. Otherwise we have $\e_n > 0$ for all $\e_n$ and $\e_n \to 0$ since $\lim_{\d\to 0} h(\d) = 0$. WLOG we  assume $\e_1 \leq \d_0$ (for otherwise we may discard the head of the sequence $\{\e_n\}$). From \eqref{eq:Jin} we know for any $\xv \in \Nc(\xv^*, \d_n)$:
\be
\of_{h(\d_n)}(\xv) \geq f(\xv^*, \yv^*) = \of_{h(\d_n)}(\xv^*),
\en
since $h(\d_n) \leq \e_1 \leq \d_0$ and $\yv^*$ maximizes $f(\xv^*, \yv)$ over $\Nc(\xv^*, \d_0)$.
Therefore, $\xv^*$ is a local minimizer of $\of_{h(\d_n)}$ hence also of  $\of_{\e_n}$ thanks to \Cref{thm:eqseq}. 

($\Longrightarrow$) Suppose $(\xv^*, \yv^*)$ is local minimax (see \Cref{def:jin}). Then, $\yv^*$ maximizes $f(\xv^*, \cdot)$ over some neighborhood $\Nc(\yv^*,\e_0)$ where $\e_0 > 0$. Since $\xv^*$ is a local minimizer of $\of_{\e_n}$, it minimizes $\of_{\e_n}$ over some neighborhood $\Nc(\xv^*, \d'_n)$ with $\d'_n > 0$. From $\{\d'_n\}$ we construct another positive sequence $\{\d_n\}$ where $\d_0 = \min\{\d'_1, 1, \e_0\}$ > 0 and
\be
\d_{n} = \min\{\d'_{n}, \d_{n-1}, 1/n\}, ~~ n = 1, 2, \ldots,
\en
which is diminishing by construction. Define $h(\d) = \e_n$ if $\d_{n+1} < \d \leq \d_n$. Since $\e_n \to 0$, $\lim_{\d\to 0}h(\d) = 0$. WLOG we assume $\e_1 \leq \e_0$ and by definition $\d_0 \leq \e_0$. For any $\d \in (0, \d_0]$ there exists some $n$ such that $\d \in (\d_{n+1}, \d_n]$. Thus, for any $(\xv,\yv) \in \Nc(\xv^*, \d'_n) \times \Nc(\yv^*, \e_0)$:
\be\label{eq:jinh}
\of_{h(\d)}(\xv) = \of_{\e_n}(\xv) \geq \of_{\e_n}(\xv^*) = f(\xv^*, \yv^*)\geq f(\xv^*, \yv). 
\en
Since $\d \leq \d_n \leq \d_n'$ and $\d \leq \e_0$, the above still holds over the smaller neighborhood $\Nc(\xv^*, \d) \times \Nc(\yv^*, \d)$, which is exactly \eqref{eq:Jin}.
\end{proof}

\convexconcave*

\begin{proof}
Suppose $(\xv^*, \yv^*)$ is stationary. For any small $\e > 0$,
\begin{align}
\of_\e(\xv) = \max_{\yv \in \Nc(\yv^*, \e)} ~ f(\xv, \yv)
\end{align}
 is convex by assumption. To see that $\xv^*$ is a local (hence global) minimizer of $\of_\e$, we need only verify that $\zero \in \partial \of_\e(\xv^*)$. Since $\yv^*$ maximizes $f(\xv^*, \cdot)$ by assumption, we know from Danskin's theorem that $ \partial \of_\e(\xv^*) \supseteq \partial f(\xv^*, \yv^*) \ni \zero$ since $(\xv^*, \yv^*)$ is stationary.
 
Now suppose $(\xv^*, \yv^*)$ is local minimax. Then, $\yv^*$ is a local hence global maximizer of $f(\xv^*, \cdot)$.  Also, $\xv^*$ is a local hence global minimizer of $\of_\e$. Thus, 
\be
\of(\xv) \geq \of_\e(\xv) \geq \of_\e(\xv^*) = f(\xv^*, \yv^*) = \of(\xv^*),
\en
i.e., $\xv^*$ is a global minimizer of $\of$.
\end{proof}

\cclocal*

\begin{proof}
For convex-concave functions being local saddle is equivalent to satisfying \eqref{eq:first_order_necessary_lrp}. We also know from \Cref{prop:ulmm} that every local saddle point is local minimax (maximin) and from \Cref{def:glp} that every local minimax point is an {\glp}.
\end{proof}

\nested*

\begin{proof}
Clearly, $\of_\e(\xv^*) = f(\xv^*, \yv^*)$ for any $\e \in [0, \e_0]$ and $\yv \in \Nc(\yv^*, \e_1)$ implies $\yv\in \Nc(\yv^*, \e_2)$ for any $\e_1 \leq \e_2$, whence follows $\Y_0(\xv^*; \e_1) \subseteq \Y_0(\xv^*; \e_2)$.
Using Danskin's theorem in \Cref{thm:danskin} we thus have  $\D\of_{\e_2}(\xv^*;\tv) \geq \D \of_{\e_1}(\xv^*;\tv)$. 
\end{proof}

\secondminmax*

\begin{proof}
We know $\of_\e$ is locally Lipschitz since $\n_\xv f$ is continuous, and there exists $\e_0 > 0$ such that $\of_\e(\xv^*) = f(\xv^*, \yv^*)$ for any $0 < \e < \e_0$. The rest of the claim can be readily derived from \Cref{thm:necessary_2} and \Cref{thm:upper_semi_Kaw}, by taking $\e \to 0$ and noting that the upper directional derivative is by definition larger than the lower directional derivative. 
\end{proof}

\secondsuff*

\begin{proof}
It follows from \Cref{thm:suf_kawa}. From Danskin's theorem $\D \of_\e(\xv^*;\tv) \geq 0$ for any small $\e > 0$.
Besides, for any small enough $\e$, \eqref{eq:sec_order_suff} is satisfied since $\yv^*\in \Y_1(\xv^*;\e_0;\tv)$. Noting that $\of_{\e}(\xv^*) = f(\xv^*, \yv^*) = f(\xv^*, \yv)$ for any $0 \leq \e < \e_0$ and $\yv\in \Y_1(\xv^*;\e_0;\tv)$, \eqref{eq:taylor_exp} follows from \Cref{assmp:suff}.
\end{proof}

\secondsuffs*

\begin{proof}
Since $\yv^*\in \Y_0(\xv^*;\e)$, from Danskin's theorem (\blue{\Cref{thm:danskin}}) we know that $\D \of_\e(\xv^*;\tv)\geq 0$ for any $\e$ small enough. We then combine \Cref{thm:2nds} with \Cref{thm:env_2nds_q}. Note that all the directions $\tv, \vv, \wv$ are bounded.
\end{proof}


\section{Proofs in \Cref{sec:q2}}

\minimaxq*

\begin{proof}
The first claim follows directly from the definition of stationarity. 

To prove the second claim, we note that fixing $\xv$, $q(\xv, \cdot)$ is clearly quadratic in $\yv$. Thus, it admits a local (hence also global) maximizer $\yv$ iff 
\begin{align}
\Bv \preceq \zero, \\
\label{eq:quadstat1} \Cv^\top \xv + \Bv\yv = \zero.
\end{align}
Note that there exists some $\yv$ to satisfy \eqref{eq:quadstat1} iff $\Cv^\top \xv$ belongs to the range space of $\Bv$  iff 
\begin{align}
\p{\Bv} \Cv^\top \xv = \zero, ~i.e.~ \Lv^\top \xv =\zero, 
\end{align}
or equivalently $\xv = \p{\Lv} \zv$ for some $\zv \in \R^m$. 
Therefore, we have the envelope function:
\begin{align}\label{eq:barq}
\bar{q}(\xv) = \begin{cases}
\frac12 \xv^\top( \Av - \Cv \Bv^\dag \Cv^\top) \xv, & \Lv^\top\xv = \zero \\
\infty, & \mbox{ otherwise }
\end{cases}
.
\end{align}
Thus, the quadratic function $\bar{q}$ (when restricted to the null space of $\Lv^\top$) admits a local (hence also global) minimizer iff 
\begin{align}
\p{\Lv} (\Av - \Cv \Bv^\dag \Cv^\top) \p{\Lv} \succeq \zero,
\end{align}
in which case the minimizer $\xv$ satisfies
\begin{align}
\label{eq:xtmp}
\Lv^\top \xv = \zero = \p{\Lv}(\Av- \Cv\Bv^\dag \Cv^\top) \xv, 
\end{align}
whereas the maximizer $\yv$ satisfies \eqref{eq:quadstat1}. It is easy to verify that \eqref{eq:xtmp} and \eqref{eq:quadstat1} are equivalent to \eqref{eq:gLmM}. For the last claim, note first that we have proved in \Cref{prop:stable} that any local minimax point is stationary. Moreover, if $(\xv^*, \yv^*)$ is local minimax, then $\xv^*$ locally minimizes $\bar{q}_{\e, \yv^*}$ (for all small $\e$), i.e., for $\xv$ close to $\xv^*$, we have 
\begin{align}
\bar{q}(\xv) \geq \bar{q}_{\e, \yv^*}(\xv) \geq \bar{q}_{\e, \yv^*}(\xv^*) = q(\xv^*, \yv^*)= \bar{q}(\xv^*),
\end{align}
where the last equality follows since fixing $\xv^*$, $\yv^*$ is a local hence also global maximizer of the quadratic function $q(\xv^*, \cdot)$. We have shown above that any local minimizer of $\bar{q}(\xv)$ is necessarily global. Therefore, $(\xv^*, \yv^*)$ is global minimax.

Lastly, we prove the converse of the last claim. Let $\Bv \preceq \zero$, $\p{\Lv}(\Av - \Cv \Bv^\dag \Cv^\top) \p{\Lv} \cge \zero$, and $(\xv^*, \yv^*)$ be stationary, i.e. they satisfy \eqref{eq:stat}. Fixing $\yv^*$ we have for all small  $\e > 0$:
\begin{align}
2\bar{q}_\e(\xv) = 2\bar{q}_{\e, \yv^*}(\xv) = \max_{\|\yv - \yv^*\|\leq\e} 
\begin{bmatrix}
\xv \\ \yv
\end{bmatrix}^\top
\begin{bmatrix}
\Av & \Cv \\ \Cv^\top & \Bv
\end{bmatrix}
\begin{bmatrix}
\xv \\ \yv
\end{bmatrix}
.
\end{align}
We are left to prove $\xv^*$ is a local minimizer of $\bar{q}_\e$ for all small $\e$.\footnote{Unfortunately we cannot use the sufficient conditions in \Cref{sec:ns} since $\xv^*$ may not be an isolated local minimizer.} Let $c = \max\{\|\Bv^\dag \Cv^\top\|, \|\Av - \Cv \Bv^\dag \Cv^\top\|\}$. We assume first $c > 0$ and $\Lv \ne \zero$. Let $\sigma$ be the smallest positive singular value of $\Lv = \Cv\p{\Bv}$. Consider any $\xv$ such that $\|\xv - \xv^*\| \leq \e (\sigma \wedge 1)/ (3 c)$. We decompose 
\begin{align}
\xv - \xv^* = \deltav_{\parallel} + \deltav_\perp, \mbox{ where } \deltav_\perp = \p{\Lv} (\xv - \xv^*),
\end{align}
and define 
\begin{align}
\yv - \yv^* = -\Bv^\dag\Cv^\top (\xv - \xv^*) + \e \Lv^\top (\xv - \xv^*) / (2\|\Lv^\top (\xv - \xv^*)\|),
\end{align}
where by convention $0/0 := 0$.
Clearly, $\|\yv - \yv^*\| \leq \e / 3 + \e / 2 < \e$. Thus, using the stationarity of $(\xv^*,\yv^*)$:
\begin{align}
2\bar{q}_{\e}(\xv) \geq 2q(\xv, \yv) 
&= 
\begin{bmatrix}
\xv- \xv^* \\ \yv-\yv^*
\end{bmatrix}^\top
\begin{bmatrix}
\Av & \Cv \\ \Cv^\top & \Bv
\end{bmatrix}
\begin{bmatrix}
\xv-\xv^* \\ \yv-\yv^*
\end{bmatrix}
\\
\label{eq:trick}(\mbox{note } \Bv\Lv^\top = \zero) &=
(\xv- \xv^*)^\top (\Av - \Cv\Bv^\dag\Cv^\top)(\xv- \xv^*) + \e\|\Lv^\top (\xv - \xv^*)\|
\\
&= \deltav_\parallel^\top (\Av - \Cv\Bv^\dag\Cv^\top)\deltav_\parallel + 2\deltav_\parallel^\top (\Av - \Cv\Bv^\dag\Cv^\top)\deltav_\perp + \tr
& + \deltav_\perp^\top (\Av - \Cv\Bv^\dag\Cv^\top)\deltav_\perp +\e\|\Lv^\top \deltav_\parallel\|
\\
&\geq 
-\e\sigma  \|\deltav_\parallel\|/3 - 2\e\sigma  \|\deltav_\parallel\| / 3 + 0 + \e\sigma\|\deltav_\parallel\| = 0 = 2 \bar{q}_\e(\xv^*),
\end{align}
where we used the fact that $\|\deltav_\parallel\| \vee \|\deltav_\perp\| \leq \e\sigma/(3c)$ and $\p{\Lv} (\Av - \Cv\Bv^\dag\Cv^\top) \p{\Lv} \succeq \zero$. 
Finally, we note that if $c = 0$, then $\Av - \Cv\Bv^\dag \Cv^\top = \zero$ hence the proof still goes through (with $c$ replaced by 1 say). Similarly, if $\Lv = \zero$, then $\deltav_{\parallel} = \zero$ hence the proof again goes through (with $\sigma$ replaced by 1 say).
\end{proof}

\localglobalq*

\begin{proof}
If \eqref{eq:qc} holds, let 
\begin{align}
\label{eq:qctmp2}
\begin{bmatrix}
\Av & \Cv \\
\Cv^\top & \Bv
\end{bmatrix}
\begin{bmatrix}
\xv^* \\ \yv^*
\end{bmatrix}
= 
\begin{bmatrix}
\av \\ \bv
\end{bmatrix}
.
\end{align}
Then, performing the translation $(\xv, \yv) \gets (\xv - \xv^*, \yv - \yv^*)$ we reduce to the homogeneous case and applying \Cref{thm:local_global_q} we obtain the existence of a local (or global) minimax point. If a local minimax point exists, then stationarity yields the range condition. Performing translation and applying \Cref{thm:local_global_q} again establishes all conditions in \eqref{eq:qc}. 

All we are left to prove is when a global minimax point $(\xv^*, \yv^*)$ exists the range condition holds. Indeed, fixing $\xv^*$, $\yv^*$ maximizes the quadratic $q(\xv^*, \cdot)$ hence from stationarity:
\begin{align}
\label{eq:qctmp}
\Cv^\top \xv^* + \Bv\yv^* = \bv.
\end{align}
The above equation has a solution $\yv^*$ iff $\p{\Bv} \Cv^\top \xv^* = \p{\Bv}\bv$, i.e. $\Lv^\top \xv^* = \p{\Bv} \bv$ (recall that $\Lv := \Cv \p{\Bv}$).
Solving $\yv$ and plugging back in $q$ we obtain: for all $\xv$ such that $\Lv^\top \xv = \p{\Bv} \bv$, 
\begin{align}
\bar{q}(\xv) = \tfrac12 \xv^\top (\Av - \Cv\Bv^\dag\Cv^\top) \xv + \xv^\top \Cv\Bv^\dag\bv - \av^\top\xv.
\end{align}
Since $\xv^*$ is a global minimizer of $\bar{q}$, we obtain the stationarity condition:
\begin{align}
\p{\Lv} [(\Av - \Cv \Bv^\dag \Cv^\top) \xv^* + \Cv \Bv^\dag\bv- \av] = \zero.
\end{align}
Combined with \eqref{eq:qctmp} we obtain:
\begin{align}
\p{\Lv} [\Av\xv^* + \Cv \Bv^\dag \Bv \yv^* - \av] = \zero &\iff \Av\xv^* + \Cv \Bv^\dag \Bv \yv^* - \av = \Lv \zv = \Cv\p{\Bv}\zv \mbox{ for some } \zv \\
\label{eq:qctmp3}
&\iff \Av\xv^* + \Cv (\Bv^\dag \Bv \yv^*+ \p{\B} \zv) = \av
\end{align}
From \eqref{eq:qctmp} and \eqref{eq:qctmp3} we deduce $(\xv^*, \Bv^\dag \Bv \yv^*+ \p{\B} \zv)$ satisfies the range condition \eqref{eq:qctmp2}.
\end{proof}


\section{Momentum algorithms}\label{app:momentum_alg}

We study the effect of momentum for convergence to local saddle points, including heavy ball \citep{Polyak64} and Nesterov's momentum \citep{nesterov1983method}. They are similar to GDA and do not converge even for bilinear games, as proved in \cite{zhang2019convergence}. In the following two subsections, we study the effect of momentum for convergence to local saddle points. GDA is a special case if we take the momentum parameter $\b = 0$.

\blue{Many of the proofs in this appendix and \Cref{app:proof_stability} rely on Schur's theorem:}
\begin{restatable}[\textbf{\citet{schur1917potenzreihen}}]{thm}{}\label{schur}
The roots of a real polynomial $p(\l) = a_0 \l^n + a_1 \l^{n-1} + \dots + a_n$ are within the (open) unit disk of the complex plane iff $\forall k \in \{1, 2, \dots, n\}, ~ \det({\bf P}_k {\bf P}_k^{\sf H} - {\bf Q}_k^{\sf H} {\bf Q}_k) > 0$, where ${\bf P}_k, {\bf Q}_k$ are $k\times k$ matrices defined as: $[{\bf P}_k]_{i,j} = a_{i-j}{\bf 1}_{i\geq j}$, $[{\bf Q}_k]_{i,j} = a_{n-i+j}{\bf 1}_{i\leq j}$. 
\end{restatable}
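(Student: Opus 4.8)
The plan is to split the claim into a purely structural reduction and a classical analytic core. Write $M_k:=\mathbf{P}_k\mathbf{P}_k^{\sf H}-\mathbf{Q}_k^{\sf H}\mathbf{Q}_k$, which is Hermitian for every $k$. The first step is to observe that the family of determinant conditions collapses to a single positive-definiteness statement. Since $\mathbf{P}_n$ is lower-triangular Toeplitz and $\mathbf{Q}_n$ is upper-triangular Toeplitz (built from the trailing coefficients, matching the reciprocal polynomial below), the triangular zero pattern truncates each entry sum: for $i,j\le k$ one has $(\mathbf{P}_n\mathbf{P}_n^{\sf H})_{ij}=\sum_{l\le\min(i,j)}(\mathbf{P}_n)_{il}\overline{(\mathbf{P}_n)_{jl}}=(\mathbf{P}_k\mathbf{P}_k^{\sf H})_{ij}$, and likewise $(\mathbf{Q}_n^{\sf H}\mathbf{Q}_n)_{ij}=(\mathbf{Q}_k^{\sf H}\mathbf{Q}_k)_{ij}$. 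Hence $M_k$ is exactly the $k$-th leading principal submatrix of $M_n$, and by Sylvester's criterion ``$\det M_k>0$ for all $k=1,\dots,n$'' is equivalent to $M_n\succ0$. It therefore suffices to prove: $p$ has all roots in the open unit disk iff $M_n\succ0$.

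For this equivalence I would use the Schur--Cohn reduction and induct on the degree $n$. Introduce the reciprocal polynomial $p^*(\lambda)=\lambda^n\overline{p(1/\bar\lambda)}$ (reversal of the coefficient list for real $p$) and the transform $h(\lambda):=[\bar a_0\,p(\lambda)-a_n\,p^*(\lambda)]/\lambda$, of degree $\le n-1$ since the combination kills the constant term. The reduction lemma I would prove first is: $p$ is Schur-stable iff $|a_0|>|a_n|$ and $h$ is Schur-stable. Necessity of $|a_0|>|a_n|$ follows from the root-product formula. For the rest I would use Rouch\'e's theorem: on $|\lambda|=1$ we have $|p^*(\lambda)|=|p(\lambda)|$, so the hypothesis $|a_0|>|a_n|$ makes $\bar a_0 p$ strictly dominate $a_n p^*$ on the circle; thus $\lambda h=\bar a_0 p-a_n p^*$ has the same number of zeros in the disk as $\bar a_0 p$, i.e.\ as $p$. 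Subtracting the forced zero at the origin transfers the full count between $p$ (degree $n$) and $h$ (degree $n-1$), giving the lemma in both directions.

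The main obstacle is to show that $M_n\succ0$ tracks this analytic reduction, i.e.\ that $M_n\succ0$ is equivalent to ``$|a_0|>|a_n|$ and $M_{n-1}\succ0$ for $h$.'' I would establish a Bezoutian identity exhibiting $M_n$ as a Gram matrix:
\[
\frac{\overline{p(z)}\,p(w)-\overline{p^*(z)}\,p^*(w)}{\bar z\,w-1}=\sum_{i,j=1}^{n}(M_n)_{ij}\,\bar z^{\,i-1}w^{\,j-1},
\]
an identity of polynomials (the numerator is divisible by $\bar z\,w-1$) whose right-hand side is the quadratic form of the Schur--Cohn matrix in the basis $\{1,\lambda,\dots,\lambda^{n-1}\}$; one checks it at $n=1$, where it reduces to $a_0^2-a_1^2=M_1$. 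The Schur transform $p\mapsto h$ then acts as a one-step congruence (a Schur-complement/rank-one deflation) on this form, with first pivot $|a_0|^2-|a_n|^2$ and deflated $(n-1)\times(n-1)$ block equal to the Bezoutian of $h$. Positivity of the pivot is precisely $|a_0|>|a_n|$, and positive definiteness of the deflated block is $M_{n-1}\succ0$ for $h$; combined with the reduction lemma this closes the induction, the base case $n=1$ reading $M_1=a_0^2-a_1^2>0\iff|{-a_1/a_0}|<1$. The delicate part is the index bookkeeping under the reversal $p\mapsto p^*$ needed to confirm that the deflated block is genuinely the Schur--Cohn matrix of $h$ and not merely a congruent matrix of the right size; verifying this compatibility is where the real work lies, after which Sylvester's criterion and the reduction lemma assemble the theorem.
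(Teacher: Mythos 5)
Your proposal reconstructs the classical Schur--Cohn argument; the paper itself offers no proof of this theorem (it is stated as a citation to Schur, 1917), so there is no in-paper derivation to compare against, and your outline is the standard and correct route. The three pillars are all sound: (i) the triangular-Toeplitz structure truncates the entry sums so that $M_k$ is exactly the $k$-th leading principal submatrix of the Hermitian matrix $M_n$, and Sylvester's criterion converts the family of determinant conditions into $M_n\succ 0$; (ii) the Rouch\'e-based reduction lemma for $p\mapsto h$; (iii) the Bezoutian identity plus a one-step deflation. Two remarks. First, you have silently corrected a typo in the statement: as written, $[\mathbf{Q}_k]_{i,j}=a_{n-i+j}\mathbf{1}_{i\le j}$ would reference the undefined coefficients $a_{n+1},\dots$ above the diagonal; the paper's own $n=2$ computation in its quadratic corollary shows the intended subscript is $a_{n-j+i}$, which is what your ``upper-triangular Toeplitz in the trailing coefficients'' reading uses --- this is worth stating explicitly, since the Bezoutian identity only holds with the corrected indexing. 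Second, the step you defer as ``where the real work lies'' does go through: with $q=\lambda h=\bar a_0p-a_np^*$ one computes $q^*=a_0p^*-\bar a_np$, hence $\overline{q(z)}q(w)-\overline{q^*(z)}q^*(w)=(|a_0|^2-|a_n|^2)\bigl[\overline{p(z)}p(w)-\overline{p^*(z)}p^*(w)\bigr]$; since $q=\lambda h$ forces $q^*=h^*$, dividing by $\bar z w-1$ gives $(|a_0|^2-|a_n|^2)\,M_n(p)=\operatorname{diag}(0,M_{n-1}(h))+\mathbf{c}\mathbf{c}^{\sf H}$ with $\mathbf{c}$ the coefficient vector of $h^*$, whose first entry is $|a_0|^2-|a_n|^2\ne 0$, and positive definiteness transfers in both directions exactly as you claim (it is a rank-one update rather than a literal Schur complement, but the pivot bookkeeping is as you describe). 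One small point in the reduction lemma: in the converse direction you must justify that $p$ does not vanish on the unit circle before invoking Rouch\'e; this follows because a unimodular zero of $p$ is also a zero of $p^*$, hence of $h$, contradicting the assumed Schur-stability of $h$. With these details filled in, your argument is a complete proof of the cited result.
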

In this theorem, we use $A^{\sf H}$ to denote the Hermitian conjugate of $A$, and 
\be
{\bf 1}_{\rm condition} = \begin{cases}
1 & \textrm{if condition is true,}\\
0 & \textrm{otherwise.}
\end{cases}
\en
Schur's theorem has been applied to analyze bilinear zero-sum games to give necessary and sufficient convergence conditions \citep{zhang2019convergence}. However, in that paper only real polynomials have been studied. Here we give a corollary for complex quadratic polynomials:
\begin{restatable}[\textbf{Schur}]{lem}{}\label{lem:2}
For complex quadratic polynomials $\l^2 + a \l + b$, the exact convergence condition is: 
\be
|b| < 1, \, (1 - |b|^2)^2 + 2 \Re(a^2 \bar{b}) > |a|^2 (1 + |b|^2).
\en
\end{restatable}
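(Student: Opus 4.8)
The plan is to apply Schur's theorem (\Cref{schur}) to the monic quadratic $p(\l) = \l^2 + a\l + b$, i.e.\ to the case $n = 2$ with $a_0 = 1$, $a_1 = a$, $a_2 = b$. Since $a$ and $b$ are complex, I would first note that the criterion in \Cref{schur} is exactly the classical Schur--Cohn test once the transposes are read as the Hermitian conjugates already written there, and that it remains necessary and sufficient for the roots to lie in the open unit disk for complex coefficients. It then suffices to evaluate the two determinant conditions $\det(\mathbf{P}_k\mathbf{P}_k^{\sf H} - \mathbf{Q}_k^{\sf H}\mathbf{Q}_k) > 0$ for $k = 1$ and $k = 2$.

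For $k = 1$ we have $\mathbf{P}_1 = (1)$ and $\mathbf{Q}_1 = (b)$, so the condition reads $1 - |b|^2 > 0$, giving the first inequality $|b| < 1$. For $k = 2$, $\mathbf{P}_2$ is the lower-triangular Toeplitz matrix of the leading coefficients and $\mathbf{Q}_2$ the upper-triangular Toeplitz matrix of the trailing coefficients,
\begin{align}
\mathbf{P}_2 = \begin{bmatrix} 1 & 0 \\ a & 1 \end{bmatrix}, \qquad \mathbf{Q}_2 = \begin{bmatrix} b & a \\ 0 & b \end{bmatrix},
\end{align}
and a direct computation yields the Hermitian matrix
\begin{align}
\mathbf{P}_2\mathbf{P}_2^{\sf H} - \mathbf{Q}_2^{\sf H}\mathbf{Q}_2 = \begin{bmatrix} 1 - |b|^2 & \bar a - \bar b a \\ a - \bar a b & 1 - |b|^2 \end{bmatrix}.
\end{align}
Its determinant is $(1-|b|^2)^2 - (\bar a - \bar b a)(a - \bar a b)$; expanding the product and using the identity $\bar a^2 b + \bar b a^2 = 2\Re(a^2\bar b)$ turns the positivity condition into $(1-|b|^2)^2 + 2\Re(a^2\bar b) > |a|^2(1 + |b|^2)$, which is precisely the second inequality.

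The only genuine subtlety is the step from real to complex coefficients: \Cref{schur} is phrased for real polynomials, so the proof must justify that the same determinant inequalities stay valid for complex $a, b$ — this is the content of the Schur--Cohn criterion, and nothing beyond invoking it is needed. As an independent check, and a route that sidesteps any bookkeeping of $\mathbf{P}_k, \mathbf{Q}_k$, I would run one step of Schur's reduction: $|b| < 1$ together with the degree-one reduced polynomial $(1-|b|^2)\l + (a - b\bar a)$ having its root inside the disk, i.e.\ $|a - b\bar a| < 1 - |b|^2$; squaring and using $|a - b\bar a|^2 = |a|^2(1+|b|^2) - 2\Re(a^2\bar b)$ reproduces the same second inequality. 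Beyond this one conceptual point, everything is a $2\times 2$ determinant evaluation, so I expect no real obstacle.
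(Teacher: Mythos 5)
Your proposal is correct and follows essentially the same route as the paper: apply \Cref{schur} with $n=2$, compute $\mathbf{P}_1,\mathbf{Q}_1,\mathbf{P}_2,\mathbf{Q}_2$, and expand the two determinant conditions, which yields exactly the matrix $\begin{bmatrix} 1-|b|^2 & \bar a - a\bar b \\ a - \bar a b & 1-|b|^2\end{bmatrix}$ and the stated inequalities. Your added remarks --- flagging that \Cref{schur} is stated for real polynomials while the lemma needs the complex Schur--Cohn version, and the cross-check via one step of Schur reduction using $|a-b\bar a|^2 = |a|^2(1+|b|^2) - 2\Re(a^2\bar b)$ --- are correct and, if anything, slightly more careful than the paper's own proof, which applies the criterion with Hermitian conjugates without comment.
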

\begin{proof}
For quadratic polynomials, we compute 
\be
&&{\bf P}_1 = [1],\, {\bf Q}_1 = [b], \\
&&{\bf P}_2 = \begin{bmatrix}
1 & 0 \\
a & 1
\end{bmatrix}, \,
{\bf Q}_2 = \begin{bmatrix}
b & a \\
0 & b
\end{bmatrix}, 
\en
We require $\det({\bf P}_k {\bf P}_k^{\sf H} - {\bf Q}_k^{\sf H}{\bf Q}_k) =: \delta_k > 0$, for $k = 1, 2$. If $k = 1$, we have $1 - |b|^2 > 0$. If $k = 2$, we have:
\be
{\bf P}_k {\bf P}_k^{\sf H} - {\bf Q}_k^{\sf H} {\bf Q}_k =  \begin{bmatrix}
1 - |b|^2 & \bar{a} - a \bar{b} \\
a - \bar{a} b & 1 - |b|^2 
\end{bmatrix},
\en
where $\bar{a}$ means the complex conjugate. The determinant should be positive, so we have:
\be
 (1 - |b|^2)^2 + 2 \Re(a^2 \bar{b}) > |a|^2 (1 + |b|^2).
\en
\end{proof}

Some proofs in this section rely on Mathematica code, mostly with the built-in function \texttt{Reduce}. This function relies on cylindrical algebraic decomposition \citep{basu2005algorithms} and can be verified manually.

\subsection{Heavy ball (HB)}
We study the heavy ball method HB$(\a_1, \a_2, \b)$ \citep{Polyak64} in the context of minimax optimization, as also studied in \citet{GidelHPHLLM19, zhang2019convergence}: 
\be \label{eq:hb} \zv_{t+1} = \zv_{t} + {\vv}(\zv_t) + \b(\zv_t - \zv_{t-1}), \vv(\zv) = (-\a_1 \n_\xv f(\zv), \a_2 \n_\yv f(\zv)). \en
\begin{thm}[\textbf{HB}]\label{thm:hb}
${\rm HB}(\a_1, \a_2, \b)$ is exponentially stable iff $\forall \, \l \in \sp(\haa)$, $ |\b| < 1$,
$$
2 \b \Re(\l^2) - 2 (1 - \b)^2 (1+\b) \Re(\l) >  (1 + \b^2)|\l|^2.$$
\end{thm}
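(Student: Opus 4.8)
The plan is to linearize the heavy-ball recurrence \eqref{eq:hb} around the stationary point $(\xv^*, \yv^*)$ and reduce the spectral-radius condition to a family of scalar complex quadratic root-location problems, one for each eigenvalue of $\haa$, each of which is resolved by Lemma~\ref{lem:2}. First I would set $\uv_t = \zv_t - \zv^*$ and use $\vv(\zv^*) = \zero$ together with $\D\vv(\zv^*) = \haa$ (the Jacobian in \eqref{eq:gen_H}) to obtain, to first order, the linearized second-order recurrence
\[
\uv_{t+1} = \big((1+\b)\Iv + \haa\big)\,\uv_t - \b\,\uv_{t-1}.
\]
Following the definition of exponential stability, the relevant object is the spectral radius of the companion matrix of this recurrence, namely
\[
\Mv = \begin{bmatrix} (1+\b)\Iv + \haa & -\b\Iv \\ \Iv & \zero \end{bmatrix},
\]
and the goal is to show $\rho(\Mv) < 1$ is equivalent to the stated inequality.

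Second, I would factor the characteristic polynomial of $\Mv$ across $\sp(\haa)$. Using the Schur-complement determinant formula (valid for $\mu \ne 0$ and extended to all $\mu$ by polynomial identity) and then conjugating $\haa$ into upper-triangular Schur form, the matrix $\mu^2\Iv - \mu((1+\b)\Iv + \haa) + \b\Iv$ becomes upper triangular with diagonal entries $\mu^2 - ((1+\b)+\l)\mu + \b$ for $\l \in \sp(\haa)$, so that
\[
\det(\mu\Iv - \Mv) = \det\!\big(\mu^2\Iv - \mu((1+\b)\Iv + \haa) + \b\Iv\big) = \prod_{\l \in \sp(\haa)} \big(\mu^2 - ((1+\b)+\l)\mu + \b\big).
\]
Hence $\rho(\Mv) < 1$ if and only if, for every $\l \in \sp(\haa)$, both roots of $\mu^2 - ((1+\b)+\l)\mu + \b$ lie strictly inside the unit disk.

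Third, I would apply Lemma~\ref{lem:2} to each quadratic with $a = -((1+\b)+\l)$ and $b = \b$. Since $\b$ is real, its first condition $|b| < 1$ is exactly $|\b| < 1$. The remaining task is to expand the second condition $(1-|b|^2)^2 + 2\Re(a^2\bar b) > |a|^2(1+|b|^2)$ using $\Re(a^2) = (1+\b+\Re\l)^2 - (\Im\l)^2$ and $|a|^2 = (1+\b+\Re\l)^2 + (\Im\l)^2$. Collecting terms, invoking the identities $2\b - 1 - \b^2 = -(1-\b)^2$ and $2\b + 1 + \b^2 = (1+\b)^2$, and substituting $\Re(\l^2) = (\Re\l)^2 - (\Im\l)^2$ and $|\l|^2 = (\Re\l)^2 + (\Im\l)^2$, the inequality collapses to $2\b\,\Re(\l^2) - 2(1-\b)^2(1+\b)\Re(\l) > (1+\b^2)|\l|^2$, which is the claimed condition. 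The main obstacle is purely the careful algebraic bookkeeping in this final simplification; the linearization and the triangularization argument are standard, and the only subtlety worth stating explicitly is that $\Mv$, rather than $\haa$ itself, is the Jacobian whose spectrum governs stability.
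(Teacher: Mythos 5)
Your proposal is correct and follows essentially the same route as the paper: form the augmented companion/Jacobian matrix $\begin{bmatrix} (1+\b)\Iv + \haa & -\b\Iv \\ \Iv & \zero \end{bmatrix}$, reduce its spectrum to the roots of $w^2 - (\b+1+\l)w + \b$ for each $\l \in \sp(\haa)$, and apply Lemma~\ref{lem:2} with $b=\b$ real; your final algebraic simplification also checks out. The only difference is that you spell out the Schur-complement/triangularization step that the paper states without proof.
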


\begin{proof} With state augmentation $\zv_t \to (\zv_{t+1}, \zv_t)$, the Jacobian for HB$(\a_1, \a_2, \b)$ is:
\be\label{eq:hb_j}
{\bf J}_{\rm HB}(f) = \begin{bmatrix}
(1+\b) {\bf I}_{n+m} + \haa & -\b {\bf I}_{n+m} \\
{\bf I}_{n+m} & \zero
\end{bmatrix},
\en
The spectrum can be computed as:
\be
&&\sp({\bf J}_{\rm HB}(f)) =\{w : p(w):=(w-1)(w-\b) - w \l = 0, \l \in \haa\}.
\en
This quadratic equation can be further expanded as:
\be\label{eq:hb_expand}
w^2 - (\b + 1 + \l) w + \b = 0.
\en
With \Cref{lem:2}, we obtain the necessary and sufficient conditions for which all the roots are within a unit disk:
\be\label{eq:n_mom}
&&|\b| < 1,  2\b \Re(\l^2) - 2(1 - \b)^2 (1+\b) \Re(\l) > (1 + \b^2)|\l|^2.
\en
\end{proof}

This theorem can also be derived from Euler transform as in \cite[Section 6]{niethammer1983analysis} which is used in analyzing methods for solving linear equations. The first inequality $|\b| < 1$ can be easily used to guide hyper-parameter tuning in practice. The second condition in fact describes an ellipsoid centered at $(-\b - 1, 0)$. If we define $\l = u + i v$ and $(u, v)\in \R^2$, then this condition can be simplified as:
\be\label{eq:elips}
\frac{(u + \b + 1)^2}{(\b + 1)^2} + \frac{v^2}{(\b - 1)^2} < 1.
\en
As shown on the left of \Cref{fig:hb_nag_conv}, if the momentum factor $\b$ is positive, the ellipsoid is elongated in the horizontal direction; otherwise, it is elongated in the vertical direction. This agrees with existing results on negative momentum \citep{GidelHPHLLM19, zhang2019convergence}, where they studied bilinear games. 

\begin{cor}[\textbf{HB}]\label{cor:hb_stable}
For any $|\b| < 1$, HB$(\a, \a,\b)$ is exponentially stable for small enough $\a$ at a local saddle point iff at such a point $\Re(\l) \neq 0$ for all $\l \in \sp(\Hv)$.
\end{cor}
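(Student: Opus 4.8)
The plan is to reduce the stability criterion for HB$(\a,\a,\b)$ from \Cref{thm:hb} to a statement purely about $\sp(\Hv)$, using the scaling $\Hv_{\a,\a} = \a \Hv$. Every eigenvalue $\l \in \sp(\Hv_{\a,\a})$ has the form $\l = \a\mu$ with $\mu \in \sp(\Hv)$, so I would substitute $\l = \a\mu$ into the inequality $2\b\Re(\l^2) - 2(1-\b)^2(1+\b)\Re(\l) > (1+\b^2)|\l|^2$ and divide through by $\a > 0$. This yields, for each $\mu$, the equivalent condition
\be
g(\a, \mu) := -2(1-\b)^2(1+\b)\Re(\mu) + \a\big(2\b\Re(\mu^2) - (1+\b^2)|\mu|^2\big) > 0.
\en
Since $|\b| < 1$ is assumed, the first requirement $|\b|<1$ of \Cref{thm:hb} holds automatically, so exponential stability of HB$(\a,\a,\b)$ is equivalent to $g(\a,\mu) > 0$ for every $\mu \in \sp(\Hv)$.

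For the ``if'' direction, suppose $\Re(\mu) \neq 0$ for all $\mu \in \sp(\Hv)$. By \Cref{lem:loc_sadl} applied to $\Hv = \Hv_{\a,\a}/\a$, at a local saddle point we have $\Re(\mu) \leq 0$, and combined with $\Re(\mu)\neq 0$ this gives $\Re(\mu) < 0$ for every eigenvalue. As $\a \to 0^+$ the limit of $g(\a,\mu)$ is $-2(1-\b)^2(1+\b)\Re(\mu)$, which is strictly positive because $(1-\b)^2 > 0$ and $(1+\b) > 0$ (both from $|\b| < 1$) while $\Re(\mu) < 0$. By continuity in $\a$ there is some $\a_\mu > 0$ with $g(\a,\mu) > 0$ whenever $0 < \a < \a_\mu$; taking the minimum over the finite set $\sp(\Hv)$ produces a single threshold $\a_0 > 0$ below which stability holds.

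For the ``only if'' direction I would argue the contrapositive: suppose some $\mu_0 \in \sp(\Hv)$ has $\Re(\mu_0) = 0$. Using $\Re(\mu_0) \leq 0$ from \Cref{lem:loc_sadl}, this forces $\mu_0 = i\omega$ to be purely imaginary. Then the linear term of $g$ vanishes, and with $\mu_0^2 = -\omega^2$, $|\mu_0|^2 = \omega^2$ a direct computation gives $g(\a,\mu_0) = -\a\,\omega^2(1+\b)^2 \leq 0$ for all $\a > 0$, with equality only when $\mu_0 = 0$. In either case the strict inequality fails for every $\a$, so HB$(\a,\a,\b)$ is not exponentially stable, which completes the equivalence.

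The calculations are routine; the two points that need care are the finiteness argument guaranteeing a single $\a_0$ works for all eigenvalues simultaneously, and checking that the purely imaginary case — including the degenerate value $\mu_0 = 0$ — always destroys strict positivity. I expect the only (minor) obstacle to be bookkeeping the signs of $(1-\b)^2(1+\b)$ and $(1+\b)^2$ under the hypothesis $|\b| < 1$, since these signs are exactly what pin down the sign of both the limiting expression and the imaginary-axis expression.
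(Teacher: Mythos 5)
Your proof is correct and follows essentially the same route as the paper's: the paper likewise invokes \Cref{lem:loc_sadl} to get $\Re(\mu)\leq 0$ and then observes that the stability condition of \Cref{thm:hb} (written there as the ellipse \eqref{eq:elips}, which is equivalent to your inequality $g(\a,\mu)>0$ after the substitution $\l=\a\mu$) holds for small $\a$ exactly when $\Re(\mu)<0$ and fails for every $\a$ when $\Re(\mu)=0$. You have simply written out the limit, the finite-spectrum uniformity, and the purely imaginary computation that the paper leaves implicit.
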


\begin{proof} From \Cref{lem:loc_sadl}, for any $\l \in \sp(\Hv)$, $\Re(\l) \leq 0$. If $\Re(\l) \neq 0$ for all $\l \in \sp(\Hv)$, then \eqref{eq:elips} holds for small enough $\a$. If $\Re(\l) = 0$ for some $\l \in \sp(\Hv)$, we cannot have \eqref{eq:elips}. 
\end{proof}

\begin{figure}
    \centering
    \includegraphics[width=0.8\textwidth]{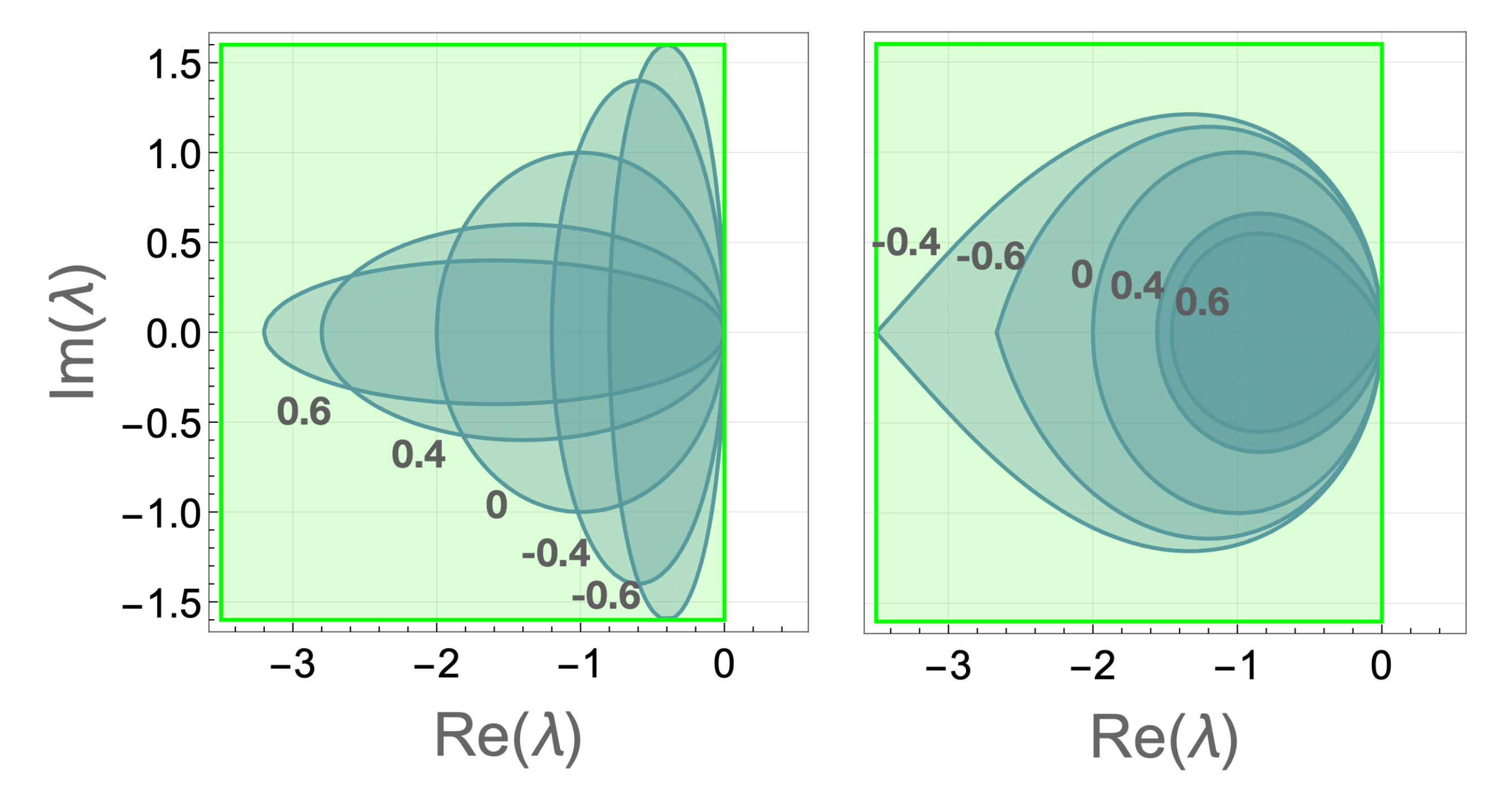}
    \caption{Convergence regions of momentum methods with different momentum parameter $\b$: ({\bf left}) HB$(\a, \b)$; ({\bf right}) NAG$(\a, \b)$. We take $\b = 0, \pm 0.4, \pm 0.6$ (as shown in the figure).  The green region represents the one where the eigenvalues of $\sp({\bf H}_{\a_1, \a_2})$ at local saddle points may occur. }
    \label{fig:hb_nag_conv}
\end{figure}

\subsection{Nesterov's accelerated gradient (NAG)}
Nesterov's accelerated gradient \citep{nesterov1983method} is a variant of Polyak's heavy ball, which achieves the optimal convergence rate for convex functions. It has been widely applied in deep learning \citep{sutskever2013importance}. In \cite{bollapragada2019nonlinear}, the authors analyzed the spectrum of NAG using numerical range in the context of linear regression, which is equivalent to the case when $\sp(\Hv)\subset \R$ (cf.~\citet[p.~11]{bollapragada2019nonlinear}). 

The key difference between HB and NAG is  the order of momentum update and the gradient update. We study Nesterov's momentum for minimax optimization:
\be\label{eq:nag_alg}
\zv_{t+1} = \zv'_t + \a \vv(\zv'_t), \, \zv'_t = \zv_t + \b (\zv_t - \zv_{t-1}),
\en
which we denote as NAG$(\a_1, \a_2, \b)$. We have the following stability result for NAG:
\begin{thm}[\textbf{NAG}]\label{thm:nag}
${\rm NAG}(\a_1, \a_2, \b)$ is exponentially stable iff for any $\l \in \sp(\haa)$: 
\begin{align}
\label{eq:nag_1} 
 |1 + \l|^{-2} > 1 + 2 \b (\b^2 - \b - 1)\Re(\l) + \b^2 |\l|^2 (1+2\b), \, |\b|\cdot |1+\l| < 1.
\end{align}
\end{thm}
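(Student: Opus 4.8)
The plan is to follow the structure of the heavy-ball proof (\Cref{thm:hb}) almost verbatim, since NAG \eqref{eq:nag_alg} differs from HB only in the order in which the momentum and gradient substeps are composed. First I would linearize the recursion about the stationary point $(\xv^*, \yv^*)$. Writing $\deltav_t = \zv_t - \zv^*$ and using that the Jacobian of the gradient step $\zv \mapsto \zv + \vv(\zv)$ at $\zv^*$ equals ${\bf I}_{n+m} + \haa$ (the step sizes being carried inside $\vv$), the two substeps $\zv'_t = (1+\b)\zv_t - \b\zv_{t-1}$ and $\zv_{t+1} = \zv'_t + \vv(\zv'_t)$ combine to $\deltav_{t+1} = ({\bf I}_{n+m} + \haa)\big[(1+\b)\deltav_t - \b\deltav_{t-1}\big]$. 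With the augmented state $(\deltav_{t+1}, \deltav_t)$ this yields the block Jacobian
\[
{\bf J}_{\rm NAG}(f) = \begin{bmatrix} (1+\b)({\bf I}_{n+m} + \haa) & -\b({\bf I}_{n+m} + \haa) \\ {\bf I}_{n+m} & \zero \end{bmatrix}.
\]

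Next I would compute $\sp({\bf J}_{\rm NAG})$. Looking for an eigenpair $(u,v)$ with ${\bf J}_{\rm NAG}(u,v)^\top = w(u,v)^\top$ forces $u = wv$ from the second block row, and substituting into the first gives $({\bf I}_{n+m}+\haa)\big[(1+\b)w - \b\big]v = w^2 v$. Taking $v$ to be an eigenvector of $\haa$ with eigenvalue $\l$, so that $({\bf I}_{n+m}+\haa)v = (1+\l)v$, reduces the spectral problem to the scalar complex quadratic
\[
w^2 - (1+\b)(1+\l)\,w + \b(1+\l) = 0
\]
for each $\l \in \sp(\haa)$. Exponential stability is equivalent to all roots $w$ lying strictly inside the unit disk, so I would apply the Schur criterion \Cref{lem:2} with $a = -(1+\b)(1+\l)$ and $b = \b(1+\l)$. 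The condition $|b| < 1$ immediately yields $|\b|\cdot|1+\l| < 1$, the second stated inequality.

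The only real obstacle is showing that the remaining Schur inequality $(1-|b|^2)^2 + 2\Re(a^2\bar b) > |a|^2(1+|b|^2)$ collapses to the first stated inequality. I would substitute $a^2\bar b = \b(1+\b)^2|1+\l|^2(1+\l)$, hence $\Re(a^2\bar b) = \b(1+\b)^2|1+\l|^2(1+\Re\l)$, together with $|a|^2 = (1+\b)^2|1+\l|^2$ and $|b|^2 = \b^2|1+\l|^2$. Setting $m := |1+\l|^2 = 1 + 2\Re(\l) + |\l|^2$ turns the inequality into a polynomial in $m$; after collecting terms the $m^2$ contributions combine to $-\b^2(1+2\b)m^2$, dividing through by $m>0$ and re-expanding $m$ in $\Re(\l)$ and $|\l|^2$ makes several constant and linear terms telescope, and one is left with exactly $|1+\l|^{-2} > 1 + 2\b(\b^2 - \b - 1)\Re(\l) + \b^2|\l|^2(1+2\b)$. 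This bookkeeping is routine but error-prone, so as elsewhere in this section I would cross-check it with the symbolic reduction (Mathematica \texttt{Reduce}).

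Finally I would dispose of the degenerate case $1+\l = 0$ (where $m=0$) separately: there $a=b=0$, the quadratic becomes $w^2 = 0$ with a double root at the origin, so the point is stable, consistent with the stated condition since $|1+\l|^{-2} = +\infty$ dominates its finite right-hand side and $|\b|\cdot|1+\l| = 0 < 1$. This confirms that the two displayed inequalities are jointly necessary and sufficient for exponential stability of NAG.
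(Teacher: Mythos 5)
Your proposal matches the paper's proof essentially verbatim: the same augmented-state Jacobian, the same reduction to the complex quadratic $w^2 - (1+\b)(1+\l)w + \b(1+\l) = 0$, and the same application of the Schur criterion (\Cref{lem:2}) with $a = -(1+\b)(1+\l)$, $b = \b(1+\l)$; your algebraic simplification of the second Schur inequality (the $m^2$ coefficient $-\b^2(1+2\b)$, the constant collapsing to $-1$, the $\Re(\l)$ coefficient $-2\b(\b^2-\b-1)$) checks out and yields exactly \eqref{eq:nag_1}. The explicit treatment of the degenerate case $1+\l=0$ is a small addition the paper omits, but it is consistent and does not change the argument.
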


\begin{proof} With state augmentation $\zv_t \to (\zv_{t+1}, \zv_t)$, the Jacobian for NAG is:
\be
\begin{bmatrix}
(1+\b) ({\bf I}_{n+m} + \haa) & -\b ({\bf I}_{n+m} + \haa) \\
{\bf I}_{n+m} & \zero
\end{bmatrix}.\nonumber
\en
The spectrum can be computed as:
\be
\sp({\bf J}(f)) &=&\{w : p(w):=w^2 - w(1+\b)(1+\l) + \b(1+\l) = 0, \l \in \haa\}.\nonumber
\en
Comparing with \eqref{eq:hb_expand}, we find that the two characteristic polynomials are different only by $O(\a \b)$. With Lemma \ref{lem:2}, the condition for local linear convergence is:
\be
\label{eq:nag_schur_1}&&|1 + \l|^{-2} >   1 + 2 \b (\b^2 - \b - 1)\Re(\l) + \b^2 |\l|^2 (1+2\b),\\
\label{eq:nag_schur_2}&&|\b|\cdot |1+ \l| < 1.
\en
\end{proof}

 From \Cref{fig:hb_nag_conv}, the convergence region of NAG is better conditioned than HB. However, NAG is still similar to HB and GDA in terms of the local convergence behavior:

\begin{cor}[\textbf{NAG}]\label{cor:nag}
If $\Re(\l) \geq 0$ for some $\l \in \haa$, then ${\rm NAG}(\a_1, \a_2, \b)$ is not exponentially stable.
\end{cor}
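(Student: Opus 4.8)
The plan is to reduce exponential stability to a statement about the two roots of the characteristic polynomial that already appears in the proof of Theorem~\ref{thm:nag}, and then to exploit that the momentum parameter $\b$ is \emph{real}. For a fixed eigenvalue $\l \in \sp(\haa)$ the relevant roots $w_1,w_2$ are those of
\begin{align}
p(w) = w^2 - (1+\b)(1+\l)\,w + \b(1+\l),
\end{align}
and $\mathrm{NAG}(\a_1,\a_2,\b)$ is exponentially stable iff for \emph{every} such $\l$ both roots satisfy $|w_i|<1$. It therefore suffices to prove, at the level of a single eigenvalue, the contrapositive: if $\Re(\l)\ge 0$ then $p$ has a root on or outside the unit circle. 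The first step is the identity $p(1) = 1-(1+\b)(1+\l)+\b(1+\l) = -\l$, so that $(1-w_1)(1-w_2) = -\l$ and hence $\Re(\l)\ge 0$ is equivalent to $\Re\big((1-w_1)(1-w_2)\big)\le 0$.

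Next I would dispose of the degenerate case $\b=0$ (plain GDA) directly: there $p(w)=w\big(w-(1+\l)\big)$ and $|1+\l|^2 = 1+2\Re(\l)+|\l|^2\ge 1$, so the root $1+\l$ is not strictly inside the disk. For $\b\ne 0$ (and $\l\ne -1$, automatic once $\Re(\l)\ge 0$) both roots are nonzero, and Vieta's formulas give $w_1+w_2=(1+\b)(1+\l)$, $w_1w_2=\b(1+\l)$; dividing, the essential consequence of $\b\in\RR$ is
\begin{align}
\frac{1}{w_1}+\frac{1}{w_2} = \frac{w_1+w_2}{w_1w_2} = \frac{1+\b}{\b}\in\RR .
\end{align}
Thus everything reduces to the purely geometric claim: \emph{if $w_1,w_2$ lie in the open unit disk and $\tfrac1{w_1}+\tfrac1{w_2}\in\RR$, then $\Re\big((1-w_1)(1-w_2)\big)>0$}, which contradicts $\Re(\l)\ge 0$.

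To prove the claim I set $a_j=1/w_j$, so $|a_j|>1$ and the reality of $a_1+a_2$ forces $a_1=p+q\mathrm{i}$, $a_2=r-q\mathrm{i}$ with $p,r,q\in\RR$, $p^2+q^2>1$, $r^2+q^2>1$. Writing $(1-w_1)(1-w_2)=1-\tfrac{(a_1+a_2)-1}{a_1a_2}$ and taking real parts, the inequality becomes
\begin{align}
F := (p^2+q^2)(r^2+q^2)-(p+r-1)(pr+q^2) > 0 .
\end{align}
The key algebraic observation is the factorization
\begin{align}
F = pr(p-1)(r-1) + q^2\big[(p-\tfrac12)^2+(r-\tfrac12)^2+\tfrac12\big] + q^4 ,
\end{align}
whose $q^2$-coefficient is strictly positive. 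Hence $F$ is strictly increasing in $q^2$, so it suffices to check $F\ge 0$ on the boundary of the feasible region, i.e.\ when one of $p^2+q^2$, $r^2+q^2$ equals $1$; there (substituting $q^2=1-p^2$, and using symmetry) $F$ collapses to $(1-p)\big[(2-p)(p+1)+r(r-1)\big]$, whose sign I would settle by a short case check.

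I expect the last step to be the main obstacle: the only term of $F$ that can be negative is $pr(p-1)(r-1)$, so the boundary analysis must be organized according to the signs of $p(p-1)$ and $r(r-1)$, the delicate case being exactly one of $p,r$ inside $(0,1)$ (where the constraints $p^2+q^2>1$, $r^2+q^2>1$ must be invoked). Everything preceding it is routine — the reduction to a single eigenvalue, the identity $p(1)=-\l$, the realness constraint, and the factorization are all immediate computations. As an alternative matching the computational style of this appendix, one could instead feed the two Schur inequalities \eqref{eq:nag_1} together with $\Re(\l)\ge 0$ into \texttt{Reduce} and certify that the resulting system is infeasible.
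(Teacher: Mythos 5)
Your proof is correct, but it takes a genuinely different route from the paper's. The paper's entire argument is a single \texttt{Reduce} call: it feeds the two Schur stability inequalities \eqref{eq:nag_1} of Theorem~\ref{thm:nag}, together with $u=\Re(\l)\geq 0$, into Mathematica and reports that the system is infeasible --- which is exactly the CAS alternative you mention in your last sentence. You instead bypass the Schur conditions entirely and argue directly on the roots of $p(w)=w^2-(1+\b)(1+\l)w+\b(1+\l)$, using the identity $p(1)=-\l$ and the realness of $1/w_1+1/w_2=(1+\b)/\b$ to reduce everything to an elementary geometric statement about reciprocals of points in the open unit disk, settled by an explicit factorization of $F$. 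I checked the pieces you left implicit: the factorization of $F$ is algebraically correct; on the boundary $q^2=1-p^2$ (taking WLOG $|p|\leq|r|$ and $|p|\leq 1$) one indeed gets $F=(1-p)\bigl[(2-p)(p+1)+r(r-1)\bigr]$, and in the only delicate case $0<r<1$ the constraints $p\geq-r$ and $p^2\leq r^2$ give $(2-p)(p+1)+r(r-1)=2+p-p^2+r^2-r\geq 2-2r>0$; when $|p|>1$ and $|r|>1$ the relevant boundary is instead the corner $q=0$, where $F=pr(p-1)(r-1)>0$ by a sign check in each quadrant (this corner is not literally of the form ``one of $p^2+q^2$, $r^2+q^2$ equals $1$'' as you wrote, but your factorization disposes of it immediately). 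The trade-off is clear: the paper's proof is two lines but opaque and machine-dependent, while yours is longer but self-contained, human-verifiable, and actually explains the mechanism --- because the momentum parameter is real, both roots lying strictly inside the disk forces $\Re\bigl((1-w_1)(1-w_2)\bigr)>0$, i.e.\ $\Re(\l)<0$, so NAG can only be exponentially stable when every eigenvalue of $\haa$ has strictly negative real part.
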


\begin{proof}
Take $\l \in \haa $ and assume $\l = u + i v$ with $u, v\in \R$. \eqref{eq:nag_1} can be translated to the following Mathematica code:
\begin{verbatim}
Reduce[b^2 ((1 + u)^2 + v^2) < 1 && ((1 + u)^2 + v^2) (1 + 
2 b (b^2 - b - 1) u + b^2 (u^2 + v^2) (1 + 2 b)) < 1 && u >= 0],
\end{verbatim}
and the result is \texttt{False}.
\end{proof}

According to \Cref{lem:loc_sadl},  NAG$(\a_1, \a_2, \b)$ never converges on bilinear games. Summarizing the previous subsections, we conclude that adding momentum does not help in converging to local saddle points.


\section{Proofs in \Cref{sec:local_stab}}\label{app:proof_stability}

\peg*

\begin{proof}
\blue{
From the second equation of \eqref{eq:peg} we obtain
\begin{align}
\zv_{t+3/2} &= \zv_{t+1} + \vv(\zv_{t+1/2}) \tr
&= \zv_{t} + \left(1 + \frac{1}{\beta}\right)\vv(\zv_{t+1/2}) + \vv(\zv_{t-1/2}) - \vv(\zv_{t-1/2}) \tr
&=\zv_{t+1/2} + \left(1 + \frac{1}{\beta}\right)\vv(\zv_{t+1/2})  - \vv(\zv_{t-1/2}).
\end{align}
In the second line we used the first equation of \eqref{eq:peg} and in the third line we used the second equation of \eqref{eq:peg}. 
}
\end{proof}

\thmeg*

\begin{proof}
From \eqref{eq:eg_alg} the update of EG can be rewritten as $\zv_{t+1} = \zv_t + \vv(\zv_t + \vv(\zv_t))/\beta$. \blue{We compute the Jacobian matrix of this update:
$$
\Jv = {\bf J}(f) = {\I} + \haa /\b + \haa^2/\b.
$$
It then follows that
$
\sp({\bf J}) = 1 + \sp(\haa)/\b +  \sp(\haa)^2/\b,
$
}
where the operation is element-wise. Therefore, $\rho({\bf J}(f)) < 1$ iff 
$$
\max_{\l \in \haa} |1 + \l/\b + \l^2/\b| < 1.
$$
Similarly for OGD, the spectrum can be computed as:
\be
\sp({\bf J}_{\rm OGD}) &=&\{x : p(x):=x^2 - (1 + k \l) x +\l = 0,\,\l \in \haa\}.
\en
With \Cref{lem:2}, we obtain the necessary and sufficient conditions when the roots of $p(x)$ are in the unit circle:
\be
&&|\l| < 1, \, (k - 1)|\l|^2 (k - 3 + (k + 1) |\l|^2) <2(k - 1)\Re(\l)(k|\l|^2 - 1),\,\forall \l \in \haa.  \nonumber
\en
\end{proof}

\egratio*

\begin{proof}
Rewriting $\l = x + i y$ with $x, y\in \R$ for $\l \in \haa$ and using \Cref{thm:eg_stable}, we run the following Mathematica code ($b_1 \equiv \b_1, \, b_2\equiv \b_2$):
\begin{verbatim}
    Reduce[ForAll[{x, y, b1, b2}, ((y + 2 x y)/b2)^2 + 
      (1 + (x + x^2 - y^2)/b2)^2 < 1 && b1 > b2 > 1, 
      ((y + 2 x y)/b1)^2 + (1 + (x + x^2 - y^2)/b1)^2 < 1]]
\end{verbatim}
The answer is \texttt{True}. For the second part, we rewrite the stability condition for OGD as:
\be
k|\l|^2(1 + |\l|^2 - 2\Re(\l)) < 3|\l|^2 - |\l|^4 - 2 \Re(\l).
\en
Since $\Re(\l) \leq |\l|$, $1 + |\l|^2 - 2\Re(\l)\geq 0$. The left hand side increases with $k$. 
\end{proof}


From \Cref{thm:hb} and \Cref{thm:eg_stable} we can easily infer the relation among the stable sets of gradient algorithms:

\begin{restatable}[]{cor}{corgdeg}\label{thm:gd_eg_2}
Given $|\l| < 1$ with $\l \in {\bf H}_{\a_1, \a_2}$, whenever GDA$(\a_1, \a_2)$ converges, EG$(\a_1, \a_2, 1)$ converges as well. Given $|\l| < 1/\sqrt{3}$ with $\l \in {\bf H}_{\a_1, \a_2}$, whenever GDA$(\a_1, \a_2)$ converges, OGD$(2, \a_1, \a_2)$ converges.
\end{restatable}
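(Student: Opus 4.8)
The plan is to prove each stability assertion eigenvalue-by-eigenvalue, reducing everything to elementary inequalities in the real and imaginary parts of $\l \in \sp(\Hv_{\a_1, \a_2})$. First I would record the three relevant stability criteria. Treating GDA as the $\b = 0$ case of \Cref{thm:hb}, GDA$(\a_1, \a_2)$ is exponentially stable iff $|1 + \l| < 1$, equivalently $2\Re(\l) + |\l|^2 < 0$, for every such $\l$. By \Cref{thm:eg_stable}, EG$(\a_1, \a_2, 1)$ is stable iff $|1 + \l + \l^2| < 1$, and OGD$(2, \a_1, \a_2)$ is stable iff $|\l| < 1$ and $|\l|^2(3|\l|^2 - 1) < 2\Re(\l)(2|\l|^2 - 1)$. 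Writing $\l = u + iv$ and $r^2 = u^2 + v^2$, the hypotheses become $2u + r^2 < 0$ (GDA) together with $r < 1$ (resp.\ $r^2 < 1/3$).

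For the EG claim, I would expand $|1 + \l + \l^2|^2 = 1 + 2u + 2(u^2 - v^2) + r^2 + 2r^2 u + r^4$ and rewrite the target inequality $|1+\l+\l^2|^2 < 1$ as $2u + 4u^2 - r^2 + 2r^2 u + r^4 < 0$ after substituting $v^2 = r^2 - u^2$. The key move is the substitution $s := -2u$: the GDA hypothesis gives $s > r^2$, while $u^2 \le r^2$ forces $s \le 2r$, so $s \in (r^2, 2r]$, and the inequality becomes $g(s) := s^2 - (1 + r^2)s + r^2(r^2 - 1) < 0$. Since $g$ is an upward parabola, convexity reduces the problem to checking the two endpoints: $g(r^2) = r^2(r^2 - 2) < 0$ and $g(2r) = r(r-1)(r^2 - r + 2) < 0$ for $r \in (0, 1)$ (the last quadratic factor has negative discriminant, hence is positive). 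Convexity of $g$ then yields $g < 0$ on all of $[r^2, 2r]$, which is exactly the EG condition.

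The OGD claim is a short sign argument. Under $r^2 < 1/3$ we have $3r^2 - 1 < 0$, so the left-hand side $r^2(3r^2 - 1) \le 0$; moreover GDA convergence forces $u < -r^2/2 < 0$ (in particular $\l \ne 0$), and $r^2 < 1/2$ gives $2r^2 - 1 < 0$, whence the right-hand side $2u(2r^2 - 1) > 0$. Therefore the strict inequality $|\l|^2(3|\l|^2 - 1) < 2\Re(\l)(2|\l|^2 - 1)$ holds, and $|\l| < 1$ is immediate from $r^2 < 1/3$, giving OGD stability.

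I expect the main obstacle to be the EG computation: correctly expanding $|1 + \l + \l^2|^2$ and recognizing the $s = -2u$ substitution that turns the problem into a one-variable convex quadratic whose endpoint values factor cleanly. The OGD part, by contrast, needs only the observation that the two sides have opposite signs under the hypotheses.
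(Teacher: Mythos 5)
Your proof is correct, and it follows the same overall reduction as the paper: treat GDA as the $\beta=0$ case of \Cref{thm:hb} to get the criterion $|1+\l|<1$ (equivalently $2\Re(\l)+|\l|^2<0$), take the EG/OGD criteria from \Cref{thm:eg_stable}, and verify the two resulting implications eigenvalue-by-eigenvalue. The difference is in how those implications are certified. The paper simply asserts the EG implication and discharges the OGD implication (and, implicitly, the EG one) to Mathematica's \texttt{Reduce}, i.e.\ to cylindrical algebraic decomposition; you instead give an explicit elementary verification. Your EG argument is the substantive part and it checks out: with $r=|\l|$, $u=\Re(\l)$, the target $|1+\l+\l^2|^2<1$ does reduce, after eliminating $v^2=r^2-u^2$ and setting $s=-2u$, to $g(s)=s^2-(1+r^2)s+r^2(r^2-1)<0$ on $s\in(r^2,2r]$ (the lower bound from $|1+\l|<1$, the upper from $|u|\le r$ together with $u<0$), and the endpoint values $g(r^2)=r^2(r^2-2)$ and $g(2r)=r(r-1)(r^2-r+2)$ are both negative for $0<r<1$, so convexity of $g$ finishes it; the OGD part is indeed just a sign comparison. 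What your route buys is a self-contained, human-checkable proof in place of a computer-algebra call; what it costs is a page of algebra that the paper avoids. One cosmetic point: you should note explicitly that $|1+\l|<1$ rules out $\l=0$, so $r>0$ and the strict inequalities (e.g.\ the positivity of the right-hand side in the OGD step) are legitimate — you use this but state it only in passing.
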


\begin{proof}
When $\b = 0$, \eqref{eq:n_mom} becomes $|1 + \l| < 1$. The first part follows from:
\be
|1 + \l| < 1 \mbox{ and }|\l | < 1 \Longrightarrow |1 + \l + \l^2| < 1.
\en
Taking $k = 2$, from \Cref{thm:eg_stable}, the stability condition for OGD is:
\be\label{eq:ogd2}
|\l|^2 (-1 + 3|\l|^2) < 2\Re(\l)(2|\l|^2 - 1).
\en
We want to show that for all $|1+\l| < 1$ and $|\l| < 1/\sqrt{3}$, \eqref{eq:ogd2} holds, and thus we define $\l = u + i v$ ($u, v\in \R$) and use the following Mathematica code:
\begin{verbatim}
Reduce[ForAll[{u, v},  (1 + u)^2 + v^2 < 1 &&  u^2 + v^2 < 1/3, 
(u^2 + v^2) (-1 + 3 (u^2 + v^2)) < 2 u (-1 + 2 (u^2 + v^2))]]
\end{verbatim}
This result is \texttt{True}.
\end{proof}

\lemlocsadl*

\begin{proof}
The convergence analysis reduces to the spectral study of ${\bf H}_{1,\g}$. With the similarity transformation:
\be
&&{\bf H'} ={\bf U}^{-1}{\bf H}_{1,\g} {\bf U} = \begin{bmatrix}
- \n_{\xv \xv}^2 f & -\sqrt{\g} \n_{\xv \yv}^2 f \\
\sqrt{\g} \n_{\yv \xv}^2 f & \g \n_{\yv \yv}^2 f
\end{bmatrix}, \, {\bf U} = \begin{bmatrix}
{\bf I} & \zero \\
\zero & \sqrt{\g}{\bf I} 
\end{bmatrix},
\en
It suffices to study the spectrum of ${\bf H'}$. For any local saddle point $(\xv^*, \yv^*)$, we have:
\be
\n_{\xv \xv}^2 f(\xv^*, \yv^*) \succeq {\bf 0}, \, \n_{\yv \yv}^2 f(\xv^*, \yv^*) \preceq {\bf 0}.
\en
\blue{From this necessary condition, $\Re({\bf H'}) := ({\bf H'} + {\bf H'}^{\top})/2$ is negative semi-definite, and with the Ky Fan inequality (\cite{fan1950theorem}) we have $\Re (\sp({\bf H'})) \prec \sp(\Re({\bf H'})) \prec {\bf 0}$, with ``$\prec$'' meaning majorization \citep{marshall1979inequalities}.} The second part can be proved by assuming $z = -u + i v$ with $u\geq 0$ and $v\in \R$. The quadratic function can be $$q = \frac{ux^2}{2} - \frac{uy^2}{2\g} + \frac{v}{\sqrt{\g}} xy,$$
since one can verify that $(0, 0)$ is a local saddle point where:
\be
{\bf H}_{1, \g} =  \begin{bmatrix}
-u & -v/\sqrt{\g} \\
v\sqrt{\g} & -u
\end{bmatrix},
\en
whose two eigenvalues are $z$ and $\bar{z}$. For bilinear games $f = \xv^\top {\bf C} \yv + \av^\top \xv + \bv^\top \yv$, at any local saddle point, \blue{the Jacobian matrix of the vector field} is:
\be
{\bf H}_{1, \g} = \begin{bmatrix}
\zero & -{\bf C} \\
\g {\bf C}^\top & \zero
\end{bmatrix}.
\en
The eigenvalues are $\l = \pm i\sqrt{\g}\s$, with $\s$ a singular value of $\Cv$. 
\end{proof}

\coregsadl*

\begin{proof}
At a local saddle point, from \Cref{lem:loc_sadl}, for any $\l \in \sp({\bf H})$, $\Re(\l) \leq 0$. \blue{The corollary follows with $0 < |\l| < 1/\a$ for every $\l \in \sp({\bf H})$} and \Cref{thm:eg_stable}, since if $\b = 1$, we can show:
\be
\Re(\l)\leq 0 \mbox{ and }0 < |\l| < 1 \Longrightarrow |1 + \l + \l^2| < 1,
\en
with the following Mathematica code (rewrite $\l = u + i v$ with $u, v \in \R$):
\begin{verbatim}
    Reduce[ForAll[{u, v}, u <= 0 && 0 < u^2 + v^2 < 1, (v + 2 u v)^2
    + (1 + u + u^2 - v^2)^2 < 1]],
\end{verbatim}
and the result is \texttt{True}. For OGD, if $1 < k \leq 2$, we use \Cref{thm:eg_stable}, \Cref{lem:loc_sadl}, and the following Mathematica code (rewrite $\l = u + i v$ with $u, v \in \R$):

\begin{verbatim}
    Reduce[ForAll[{u,v,k}, 0 < u^2+v^2<1/k^2 && u<=0 && 1<k<=2,
    (u^2+v^2)(-3+k+(1+k)(u^2+v^2)) <2u(-1+k(u^2+v^2))]].
\end{verbatim}
The result is \texttt{True}. If $k\geq 3$ and the game is bilinear, from \Cref{thm:eg_stable}, \Cref{thm:egratio} and \Cref{lem:loc_sadl} we must have $4|\l|^4 < 0$ to obtain local convergence, which is obviously false.
\end{proof}

\lemloc*

\begin{proof}
\blue{Let us assume} $z = u + i v$ with $(u, v)\in \R^2$. We first construct a real polynomial:
\be\label{eq:poly_1}
(\l - z)(\l - \bar{z}) = \l^2 - 2u \l + u^2 + v^2 = 0.
\en
On the other hand, the characteristic polynomial of $\Hv_{\a_1, \a_2}(q)$ with $q(x, y) = ax^2/2 + by^2/2 + cxy$ is:
\be\label{eq:poly_2}
\l^2 + (\a_1 a - \a_2 b)\l + \a_1 \a_2 (c^2 - a b) = 0.
\en
Comparing \eqref{eq:poly_1} and \eqref{eq:poly_2}, it suffices to require that:
\be
\a_1 a - \a_2 b = -2u, \, \a_1 \a_2 (c^2 - ab) = u^2 + v^2,
\en
which always has real solutions given $(\a_1 > 0, \a_2 > 0, u, v)$.
\end{proof}

\thmnegy*

\begin{proof}
Assume $\xv\in \R^n$ and Using Lemma 36 of \citet{jin2019minmax}, for any $\d > 0$, there exists $\g_0 > 0$, when $\g > \g_0$, the eigenvalues of ${\bf H}(1/\g, 1)$, $\l_1, \dots, \l_n, \l_{n+1}, \dots, \l_{m+n}$, are:
\be\label{eq:close}
|\l_i + \mu_i/\g| < \d/\g,\, \forall i = 1, \dots, n,|\l_{i + n} - \nu_i| < \d, \, \forall i = 1, \dots, m,
\en
where $\mu_i \in \sp(\n_{\xv \xv}^2 f - \n_{\xv \yv}^2 f (\n_{\yv \yv}^2 f)^{-1}\n_{\yv \xv}^2 f)$ and $\nu_i \in \sp(\n_{\yv \yv}^2 f)$. From our assumption, $\mu_i > 0$ and $\nu_i < 0$. With \eqref{eq:close}, there exists $\g_0$ such that for every $\g > \g_0$, $\Re(\l_i) < 0$ for all $\l_i \in H(1/\g, 1)$. From \Cref{cor:eg_sadl}, EG ($\beta = 1$) and OGD ($1 < k \leq 2$) are exponentially stable if $\a_2$ is small enough.
\end{proof}

\thmall*

\begin{proof}
We consider $q(x, y):= -x^2 + xy$ as the example, with $\X = \Y = \R$. From \eqref{thm:local_global_q} we know that $(0, 0)$ is a global minimax point. $(0, 0)$ is also local minimax since it is stationary (see \Cref{thm:qc}). $\Hv_{1, \g}$ at $(0, 0)$ is:
\be
\Hv_{1, \g} = \begin{bmatrix}
2 & -1 \\
\gamma & 0
\end{bmatrix}.
\en
If $0 < \g\leq 1$, the two eigenvalues are $1\pm \sqrt{1 - \g}$ which are both real and positive. One can read from \Cref{thm:hb} (or \Cref{fig:hb_nag_conv}) and \Cref{thm:eg_stable} (or \Cref{fig:eg_ogd_saddle}) that GDA (with momentum) and EG/OGD do not converge to $(0, 0)$, locally and globally. Specifically, when $\g = 1$, $\a_1 = \a_2$.

If $\gamma > 1$, the eigenvalues are $\l_{1, 2} = 1\pm i\sqrt{\gamma - 1}$, which have positive real parts. From \Cref{thm:hb} (or \Cref{fig:hb_nag_conv}), GDA (with momentum) do not converge to $(0, 0)$. Now let us study 2TS-EG and 2TS-OGD, \blue{which corresponds to the second point of \Cref{thm:all}.}

\paragraph{2TS-EG} Taking $\b\to \infty$ we require that $\Re(\l + \l^2) < 0$, which simplifies to:
\be
\a_1 + \a_1^2 - \a_1^2 (\g - 1) < 0,
\en
and thus
\be
\a_2 > 1 + 2\a_1 > 1.
\en
We cannot take $\a_2$ to be arbitrarily small. 

\paragraph{2TS-OGD} For 2TS-OGD, we need $\a_2$ to be $\Omega(1)$ as well. From \Cref{thm:eg_stable}, we take $k \to 1_+$ so that the convergence region is the largest:
\be\label{eq:ogd_large}
|\l| < 1, \, |\l -1/2| > 1/2.
\en
Bringing in the eigenvalues $\a_1(1\pm i\sqrt{\g -1})$, we obtain:
\be\label{eq:simple_condition_ogd}
\a_1 < 1, \, 1/\a_1 < \g < 1/\a_1^2.
\en
In other words, $1 < \a_2 < 1/\a_1$. We could take $\a_1$ infinitesimal but not $\a_2$. 

\paragraph{\blue{Alternating updates}} \blue{Now let us study alternating updates on this example. We use the same framework as \citet{zhang2019convergence}. If a simultaneous algorithm takes the form of: 
\begin{align}
\xv_{t} = T_1(\xv_{t-1}, \yv_{t-1}, \dots, \xv_{t-k}, \yv_{t-k}), \, 
\yv_t = T_2(\xv_{t-1}, \yv_{t-1}, \dots, \xv_{t-k}, \yv_{t-k}),
\end{align}
then the corresponding alternating algorithm is:
\begin{align}
\xv_{t} = T_1(\xv_{t-1}, \yv_{t-1}, \dots, \xv_{t-k}, \yv_{t-k}), \, 
\yv_t = T_2(\xv_{t}, \yv_{t-1}, \dots, \xv_{t-k + 1}, \yv_{t-k}),
\end{align}
by replacing all the $\xv_{t-i}$ in the update function for $\yv_t$ to $\xv_{t+1-i}$, for $i = 1, \dots, k$. We only study GDA and OGD in this paper for illustration purpose and other gradient algorithms follow similarly. The alternating GDA can be written as ($\alpha_1 > 0$, $\alpha_2 > 0$):
\begin{align}
\xv_{t+1} = \xv_t - \a_1 \partial_\xv f(\xv_t, \yv_t), \, \yv_{t+1} = \yv_t + \a_2 \partial_\yv f(\xv_{t+1}, \yv_t),
\end{align}
and the alternating OGD can be written as (see \eqref{eq:ogd})($\alpha_1 > 0$, $\alpha_2 > 0$, $k > 1$):
\begin{align}
&\xv_{t+1} = \xv_t - k\a_1 \partial_\xv f(\xv_t, \yv_t) + \a_1 \partial_\xv f(\xv_{t-1}, \yv_{t-1}), \\
&\yv_{t+1} = \yv_t + k \a_2 \partial_\yv f(\xv_{t+1}, \yv_t) - \a_2 \partial_\yv f(\xv_{t}, \yv_{t-1}).
\end{align}
Let us denote $\Av = \xxv f(\xv^*, \yv^*)$, $\Bv = \yyv f(\xv^*, \yv^*)$ and $\Cv = \xyv f(\xv^*, \yv^*)$. Locally, we can treat the gradient algorithms as a linear dynamical system. For instance, the linear dynamical system of simultaneous GDA and simultaneous OGD can be written as:
\begin{align}
\textrm{GDA: } \begin{pmatrix} \xv_{t+1} - \xv^* \\ \yv_{t+1} - \yv^* \end{pmatrix} &= \begin{pmatrix} \xv_{t} - \xv^* \\ \yv_{t} - \yv^* \end{pmatrix} + \begin{pmatrix} -\alpha_1 \Av & -\alpha_1 \Cv \\ \alpha_2 \Cv^\top & \alpha_2 \Bv \end{pmatrix} \begin{pmatrix} \xv_{t} - \xv^* \\ \yv_{t} - \yv^* \end{pmatrix}, \\
\textrm{OGD: } \begin{pmatrix} \xv_{t+1} - \xv^*  \\ \yv_{t+1} - \yv^* \end{pmatrix} &= \begin{pmatrix} \xv_{t} - \xv^* \\ \yv_{t} - \yv^* \end{pmatrix} + k\begin{pmatrix} -\alpha_1 \Av & -\alpha_1 \Cv \\ \alpha_2 \Cv^\top & \alpha_2 \Bv \end{pmatrix} \begin{pmatrix} \xv_{t} - \xv^* \\ \yv_{t} - \yv^* \end{pmatrix} - \tr
& -  \begin{pmatrix} -\alpha_1 \Av & -\alpha_1 \Cv \\ \alpha_2 \Cv^\top & \alpha_2 \Bv \end{pmatrix} \begin{pmatrix} \xv_{t - 1} - \xv^* \\ \yv_{t - 1} - \yv^* \end{pmatrix}.
\end{align}
With Theorem 2.3 from \cite{zhang2019convergence}, the characteristic equations for alternating GDA and alternating OGD are:
\begin{align}\label{eq:char_poly_gda_ogd}
&\textrm{GDA: }\det\left( (\lambda - 1)\Iv -  \begin{pmatrix} -\alpha_1 \Av & -\alpha_1 \Cv \\ \alpha_2 \lambda \Cv^\top & \alpha_2 \Bv \end{pmatrix} \right) = 0, \\
&\textrm{OGD: }\det\left( (\lambda - 1)\lambda \Iv - (k\lambda - 1) \begin{pmatrix} -\alpha_1 \Av & -\alpha_1 \Cv \\ \alpha_2 \lambda \Cv^\top & \alpha_2 \Bv \end{pmatrix} \right) = 0.
\end{align}
For the quadratic example $q(x, y) = -x^2 + x y$ we are considering, we have $\Av = -2, \Bv = 0, \Cv = 1$. Bringing it to \eqref{eq:char_poly_gda_ogd}, we obtain: 
\begin{align}\label{eq:convergence_gda_ogd}
&\textrm{GDA: } \lambda ^2 + (\alpha_1 \alpha_2-2 \alpha_1 - 2)\lambda  + 2 \alpha_1+ 1 = 0, \\
&\textrm{OGD: }\lambda ^4 + \left(\alpha_1 \alpha_2 k^2-2 \alpha_1 k-2\right) \lambda^3  + (2 \alpha_1-2 \alpha_1 \alpha_2 k+2 \alpha_1 k+1)\lambda^2 +  (\alpha_1 \alpha_2-2 \alpha_1) \lambda = 0.
\end{align}
From Corollary 2.1 of \cite{zhang2019convergence}, alternating GDA is stable iff:
\be
2\a_1 + 1 < 1, \, |\a_1 \a_2 - 2\a_1 - 2| < 2\a_1 + 2.
\en
Note that the first condition can never hold since $\a_1 > 0$. Hence, alternating GDA cannot converge to the local minimax point $(0, 0)$ if the initialization is not at $(0, 0)$. For alternating OGD, the second equation of \eqref{eq:convergence_gda_ogd} can be simplified as $\lambda = 0$ or:
\begin{align}
\lambda^3 + \left(\alpha_1 \alpha_2 k^2-2 \alpha_1 k-2\right) \lambda^2  + (2 \alpha_1-2 \alpha_1 \alpha_2 k+2 \alpha_1 k+1)\lambda + \a_1(\alpha_2 - 2) = 0.
\end{align}
Using Corollary 2.1 of \cite{zhang2019convergence} again we know that alternating OGD is stable iff:
\begin{align}
|c| < 1, \, |a + c| < 1 + b, \, b - a c < 1 - c^2,
\end{align}
where $a = \a_1 \a_2 k^2 - 2 \a_1 k - 2$, $b = 2 \a_1 - 2\a_1 \a_2 k + 2\a_1 k + 1$, $c = \a_1 (\a_2 - 2)$. We simplify it on Mathematica:} 
{\color{blue}
\begin{verbatim}
Reduce[Abs[c] < 1 && Abs[a+c] < 1 + b && b - a c < 1 - c^2 && k > 1 
&& \alpha_1 > 0 && \alpha_2 > 0, {\alpha_1, \alpha_2}]
\end{verbatim}
}
\noindent \blue{and obtain that:
\begin{align}
& k>1\mbox{ and } 0<\alpha_1<\frac{4}{k^2-1}\mbox{ and } \tr
& \sqrt{\frac{-2 \alpha_1+\alpha_1^2 k^2+1}{\alpha_1^2 (k+1)^2}}+\frac{2 \alpha_1+\alpha_1 k-1}{\alpha_1 (k+1)}<\alpha_2<\frac{4 \alpha_1+4 \alpha_1 k+4}{\alpha_1+\alpha_1 k^2+2 \alpha_1 k}.
\end{align}
Since $k > 1$ and 
\begin{align}
\sqrt{\frac{-2 \alpha_1+\alpha_1^2 k^2+1}{\alpha_1^2 (k+1)^2}}+\frac{2 \alpha_1+\alpha_1 k-1}{\alpha_1 (k+1)} &\geq \sqrt{\frac{-2 \alpha_1+\alpha_1^2+1}{\alpha_1^2 (k+1)^2}}+\frac{2 \alpha_1+\alpha_1 k-1}{\alpha_1 (k+1)} \tr
&= \frac{\a_1 k + 2\a_1 - 1 + |\a_1 - 1|}{\a_1 (k + 1)} \tr
&\geq \frac{\a_1 k + 2\a_1 - 1 + 1 - \a_1}{\a_1 (k + 1)} \tr
&= 1,
\end{align}
we have $\a_2 > 1$ for alternating updates of OGD.}
\end{proof}


\section{Local robust points}\label{app:local_robust_point}

\blue{
In this section, we summarize results about local robust points, which naturally extend local minimax points to a symmetric version. They are stationary points (\Cref{prop:stable_new}), but they may not correspond to solution concepts in sequential games (\Cref{eg:glp}). In one-dimensional case they are equivalent to the stable sets of Optimistic Gradient Descent (\Cref{prop:lrp_alg}). However, in general cases all common coordinate-independent gradient algorithms would fail to converge to some local robust point (\Cref{prop:failure_lrp}).
The main results are summarized in \Cref{table:lrp}. 
}

\begin{table*}
\renewcommand{\arraystretch}{1.3}
\centering
\begin{tabular}{|c|c|c|} 
\hline
&\textbf{Statement} & \textbf{Reference} \\ \hline
\hline
& \cellcolor[gray]{.9} non-trivial examples & Prop.~\ref{prop:lrp}, Eg.~\ref{eg:glp} \\
\cline{2-3} & \cellcolor{lightgray} nuances in the definition & Examples \ref{eg:glp_nbhr}, \ref{eg:lrp_eps_0} \\
\cline{2-3} & \cellcolor[gray]{.9} LRPs are stationary points & \Cref{prop:stable_new} \\
\cline{2-3} LRP & \cellcolor{lightgray} optimality conditions & Theorems \ref{prop:stable_new}, \ref{thm:suff_lrp}, \ref{thm:2nd_nece_lrp}, \ref{thm:lrp_2nd_suff} \\
\cline{2-3} & \cellcolor[gray]{.9} LRP in quadratic games & \Cref{thm:lrp_quadratic} \\
\cline{2-3} & \cellcolor{lightgray} equivalence with the stable set of OGD in 1D & Prop.~\ref{prop:lrp_alg} \\
\cline{2-3} &  \cellcolor[gray]{.9} failure of gradient algorithms at LRP & \Cref{prop:failure_lrp} \\
\hline
\end{tabular}
\caption{Results of local robust points.}\label{table:lrp}
\end{table*}

\subsection{Definition of local robust points}\label{sec:def_lrp}


In the definition of local minimax points, $\xv$ and $\yv$ are \emph{asymmetric}: $\yv$ is the follower who knows the strategy of $\xv$, but $\xv$ only knows a ``rough'' set of the strategies of $\yv$ and hence aims to optimize the worst-case scenario. One natural (and perhaps more realistic) generalization is to allow robust optimization for $\yv$ as well, so as to restore equal position for both players:

\begin{restatable}[{\glp}]{defn}{}\label{def:glp}
We call $(\xv^\star, \yv^\star) \in  \X\times \Y$ a local robust point ({\glp}) if
\begin{itemize}
    \item fixing $\xv^\star$, there exists some sequence $0 \leq \varepsilon_n \to 0$ such that for each $\varepsilon_n$ in the sequence, there exists an envelope function $\uf_{\varepsilon_n, \xv^\star}(\yv)$ such that $\yv^\star$ is a local maximizer; 
    
    \item fixing $\yv^\star$, there exists some sequence $0 \leq \e_n \to 0$ such that for each $\e_n$ in the sequence, there exists an envelope functions \blue{$\of_{\e_n, \yv^\star}(\xv)$} such that $\xv^\star$ is a local minimizer.
\end{itemize}
\end{restatable}

In the above definition, both $\xv$ and $\yv$ are doing robust optimization: $\of_\e(\xv)$ and $-\uf_\ve(\yv)$ can be treated as the worst-case cost for each player, assuming that each one only knows an approximate strategy of the opponent ($\xv^\star$ or $\yv^\star$), up to some estimation error ($\e$ or $\ve$). Since each player does not know the exact amount of perturbation, it will try to minimize a sequence of envelope functions \blue{with a series of neighborhoods that can be arbitrarily small.} 

\begin{figure}
    \centering
    \includegraphics[width=10cm]{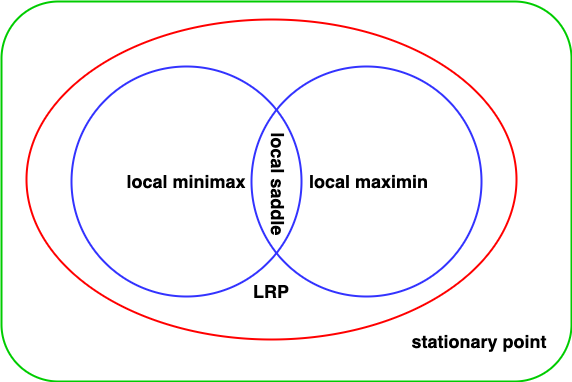}
    \caption{The relation among the sets of local saddle, local minimax and local maximin points, as well as {\glp}s. In the unconstrained case, they are all stationary (\Cref{prop:stable}). }
    \label{fig:glp}
\end{figure}

LRPs are a subclass of stationary points, as we will see in \Cref{prop:stable_new}. The definition of {\glp}s includes local saddle, local minimax and local maximin points, as visualized in \Cref{fig:glp}. For example, if $\{\ve_n\} = \{0\}$ and $0 < \e_n \to 0$, then {\glp} reduces to local minimax points. The simplest non-trivial example for LRPs might be quadratic games. In general for one-dimensional quadratic games, it can be shown that:

\begin{prop}[\blue{\textbf{characterization of LRPs in one-dimensional quadratic games}}]\label{prop:lrp}
$f(x, y) = ax^2/2 + c x y + b y^2/2$ has an {\glp} at $(0, 0)$ iff \be
\{c = 0, \, a\geq 0 \geq b\}\mbox{ or }\{c\neq 0,\, c^2 \geq a b\}.
\en
\end{prop}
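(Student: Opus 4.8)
The plan is to reduce the claim to two independent one-variable problems. By Definition~\ref{def:glp}, fixing $y^\star = 0$ (resp. $x^\star=0$), the point $(0,0)$ is an {\glp} iff there exist sequences $\e_n \to 0^+$ and $\ve_n \to 0^+$ such that $x=0$ is a local minimizer of $\of_{\e_n}(x) = \max_{|y|\le \e_n}\bigl(\tfrac a2 x^2 + cxy + \tfrac b2 y^2\bigr)$ for every $n$, and $y=0$ is a local maximizer of $\uf_{\ve_n}(y) = \min_{|x|\le\ve_n}\bigl(\tfrac a2 x^2 + cxy + \tfrac b2 y^2\bigr)$ for every $n$. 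Since $f$ is quadratic I expect both inner optimizations to be solvable in closed form, and the resulting optimality conditions to be independent of the radius; thus ``for some sequence'' will coincide with ``for all small radii,'' removing any subtlety about the sequence. I would also record that $\of_\e(0)=0$ and $\uf_\ve(0)=0$ so that local optimality is tested against the correct base value.

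First I would compute $\of_\e$ by writing the inner objective as $g(y)=\tfrac b2 y^2 + (cx)y + \tfrac a2 x^2$ and maximizing over $|y|\le\e$, distinguishing the sign of $b$. For $b<0$ the unconstrained maximizer $y=-cx/b$ is interior for $|x|$ small, giving $\of_\e(x)=\tfrac{ab-c^2}{2b}x^2$ near $0$; for $b=0$ the objective is linear in $y$, giving $\of_\e(x)=\tfrac a2 x^2 + |c|\e|x|$; for $b>0$ the maximizer sits on the boundary, giving $\of_\e(x)=\tfrac a2 x^2 + |c|\e|x| + \tfrac b2\e^2$. Reading off when $x=0$ is a local minimizer: for $c\neq0$ this holds automatically whenever $b\ge0$ (the $|c|\e|x|$ term dominates) and holds iff $c^2\ge ab$ when $b<0$; for $c=0$ it holds iff $a\ge0$. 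By the mirror symmetry, which replaces maximin for $f$ by minimax for $-f$ and amounts to the parameter substitution $(a,b,c)\mapsto(-b,-a,-c)$, the analogous computation for $\uf_\ve$ shows that $y=0$ is a local maximizer iff: for $c\neq0$, automatically when $a\le0$ and iff $c^2\ge ab$ when $a>0$; for $c=0$, iff $b\le0$.

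Finally I would combine the two conditions. When $c=0$ the conjunction is exactly $a\ge0$ together with $b\le0$, i.e.\ $a\ge0\ge b$. When $c\neq0$ I would show the conjunction collapses to $c^2\ge ab$: the implication $c^2\ge ab \Rightarrow$ (both conditions) is immediate, and for the converse a short sign analysis of $ab$ suffices. If $ab\le0$ then $c^2>0\ge ab$ holds for free; if $a,b>0$ the $\uf_\ve$-condition (which is in its ``$a>0$'' regime) forces $c^2\ge ab$; and if $a,b<0$ the $\of_\e$-condition (in its ``$b<0$'' regime) forces it. This yields precisely $\{c=0,\ a\ge0\ge b\}\cup\{c\neq0,\ c^2\ge ab\}$, as claimed.

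The main obstacle is bookkeeping rather than conceptual: correctly deciding whether each inner extremum is attained in the interior or on the boundary of $[-\e,\e]$ across all sign combinations of $a$ and $b$, and then recognizing that for $c\neq0$ the two ``automatic'' regimes ($b\ge0$ for the min-player and $a\le0$ for the max-player) dovetail with the two ``$c^2\ge ab$'' regimes so that the combined condition reduces to the single inequality $c^2\ge ab$. Verifying that the radius-independence of the quadratic envelopes makes the sequence quantifier in Definition~\ref{def:glp} harmless is the one place where I would be explicit, to be sure the ``there exists a sequence'' phrasing does not hide an extra degree of freedom.
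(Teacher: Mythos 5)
Your proposal is correct and follows essentially the same route as the paper's proof: compute $\of_\e$ and $\uf_\ve$ in closed form by cases on the signs of $b$ and $a$ (interior vs.\ boundary maximizer), read off when $0$ is a local minimizer/maximizer, and combine via a sign analysis of $ab$ to collapse the conjunction to $c^2\ge ab$ when $c\neq 0$. Your treatment is slightly more explicit than the paper's (separating $b=0$ from $b>0$, spelling out the mirror substitution and the final Boolean combination), but there is no substantive difference in method.
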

\begin{proof}
If $c = 0$, $f$ is separable, we obtain $a\geq 0$ because $x^\star$ locally minimizes $\of_\e(x)$, and $b\leq 0$ since $y^\star$ locally maximizes $\of_\varepsilon(y)$. If $c\neq 0$, then for small enough $x$, $y$,
\be\label{eq:glp_quadratic}
\of_\e(x) = \begin{cases}
|c x|\e + b \e^2/2 + a x^2/2 & \mbox{if }b\geq 0\\
(c^2 - ab)x^2/(-2b) & \mbox{if }b<0
\end{cases},\;
\uf_\varepsilon(y) = \begin{cases}
-|c y|\varepsilon + b y^2/2 + a \varepsilon^2/2 & \mbox{if }a\leq 0\\
-(c^2 - ab)y^2/(2a) & \mbox{if }a>0
\end{cases}.
\en
From the above, we can show that it is necessary and sufficient to have $c^2 \geq ab$: if $c^2 \geq ab$, then $\of_\e(x)$ is locally minimized at $x = 0$ and $\uf_\ve(y)$ is locally maximized at $y = 0$; if $c^2 < a b$, then $a > 0, b > 0$, when $\uf_\ve(y)$ is not locally maximized at $y = 0$, or $a < 0, b < 0$, when $\of_\e(x)$ is not locally minimized at $x = 0$. 
\end{proof}

If $c = 0$ and $a = -2, \, b = 2$, then this quadratic function clearly does not have an {\glp} (but has a stationary point), which implies the non-triviality of our definition. Another interesting case is when $a = -2, c = 1$ and $b = 2$:
\begin{eg}[\blue{\textbf{LRPs may not be either local minimax or maximin}}]\label{eg:glp}
Consider $f(x, y) = -x^2 + x y + y^2$ and $(x^\star, y^\star) = (0,0)$ \blue{with the domain $|x|\leq D, \, |y|\leq D$.} Straightforward calculation gives (\blue{assuming $0< \e \leq D$, $0< \ve \leq D$}):
\begin{align}
\of_\e(x) = -x^2 + \e |x| + \e^2, \quad \uf_\varepsilon(y) = -\varepsilon^2 - \varepsilon |y| + y^2.
\end{align}
Thus, $f$ has an {\glp} at $(0, 0)$, which is neither local minimax or local maximin: $f(0, y) = y^2$ is not locally maximized at $y = 0$ and $f(x, 0) = -x^2$ is not locally minimized at $x = 0$. \blue{Note that $(0, 0)$ is not a global minimax/maximin point either. However, we have:
\be
&&\of_D(x) = \max_{|y|\leq D} f(x, y)  = -x^2 + D|x| + D^2 \geq \of_D(0), \mbox{ for all } |x|\leq D\tr
&&\uf_D(y)  = \min_{|x|\leq D} f(x, y) = -D^2 - D|y| + y^2 \leq \uf_D(0), \mbox{ for all } |y|\leq D.
\en 
So $(0, 0)$ can be treated as some type of ``global robust point'', defined as 
\be
&&\sup_{\yv\in \Yc}f(\xv, \yv) \geq \sup_{\yv\in \Yc}f(\xv^\star, \yv),\mbox{ for any }\xv \in \Xc \\ 
&&\inf_{\xv\in \Xc}f(\xv, \yv) \leq \inf_{\xv\in \Xc}f(\xv, \yv^\star), \mbox{ for any }\yv \in \Yc.
\en
In such a game, each player is agnostic of the opponent's strategy and only optimizing the worst case. There is no follower or leader. Such study goes beyond the regime of sequential games and we leave it to future research.  
}
\end{eg}

However, for {\glp}s, some results we derived in \Cref{sec:existing} for local minimax points cease to hold anymore. For example, for local minimax points the norm we choose in the neighborhood definition is immaterial (see \Cref{thm:eqseq}), but for {\glp}s, that choice of the neighborhoods does matter, as can be seen from the following example:
\begin{eg}[\textbf{effect of the neighborhood}]\label{eg:glp_nbhr}
Consider the function
\be
f(\xv, \yv) = -\xv^\top \begin{bmatrix}
0 & 0 \\
0 & 1
\end{bmatrix}\xv + \xv^\top \begin{bmatrix}
1 & 0 \\
0 & 1
\end{bmatrix}\yv + \yv^\top \begin{bmatrix}
1 & 0 \\
0 & 0
\end{bmatrix}\yv,
\en
with $\X = \Y = \R^2$ and $(\xv^\star, \yv^\star) = (\zero, \zero)$. For the $\ell_\infty$ normed ball $\Nc_\infty(\yv^\star, \e) = \{\yv \in \R^2: \|\yv - \yv\|_\infty\leq \e\}$, $\of_\e(\xv) = \e^2 + \e |x_1| + \e |x_2| - x_2^2$ which is locally minimized at $\xv^\star$. However, for the Euclidean ball $\Nc_2(\yv^\star, \e) = \{\yv \in \R^2: \|\yv - \yv\|_2\leq \e\}$, $$\of_\e(0, x_2) = \max_{\yv\in \Nc_2(\yv^\star, \e)}x_2 y_2 + y_1^2 - x_2^2\leq \max_{|y_2|\leq \e} \e^2 - y_2^2 + x_2 y_2 - x_2^2 \leq \e^2 - 3x_2^2/4 < \of_\e(0, 0) = \e^2,$$ for any $0 < |x_2| < 2\e$. One can show that $(\xv^\star, \yv^\star) = (\zero, \zero)$ is an {\glp} by choosing the neighborhoods of $\xv^\star$ and $\yv^\star$ to be $\ell_\infty$ balls, since 
$$\of_\e(\xv) = \e^2 + \e |x_1| + \e |x_2| - x_2^2\geq \of_\e(\zero)\mbox{ locally and }\uf_\ve(\yv) = -\ve^2 - \ve |y_1| - \ve |y_2| + y_1^2\leq \uf_\ve(\zero)$$
locally. In \Cref{sec:quad_lrp} we will show a ``meaningful'' neighborhood choice for {\glp}s in quadratic games using the eigenspace. 
\end{eg}

\noindent In order for the class of LRPs to include the class of local minimax points, we may no longer take $\{\e_n\}$ and $\{\ve_n\}$ to be \blue{strictly {positive}} sequences as in Def.~\ref{def:jin}:

\begin{eg}[\blue{\textbf{The definition of LRPs need to include $\e = 0$ and $\varepsilon = 0$}}]\label{eg:lrp_eps_0}
Take $$f(x, y) = xy^3 - x^2/(1+y^2)$$ and $(x^\star, y^\star) = (0, 0)$. This point is a local minimax point, since $\uf_0(y) = f(x^\star, y) = 0$, and $\of_\e(x) \geq \e^3 |x| - x^2/(1+\e^2) \geq 0 = \of_\e(x^\star)$, given small enough $x$. However, for any $\ve > 0$, 
$$\uf_\varepsilon(y) = -\varepsilon |y|^3 - \varepsilon^2/(1 + y^2)\mbox{ and }\uf_\varepsilon(y) - \uf_\ve(y^\star) = \ve y^2(\ve/(1+y^2) -  |y|) > 0$$ for small enough $y$. \blue{Therefore, in Definition \ref{def:glp} the case of $\varepsilon = 0$ needs to be included, as otherwise $(x^\star, y^\star) = (0, 0)$ does not satisfy the definition of LRPs, since for any $\varepsilon > 0$, the variable $y^\star$ cannot be a local maximizer of $\uf_\varepsilon$. }
\end{eg}

\subsection{Optimality conditions for LRPs}\label{app:lrps}

\blue{
Let us define the \emph{active sets} of the \emph{zeroth} order (by ``zeroth'' we mean that only the function values are involved):
\be
&&\,{\Y}_0(\xv^*;\e) = \{\yv\in\Nc(\yv^*, \e): \of_\e(\xv^*) = f(\xv^*, \yv)\}, \\
&&\,{\X}_0(\yv^*;\ve) = \{\xv\in\Nc(\xv^*, \ve): \uf_\ve(\yv^*) = f(\xv, \yv^*)\}.
\en
We derive the first-order optimality conditions for LRPs.
}

\begin{restatable}[first-order necessary, {\glp}]{thm}{firstlrp}\label{prop:stable_new}
Let $f\in \Cc^1$. At an {\glp} $(\xv^\star, \yv^\star)$, we have:
\be\label{eq:first_order_necessary_lrp_new}
\partial_{\xv} f(\xv^\star, \yv^\star)^\top \bar{\tv} \geq 0 \geq \partial_{\yv} f(\xv^\star, \yv^\star)^\top \ubar{\tv},
\en
for any directions $\bar{\tv} \in \K[d](\X, \xv^\star)$, $\ubar{\tv} \in \K[d](\Y, \yv^\star)$, where the cone
\begin{align}\K[d](\X, \xv) := \liminf_{\a\to 0^+} \frac{\X - \xv}{\a} := \{\tv: \forall \{\a_k\} \to 0^+~ \exists \{\a_{k_i}\} \to 0^+, \{\tv_{k_i}\} \to \tv, \tr
\blue{\mbox{ such that } \xv+\a_{k_i} \tv_{k_i} \in \X \}}\nonumber
\end{align}
and $\K[d](\Y, \yv)$ is defined similarly.
\end{restatable}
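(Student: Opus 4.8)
The plan is to reduce the statement to the classical first-order necessary condition for nonsmooth minimization (\Cref{thm:nec_1st}) applied to each envelope function in the defining sequence of the LRP, and then to let the neighbourhood radius shrink to zero. The two inequalities in \eqref{eq:first_order_necessary_lrp_new} are mirror images of one another: the left one concerns the min-player's envelope $\of_{\e_n,\yv^\star}$, while the right one concerns the max-player's envelope $\uf_{\ve_n,\xv^\star}$, and the latter follows from the former applied to the mirror payoff $-\mf(\yv,\xv)=-f(\xv,\yv)$, under which $\yv$ becomes the minimizing variable. I will therefore treat only the $\xv$-half and recover the $\yv$-half by symmetry.

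For the $\xv$-half, recall from \Cref{def:glp} that there is a sequence $0\le\e_n\to0$ such that $\xv^\star$ is a local minimizer of $\of_{\e_n,\yv^\star}$ for every $n$. Since $f\in\Cc^1$, the pair $f,\partial_\xv f$ is jointly continuous and each $\Nc(\yv^\star,\e_n)$ is compact, so $\of_{\e_n}$ is locally Lipschitz and directionally differentiable with the Danskin formula of \Cref{thm:danskin}. Applying the Lipschitz version of \Cref{thm:nec_1st} at the local minimizer $\xv^\star$ gives, for every $\bar{\tv}\in\K[d](\X,\xv^\star)$,
\begin{align}
0\le \D\of_{\e_n}(\xv^\star;\bar{\tv})=\max_{\yv\in\Y_0(\xv^\star;\e_n)} \partial_\xv f(\xv^\star,\yv)^\top\bar{\tv},
\end{align}
where $\Y_0(\xv^\star;\e_n)$ is nonempty because a continuous function attains its maximum on the compact set $\Nc(\yv^\star,\e_n)$. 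When $\e_n=0$ this already reads $\partial_\xv f(\xv^\star,\yv^\star)^\top\bar{\tv}\ge0$, so it suffices to handle the case $\e_n>0$ and pass to the limit. To do so, pick for each $n$ a maximizer $\yv_n\in\Y_0(\xv^\star;\e_n)$. Since $\Y_0(\xv^\star;\e_n)\subseteq\Nc(\yv^\star,\e_n)$ we have $\|\yv_n-\yv^\star\|\le\e_n\to0$, hence $\yv_n\to\yv^\star$, and continuity of $\partial_\xv f$ yields
\begin{align}
\partial_\xv f(\xv^\star,\yv^\star)^\top\bar{\tv}=\lim_{n\to\infty}\partial_\xv f(\xv^\star,\yv_n)^\top\bar{\tv}=\lim_{n\to\infty}\D\of_{\e_n}(\xv^\star;\bar{\tv})\ge0.
\end{align}
Running the identical argument on $-\mf$ converts the local maximality of $\yv^\star$ for $\uf_{\ve_n,\xv^\star}$ into $\partial_\yv f(\xv^\star,\yv^\star)^\top\ubar{\tv}\le0$ for all $\ubar{\tv}\in\K[d](\Y,\yv^\star)$, which is the right half of \eqref{eq:first_order_necessary_lrp_new}.

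The step I expect to be the crux is precisely this limit passage, together with the recognition that one cannot shortcut it by arguing that $\xv^\star$ minimizes the $\e=0$ envelope $\of_0=f(\cdot,\yv^\star)$. For a genuine LRP, $\yv^\star$ need \emph{not} maximize $f(\xv^\star,\cdot)$, so the monotone inclusion of active sets from \Cref{lem:dec_f_p} is unavailable; indeed \Cref{eg:glp} exhibits an LRP at which $f(\cdot,\yv^\star)$ is locally maximized rather than minimized at $\xv^\star$. What rescues the argument is purely geometric: every active maximizer $\yv_n$ is trapped within distance $\e_n$ of $\yv^\star$, so the Danskin directional derivatives $\D\of_{\e_n}(\xv^\star;\bar{\tv})$ converge to $\partial_\xv f(\xv^\star,\yv^\star)^\top\bar{\tv}$ no matter how the active sets otherwise behave. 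Once this theorem is established, the first-order necessary condition for local minimax points (\Cref{prop:stable}) drops out as the special case $\ve_n\equiv0$.
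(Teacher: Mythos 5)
Your proof is correct and follows the same route as the paper, whose proof is just the one-line instruction to combine \Cref{thm:nec_1st}, Danskin's theorem (\Cref{thm:danskin}), and the $\Cc^1$ assumption. You have simply made explicit the limit passage $\e_n\to 0$ via the active maximizers $\yv_n\to\yv^\star$ and the continuity of $\partial_\xv f$, which is exactly the role the $\Cc^1$ hypothesis plays in the paper's argument.
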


\begin{proof}
Use \Cref{thm:nec_1st}, \Cref{thm:danskin} and the assumption that $f\in \C^1$.
\end{proof}

\begin{thm}[\textbf{first-order sufficient condition, {\glp}}]\label{thm:suff_lrp}
If $f$ is continuously differentiable and there exist \blue{two sequences $\e_n \to 0$, $\ve_n\to 0$}, such that for any $n\in \mathbb{N}^+$:
\be
&&\zero \neq \bar{\tv}\in \K[c](\X, \xv^\star)  \, \Longrightarrow 
\D \of_{\e_n}(\xv^\star; \bar{\tv}) = \max_{\yv \in {\Y}_0(\xv^\star;\e_n)} \partial_{\xv} f(\xv, \yv)^\top \bar{\tv} > 0, \\
&&\zero \neq \ubar{\tv}\in \K[c](\Y, \yv^\star)  \, \Longrightarrow 
\D \uf_{\ve_n}(\yv^\star; \ubar{\tv}) =
\min_{\xv \in {\X}_0(\yv^\star;\ve_n)} \partial_{\yv} f(\xv, \yv)^\top \ubar{\tv} < 0.
\en
then $(\xv^\star, \yv^\star)$ is an isolated {\glp} of $f$.
\end{thm}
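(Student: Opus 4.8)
The plan is to reduce the two-sided statement to the classical first-order sufficient condition for an isolated extremum of a non-smooth function (\Cref{thm:sufficient}), applied separately to each envelope function and then recombined through \Cref{def:glp}. Since $f$ and $\partial_\xv f$ are jointly continuous and the neighborhoods $\Nc(\yv^\star,\e_n)$ are compact, each upper envelope $\of_{\e_n,\yv^\star}$ is locally Lipschitz and, by Danskin's theorem (\Cref{thm:danskin}), directionally differentiable with
$$\D\of_{\e_n}(\xv^\star;\bar{\tv}) = \max_{\yv\in\Y_0(\xv^\star;\e_n)} \partial_\xv f(\xv^\star,\yv)^\top\bar{\tv}.$$
Because $\of_{\e_n}$ is directionally differentiable, its lower Dini derivative coincides with $\D\of_{\e_n}$, so the hypothesis that this quantity is strictly positive for every $\zero\neq\bar{\tv}\in\K[c](\X,\xv^\star)$ is precisely the premise $\Di\of_{\e_n}(\xv^\star;\bar{\tv})>0$ demanded by \Cref{thm:sufficient}.

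First I would fix an arbitrary index $n$ and invoke \Cref{thm:sufficient} with $\h=\of_{\e_n}$ over $\X$: strict positivity of the directional derivative along every nonzero contingent direction yields that $\xv^\star$ is an \emph{isolated} local minimizer of $\of_{\e_n}$. Next I would handle the max-player by the mirror reduction to $-f$. The lower envelope obeys $-\uf_{\ve_n}(\yv)=\max_{\xv\in\Nc(\xv^\star,\ve_n)}\big(-f(\xv,\yv)\big)$, so Danskin's theorem applied to $-f$ gives
$$\D\uf_{\ve_n}(\yv^\star;\ubar{\tv}) = \min_{\xv\in\X_0(\yv^\star;\ve_n)} \partial_\yv f(\xv,\yv^\star)^\top\ubar{\tv},$$
which the hypothesis forces to be strictly negative for every nonzero $\ubar{\tv}\in\K[c](\Y,\yv^\star)$; equivalently $\D(-\uf_{\ve_n})(\yv^\star;\ubar{\tv})>0$. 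Applying \Cref{thm:sufficient} to the locally Lipschitz function $-\uf_{\ve_n}$ then shows that $\yv^\star$ is an isolated local minimizer of $-\uf_{\ve_n}$, i.e.\ an isolated local maximizer of $\uf_{\ve_n}$.

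Finally I would assemble the two conclusions. Since $\xv^\star$ is an isolated local minimizer of $\of_{\e_n}$ for every $n$ and $\yv^\star$ is an isolated local maximizer of $\uf_{\ve_n}$ for every $n$, with $\e_n\to 0$ and $\ve_n\to 0$, the two defining requirements of \Cref{def:glp} are met along these sequences, and the isolation of each extremum promotes the conclusion to an \emph{isolated} local robust point.

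The main obstacle I anticipate is bookkeeping rather than conceptual: verifying cleanly that both envelope functions are locally Lipschitz (so that \Cref{thm:sufficient}, which presumes local Lipschitzness, legitimately applies) and that Danskin's theorem produces exactly the $\max$/$\min$ expressions over the active sets $\Y_0$ and $\X_0$. A secondary subtlety is tracking the sign flip in the mirror reduction, converting the condition ``$\min$ over $\X_0$ is $<0$'' into the condition ``$\D(-\uf_{\ve_n})>0$'' needed for an isolated maximum, and confirming that continuity of $\partial_\yv f$ (guaranteed by $f\in\Cc^1$) supplies the required Lipschitz regularity of $\uf_{\ve_n}$. Neither step requires ideas beyond the non-smooth analysis already collected in \Cref{app:review}.
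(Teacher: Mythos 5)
Your proposal is correct and matches the paper's (implicit) argument exactly: the paper gives no separate proof of this theorem, but the analogous local-minimax version (\Cref{suff:1st_suff_localmm}) is stated to follow from the first-order sufficient condition in \Cref{thm:sufficient} combined with Danskin's theorem (\Cref{thm:danskin}), and your two applications of that pair --- to $\of_{\e_n}$ and, after the sign flip, to $-\uf_{\ve_n}$ --- are precisely that reduction, with the local Lipschitzness and the identification of the Dini derivative handled correctly. The only step taken on faith is the final promotion from ``$\xv^\star$ and $\yv^\star$ are isolated extremizers of the particular envelopes'' to ``$(\xv^\star,\yv^\star)$ is an isolated {\glp}'' (a nearby LRP could in principle be certified by envelopes centered elsewhere or by different sequences), but the paper asserts the same conclusion without further argument, so this is not a deviation from its proof.
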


\blue{
We next discuss how to obtain second-order conditions for {\glp}s.}
Recalling \Cref{def:glp}, for the second-order optimality conditions of the local maximality of min-type envelope functions $\uf_\e(\yv)$, 
we can simply take $f\to -f$, $\of_\e(\xv)\to -\uf_\e(\yv)$ and switch the roles of $\xv$ and $\yv$. Let us define that:
\be
&&\bar{u}_\e(\yv) := \of_\e(\xv^\star) - f(\xv^\star, \yv),
\, \bar{v}(\yv; \tv) = -\n_\xv f(\xv^\star, \yv)^\top \tv, \tr
&&
{\Y}_1(\e; \tv) = \{\yv\in\Nc(\yv^\star, \e): \bar{u}_\e(\yv) = \bar{v}(\yv; \tv) = 0\},\\
&& \ubar{u}_\ve(\xv) := f(\xv, \yv^\star) - \uf_\varepsilon(\yv^\star),\, \ubar{v}(\xv; \tv) = \n_\yv f(\xv, \yv^\star)^\top \tv,
\tr&&
{\X}_1(\varepsilon; \tv) = \{\xv\in\Nc(\xv^\star, \e): \ubar{u}_\ve(\xv) = \ubar{v}(\xv; \tv) = 0\}, 
\en
and 
\be
\bar{E}_\e(\yv;\tv) = \limsup_{\zv \to \yv} \tfrac12 \bar{v}_-^2(\zv; \dv) \bar{u}_\e^\dag(\zv), \, \ubar{E}_\ve(\xv; \tv) = \limsup_{\zv\to \xv} {\ubar{v}_-(\zv; \tv)^2}\ubar{u}_\e^\dag(\zv)/2.
\en
We obtain the second-order necessary conditions for {\glp}s from \Cref{thm:kawasaki}:

\begin{thm}[\textbf{second-order necessary condition, {\glp}}]\label{thm:2nd_nece_lrp}
If $(\xv^\star, \yv^\star)$ is an {\glp} with sequence $\{\e_k\}, \{\ve_k\}$, then for any $\e_k$, for each direction $\bar{\tv} \in \R^n$, $\D\of_{\e_k}(\xv^\star; \bar{\tv}) > 0$, or  $\D \of_{\e_k}(\xv^\star; \bar{\tv}) = 0$ and there exist at most $n + 1$ points $\yv_1, \dots, \yv_{n+1}\in \Y_1(\e_k;\bar{\tv})$ and $\l_1, \dots, \l_n \geq 0$ not all zero, such that:
\be\label{eq:sec_order_1}
\sum_{i=1}^{n+1}\l_i \n_\xv f(\xv^\star, \yv_i) = {\bf 0}, \,
\sum_{i=1}^{n+1}\l_i \left(\bar{\tv}^\top \n_{\xv \xv}^2 f(\xv^\star, \yv_i)\bar{\tv} + \bar{E}_{\e_k}(\yv_i, \bar{\tv})\right)\geq 0.
\en
For each feasible direction $\ubar{\tv} \in \R^m$, $\D \uf_{\ve_k}(\yv^\star; \ubar{\tv}) < 0$, or $\D \uf_{\ve_k}(\yv^\star; \ubar{\tv}) = 0$ and there exist at most $m + 1$ points $\xv_1, \dots, \xv_{n+1}\in \X_1(\ve_k;\ubar{\tv})$ and $\mu_1, \dots, \mu_m \geq 0$ not all zero, such that:
\be\label{eq:sec_order_2}
\sum_{i=1}^{m+1}\mu_i \n_\xv f(\xv_i, \yv^\star) = {\bf 0}, \,
\sum_{i=1}^{m+1}\mu_i \left(\ubar{\tv}^\top \n_{\yv \yv}^2 f(\xv_i, \yv^\star)\ubar{\tv} - \ubar{E}_{\ve_k}(\xv_i, \ubar{\tv})\right)\leq 0.
\en
\end{thm}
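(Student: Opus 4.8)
The plan is to obtain each of the two displayed conclusions by reducing it to Kawasaki's second-order necessary condition for envelope functions, \Cref{thm:kawasaki}, which is already at our disposal. By \Cref{def:glp}, for every $\e_k$ in the promised sequence the point $\xv^\star$ is a local minimizer of the upper envelope $\of_{\e_k,\yv^\star}$, and for every $\ve_k$ the point $\yv^\star$ is a local maximizer of the lower envelope $\uf_{\ve_k,\xv^\star}$. Since $\X$ and $\Y$ are closed, the inner neighborhoods $\Nc(\yv^\star,\e_k)$ and $\Nc(\xv^\star,\ve_k)$ are compact, so each envelope falls squarely within the hypotheses of \Cref{thm:kawasaki}, with the respective inner set playing the role of the compact index set.

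The first inequality \eqref{eq:sec_order_1} then follows \emph{directly}. Fixing $\yv^\star$ and applying \Cref{thm:kawasaki} to $\of_{\e_k}$ at its local minimizer $\xv^\star$, first-order necessity (\Cref{thm:nec_1st}, via \Cref{prop:stable_new}) gives $\D\of_{\e_k}(\xv^\star;\bar\tv)\geq 0$ for every $\bar\tv$, so each direction either satisfies $\D\of_{\e_k}(\xv^\star;\bar\tv)>0$ — the first alternative — or $\D\of_{\e_k}(\xv^\star;\bar\tv)=0$, in which case \Cref{thm:kawasaki} supplies at most $n+1$ active points $\yv_i\in\Y_1(\e_k;\bar\tv)$ and nonnegative multipliers $\l_i$ (not all zero) satisfying \eqref{eq:sec_order_1}. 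Here the correction term $\bar E_{\e_k}$ is exactly Kawasaki's $\bar E$ assembled from $\bar u_{\e_k}$ and $\bar v$, matching the definitions preceding the theorem.

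For \eqref{eq:sec_order_2} I would run a mirroring argument. Introduce $g(\yv,\xv):=-f(\xv,\yv)$ and observe that
\[
-\uf_{\ve_k,\xv^\star}(\yv)=\sup_{\xv\in\Nc(\xv^\star,\ve_k)} g(\yv,\xv)
\]
is the upper envelope of $g$ in the outer variable $\yv$, for which $\yv^\star$ is now a local \emph{minimizer}. Applying \Cref{thm:kawasaki} to this envelope yields, for directions $\ubar\tv$ with $\D(-\uf_{\ve_k})(\yv^\star;\ubar\tv)=0$, at most $m+1$ points $\xv_i$ and nonnegative multipliers $\mu_i$ (not all zero) with $\sum_i\mu_i\,\n_\yv g(\yv^\star,\xv_i)=\zero$ and a nonnegative second-order sum. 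It then remains to translate back to $f$: since $f\in\Cc^2$, one has $\n_\yv g=-\n_\yv f$ and $\n_{\yv\yv}^2 g=-\n_{\yv\yv}^2 f$; by Danskin's theorem the envelope is directionally differentiable, so $\D(-\uf_{\ve_k})(\yv^\star;\ubar\tv)=-\D\uf_{\ve_k}(\yv^\star;\ubar\tv)$, whence the critical case is $\D\uf_{\ve_k}(\yv^\star;\ubar\tv)=0$ and the complementary case (first-order necessity for a local maximum) is $\D\uf_{\ve_k}(\yv^\star;\ubar\tv)<0$. Finally Kawasaki's data $u^g,v^g,\bar E^g$ coincide with $\ubar u_{\ve_k},\ubar v,\ubar E_{\ve_k}$ on the relevant critical directions, so multiplying the resulting second-order inequality by $-1$ flips its sense and produces \eqref{eq:sec_order_2}.

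The main obstacle is precisely this last translation step. One must check carefully that under the substitution $f\mapsto -f$ together with the swap of inner and outer variables, the active sets, the sign of the first-order directional derivative, and — most delicately — the higher-order correction transform exactly as claimed, i.e. that the $\limsup$ defining Kawasaki's $\bar E^g$ for the mirror envelope matches $\ubar E_{\ve_k}$ built from $f$. Everything else is a direct invocation of \Cref{thm:kawasaki} combined with the compactness of the inner neighborhoods.
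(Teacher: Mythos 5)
Your proposal is correct and follows essentially the same route as the paper, which states this result as a direct consequence of \Cref{thm:kawasaki} applied to $\of_{\e_k}$ at the local minimizer $\xv^\star$ and, after the substitution $f\to -f$ with the roles of $\xv$ and $\yv$ swapped, to $-\uf_{\ve_k}$ at $\yv^\star$. Your added care about checking that $\bar u, \bar v, \bar E$ match Kawasaki's data (which they do on critical directions, where $\D\of_{\e_k}(\xv^\star;\bar\tv)=0$) is exactly the bookkeeping the paper leaves implicit.
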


\begin{rem}\label{rem:nec_true}
For LRPs we do not have the simplification as local minimax points in \Cref{thm:nec_localmm} since \Cref{lem:dec_f_p} does not necessarily hold. In fact, $\yv^\star$ may not even be in the active set $\Y_0(\xv^\star)$ (e.g.~\Cref{eg:glp}). Comparably, for a local minimax point $(\xv^\star, \yv^\star)$, $\yv^\star\in \Y_0(\xv^\star)$ and $\bar{u}_\e(\yv^\star)$ is a constant for small enough $\e$.  
\end{rem}

It is also possible to construct second-order sufficient conditions for {\glp}s from \Cref{thm:suf_kawa} and \Cref{thm:2nds}. 
We only construct one from \Cref{thm:suf_kawa} as the other construction is analogous. Similar to \Cref{assmp:suff}, we need the following assumption:
\begin{assmp}\label{assmp:suff_2}
For each $\xv\in {\X}_1(\varepsilon; \tv)$ with $\tv \neq \zero$ and $\D \uf_\varepsilon(\xv^\star; \tv) = 0$, and for each non-zero $\dv \in \R^m$, there exist $\a, \b \neq 0$ and $p, q > 0$ such that the following approximation holds:
\be
\ubar{u}_\ve(\xv+\d \dv) = \a \d^p + o(\d^p), \, \ubar{v}(\xv + \d \dv;\tv) = \b \d^q + o(\d^q),
\en
whenever $\xv+\d \dv \in \Nc(\xv^\star, \e)$ and $\d > 0$.
\end{assmp}
With this assumption and \Cref{assmp:suff} (with a slight change of notations) we can write down the second-order sufficient condition for {\glp}s, similar to \Cref{thm:2nd_nece_lrp}:

\begin{thm}[\textbf{second-order sufficient condition, {\glp}}] \label{thm:lrp_2nd_suff} 
Assume that \Cref{assmp:suff} and \Cref{assmp:suff_2} hold, and let $\X = \R^n$ and $\Y = \R^m$. Suppose there exists a sequence $\{\e_k\}$ such that for any $\e_k$, for each direction $\bar{\tv} \in \R^n$, $\D\of_{\e_k}(\xv^\star; \bar{\tv}) > 0$, or  $\D\of_{\e_k}(\xv^\star; \bar{\tv}) = 0$ and there exist $a\geq 1$ points $\yv_1, \dots, \yv_{a}\in \Y_1(\e_k; \bar{\tv})$ and $\l_1, \dots, \l_a \geq 0$ not all zero, such that:
\be\label{eq:sec_order_suff_1}
\sum_{i=1}^{a}\l_i \n_\xv f(\xv^\star, \yv_i) = {\bf 0}, \,
\sum_{i=1}^{a}\l_i \left(\bar{\tv}^\top \n_{\xv \xv}^2 f(\xv^\star, \yv_i)\bar{\tv} + \bar{E}_{\e_k}(\yv_i, \bar{\tv})\right) > 0.
\en
If moreover there exists a sequence $\{\ve_k\}$ such that for any $\ve_k$, along each $\ubar{\tv} \in \R^m$, $\D\uf_{\ve_k}(\yv^\star; \ubar{\tv}) < 0$, or $\D\uf_{\ve_k}(\yv^\star; \ubar{\tv}) = 0$ and there exist $b\geq 1$ points $\xv_1, \dots, \xv_{b}\in {\X}_1(\varepsilon_k; \tv)$ and $\mu_1, \dots, \mu_m \geq 0$ not all zero, such that:
\be\label{eq:sec_order_suff_2}
\sum_{i=1}^{b} \mu_i \n_\yv f(\xv_i, \yv^\star) = {\bf 0},\, \sum_{i=1}^{b}\mu_i \left(\ubar{\tv}^\top \n_{\yv \yv}^2 f(\xv_i, \yv^\star)\ubar{\tv} - \ubar{E}_{\ve_k}(\xv_i, \ubar{\tv})\right) < 0,
\en
then $(\xv^\star, \yv^\star)$ is an \glp.
\end{thm}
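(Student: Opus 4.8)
The plan is to reduce the claim to the defining two-sided condition of a local robust point in \Cref{def:glp}: that $\xv^\star$ is a local minimizer of $\of_{\e_k, \yv^\star}$ for every $\e_k$ in a sequence decreasing to $0$, and that $\yv^\star$ is a local maximizer of $\uf_{\ve_k, \xv^\star}$ for every $\ve_k$ in a sequence decreasing to $0$. Each of these one-sided statements concerns an \emph{envelope function} maximized over a compact convex neighborhood, so the engine driving the whole argument is Kawasaki's second-order sufficient condition for envelopes, \Cref{thm:suf_kawa}. First I would establish the min-player half, then obtain the max-player half by a sign-flip symmetry, and finally assemble the two halves through \Cref{def:glp}.

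For the min-player, fix an arbitrary $\e_k$ from the promised sequence and consider $\of_{\e_k, \yv^\star}(\xv) = \max_{\yv \in \Nc(\yv^\star, \e_k)} f(\xv, \yv)$. The outer variable ranges over $\X = \R^n$ and the inner constraint set $\Nc(\yv^\star, \e_k)$ is a closed ball, hence compact and convex, so the standing hypotheses of \Cref{thm:suf_kawa} are in force, with \Cref{assmp:suff} supplying the required parabolic Taylor expansions. The matching of notation is exact once one identifies the subscripted objects $\bar{u}_{\e_k}$, $\bar{v}(\cdot;\bar{\tv})$, $\bar{E}_{\e_k}$ and the set $\Y_1(\e_k;\bar{\tv})$ of this appendix with the generic $u$, $v$, $\bar{E}$ and $\Y_1$ appearing in \Cref{thm:suf_kawa}. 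Then hypothesis \eqref{eq:sec_order_suff_1} is precisely Kawasaki's condition: along non-critical directions $\D\of_{\e_k}(\xv^\star;\bar{\tv})>0$, and along each critical direction the multiplier inequality \eqref{eq:sec_order_suff_1} holds. The only bookkeeping needed is that \eqref{eq:sec_order_suff_1} allows $\l_i \geq 0$ not all zero whereas \Cref{thm:suf_kawa} wants $\l_i > 0$; this is harmless, since terms with $\l_i = 0$ contribute nothing to either the equality or the strict inequality and may be discarded, leaving at least one strictly positive multiplier. \Cref{thm:suf_kawa} then yields that $\xv^\star$ is an isolated local minimizer of $\of_{\e_k}$, for every $\e_k$ in the sequence.

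The max-player half follows by applying the same theorem to $-f$. Viewing $\yv$ as the minimizing variable of $-f$ and maximizing over $\xv\in\Nc(\xv^\star,\ve_k)$, the associated envelope is $\max_{\xv\in\Nc(\xv^\star,\ve_k)}\bigl(-f(\xv,\yv)\bigr) = -\uf_{\ve_k,\xv^\star}(\yv)$, so $\yv^\star$ maximizes $\uf_{\ve_k}$ iff it minimizes this envelope. Under $f \mapsto -f$ one has $\n_\yv(-f) = -\n_\yv f$ and $\n_{\yv\yv}^2(-f) = -\n_{\yv\yv}^2 f$; feeding these into Kawasaki's condition turns its gradient equation into $\sum_i \mu_i \n_\yv f(\xv_i,\yv^\star)=\zero$ and its curvature inequality into $\sum_i \mu_i\bigl(\ubar{\tv}^\top \n_{\yv\yv}^2 f(\xv_i,\yv^\star)\ubar{\tv} - \ubar{E}_{\ve_k}(\xv_i;\ubar{\tv})\bigr) < 0$, i.e.\ exactly hypothesis \eqref{eq:sec_order_suff_2}, while $\D(-\uf_{\ve_k})(\yv^\star;\ubar{\tv})>0$ becomes $\D\uf_{\ve_k}(\yv^\star;\ubar{\tv})<0$. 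With \Cref{assmp:suff_2} playing the role of \Cref{assmp:suff}, \Cref{thm:suf_kawa} gives that $\yv^\star$ is an isolated local maximizer of $\uf_{\ve_k}$ for every $\ve_k$.

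Combining the two halves, $\xv^\star$ locally minimizes $\of_{\e_k,\yv^\star}$ along $\e_k \to 0$ and $\yv^\star$ locally maximizes $\uf_{\ve_k,\xv^\star}$ along $\ve_k \to 0$, which is exactly \Cref{def:glp}; hence $(\xv^\star,\yv^\star)$ is an \glp. The step I expect to require the most care is not any single estimate but the faithful translation between the two frameworks: verifying that the appendix's subscripted objects coincide with Kawasaki's generic ones, and checking that the $f\mapsto -f$ reduction flips every sign in the right place so that \eqref{eq:sec_order_suff_2} really is Kawasaki's condition for the max-player rather than a superficially similar but inequivalent statement. The remaining (routine) points are the $\l_i\geq 0$ versus $\l_i>0$ bookkeeping and the tacit requirement that the sequences $\{\e_k\}$, $\{\ve_k\}$ diminish to $0$ so that \Cref{def:glp} applies.
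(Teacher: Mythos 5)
Your proposal is correct and follows exactly the route the paper intends: the paper introduces this theorem with the remark that it is "constructed from \Cref{thm:suf_kawa}," and your proof simply instantiates Kawasaki's sufficient condition once for the envelope $\of_{\e_k,\yv^\star}$ and once (after the sign flip $f\mapsto -f$) for $-\uf_{\ve_k,\xv^\star}$, then invokes \Cref{def:glp}. The sign bookkeeping for $\ubar{u}$, $\ubar{v}$, $\ubar{E}$ and the handling of zero multipliers are both carried out correctly.
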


\subsection{Local robust points in quadratic games}\label{sec:quad_lrp}
\blue{In this subsection, we discuss the existence conditions for {\glp}s in quadratic games.} Since {\glp}s are also stationary, we can translate the origin such that the quadratic game is homogeneous. 

\begin{defn}[\blue{\textbf{positive/negative part of a symmetric matrix}}]\label{defn:pos_neg_part_matrix}
For an $n$-dimensional symmetric matrix $\Av\in \mathds{S}^n$, given its spectral decomposition $\Av = \Uv \Dv \Uv^\top$, we define the positive part $\Av_p = \Uv \Dv_p \Uv^\top$, and the negative part is $\Av_n = \Uv \Dv_n \Uv^\top$, where $[\Dv_p]_{i, j}=  d_{ii}\d_{i, j}{\bf 1}_{d_{ii} > 0}$ (resp.~$[\Dv_n]_{i, j} =  d_{ii}\d_{i, j}{\bf 1}_{d_{ii} < 0}$) is a diagonal matrix that takes the positive part (resp.~the negative part) of $\Dv$.
\end{defn}

\begin{defn}[\textbf{eigenspace neighborhood}]
Given the spectral decomposition of a symmetric matrix $\Av = \sum_i \l_i \vv_i \vv_i^\top$, we define the eigenspace neighborhood w.r.t.~$\Av$ as:
\be
\Nc_\Av(\xv, \e):= \{\xv + \sum_{i} c_i \vv_i: |c_i| \leq \e\}.
\en
\end{defn}
With the decomposition of symmetric matrices and the eigenspace neighborhoods, we can derive the condition for LRPs in unconstrained quadratic games:
\begin{restatable}[\blue{\textbf{necessary and sufficient conditions of {\glp}s in quadratic games}}]{thm}{lrpq}\label{thm:lrp_quadratic}
Let us choose $\Nc(\yv^\star,\e) = \Nc_\Bv(\yv^\star, \e)$ and $\Nc(\xv^\star, \ve) = \Nc_\Av(\xv^\star, \ve)$ for envelope functions $\of_\e(\xv)$ and $\uf_\ve(\yv)$ respectively. In order for $(\xv^\star, \yv^\star) = (\zero, \zero)$ to be an {\glp} for the homogeneous quadratic game, it is necessary and sufficient that:
\be
\label{eq:lrp_neg}&&\p{\Lv}(\Av - \Cv \Bv_n^{\dag} \Cv^\top)\p{\Lv} \cge \zero, \, ~~ \Lv = \Cv \p{\Bv_{n}},\\
\label{eq:lrp_pos}&&\p{\Mv}(\Bv - \Cv^\top \Av_p^{\dag}\Cv)\p{\Mv}\cle \zero, \, ~~\Mv = \Cv^\top \p{\Av_p}.
\en
\end{restatable}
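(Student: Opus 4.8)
By \Cref{def:glp}, with the prescribed eigenspace neighborhoods the pair $(\zero,\zero)$ is an \glp{} exactly when two conditions hold: (i) $\xv^\star=\zero$ is a local minimizer of $\of_\e(\xv)=\max_{\yv\in\Nc_\Bv(\zero,\e)}f(\xv,\yv)$ for a sequence $\e\downarrow 0$, and (ii) $\yv^\star=\zero$ is a local maximizer of $\uf_\ve(\yv)=\min_{\xv\in\Nc_\Av(\zero,\ve)}f(\xv,\yv)$ for a sequence $\ve\downarrow 0$. The plan is to first prove that (i) is equivalent to \eqref{eq:lrp_neg}, and then recover the equivalence of (ii) with \eqref{eq:lrp_pos} purely by the mirror substitution $\tilde f(\yv,\xv):=-\mf(\yv,\xv)=-f(\xv,\yv)$, which converts the local-maximin requirement for $f$ into a local-minimax requirement for $\tilde f$. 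The coefficient matrices of $\tilde f$ are $(\tilde\Av,\tilde\Bv,\tilde\Cv)=(-\Bv,-\Av,-\Cv^\top)$; since the eigenspace neighborhood depends only on eigenvectors, $\Nc_{\tilde\Bv}=\Nc_{\Av}$, and one checks $\tilde\Bv_n=-\Av_p$, $\p{\tilde\Bv_n}=\p{\Av_p}$, $\tilde\Lv:=\tilde\Cv\p{\tilde\Bv_n}=-\Mv$, $\p{\tilde\Lv}=\p{\Mv}$, so that the condition \eqref{eq:lrp_neg} applied to $\tilde f$ becomes precisely \eqref{eq:lrp_pos} for $f$.

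The core is the envelope computation for (i). Writing the spectral decomposition $\Bv=\sum_i\mu_i\wv_i\wv_i^\top$ and expanding $\yv=\sum_i c_i\wv_i$ with $|c_i|\le\e$, the maximization over $\Nc_\Bv(\zero,\e)$ decouples coordinatewise:
\[
\of_\e(\xv)=\tfrac12\xv^\top\Av\xv+\sum_i\max_{|c_i|\le\e}\Bigl(c_i b_i+\tfrac12\mu_i c_i^2\Bigr),\qquad b_i:=\wv_i^\top\Cv^\top\xv .
\]
A sign analysis in each coordinate shows that, for $\xv$ in a (possibly $\e$-dependent) neighborhood of $\zero$, a negative eigenvalue contributes the interior maximum $b_i^2/(2|\mu_i|)$ (valid once $|b_i|\le|\mu_i|\e$, which holds near $\zero$), while the zero and positive eigenvalues contribute $\e|b_i|$ plus an $\xv$-independent constant. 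Using $\Bv_n^\dag=\sum_{\mu_i<0}\mu_i^{-1}\wv_i\wv_i^\top$, so that $\sum_{\mu_i<0}b_i^2/(2|\mu_i|)=-\tfrac12\xv^\top\Cv\Bv_n^\dag\Cv^\top\xv$, this collapses to the clean local identity
\[
\of_\e(\xv)-\of_\e(\zero)=\tfrac12\,\xv^\top(\Av-\Cv\Bv_n^\dag\Cv^\top)\xv+\e\sum_{\mu_i\ge 0}|b_i| .
\]

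From this identity the equivalence with \eqref{eq:lrp_neg} follows as in \Cref{thm:local_global_q}. For necessity, restrict $\xv$ to $\mathrm{range}(\p{\Lv})=\mathrm{null}(\Lv^\top)$ with $\Lv=\Cv\p{\Bv_n}$: there every $b_i$ with $\mu_i\ge 0$ vanishes, the penalty term drops, and $\of_\e(\xv)-\of_\e(\zero)=\tfrac12\xv^\top(\Av-\Cv\Bv_n^\dag\Cv^\top)\xv$; local minimality at $\zero$ forces $\p{\Lv}(\Av-\Cv\Bv_n^\dag\Cv^\top)\p{\Lv}\succeq\zero$ by the usual scaling argument for quadratic forms, and since this condition is $\e$-free a single element of the sequence suffices. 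For sufficiency, bound $\sum_{\mu_i\ge 0}|b_i|\ge\|\Lv^\top\xv\|$ (equivalence of $\ell_1$ and $\ell_2$ on the orthonormal coordinates), reducing the goal to $\tfrac12\xv^\top(\Av-\Cv\Bv_n^\dag\Cv^\top)\xv+\e\|\Lv^\top\xv\|\ge 0$ for small $\xv$. This is exactly the lower bound already handled in the final paragraph of the proof of \Cref{thm:local_global_q} (with $\Bv$ replaced by $\Bv_n$): decomposing $\xv=\deltav_\parallel+\deltav_\perp$ along $\mathrm{range}(\Lv)$ and $\mathrm{null}(\Lv^\top)$ and invoking $\p{\Lv}(\Av-\Cv\Bv_n^\dag\Cv^\top)\p{\Lv}\succeq\zero$, the linear penalty dominates the cross and parallel quadratic terms once $\|\xv\|$ is small enough.

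I expect the main obstacle to be the envelope computation, namely verifying that the piecewise, coordinate-decoupled expression for $\of_\e$ collapses to the stated quadratic-plus-penalty identity on a genuine neighborhood of $\zero$ (so that the interior-maximum branch is active for every negative eigenvalue simultaneously). The subtlety is that this neighborhood may shrink with $\e$; this is harmless for the argument because the resulting spectral condition \eqref{eq:lrp_neg} does not involve $\e$, so it can be read off from any single small $\e$ and, conversely, once it holds it certifies local minimality for all small $\e$. The remaining positivity step is delicate but is entirely parallel to \Cref{thm:local_global_q}, so I would cite that argument rather than reproduce it.
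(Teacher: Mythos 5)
Your proposal is correct and follows essentially the same route as the paper's proof: the same coordinatewise maximization over the eigenspace neighborhood yielding $\of_\e(\xv)=\tfrac12\xv^\top(\Av-\Cv\Bv_n^\dag\Cv^\top)\xv+\e\sum_{\mu_i\ge0}|b_i|+\mathrm{const}$, the same restriction to $\mathrm{null}(\Lv^\top)$ for necessity, and the same $\ell_1\ge\ell_2$ bound followed by the decomposition argument of \Cref{thm:local_global_q} for sufficiency. Your explicit mirror substitution $(\tilde\Av,\tilde\Bv,\tilde\Cv)=(-\Bv,-\Av,-\Cv^\top)$ just spells out what the paper dismisses with ``symmetrically,'' and your remark about the $\e$-dependent validity of the interior-maximum branch is a worthwhile clarification that does not change the argument.
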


\begin{proof}
Given the spectral decomposition $\Bv = \sum_i b_i \vv_i \vv_i^\top$ and $\yv = \sum_i y_i \vv_i$, the quadratic function can be written as:
\be
q(\xv, \yv) = \xv^\top \Av \xv/2 + \sum_i b_i y_i^2/2 + \sum_i y_i \xv^\top \Cv \vv_i.
\en
Maximizing over the eigenspace neighborhood of $\Nc(\yv^\star, \e)$ we obtain:
\be
\oq_\e(\xv) = \xv^\top (\Av - \Cv \Bv_n^{\dag} \Cv^\top)\xv/2 + \sum_{i\in \Ic_+} (b_i \e^2/2 + \e|\xv^\top \Cv \vv_i|), \, \Ic_+ :=\{i\in [m]: b_i \geq 0\}.
\en
In order for $\oq_\e(\xv) \geq \oq_\e(\xv^\star)$, it is necessary that for all $\xv$ such that $\vv_i^\top \Cv^\top \xv = 0$ for $i \in \Ic_+$, $\xv^\top (\Av - \Cv \Bv_n^{\dag} \Cv^\top)\xv/2\geq 0$. That is, for all $\Lv^\top \xv = \zero$ with $\Lv := \Cv \p{\Bv_n}$, $\xv^\top (\Av - \Cv \Bv_n^{\dag} \Cv^\top)\xv/2\geq 0$, which yields \eqref{eq:lrp_neg}. Symmetrically we obtain \eqref{eq:lrp_pos} for maximizing $\ubar{q}_\ve(\yv)$. The sufficient part is analogous to the proof of \Cref{thm:local_global_q}. Denote $\etav$ as an $|\Ic_+|$-dimensional vector with $\eta_i = \vv_i^\top \Cv^\top \xv$ and $i\in \Ic_+$, then
\be
\sum_{i\in \Ic_+} |\xv^\top \Cv \vv_i| = \|\etav\|_1 \geq \|\etav\|_2 = \|\sum_{i\in \Ic_+} (\vv_i^\top \Cv^\top \xv)\vv_i \|_2 = \|\Lv^\top \xv\|_2.
\en
The rest follows after \eqref{eq:trick}.
\end{proof}

In the special case of local minimax when $\Bv\cle \zero$, \eqref{eq:lrp_neg} and \eqref{eq:lrp_pos} reduces to \eqref{eq:qc}. 

\subsection{Stability at local robust points}

Finally, we discuss the convergence of first-order algorithms near {\glp}s. In \Cref{prop:lrp}, we gave full characterization for {\glp}s in one-dimensional quadratic games. In fact, from our spectral analysis in \Cref{sec:local_stab} one can draw the following conclusion:
\begin{restatable}[\textbf{\blue{local stability at LRP}}]{prop}{proplrp}\label{prop:lrp_alg}
Suppose $c^2 \neq ab$. For one-dimensional homogeneous quadratic games $q(x, y) = ax^2/2 + c x y + b y^2/2$, the stable sets of GDA (with momentum) and EG/OGD are within the set of {\glp}s. Moreover:
\begin{itemize}[topsep=2pt, itemsep=0pt]
    \item There exists a quadratic game and an {\glp}, $\zv^\star$, such that no hyper-parameter choice can allow 2TS-EG to converge to $\zv^\star$.
    \item Whenever a LRP exists, there always exists a hyper-parameter choice $(\a_1, \a_2, k)$ such that 2TS-OGD converges to the LRP.  
\end{itemize}
\end{restatable}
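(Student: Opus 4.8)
The plan is to reduce everything to the spectrum of the Jacobian at the unique stationary point. Since $c^2\neq ab$, the matrix $\left[\begin{smallmatrix} a & c\\ c& b\end{smallmatrix}\right]$ is nonsingular, so $(0,0)$ is the only stationary point and hence the only fixed point of any of these iterations. At $(0,0)$ the Jacobian $\Hv_{\alpha_1,\alpha_2}$ from \eqref{eq:gen_H} equals $\left[\begin{smallmatrix} -\alpha_1 a & -\alpha_1 c\\ \alpha_2 c& \alpha_2 b\end{smallmatrix}\right]$, whose spectrum consists of the two roots of
\[
\lambda^2+(\alpha_1 a-\alpha_2 b)\lambda+\alpha_1\alpha_2(c^2-ab)=0,
\]
so the eigenvalues have product $\alpha_1\alpha_2(c^2-ab)$ and sum $\alpha_2 b-\alpha_1 a$. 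Each of the three claims then becomes a statement about placing (or failing to place) this pair of roots into the stability regions of \Cref{thm:eg_stable,thm:hb} and \Cref{cor:nag}.

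For the inclusion ``stable $\subseteq$ LRP'' I would argue the contrapositive. If $(0,0)$ is not an LRP then, by \Cref{prop:lrp} together with $c^2\neq ab$, either $c=0$ with $a<0$ or $b>0$, or $c\neq0$ with $c^2<ab$. In the first case one of the real eigenvalues $-\alpha_1 a,\ \alpha_2 b$ is strictly positive; in the second the product $\alpha_1\alpha_2(c^2-ab)<0$ forces two real eigenvalues of opposite sign. Either way there is a real eigenvalue $\lambda=u>0$ for \emph{every} choice of $\alpha_1,\alpha_2>0$, and a single positive real eigenvalue defeats all iterations at once: $|1+u|>1$ kills GDA; the heavy-ball inequality of \Cref{thm:hb} collapses to $-(1-\beta)^2[u^2+2(1+\beta)u]>0$, impossible for $|\beta|<1$; \Cref{cor:nag} excludes NAG since $\Re(\lambda)\ge0$; for EG, $|1+u/\beta+u^2/\beta|>1$ for any $\beta>0$; and for OGD the inequality of \Cref{thm:eg_stable} rearranges to $g(u):=(k+1)u^3-2ku^2+(k-3)u+2<0$, whereas $g(u)=(u-1)^2\big((k+1)u+2\big)\ge0$ for all $u>0,\ k>1$. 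Hence no algorithm is stable, giving the inclusion.

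For the second bullet I would exhibit the anti-saddle of \Cref{eg:glp}, $q(x,y)=-x^2+xy+y^2$ (i.e.\ $a=-2,b=2,c=1$), whose LRP at $(0,0)$ is certified there. The eigenvalue equation is $\lambda^2-2(\alpha_1+\alpha_2)\lambda+5\alpha_1\alpha_2=0$, with positive sum and positive product, so both eigenvalues have nonnegative real part. The clean way to rule out EG is to observe that for any $\beta>0$ its stable set lies inside $\{\Re(\lambda+\lambda^2)<0\}$: writing $w=\lambda+\lambda^2$, the condition $|1+w/\beta|<1$ is equivalent to $2\beta\Re(w)+|w|^2<0$, which forces $\Re(w)<0$. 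A short computation—splitting into the real and complex-conjugate cases and using $5\alpha_1\alpha_2\le\tfrac54(\alpha_1+\alpha_2)^2$ from AM--GM—gives $\Re(\lambda+\lambda^2)>0$ for all $\alpha_1,\alpha_2>0$, so EG can never converge here for any $(\alpha_1,\alpha_2,\beta)$.

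For the third bullet—where the real work lies—I would run OGD with $k\to1^+$, whose stable region is the largest possible and equals $R=\{|\lambda|<1\}\cap\{\Re(\lambda)<|\lambda|^2\}$ by \Cref{thm:eg_stable,thm:egratio}; by continuity of the OGD inequality in $k$ it suffices to place both eigenvalues strictly inside $R$, after which some $k>1$ near $1$ still works. Since the product $\alpha_1\alpha_2(c^2-ab)>0$ at any LRP, I split into two regimes. Whenever the ratio $\alpha_1/\alpha_2$ can be chosen so that $\alpha_1 a-\alpha_2 b\ge0$ (automatic for $a>0$, and available unless $a\le0\le b$ with $(a,b)\neq(0,0)$), both eigenvalues have nonpositive real part; shrinking $\alpha_1,\alpha_2$ by a common factor then sends $|\lambda|\to0$ while keeping $\Re(\lambda)\le0<|\lambda|^2$, landing in $R$. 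The genuinely hard regime is the anti-saddle $a\le0\le b$ (with $c\neq0$), where the sum is forced strictly positive and one must enter $R$ through its $\Re(\lambda)>0$ lobe via complex eigenvalues. Setting $\alpha_1=r\alpha_2$ and then rescaling, the three requirements ($|\lambda|^2<1$, $\Re(\lambda)<|\lambda|^2$, discriminant negative) collapse to the single scale-invariant inequality $(b-ra)^2<4r(c^2-ab)$, i.e.\ $\phi(r):=a^2r^2+(2ab-4c^2)r+b^2<0$ for some $r>0$. Its discriminant is $16c^2(c^2-ab)>0$, and a sign analysis of the roots (product $b^2/a^2\ge0$, sum $(4c^2-2ab)/a^2>0$ because $ab\le0$ here, with the degenerate $a=0$ case handled by the linear $\phi$) shows such an $r>0$ always exists. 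I expect this last case to be the main obstacle: it is precisely the configuration where EG provably fails, so the argument must genuinely exploit OGD's larger stable region rather than the easy shrinking-toward-the-origin route.
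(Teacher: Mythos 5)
Your proof is correct and follows essentially the same route as the paper's: reduce to the characteristic polynomial $\lambda^2+(\alpha_1 a-\alpha_2 b)\lambda+\alpha_1\alpha_2(c^2-ab)$, rule out positive real eigenvalues for Part I, use an anti-saddle example together with the $\Re(\lambda+\lambda^2)<0$ envelope of EG for Part II, and reduce Part III to the scale-invariant inequality $(b-ra)^2<4r(c^2-ab)$ via the $k\to1_+$ OGD region. You in fact supply several details the paper leaves implicit (the explicit factorization $g(u)=(u-1)^2((k+1)u+2)$ for OGD, and the root-sign analysis of $\phi(r)$), so the argument stands as a slightly more self-contained version of the same proof.
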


\begin{proof} \textbf{Part I} From stationarity the set of LRPs is $\{(0, 0)\}$ if $c^2 > ab$ and empty if $c^2 < ab$. The stable sets of gradient algorithms can only be empty or $\{(0, 0)\}$. We note that for $q(x, y) = ax^2/2 + c x y + b y^2/2$, the characteristic polynomial of $\haa$ is:
\be\label{eq:quadr_solution}
\l^2 + (\a_1 a - \a_2 b) \l + \a_1 \a_2 (c^2 - ab) = 0.
\en
It is necessary that $c^2 - a b \geq 0$ since from our spectral characterization, the two roots are either 1) both complex and are conjugate to each other; 2) both real and negative. If $c = 0$, we must have $a \geq 0 \geq b$ since the two roots are both real and must be non-positive. Comparing with \Cref{prop:lrp} we have the first conclusion.

\paragraph{Part II} Let us show the claim for EG. Take $q(x, y) =  -x^2 + x y + y^2/2$. From \eqref{eq:quadr_solution} and \Cref{thm:egratio}, it suffices to show that:
\be
p(\l) := \l^2 - (2\a_1 + \a_2) \l + 3 \a_1 \a_2 = 0
\en
has no solution in the region $\{\l \in \mathbb{C}: \Re(\l + \l^2) < 0\}$. If $(2\a_1 + \a_2)^2 \geq 12 \a_1 \a_2$, it suffices to show that $p(\l)$ has no root between $-1$ and $0$. Otherwise, the condition $\Re(\l + \l^2) < 0$ becomes 
$$
2\a_1 + \a_2 + (2\a_1 + \a_2)^2 < 6\a_1 \a_2,
$$
which cannot be true since $(2\a_1 + \a_2)^2 \geq 8\a_1 \a_2$ and $\a_1 > 0$, $\a_2 > 0$. 

\paragraph{Part III} For the claim of OGD, if $c = 0$ then $a > 0 > b$ and it is easy. If $c \neq 0$, combining \eqref{eq:quadr_solution} and \eqref{eq:ogd_large}, it suffices to show the existence of $(\a_1, \a_2) \in \R_{++}$ such that
\be
(\a_1 a - \a_2 b)^2 < 4 \a_1 \a_2 (c^2 - ab) < 4, \, \a_1 a - \a_2 b > -2\a_1 \a_2 (c^2 - ab),
\en
which, with $\g = \a_2/\a_1$, reduces to the existence of $(\a_2, \g) \in \R_{++}$ such that
\be
\frac{\g b - a}{2(c^2 - a b)} < \a_2,\, \a_2^2 < \frac{\g}{c^2 - ab}, \, (a - \g b)^2 < 4\g(c^2 - a b),
\en
which reduces to the existence of $\g \in \R_{++}$ such that
\be
(a - \g b)^2 < 4\g(c^2 - ab).
\en
this is always true no matter whether $b = 0$ or $b\neq 0$.
\end{proof}

This proposition shows the essential difference between EG and OGD in the convergence to {\glp}s. The last claim shares the same spirit with \citet[Theorem 28]{jin2019minmax}, since we can similarly write:
\be
\mathcal{LRP} = 2\mathcal{TS}\textrm{-}\mathcal{OGD},
\en
where $\mathcal{LRP}$ is the set of LRPs and $2\mathcal{TS}\textrm{-}\mathcal{OGD}$ is the set of all possible stable points of 2TS-OGD given some parameters $(\a_1 > 0, \a_2 > 0, k > 1)$. 

\blue{
However, this result does not hold in higher dimensions. We can prove the following:
\begin{prop}[\textbf{failure of gradient algorithms at LRP}]\label{prop:failure_lrp}
There exists a two-dimensional quadratic function $q(\xv, \yv)$ with its LRP at $(\zero, \zero)$, in the same setting as \Cref{thm:lrp_quadratic}, such that GD (with momentum), EG or OGD cannot converge to the LRP for any hyper-parameter choice. 
\end{prop}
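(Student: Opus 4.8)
The plan is to exhibit a \emph{separable} two-dimensional game that superimposes the pathological example of \Cref{thm:all} onto a benign convex--concave saddle whose stable step-size window is incompatible with the former. Concretely, writing $\xv = (x_1, x_2)$, $\yv = (y_1, y_2)$, I would take
\[
q(\xv, \yv) = \big(-x_1^2 + x_1 y_1\big) + \big(\tfrac{3}{2} x_2^2 - \tfrac{3}{2} y_2^2\big),
\]
so that the two blocks decouple with $\Av = \mathrm{diag}(-2, 3)$, $\Bv = \mathrm{diag}(0, -3)$, and $\Cv = \mathrm{diag}(1, 0)$. The first block is the game that kills all momentum methods and forces a large $y$-step size under EG/OGD, while the second block is a saddle that forces a \emph{small} $y$-step size; the tension in $\a_2$ is what defeats every algorithm simultaneously.

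First I would verify that $(\zero, \zero)$ is an LRP by checking \eqref{eq:lrp_neg}--\eqref{eq:lrp_pos} of \Cref{thm:lrp_quadratic}. This is routine pseudoinverse bookkeeping: one computes $\Bv_n = \mathrm{diag}(0,-3)$, $\Lv = \Cv\p{\Bv_n} = \mathrm{diag}(1,0)$, $\p{\Lv} = \mathrm{diag}(0,1)$, finds $\Cv\Bv_n^\dag\Cv^\top = \zero$, and hence $\p{\Lv}(\Av - \Cv\Bv_n^\dag\Cv^\top)\p{\Lv} = \mathrm{diag}(0,3)\succeq\zero$; the symmetric computation with $\Av_p = \mathrm{diag}(0,3)$, $\Mv = \mathrm{diag}(1,0)$ gives $\p{\Mv}(\Bv - \Cv^\top\Av_p^\dag\Cv)\p{\Mv} = \mathrm{diag}(0,-3)\preceq\zero$. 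Since the gradient field decouples, a permutation of coordinates makes $\Hv_{\a_1,\a_2}(q)$ block-diagonal with blocks $\Hv^A$ (from $-x_1^2 + x_1 y_1$) and $\Hv^B$ (from the saddle), so $\sp(\Hv_{\a_1,\a_2}) = \sp(\Hv^A)\cup\sp(\Hv^B)$, and stability of any algorithm requires \emph{every} eigenvalue in both sets to be admissible. Using \eqref{eq:quadr_solution}, the $A$-block has eigenvalues $\a_1(1\pm i\sqrt{\gamma-1})$ with $\gamma = \a_2/\a_1 > 1$ (or positive reals when $\gamma\le 1$), so $\Re(\l) > 0$ there in all cases, while the $B$-block contributes the negative reals $-3\a_1, -3\a_2$.

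The momentum failure then follows immediately: \Cref{thm:hb} (via the ellipse \eqref{eq:elips}), GDA as its $\b=0$ special case, and \Cref{cor:nag} all require $\Re(\l) < 0$ for every $\l\in\sp(\Hv_{\a_1,\a_2})$, and the $A$-block always violates this; hence GD, heavy ball, and Nesterov momentum fail for all hyper-parameters. For EG and OGD I would use \Cref{thm:egratio}: the stable set is monotone in the extra-gradient parameter, so the union over admissible $\b$ (resp.\ $k$) is the limiting region $\{\Re(\l+\l^2)<0\}$ for EG and $\{|\l|<1,\ \Re(\l)<|\l|^2\}$ for OGD. A single $(\a_1,\a_2)$ must place both blocks inside the relevant region. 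The $A$-block eigenvalues force $\a_2 > 1$ in either region (exactly as in the two-time-scale analysis of \Cref{thm:all}: $\Re(\l+\l^2)<0$ needs $\a_2 > 2\a_1+1$, and $\Re(\l)<|\l|^2$ needs $\a_2>1$), whereas the $B$-block eigenvalue $-3\a_2$ forces $9\a_2^2 < 3\a_2$, i.e.\ $\a_2 < 1/3$, for EG and $3\a_2 < 1$ for OGD. The windows $\a_2>1$ and $\a_2<1/3$ are disjoint, so no common step size exists.

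The hard part will be handling the universal quantifier over hyper-parameters cleanly, rather than the algebra: I must convert ``stable for \emph{some} $\b$ or $k$'' into a constraint on $\a_2$ that is independent of $\b$ and $k$. The monotonicity of \Cref{thm:egratio} is exactly what licenses passing to the limiting ($\b\to\infty$, $k\to 1_+$) regions as necessary conditions, and the remaining work is to show that the two blocks, sharing the single step size $\a_2$, impose incompatible $\a_2$-windows. I would also note that the $-3\a_1$ constraint from the $B$-block only demands $\a_1 < 1/3$, which is harmless (the leader step may be arbitrarily small), so the obstruction is genuinely confined to the follower step $\a_2$.
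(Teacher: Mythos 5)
Your proposal is correct, and it reaches the conclusion via a genuinely different counterexample than the paper's. The paper glues the degenerate local minimax block $-x_1^2+x_1y_1$ to its \emph{mirror}, the local maximin block $x_2y_2+y_2^2$, obtaining $q(\xv,\yv)=-x_1^2+x_1y_1+x_2y_2+y_2^2$; the two blocks then impose mutually exclusive step-size windows ($1<\a_2<1/\a_1$ versus $1<\a_1<1/\a_2$ for OGD, and the analogous conflict for EG), so the obstruction there is the \emph{ordering} of the two players' step sizes. You instead glue the same bad block to a benign strongly convex--concave saddle $\tfrac32 x_2^2-\tfrac32 y_2^2$, and the obstruction lives entirely in the follower step: the minimax block forces $\a_2>1$ while the saddle eigenvalue $-3\a_2$ forces $\a_2<1/3$ (via $|\l|<1$ for OGD and $\Re(\l+\l^2)<0$ for EG). Your LRP verification through \Cref{thm:lrp_quadratic} checks out, the block-diagonal spectrum argument is valid, and invoking \Cref{thm:egratio} to collapse the universal quantifier over $\b$ and $k$ onto the limiting regions is exactly the right move (for EG one can even read the necessary condition $\Re(\l+\l^2)<0$ directly off $|1+\l/\b+\l^2/\b|<1$ for every $\b>0$, so monotonicity is not strictly needed there). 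One difference worth flagging: your point $(\zero,\zero)$ is in fact a (degenerate) \emph{local minimax} point, since $f(\zero,\yv)=-\tfrac32 y_2^2$ is maximized at $\zero$ and $\of_\e(\xv)=-x_1^2+\e|x_1|+\tfrac32 x_2^2\geq 0$ locally; this makes your example a stronger statement about local minimax points themselves, whereas the paper's example is neither local minimax nor local maximin and therefore better showcases LRPs as a strictly larger solution class. Both constructions support the paper's closing moral that coordinate-wise step sizes (as in Adam) would resolve the conflict, since in each case the two blocks are individually stabilizable.
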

\begin{proof}
Combined with what we have in \Cref{prop:lrp_alg} and \Cref{thm:all}, it suffices to prove the negative result for OGD. Since local robust points include both local minimax points and local maximin points, we construct a two-dimensional quadratic function  that include both cases:
\be\label{eq:lrp_failure_eg}
q(\xv, \yv) = -x_1^2 + x_1 y_1 + x_2 y_2 + y_2^2.
\en
Note that $(\zero, \zero)$ is the only stationary point. We now prove that it is also a local robust point. Writing the quadratic function in the same form as \eqref{eq:quadr}, we have:
\be
\Av = \begin{bmatrix}
-2 & 0 \\
0 & 0 
\end{bmatrix}, \, \Bv = \begin{bmatrix}
0 & 0 \\
0 & 2 
\end{bmatrix}, \, \Cv = \begin{bmatrix}
1 & 0 \\
0 & 1 
\end{bmatrix}. 
\en
From \Cref{defn:pos_neg_part_matrix}, we obtain the positive and the negative parts of $\Av$ and $\Bv$:
\be
\Av_p = \zero, \, \Av_n = \Av, \, \Bv_p = \Bv, \, \Bv_n = \zero,
\en
and thus $\p{\Bv_n} = \p{\Av_p} = \Iv$. In \eqref{eq:lrp_neg} and \eqref{eq:lrp_pos}, one can write $\Lv = \Mv = \Iv$ and $\p{\Lv} = \p{\Mv} = \zero$. It thus follows that \eqref{eq:lrp_neg} and \eqref{eq:lrp_pos} hold and $(\zero, \zero)$ is a LRP.  \\
We now analyze the local convergence of OGD. The Jacobian of $\vv(\zv)$ is a constant:
\be
\Hv_{\a_1, \a_2}(q) = \begin{bmatrix}
-\a_1 \Av & -\a_1 \Cv \\
\a_2 \Cv^\top & \a_2 \Bv
\end{bmatrix}.
\en
Note that $\Cv^\top$ and $\Bv$ are diagonal matrices and thus they commute. So, we can compute the characteristic equation of $\Hv_{\a_1, \a_2}(q)$ as:
\be
\det((\lambda \Iv + \a_1 \Av) (\lambda \Iv - \a_2 \Bv) + \a_1 \a_2 \Cv \Cv^\top) = \zero,
\en
from which we obtain:
\be
&&\label{eq:first_char_poly}\lambda(\lambda -  2\a_1) + \a_1 \a_2 = 0, \\
&&\label{eq:second_char_poly}\lambda(\lambda -  2\a_2) + \a_1 \a_2 = 0.
\en
For 2TS-OGD, when $k\to 1_+$ the algorithm is the most stable (\Cref{thm:egratio}), where the condition should be (\Cref{thm:eg_stable}, \eqref{eq:ogd_large}):
\be\label{eq:ogd_large_new}
|\lambda| < 1, \, |\lambda - 1/2| > 1/2.
\en
Now we separate the discussion into two cases: if $\a_1 \geq \a_2 > 0$, then \eqref{eq:first_char_poly} gives:
\be
\lambda_{1,2} = \a_1 \pm \sqrt{\a_1^2 - \a_1 \a_2},
\en
and there exists a real and positive root. Similarly, if $\a_2 \geq \a_1 > 0$, \eqref{eq:second_char_poly} has a real and positive root. In either case \eqref{eq:ogd_large_new} would be violated. 
\end{proof}
From the proof, we can see that the problem lies in the coordinate-independent step sizes. In fact, \eqref{eq:lrp_failure_eg} could be rewritten as:
\be
q(\xv, \yv) = q_1(x_1, y_1) + q_2(x_2, y_2), \, q_1(x, y) := -x^2 + x y, \, q_2(x, y) := x y + y^2.
\en
For the function $q_1$, $(0, 0)$ is a local minimax point, and the stability constraint for 2TS-OGD is (with $k\to 1_+$, see \eqref{eq:simple_condition_ogd}):
\be\label{eq:simple_lrp_one}
\a_1 < 1, \, 1 < \a_2 < 1/\a_1.
\en
While for the function $q_2$, $(0, 0)$ is a local maximin point, and the stability constraint for 2TS-OGD is (in a similar way):
\be\label{eq:simple_lrp_two}
\a_2 < 1, \, 1 < \a_1 < 1/\a_2.
\en
\eqref{eq:simple_lrp_one} and \eqref{eq:simple_lrp_two} are conflicting each other. Therefore, it tells us that coordinate-dependent step sizes might be necessary in order for stability near a LRP, such as those in Adam \citep{kingma2014adam}, which is widely used in GAN training. \\
We finally mention that LRPs are a wider class that could include the stable points of gradient algorithms. For example, in the proof of Prop.~27 of \citet{jin2019minmax}, there is a two-dimensional quadratic function that has $(0, 0)$ as a stable solution of simultaneous GDA, but it is neither local maximin or minimax. It can be shown that it is in fact a local robust point.
}


\bibstyle{abbrvnat}


\begin{thebibliography}{73}
\providecommand{\natexlab}[1]{#1}
\providecommand{\url}[1]{\texttt{#1}}
\expandafter\ifx\csname urlstyle\endcsname\relax
  \providecommand{\doi}[1]{doi: #1}\else
  \providecommand{\doi}{doi: \begingroup \urlstyle{rm}\Url}\fi

\bibitem[Arrow et~al.(1958)Arrow, Hurwicz, and Uzawa]{arrow1958studies}
K.~Arrow, L.~Hurwicz, and H.~Uzawa.
\newblock \emph{Studies in linear and non-linear programming}.
\newblock Stanford University Press, 1958.

\bibitem[Azizian et~al.(2020{\natexlab{a}})Azizian, Mitliagkas, Lacoste-Julien,
  and Gidel]{Azizian2019ATA}
W.~Azizian, I.~Mitliagkas, S.~Lacoste-Julien, and G.~Gidel.
\newblock A tight and unified analysis of extragradient for a whole spectrum of
  differentiable games.
\newblock In \emph{the 23rd International Conference on Artificial Intelligence
  and Statistics}, 2020{\natexlab{a}}.

\bibitem[Azizian et~al.(2020{\natexlab{b}})Azizian, Scieur, Mitliagkas,
  Lacoste-Julien, and Gidel]{azizian2020accelerating}
W.~Azizian, D.~Scieur, I.~Mitliagkas, S.~Lacoste-Julien, and G.~Gidel.
\newblock Accelerating smooth games by manipulating spectral shapes.
\newblock In \emph{the 23rd International Conference on Artificial Intelligence
  and Statistics}, 2020{\natexlab{b}}.

\bibitem[Barazandeh and Razaviyayn(2020)]{baraz2020solving}
B.~Barazandeh and M.~Razaviyayn.
\newblock Solving non-convex non-differentiable min-max games using proximal
  gradient method.
\newblock In \emph{ICASSP 2020-2020 IEEE International Conference on Acoustics,
  Speech and Signal Processing (ICASSP)}, pages 3162--3166. IEEE, 2020.

\bibitem[Basu et~al.(2005)Basu, Pollack, and Roy]{basu2005algorithms}
S.~Basu, R.~Pollack, and M.-F. Roy.
\newblock \emph{Algorithms in real algebraic geometry}.
\newblock Springer, 2005.

\bibitem[Ben-Tal and Zowe(1982)]{ben1982necessary}
A.~Ben-Tal and J.~Zowe.
\newblock Necessary and sufficient optimality conditions for a class of
  nonsmooth minimization problems.
\newblock \emph{Mathematical Programming}, 24\penalty0 (1):\penalty0 70--91,
  1982.

\bibitem[Ben-Tal and Zowe(1985)]{BenTalZowe85}
A.~Ben-Tal and J.~Zowe.
\newblock \href{https://doi.org/10.1007/BF00942193}{Directional derivatives in
  nonsmooth optimization}.
\newblock \emph{Journal of Optimization Theory and Applications}, 47\penalty0
  (4):\penalty0 483--490, 1985.

\bibitem[Berard et~al.(2020)Berard, Gidel, Almahairi, Vincent, and
  Lacoste-Julien]{Berard2020A}
H.~Berard, G.~Gidel, A.~Almahairi, P.~Vincent, and S.~Lacoste-Julien.
\newblock A closer look at the optimization landscapes of generative
  adversarial networks.
\newblock In \emph{International Conference on Learning Representations}, 2020.
\newblock URL \url{https://openreview.net/forum?id=HJeVnCEKwH}.

\bibitem[Bertsekas(1997)]{bertsekas1997nonlinear}
D.~P. Bertsekas.
\newblock Nonlinear programming.
\newblock \emph{Journal of the Operational Research Society}, 48\penalty0
  (3):\penalty0 334--334, 1997.

\bibitem[Bollapragada et~al.(2019)Bollapragada, Scieur, and
  d’Aspremont]{bollapragada2019nonlinear}
R.~Bollapragada, D.~Scieur, and A.~d’Aspremont.
\newblock Nonlinear acceleration of primal-dual algorithms.
\newblock In \emph{the 22nd International Conference on Artificial Intelligence
  and Statistics}, pages 739--747, 2019.

\bibitem[Clarke(1990)]{Clarke90}
F.~H. Clarke.
\newblock \emph{Optimization and Nonsmooth Analysis}.
\newblock SIAM, 1990.

\bibitem[Cominetti and Correa(1990)]{CominettiCorrea90}
R.~Cominetti and R.~Correa.
\newblock \href{https://doi.org/10.1137/0328045}{A Generalized Second-Order
  Derivative in Nonsmooth Optimization}.
\newblock \emph{{SIAM} Journal on Control and Optimization}, 28\penalty0
  (4):\penalty0 789--809, 1990.

\bibitem[Danskin(1966)]{Danskin66}
J.~M. Danskin.
\newblock \href{https://doi.org/10.1137/0114053}{The Theory of Max-Min, with
  Applications}.
\newblock \emph{{SIAM} Journal on Applied Mathematics}, 14\penalty0
  (4):\penalty0 641--664, 1966.

\bibitem[Daskalakis and Panageas(2018)]{daskalakis2018limit}
C.~Daskalakis and I.~Panageas.
\newblock The limit points of (optimistic) gradient descent in min-max
  optimization.
\newblock In \emph{Advances in Neural Information Processing Systems}, pages
  9236--9246, 2018.

\bibitem[Daskalakis et~al.(2018)Daskalakis, Ilyas, Syrgkanis, and
  Zeng]{daskalakis2018training}
C.~Daskalakis, A.~Ilyas, V.~Syrgkanis, and H.~Zeng.
\newblock \href{https://openreview.net/forum?id=SJJySbbAZ}{Training GANs with
  optimism}.
\newblock In \emph{the 6th International Conference on Learning
  Representations}, 2018.

\bibitem[Dem'yanov(1966)]{Demyanov66}
V.~F. Dem'yanov.
\newblock \href{https://link.springer.com/article/10.1007/BF01074499}{On the
  solution of several minimax problems. I}.
\newblock \emph{Cybernetics}, 2:\penalty0 47--53, 1966.

\bibitem[Dem'yanov(1970)]{Demyanov70}
V.~F. Dem'yanov.
\newblock \href{https://doi.org/10.1016/0041-5553(70)90037-6}{Sufficient
  conditions for a local minimax}.
\newblock \emph{{USSR} Computational Mathematics and Mathematical Physics},
  10\penalty0 (5):\penalty0 53--63, 1970.

\bibitem[Dem'yanov(1973)]{Demyanov73}
V.~F. Dem'yanov.
\newblock \href{https://doi.org/10.1016/0041-5553(72)90053-5}{Second-order
  directional derivatives of a function of the maximum}.
\newblock \emph{Cybernetics}, 9:\penalty0 797--–800, 1973.

\bibitem[Dem’yanov and Malozemov(1974)]{DemyanovMalozemov72}
V.~F. Dem’yanov and V.~N. Malozemov.
\newblock \emph{Introduction to Minimax}.
\newblock Wiley, 1974.

\bibitem[Facchinei and Pang(2007)]{facchinei2007finite}
F.~Facchinei and J.-S. Pang.
\newblock \emph{Finite-dimensional variational inequalities and complementarity
  problems}.
\newblock Springer Science \& Business Media, 2007.

\bibitem[Fan(1950)]{fan1950theorem}
K.~Fan.
\newblock On a theorem of weyl concerning eigenvalues of linear
  transformations: {II}.
\newblock \emph{Proceedings of the National Academy of Sciences of the United
  States of America}, 36\penalty0 (1):\penalty0 31, 1950.

\bibitem[Farnia and Ozdaglar(2020)]{farnia2020gans}
F.~Farnia and A.~Ozdaglar.
\newblock Do {GANs} always have {Nash} equilibria?
\newblock In \emph{International Conference on Machine Learning}, pages
  3029--3039. PMLR, 2020.

\bibitem[Fiez et~al.(2019)Fiez, Chasnov, and Ratliff]{fiez2019convergence}
T.~Fiez, B.~Chasnov, and L.~J. Ratliff.
\newblock \href{https://arxiv.org/abs/1906.01217}{Convergence of learning
  dynamics in {S}tackelberg games}.
\newblock \emph{arXiv}, 2019.
\newblock arXiv:1906.01217.

\bibitem[Gidel et~al.(2019)Gidel, Hemmat, Pezeshki, Huang, Lepriol,
  Lacoste-Julien, and Mitliagkas]{GidelHPHLLM19}
G.~Gidel, R.~A. Hemmat, M.~Pezeshki, G.~Huang, R.~Lepriol, S.~Lacoste-Julien,
  and I.~Mitliagkas.
\newblock {Negative momentum for improved game dynamics}.
\newblock In \emph{the 22nd International Conference on Artificial Intelligence
  and Statistics}, 2019.

\bibitem[Golshtein(1972)]{Golshtein72}
E.~G. Golshtein.
\newblock A generalized gradient method for finding saddlepoints.
\newblock \emph{Ekonomika i matematicheskie}, 8\penalty0 (4):\penalty0 36--52,
  1972.

\bibitem[Goodfellow et~al.(2014)Goodfellow, Pouget-Abadie, Mirza, Xu,
  Warde-Farley, Ozair, Courville, and Bengio]{goodfellow2014generative}
I.~Goodfellow, J.~Pouget-Abadie, M.~Mirza, B.~Xu, D.~Warde-Farley, S.~Ozair,
  A.~Courville, and Y.~Bengio.
\newblock Generative adversarial nets.
\newblock In \emph{Advances in neural information processing systems}, pages
  2672--2680, 2014.

\bibitem[Hampel(1974)]{hampel1974influence}
F.~R. Hampel.
\newblock The influence curve and its role in robust estimation.
\newblock \emph{Journal of the american statistical association}, 69\penalty0
  (346):\penalty0 383--393, 1974.

\bibitem[Heusel et~al.(2017)Heusel, Ramsauer, Unterthiner, Nessler, and
  Hochreiter]{heusel2017gans}
M.~Heusel, H.~Ramsauer, T.~Unterthiner, B.~Nessler, and S.~Hochreiter.
\newblock {GAN}s trained by a two time-scale update rule converge to a local
  {N}ash equilibrium.
\newblock In \emph{Advances in neural information processing systems}, pages
  6626--6637, 2017.

\bibitem[Hiriart-Urruty and Lemar{\'e}chal(2004)]{hiriart2004fundamentals}
J.-B. Hiriart-Urruty and C.~Lemar{\'e}chal.
\newblock \emph{Fundamentals of convex analysis}.
\newblock Springer Science \& Business Media, 2004.

\bibitem[Hiriart-Urruty and Lemar{\'e}chal(2013)]{hiriart2013convex}
J.-B. Hiriart-Urruty and C.~Lemar{\'e}chal.
\newblock \emph{Convex analysis and minimization algorithms I: Fundamentals},
  volume 305.
\newblock Springer, 2013.

\bibitem[Hsieh et~al.(2019)Hsieh, Iutzeler, Malick, and
  Mertikopoulos]{hsieh2019convergence}
Y.-G. Hsieh, F.~Iutzeler, J.~Malick, and P.~Mertikopoulos.
\newblock On the convergence of single-call stochastic extra-gradient methods.
\newblock In \emph{NeurIPS}, pages 6936--6946, 2019.

\bibitem[Hsieh et~al.(2020)Hsieh, Iutzeler, Malick, and
  Mertikopoulos]{hsieh2020explore}
Y.-G. Hsieh, F.~Iutzeler, J.~Malick, and P.~Mertikopoulos.
\newblock Explore aggressively, update conservatively: Stochastic extragradient
  methods with variable stepsize scaling.
\newblock In \emph{NeurIPS 2020-34th Conference on Neural Information
  Processing Systems}, 2020.

\bibitem[Ibrahim et~al.(2020)Ibrahim, Azizian, Gidel, and
  Mitliagkas]{IbrahimAGM19}
A.~Ibrahim, W.~Azizian, G.~Gidel, and I.~Mitliagkas.
\newblock Linear lower bounds and conditioning of differentiable games.
\newblock In \emph{International conference on machine learning}, pages
  6356--6366, 2020.

\bibitem[Jin et~al.(2020)Jin, Netrapalli, and Jordan]{jin2019minmax}
C.~Jin, P.~Netrapalli, and M.~Jordan.
\newblock What is local optimality in nonconvex-nonconcave minimax
  optimization?
\newblock In \emph{International conference on machine learning}, pages
  5735--5744, 2020.

\bibitem[Katok and Hasselblatt(1995)]{katok1995introduction}
A.~Katok and B.~Hasselblatt.
\newblock \emph{Introduction to the modern theory of dynamical systems},
  volume~54.
\newblock Cambridge university press, 1995.

\bibitem[Kawasaki(1988)]{kawasaki1988upper}
H.~Kawasaki.
\newblock The upper and lower second order directional derivatives of a
  sup-type function.
\newblock \emph{Mathematical Programming}, 41\penalty0 (1-3):\penalty0
  327--339, 1988.

\bibitem[Kawasaki(1991)]{kawasaki1990second}
H.~Kawasaki.
\newblock Second order necessary optimality conditions for minimizing a
  sup-type function.
\newblock \emph{Mathematical programming}, 49\penalty0 (1-3):\penalty0
  213--229, 1991.

\bibitem[Kawasaki(1992)]{kawasaki1992second}
H.~Kawasaki.
\newblock Second-order necessary and sufficient optimality conditions for
  minimizing a sup-type function.
\newblock \emph{Applied Mathematics and Optimization}, 26\penalty0
  (2):\penalty0 195--220, 1992.

\bibitem[Kingma and Ba(2015)]{kingma2014adam}
D.~P. Kingma and J.~Ba.
\newblock Adam: A method for stochastic optimization.
\newblock In \emph{International Conference on Learning Representations}, 2015.

\bibitem[Korpelevich(1976)]{korpelevich1976extragradient}
G.~Korpelevich.
\newblock The extragradient method for finding saddle points and other
  problems.
\newblock \emph{Matecon}, 12:\penalty0 747--756, 1976.

\bibitem[Lin et~al.(2020)Lin, Jin, and Jordan]{lin2020nearoptimal}
T.~Lin, C.~Jin, and M.~I. Jordan.
\newblock Near-optimal algorithms for minimax optimization.
\newblock In \emph{the 33rd Conference on Learning Theory}, 2020.

\bibitem[Liu et~al.(2020)Liu, Lu, Chen, Feng, Xu, Al-Dujaili, Hong, and
  O'Reilly]{liu2019min}
S.~Liu, S.~Lu, X.~Chen, Y.~Feng, K.~Xu, A.~Al-Dujaili, M.~Hong, and U.-M.
  O'Reilly.
\newblock Min-max optimization without gradients: Convergence and applications
  to black-box evasion and poisoning attacks.
\newblock In \emph{International conference on machine learning}, pages
  2307--2318, 2020.

\bibitem[Madry et~al.(2018)Madry, Makelov, Schmidt, Tsipras, and
  Vladu]{madry2017towards}
A.~Madry, A.~Makelov, L.~Schmidt, D.~Tsipras, and A.~Vladu.
\newblock Towards deep learning models resistant to adversarial attacks.
\newblock In \emph{the 6th International Conference on Learning
  Representations}, 2018.

\bibitem[Marshall et~al.(1979)Marshall, Olkin, and
  Arnold]{marshall1979inequalities}
A.~W. Marshall, I.~Olkin, and B.~C. Arnold.
\newblock \emph{Inequalities: theory of majorization and its applications},
  volume 143.
\newblock Springer, 1979.

\bibitem[Mertikopoulos et~al.(2018)Mertikopoulos, Papadimitriou, and
  Piliouras]{MertikopoulosPP18}
P.~Mertikopoulos, C.~Papadimitriou, and G.~Piliouras.
\newblock
  \href{https://epubs.siam.org/doi/pdf/10.1137/1.9781611975031.172}{Cycles in
  adversarial regularized learning}.
\newblock In \emph{Proceedings of the Twenty-Ninth Annual ACM-SIAM Symposium on
  Discrete Algorithms}, pages 2703--2717, 2018.

\bibitem[Mertikopoulos et~al.(2019)Mertikopoulos, Lecouat, Zenati, Foo,
  Chandrasekhar, and Piliouras]{mertikopoulos2019optimistic}
P.~Mertikopoulos, B.~Lecouat, H.~Zenati, C.-S. Foo, V.~Chandrasekhar, and
  G.~Piliouras.
\newblock Optimistic mirror descent in saddle-point problems: Going the extra
  (gradient) mile.
\newblock In \emph{the 7th International Conference on Learning
  Representations}, 2019.

\bibitem[Mescheder et~al.(2017)Mescheder, Nowozin, and
  Geiger]{mescheder2017numerics}
L.~Mescheder, S.~Nowozin, and A.~Geiger.
\newblock The numerics of {GAN}s.
\newblock In \emph{Advances in Neural Information Processing Systems}, pages
  1825--1835, 2017.

\bibitem[Mokhtari et~al.(2019)Mokhtari, Ozdaglar, and
  Pattathil]{mokhtari2019proximal}
A.~Mokhtari, A.~Ozdaglar, and S.~Pattathil.
\newblock Proximal point approximations achieving a convergence rate of
  $o(1/k)$ for smooth convex-concave saddle point problems: Optimistic gradient
  and extra-gradient methods.
\newblock \emph{arXiv:1906.01115}, 2019.

\bibitem[Murty and Kabadi(1987)]{murty1987some}
K.~G. Murty and S.~N. Kabadi.
\newblock Some np-complete problems in quadratic and nonlinear programming.
\newblock \emph{Mathematical programming}, 39\penalty0 (2):\penalty0 117--129,
  1987.

\bibitem[Nash(1950)]{nash1950equilibrium}
J.~F. Nash.
\newblock Equilibrium points in $n$-person games.
\newblock \emph{Proceedings of the national academy of sciences}, 36\penalty0
  (1):\penalty0 48--49, 1950.

\bibitem[Nemirovsky and Yudin(1983)]{nemirovsky1983problem}
A.~S. Nemirovsky and D.~B. Yudin.
\newblock \emph{Problem complexity and method efficiency in optimization.}
\newblock Wiley, 1983.

\bibitem[Nesterov(1983)]{nesterov1983method}
Y.~Nesterov.
\newblock A method for unconstrained convex minimization problem with the rate
  of convergence $o(1/k^{2})$.
\newblock \emph{Doklady AN USSR}, 269:\penalty0 543--547, 1983.

\bibitem[Niethammer and Varga(1983)]{niethammer1983analysis}
W.~Niethammer and R.~S. Varga.
\newblock The analysis of $k$-step iterative methods for linear systems from
  summability theory.
\newblock \emph{Numerische Mathematik}, 41\penalty0 (2):\penalty0 177--206,
  1983.

\bibitem[Peng et~al.(2020)Peng, Dai, Zhang, and Cheng]{peng2019training}
W.~Peng, Y.-H. Dai, H.~Zhang, and L.~Cheng.
\newblock Training {GANs} with centripetal acceleration.
\newblock \emph{Optimization Methods and Software}, 35\penalty0 (5):\penalty0
  955--973, 2020.

\bibitem[Polyak(1987)]{polyak87}
B.~Polyak.
\newblock \emph{Introduction to Optimization}.
\newblock Optimization Software Inc., 1987.

\bibitem[Polyak(1964)]{Polyak64}
B.~T. Polyak.
\newblock \href{https://doi.org/10.1016/0041-5553(64)90137-5}{Some methods of
  speeding up the convergence of iteration methods}.
\newblock \emph{USSR Computational Mathematics and Mathematical Physics},
  4\penalty0 (5):\penalty0 1--17, 1964.

\bibitem[Popov(1980)]{popov1980modification}
L.~D. Popov.
\newblock A modification of the {A}rrow--{H}urwicz method for search of saddle
  points.
\newblock \emph{Mathematical Notes}, 28\penalty0 (5):\penalty0 845--848, 1980.

\bibitem[Razaviyayn et~al.(2020)Razaviyayn, Huang, Lu, Nouiehed, Sanjabi, and
  Hong]{razaviyayn2020non}
M.~Razaviyayn, T.~Huang, S.~Lu, M.~Nouiehed, M.~Sanjabi, and M.~Hong.
\newblock Nonconvex min-max optimization: Applications, challenges, and recent
  theoretical advances.
\newblock \emph{IEEE Signal Processing Magazine}, 37\penalty0 (5):\penalty0
  55--66, 2020.

\bibitem[Rockafellar and Wets(2009)]{rockafellar2009variational}
R.~T. Rockafellar and R.~J.-B. Wets.
\newblock \emph{Variational analysis}, volume 317.
\newblock Springer Science \& Business Media, 2009.

\bibitem[Schaefer et~al.(2020)Schaefer, Zheng, and
  Anandkumar]{schaefer2020implicit}
F.~Schaefer, H.~Zheng, and A.~Anandkumar.
\newblock Implicit competitive regularization in {GANs}.
\newblock In \emph{International Conference on Machine Learning}, pages
  8533--8544. PMLR, 2020.

\bibitem[Schur(1917)]{schur1917potenzreihen}
I.~Schur.
\newblock {\"U}ber potenzreihen, die im innern des einheitskreises
  beschr{\"a}nkt sind.
\newblock \emph{Journal f{\"u}r die reine und angewandte Mathematik},
  147:\penalty0 205--232, 1917.

\bibitem[Seeger(1988)]{Seeger88}
A.~Seeger.
\newblock Second order directional derivatives in parametric optimization
  problems.
\newblock \emph{Mathematics of Operations Research}, 13\penalty0 (1):\penalty0
  124--139, 1988.

\bibitem[Sinha et~al.(2018)Sinha, Namkoong, and Duchi]{sinha2018certifying}
A.~Sinha, H.~Namkoong, and J.~Duchi.
\newblock Certifying some distributional robustness with principled adversarial
  training.
\newblock In \emph{International Conference on Learning Representations}, 2018.

\bibitem[Sion et~al.(1958)]{sion1958general}
M.~Sion et~al.
\newblock On general minimax theorems.
\newblock \emph{Pacific Journal of mathematics}, 8\penalty0 (1):\penalty0
  171--176, 1958.

\bibitem[Sutskever et~al.(2013)Sutskever, Martens, Dahl, and
  Hinton]{sutskever2013importance}
I.~Sutskever, J.~Martens, G.~Dahl, and G.~Hinton.
\newblock On the importance of initialization and momentum in deep learning.
\newblock In \emph{International conference on machine learning}, pages
  1139--1147, 2013.

\bibitem[Sutton et~al.(1998)Sutton, Barto, et~al.]{sutton1998introduction}
R.~S. Sutton, A.~G. Barto, et~al.
\newblock \emph{Introduction to reinforcement learning}, volume 135.
\newblock MIT press Cambridge, 1998.

\bibitem[von Neumann(1928)]{neumann1928theorie}
J.~von Neumann.
\newblock Zur theorie der gesellschaftsspiele.
\newblock \emph{Mathematische annalen}, 100\penalty0 (1):\penalty0 295--320,
  1928.

\bibitem[von Stackelberg(1934)]{von2010market}
H.~von Stackelberg.
\newblock \emph{Market structure and equilibrium}.
\newblock Springer, 1934.

\bibitem[Wang et~al.(2020)Wang, Zhang, and Ba]{wang2019solving}
Y.~Wang, G.~Zhang, and J.~Ba.
\newblock On solving minimax optimization locally: A follow-the-ridge approach.
\newblock In \emph{the 8th International Conference on Learning
  Representations}, 2020.

\bibitem[Zhang and Yu(2020)]{zhang2019convergence}
G.~Zhang and Y.~Yu.
\newblock Convergence of gradient methods on bilinear zero-sum games.
\newblock In \emph{the 8th International Conference on Learning
  Representations}, 2020.

\bibitem[Zhang et~al.(2020)Zhang, Poupart, and Yu]{zhang2020optimality}
G.~Zhang, P.~Poupart, and Y.~Yu.
\newblock Optimality and stability in non-convex smooth games.
\newblock arXiv:2002.11875, 2020.

\bibitem[Zhang et~al.(2021)Zhang, Wu, Poupart, and Yu]{zhang2020newton}
G.~Zhang, K.~Wu, P.~Poupart, and Y.~Yu.
\newblock Newton-type methods for minimax optimization.
\newblock In \emph{{ICML} workshop on Beyond First-Order Methods in ML
  Systems}, 2021.
\newblock arXiv:2006.14592.

\bibitem[Zhang et~al.(2019)Zhang, Hong, and Zhang]{zhang2019lower}
J.~Zhang, M.~Hong, and S.~Zhang.
\newblock \href{https://arxiv.org/abs/1912.07481}{On Lower Iteration Complexity
  Bounds for the Saddle Point Problems}.
\newblock arXiv:1912.07481, 2019.

\end{thebibliography}
\end{document}